\title{A  vector total variation of feature image model for image restoration}
\author{
	Wei Wang\footnotemark[1]\;,
	Xiang-Gen Xia\footnotemark[1] \footnotemark[2]\;, 
	Shengli Zhang\footnotemark[1]\;,
	Chuanjiang He\footnotemark[3]
}
\begin{document}

\maketitle

\footnotetext[1]{College of Information Engineering, Shenzhen
	University, Shenzhen, China. (\href{mailto:aiyulunhui@sina.com}{aiyulunhui@sina.com}, \href{mailto:xxia@ee.udel.edu}{xxia@ee.udel.edu}, \href{mailto:zsl@szu.edu.cn}{zsl@szu.edu.cn}
\href{mailto:cjhe@cqu.edu.cn}{cjhe@cqu.edu.cn}) }
\footnotetext[2]{Department of Electrical
	and Computer Engineering, University of Delaware, Newark, DE 19716, USA.}
\footnotetext[3]{College of Mathematics and Statistics,Chongqing University,Chongqing 401331, China.
	
This research is partially supported by the research grants from the Chinese NSF project (61372078), National Natural Science Foundation of China (No. 61701320, No. 61561019), Guangdong NSF project (2014A030313549), Key project of Guangdong (2016KZDXM006) and the Shenzhen NSF project (JCYJ20160226192223251).}

\theoremstyle{definition}
\newtheorem{Lemma}{Lemma}
\newtheorem{Remark}{Remark} 
\newtheorem{Theorem}{Theorem}
\newtheorem{Definition}{Definition}
\newtheorem*{keywords}{keywords}
\newcommand{\tabincell}[2]{
	\begin{tabular}{@{}#1@{}}#2\end{tabular}
}

\begin{abstract}
   In this paper, we propose a vector total variation (VTV) of feature image model for image restoration.  The  VTV imposes different smoothing powers on different features (e.g. edges and cartoons) based on choosing various regularization parameters. Thus, the model can simultaneously preserve edges and remove noises. Next, the existence of  solution for the model is proved and the split Bregman algorithm is used to solve the model. At last, we use the wavelet filter banks to explicitly define the feature operator and 
   present some experimental results to show its  advantage over the related methods  in both quality and efficiency.
\end{abstract}

\begin{keywords}
  Image restoration, wavelet, vectore total variation, variational method，
\end{keywords}

\section{Introduction}
Image restoration including image denoising, deblurring, inpainting etc. is a procedure of improving the quality of a given image that is degraded in various ways during the process of acquisition and commutation. Since many advanced applications in computer vision depend heavily on the input of high quality images, image restoration becomes an indispensable and preprocessing step of these applications. Therefore, image restoration is a basic but very important area in image processing and analysis. 
Image restoration can be modeled as a linear inverse problem:
$$f = Au + \eta$$
where $f$  is the observed images, $A$  is a  linear operator and $\eta $  represents the additive noise whose type depends on its probability density function (PDF). The goal of image restoration is to find the unknown true image $u$  from the observed image  $f$. The problem is usually an ill-posed inverse problem and often solved by imposing a prior regularization  assumption. Among all regularization-based methods for image restoration, variational based \cite{Choksi2017Anisotropic}\cite{Marquina2009Nonlinear}\cite{Chan1998Total}\cite{Wen2012A}\cite{Ng2011Fast}\cite{Bredies2012A}\cite{WangH} and wavelet frame based \cite{Cai2009Linearized}\cite{Chaux2011A}\cite{Fadili2009Inpainting}\cite{Figueiredo2003An}\cite{Cai2009Split}\cite{Elad2005Simultaneous}\cite{Starck2005Image}\cite{Fan}\cite{Cai2009Split}\cite{Elad2005Simultaneous}\cite{Starck2005Image}\cite{Fan}\cite{Cai2007Restoration}\cite{Cai2008A}\cite{Chan2004Tight} methods have attracted a lot of attention in the past.

The common assumption of the regularization based methods is that images can be sparsely approximated in some transformed domains. Such transforms can be gradient operator, wavelet frame transform, Fourier transform, Gabor transform  etc. In order to utilize the sparsity, one solves (1) by finding a sparse solution in the corresponding transformed domain. Typically, the ${l_1}$  norm is used as a penalty of the sparsity. 
One of the well-known variational approaches is the Rudin-Osher-Fatemi (ROF) \cite{ROF} model, which imposes the total variation (TV) regularization on the image  $u$:
$$\mathop {\inf }\limits_{u \in BV(\Omega )} \left\{ {\lambda \int_\Omega  {|\nabla u|}  + \int_\Omega  {{{(u - f)}^2}} dx} \right\}.$$
Here, $\Omega$ is the image domian, $BV(\Omega)$ is the bounded variational space,  $\int_\Omega  {|\nabla u|} $  and $\int_\Omega  {{{(u - f)}^2}} dx$ are the TV (regularization) term and fidelity (fitting) term, respectively. The ROF model performs well for removing noise while preserving edges. However, it tends to produce piecewise constant results (called staircase effect in the literature). After the ROF model was proposed, the TV regularization has been extended to many other image restoration applications (such as deblurring, inpainting, superresolution, etc.) and has been modified in a variety of ways to improve its performance \cite{Gilboa2008Nonlocal}\cite{Chao2010An}\cite{Louchet2014Total}\cite{Sutour2014Adaptive}.To solve the ROF model, many algorithms have been proposed \cite{Chambolle2004An}\cite{Zhu2008An}\cite{Beck2009Fast}\cite{Esser2010A}\cite{Goldstein2009The} \cite{ChambolleAntonin}\cite{WangHe}and the split Bregman method \cite{Goldstein2009The} is the one of the most widely used ones . 

Wavelet frames represent images as a summarization of smooth components (i.e. cartoon), and local features (i.e. singularities). In wavelet frame domain, smooth image components are the coefficient images obtained from low-pass filters, while local features are those obtained from high-pass filters. Thus, we can impose different strengths of regularization on smooth components and local features separately when restoring images in the wavelet frame transform domain. There exist plenty of wavelet frame based image restoration models in the literature, such as, the synthesis based approach \cite{Cai2009Linearized}\cite{Chaux2011A}\cite{Fadili2009Inpainting}\cite{Figueiredo2003An}, the analysis based approach \cite{Cai2009Split}\cite{Elad2005Simultaneous}\cite{Starck2005Image}\cite{Fan}, and the balanced approach \cite{Cai2007Restoration}\cite{Cai2008A}\cite{Chan2004Tight}. A typical analysis based model of ${l_1}$  norm regularization has a similar form with the ROF model:
$$\mathop {\min }\limits_u \lambda {\left\| {Wu} \right\|_1} + \frac{1}{2}\left\| {Au - f} \right\|_2^2,$$
where $W$  represents the wavelet frame transform. Since the ${l_1}$  norm regularizations usually exist in the wavelet frame based models, the split Bregman algorithm is also a widely used method to solve them.

In this paper, we propose a vector total variation (VTV) of feature image model  for image restoration where the VTV is used as the regularization. The vector images are generated by mapping the original image into the feature spaces which represent different features of the image such as edges and cartoons. By choosing different regularization parameters, we can impose different smoothing powers on different features. Thus, the model can simultaneously preserve edges and remove noises. Next, the existence of  solution for the model is proved and the split Bregman algorithm is used to solve the model. At last, we use the wavelet filter banks to explicitly define the feature operator and  present some experimental results to show its  advantage over the related methods  in both quality and efficiency.

The rest of the paper is organized as follows. In Section 2.1, we describe the proposed model, prove the existence of its solution and use the convolution to define the feature operators. In Section 2.2, we use the split Bregamn algorithm to solve the model. In Section 3, we present some experimental results to show the advantage of our model. In Section 4, we conclude this paper.

\section{The wavelet  coefficient total variational model}

\subsection{Proposed model}
Before describing the model, we briefly introduce some mathematical definitions related to our paper.

Let $\Omega$ be a bounded open subset of $R^2$, $f\in L^2(\Omega,R)$,  $F=(F_1,F_2,...,F_m)$ be an operator such that $Ff=(F_1f,F_2f,...,F_mf)\in L^2(\Omega,R^m)$ and $F_if\in L^2(\Omega,R)$. In this paper, we call $F_if$ a feature image of $f$ and $F$ a feature operator.
\begin{Definition}
	The inner product in  $L^2(\Omega,R^m)$ is defined as 
	$$<u,v>_{L^2(\Omega,R^m)}=\int_\Omega\sum_{i=1}^{m}u_iv_i,$$
	where $u=(u_1,u_2,...,u_m),v=(v_1,v_2,...,v_m)\in L^1(\Omega,R^m)$.
\end{Definition}	
\begin{Definition}
	$BV(\Omega)$ is the subspace of $L^1(\Omega,R)$ such that the following quantity is finite:
	$$|f|_{TV}=\mathop {\sup}\limits_{\eta \in K}<f,{\rm{div}}\eta>_{L^2(\Omega,R)},$$
	where ${\rm{div}}\eta=\frac{\partial\eta_1}{\partial x}+\frac{\partial\eta_2}{\partial y}$, $K=\{\eta=(\eta_1,\eta_2)\in C_0^1(\Omega,R^2):\left\|\eta\right\|_{L^\infty(\Omega,R^2)} \le 1\}$, $C_0^1(\Omega,R^2)$ is the space of continuously differentiable functions with compact support in $\Omega$, $\left\|\eta\right\|_{L^\infty(\Omega,R^2)}=\mathop{\sup}\limits_{x\in\Omega}\sqrt{\eta_1^2+\eta_2^2}$ and the quantity $|f|_{TV}$ is called the total variation of $f$.
\end{Definition}	

\begin{Definition}
  The vector total variation of function $g=(g_1,g_2,...,g_m)\in L^1(\Omega,R^m)$ is defined as 
  $$|g|_{VTV}=\mathop {\sup}\limits_{\xi \in P}<g,{\rm{div}}\xi>_{L^2(\Omega,R^m)},$$
  where $\xi=(\xi_1,\xi_2,...,\xi_m)$, ${\rm{div}}\xi=({\rm{div}}\xi_1,{\rm{div}}\xi_2,...,{\rm{div}}\xi_m)$, $P=\{\xi\in C_0^1(\Omega,R^{2\times m}):\left\|\xi(x)\right\|_{L^\infty(\Omega,R^{2\times m})} \le 1\}$ and $\left\|\xi(x)\right\|_{L^\infty(\Omega,R^{2\times m})}=\mathop {\max}\limits_{i=1,2...,m} \left\|\xi_i(x)\right\|_{L^\infty(\Omega,R^{2})}.$
\end{Definition}	
For $g$ smooth enough, we have \cite{Duval2008A}\cite{VTV}
\begin{equation}\label{1}
|g|_{VTV}= \sum_{i=1}^{m}|g_i|_{TV}= \sum_{i=1}^{m}\int_\Omega|\nabla g_i|.
\end{equation}
For more details about the vector total variation, refer to \cite{Duval2008A}\cite{VTV}\cite{Ambrosio}\cite{Schnei}\cite{Bresson2017Chan}.

In this paper, we propose the following model for image recovery:
\begin{equation}\label{2}
\mathop {\arg \inf }\limits_{u \in BV(\Omega )}\{ E(u) = {\left| {\vec \lambda Fu} \right|_{VTV}} +\frac{1}{2} {\left\|Au - f\right\|_{L^2(\Omega,R)}^2}\},
\end{equation}
where $A$ is a linear operator, $f$ is an observed image, $F$ is a feature operator, $\vec \lambda=(\lambda_1,\lambda_2,...,\lambda_m)\in (R^+)^m$ is the regularization parameter vector and $\vec \lambda Fu=(\lambda_1F_1u,\lambda_2F_2u,...,\lambda_mF_mu)$ represents the feature images. The ideal case is that different $F_mu$ can represent different features of the image $u$, such as edges, cartoons, etc. and thus we can impose different smooth strengths on them by choosing different $\lambda_i$. Clearly, if $m=1$ and $F=I$, then the proposed model is reduced to the ROF model.
   
In the following, we show the existencce of the solution for problem (\ref{2}).
\begin{Lemma}
	If $F_i$ is linear, then the adjoint operator $F^*$ of $F$ is
	$$F^*g=\sum_{i=1}^mF_i^*g_i,$$
	where  $g=(g_1,g_2,...,g_m)\in L^2(\Omega,R^m)$ and $F_i^*$ is the adjoint operator of $F_i$.
\end{Lemma}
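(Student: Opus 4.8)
The plan is to verify the claimed formula directly from the definition of the adjoint operator together with the definition of the inner product on $L^2(\Omega,R^m)$ supplied in Definition 1. Since each $F_i$ is linear (and, implicitly, bounded, so that $F_i f\in L^2(\Omega,R)$ whenever $f\in L^2(\Omega,R)$), the map $F$ is a bounded linear operator from $L^2(\Omega,R)$ into $L^2(\Omega,R^m)$; hence its adjoint $F^*$ exists and is characterized by the identity $<Fu,g>_{L^2(\Omega,R^m)}=<u,F^*g>_{L^2(\Omega,R)}$ holding for every $u\in L^2(\Omega,R)$ and every $g=(g_1,\dots,g_m)\in L^2(\Omega,R^m)$. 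The proof amounts to computing the left-hand side of this identity and reading off what $F^*g$ must be.

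First I would expand the left-hand side using Definition 1: $<Fu,g>_{L^2(\Omega,R^m)}=\int_\Omega\sum_{i=1}^m (F_iu)\,g_i$. Because the summation is finite, it commutes with the integral, giving $\sum_{i=1}^m\int_\Omega (F_iu)\,g_i=\sum_{i=1}^m <F_iu,g_i>_{L^2(\Omega,R)}$. Next, applying the defining property of the adjoint $F_i^*$ of each $F_i$ term by term yields $\sum_{i=1}^m <F_iu,g_i>_{L^2(\Omega,R)}=\sum_{i=1}^m <u,F_i^*g_i>_{L^2(\Omega,R)}=<u,\sum_{i=1}^m F_i^*g_i>_{L^2(\Omega,R)}$, where the last step uses bilinearity of the scalar $L^2$ inner product. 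Comparing with the characterization of $F^*$, we obtain $<u,F^*g>_{L^2(\Omega,R)}=<u,\sum_{i=1}^m F_i^*g_i>_{L^2(\Omega,R)}$ for all $u\in L^2(\Omega,R)$, and non-degeneracy of the $L^2(\Omega,R)$ inner product forces $F^*g=\sum_{i=1}^m F_i^*g_i$, which is the assertion.

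There is essentially no serious obstacle here; the only points deserving a word of care are (i) that boundedness of each $F_i$ — needed for the adjoints $F_i^*$, and $F^*$ itself, to be well defined — is implicit in the standing assumption $F_if\in L^2(\Omega,R)$, and (ii) that interchanging the finite summation with the integral and with the bilinear form is legitimate. If one prefers to sidestep any appeal to an abstract existence theorem for adjoints, one may instead simply \emph{define} $F^*g:=\sum_{i=1}^m F_i^*g_i$ and verify that this operator satisfies the adjoint identity $<Fu,g>_{L^2(\Omega,R^m)}=<u,F^*g>_{L^2(\Omega,R)}$ — which is precisely the chain of equalities above — concluding by uniqueness of the adjoint.
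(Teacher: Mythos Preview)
Your proposal is correct and mirrors the paper's own proof almost exactly: the paper also expands $\langle Ff,g\rangle$ as $\sum_{i=1}^m\langle F_if,g_i\rangle$, applies each $F_i^*$, pulls the sum outside, and reads off $F^*g=\sum_{i=1}^m F_i^*g_i$. Your version is simply more detailed about the justifications (boundedness, non-degeneracy of the inner product) that the paper leaves implicit.
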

\begin{proof}
	Let $f\in  L^2(\Omega,R)$ and $g \in L^2(\Omega,R^m)$, then 
	\begin{align*}
	<Ff,g>=&\sum_{i=1}^m<F_if,g_i>\\
	=&\sum_{i=1}^m<f,F_i^*g_i>\\
	=&<f,\sum_{i=1}^mF_i^*g_i>
	\end{align*}
Thus, 	$F^*g=\sum_{i=1}^mF_i^*g_i$.
\end{proof}

\begin{Lemma}
	If $F$ is linear and bounded and for any $\xi\in C_0^1(\Omega,R^{2\times m})$, the operator ${\rm{div}}$ and $F^*$ are commutative, i.e., 
	$F^*{\rm{div}}\xi={\rm{div}} F^*\xi$,
	then for any $f\in BV(\Omega)$, we have
	$${|Ff}|_{VTV}\le\left\|F\right\||f|_{TV}.$$
\end{Lemma}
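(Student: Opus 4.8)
The plan is to unwind the definition of the vector total variation as a supremum over test fields, transfer the operator $F$ onto the test field via its adjoint, use the commutativity hypothesis to recognise the outcome as a divergence once more, and then compare against the supremum defining $|f|_{TV}$. Fix $f\in BV(\Omega)$ (note $BV(\Omega)\subseteq L^2(\Omega,R)$ since $\Omega\subseteq R^2$, so all the pairings below make sense) and let $\xi=(\xi_1,\dots,\xi_m)\in P$ be arbitrary. Using Lemma~1 for the adjoint of $F$ and then the hypothesis $F^*\mathrm{div}\,\xi=\mathrm{div}\,F^*\xi$,
\[
\langle Ff,\mathrm{div}\,\xi\rangle_{L^2(\Omega,R^m)}=\langle f,F^*\mathrm{div}\,\xi\rangle_{L^2(\Omega,R)}=\langle f,\mathrm{div}\,(F^*\xi)\rangle_{L^2(\Omega,R)}.
\]
Thus each admissible field for $|Ff|_{VTV}$ is pushed forward by $F^*$ to a single $R^2$-valued field that is then tested against $f$.

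Next I would check that $F^*\xi$, after normalisation, is admissible for the scalar total variation of $f$. If $F^*\xi\in C_0^1(\Omega,R^2)$ and $c:=\|F^*\xi\|_{L^\infty(\Omega,R^2)}>0$, then $F^*\xi/c\in K$, so by Definition~2
\[
\langle f,\mathrm{div}\,(F^*\xi)\rangle=c\,\langle f,\mathrm{div}\,(F^*\xi/c)\rangle\le c\,|f|_{TV}=\|F^*\xi\|_{L^\infty(\Omega,R^2)}\,|f|_{TV},
\]
and the bound is trivial when $c=0$. Taking the supremum over $\xi\in P$ and combining with the first display,
\[
|Ff|_{VTV}=\sup_{\xi\in P}\langle f,\mathrm{div}\,(F^*\xi)\rangle\le\Big(\sup_{\xi\in P}\|F^*\xi\|_{L^\infty(\Omega,R^2)}\Big)|f|_{TV},
\]
so the proof reduces to the estimate $\sup_{\xi\in P}\|F^*\xi\|_{L^\infty(\Omega,R^2)}\le\|F\|$.

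The step I expect to be the main obstacle is exactly this last one, because it packs in two points that must be pinned down. First, one needs $F^*\xi$ to lie again in $C_0^1(\Omega,R^2)$: the $C^1$ regularity is already implicit in the commutativity hypothesis (which presumes $\mathrm{div}\,F^*\xi$ is classical), but preservation of compact support inside $\Omega$ must be assumed or verified — for the wavelet/convolution feature operators constructed in Section~2.1 this is transparent. Second, the comparison $\|F^*\xi\|_{L^\infty(\Omega,R^2)}\le\|F\|\,\|\xi(x)\|_{L^\infty(\Omega,R^{2\times m})}$ must be read with respect to the norms fixed in Definitions~2 and~3, i.e. ``$\|F\|$'' is the norm of $F$ (equivalently of $F^*$) as an operator between the corresponding spaces; granting that, every $\xi\in P$ has $\|\xi(x)\|_{L^\infty(\Omega,R^{2\times m})}\le1$, hence $\|F^*\xi\|_{L^\infty(\Omega,R^2)}\le\|F\|$, and the supremum over $P$ yields the claim. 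As an alternative route one could first establish the inequality for $f\in C^\infty(\Omega)\cap BV(\Omega)$ via formula~(\ref{1}) together with the commutation of $F$ with $\nabla$, and then pass to the limit along a mollified sequence $f_n\to f$ with $|f_n|_{TV}\to|f|_{TV}$; but the supremum definition of $|f|_{TV}$ makes this detour unnecessary.
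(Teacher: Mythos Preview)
Your proposal is correct and follows essentially the same route as the paper: pass $F$ to its adjoint, invoke the commutation hypothesis to write $\langle f,\mathrm{div}(F^*\xi)\rangle$, normalise $F^*\xi$ by its $L^\infty$-norm to recognise an admissible competitor for $|f|_{TV}$, and conclude via $\|F^*\xi\|_{L^\infty}\le\|F^*\|\,\|\xi\|_{L^\infty}\le\|F\|$. The paper carries out exactly these steps in a single chain of inequalities without pausing on the two subtleties you flag (regularity/support of $F^*\xi$ and the precise meaning of the operator norm), so your write-up is in fact more careful than the original.
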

\begin{proof}
\begin{align*}
|Ff|_{VTV}=&\mathop {\sup}\limits_{\xi \in P}<Ff,{\rm{div}}\xi>\\
=&\mathop {\sup}\limits_{\xi \in P}<f,F^*{\rm{div}}\xi>\\
=&\mathop {\sup}\limits_{\xi \in P}\left\|F^*\xi\right\|_{L^\infty(\Omega,R^2)}<f,{\rm{div}}(\frac{F^*\xi}{\left\|F^*\xi\right\|_{L^{\infty}(\Omega,R^2)}})>\\
\le&\mathop {\sup}\limits_{\xi \in P}\left\|F^*\right\|\left\|\xi\right\|_{L^\infty(\Omega,R^{2\times m})}<f,{\rm{div}}(\frac{F^*\xi}{\left\|F^*\xi\right\|_{L^\infty(\Omega,R^2)}})>\\
\le&\left\|F^*\right\||f|_{TV}\\
=&\left\|F\right\||f|_{TV}
\end{align*}
\end{proof}
\begin{Lemma}
	If $F$ is linear and bounded,  invertible, and for any $\eta\in C_0^1(\Omega,R^{2})$, the operator ${\rm{div}}$ and $(F^{-1})^*$ are commutative, i.e., 
	$(F^{-1})^*{\rm{div}}\eta={\rm{div}} (F^{-1})^*\eta$,
	then for any $f\in BV(\Omega)$, we have
	$$|f|_{TV}\le\left\|F^{-1}\right\||Ff|_{VTV}.$$
\end{Lemma}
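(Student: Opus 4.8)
The plan is to mirror the proof of the previous lemma, but now apply the same chain of inequalities to $f = F^{-1}(Ff)$ instead of to $f$ directly. Concretely, since $F$ is invertible, set $g = Ff \in L^2(\Omega,R^m)$ and write $f = F^{-1}g$. Starting from the definition $|f|_{TV} = \sup_{\eta \in K}\langle f, \mathrm{div}\,\eta\rangle$, substitute $f = F^{-1}g$ and move $F^{-1}$ to the other side of the inner product to get $\langle g, (F^{-1})^*\mathrm{div}\,\eta\rangle$. Then invoke the commutativity hypothesis $(F^{-1})^*\mathrm{div}\,\eta = \mathrm{div}\,(F^{-1})^*\eta$ so that the expression becomes $\langle g, \mathrm{div}\,((F^{-1})^*\eta)\rangle$.

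The next step is to control the test function $(F^{-1})^*\eta$ so that a normalized version of it lies in the admissible set $P$ for the vector total variation. Exactly as in Lemma 2, I would write $\langle g, \mathrm{div}\,((F^{-1})^*\eta)\rangle = \|(F^{-1})^*\eta\|_{L^\infty(\Omega,R^{2\times m})}\,\langle g, \mathrm{div}\,(\tfrac{(F^{-1})^*\eta}{\|(F^{-1})^*\eta\|_{L^\infty}})\rangle$, bound the scalar factor by $\|(F^{-1})^*\|\,\|\eta\|_{L^\infty(\Omega,R^2)} \le \|(F^{-1})^*\|$ (using $\eta \in K$, hence $\|\eta\|_{L^\infty}\le 1$), and observe that the normalized field lies in $P$, so the remaining inner product is bounded by $|g|_{VTV} = |Ff|_{VTV}$. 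Taking the supremum over $\eta \in K$ then yields $|f|_{TV} \le \|(F^{-1})^*\|\,|Ff|_{VTV} = \|F^{-1}\|\,|Ff|_{VTV}$, using that the operator norm of an adjoint equals that of the operator.

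The main obstacle — and the place where a careful proof must be a little more delicate than the one-line analogue — is the handling of the normalization when $(F^{-1})^*\eta$ vanishes on part of $\Omega$ (so that dividing by its sup-norm is not literally valid pointwise), and, more substantively, checking that $(F^{-1})^*$ genuinely maps $C_0^1(\Omega,R^2)$ into (something dense enough in) $C_0^1(\Omega,R^{2\times m})$ so that the normalized field is an admissible competitor in the definition of $|g|_{VTV}$; the commutativity hypothesis is precisely what is being assumed to make this legitimate, but one should note that $(F^{-1})^*\eta$ may need an approximation argument (by smooth compactly supported fields) to be used as a test function, and that the quantity $|g|_{VTV}$ as defined requires $g \in L^1(\Omega,R^m)$, which follows here since $f \in BV(\Omega) \subset L^1$ and $F$ is bounded. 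I expect the bulk of the argument to go through verbatim from Lemma 2 with $F$ replaced by $F^{-1}$ and the roles of $|\cdot|_{TV}$ and $|\cdot|_{VTV}$ swapped, so the write-up should be short.
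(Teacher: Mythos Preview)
Your proposal is correct and follows essentially the same approach as the paper: the paper writes $f=F^{-1}Ff$, moves $(F^{-1})^*$ across the inner product, invokes the commutativity hypothesis, normalizes the resulting test field, bounds the scalar factor by $\|(F^{-1})^*\|\le\|F^{-1}\|$, and concludes via the definition of $|Ff|_{VTV}$. The paper's write-up is indeed the short verbatim analogue of Lemma~2 you anticipate, and it likewise glosses over the regularity/approximation issues for $(F^{-1})^*\eta$ that you flag.
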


\begin{proof}
	\begin{align*}
	|f|_{TV}=&\mathop {\sup}\limits_{\eta \in K}<F^{-1}Ff,{\rm{div}}\eta>\\
	=&\mathop {\sup}\limits_{\eta \in K}<Ff,(F^{-1})^*{\rm{div}}\eta>\\
	=&\mathop {\sup}\limits_{\eta \in K}\left\|(F^{-1})^*\eta\right\|_{L^\infty(\Omega,R^{2\times m})}<Ff,{\rm{div}}(\frac{(F^{-1})^*\eta}{\left\|(F^{-1})^*\eta\right\|_{L^\infty(\Omega,R^{2\times m})}})>\\
	\le&\mathop {\sup}\limits_{\eta \in K}\left\|(F^{-1})^*\right\|\left\|\eta\right\|_{L^\infty(\Omega,R^{2})}<Ff,{\rm{div}}(\frac{(F^{-1})^*\eta}{\left\|(F^{-1})^*\eta\right\|_{L^\infty(\Omega,R^{2\times m})}})>\\
	\le&\left\|(F^{-1})^*\right\|\mathop {\sup}\limits_{\eta \in K}<Ff,{\rm{div}}(\frac{(F^{-1})^*\eta}{\left\|(F^{-1})^*\eta\right\|_{L^\infty(\Omega,R^{2\times m})}})>\\
	\le&\left\|(F^{-1})\right\||Ff|_{VTV}
	\end{align*}
\end{proof}
\begin{Remark}
	From the proof, we can see the condition we need is that there exists $F_L^{-1}: L^2(\Omega,R^m)\to L^2(\Omega,R)$ such that $F_L^{-1}F=I$ and $(F_L^{-1})^*{\rm{div}}\eta={\rm{div}}(F_L^{-1})^*\eta$ for any $\eta\in C_0^1(\Omega,R^{2})$.
\end{Remark}
\begin{Lemma}
    If $F$ is  linear and bounded, and invertible, then  for any $\xi\in C_0^1(\Omega,R^{2\times m})$ and $\eta\in C_0^1(\Omega,R^{2})$, the following two conditions are equivalent:
    \begin{description}
    	\item[(a)] $F^*{\rm{div}}\xi={\rm{div}} F^*\xi,$
    	\item[(b)] $(F^{-1})^*{\rm{div}}\eta={\rm{div}} (F^{-1})^*\eta.$
    \end{description}
\begin{proof} By 
	$(a) \Rightarrow (b):$
	Let $\xi=(F^{-1})^*\eta$ in (a), then
	$$F^*{\rm{div}}(F^{-1})^*\eta={\rm{div}}F^*(F^{-1})^*\eta={\rm{div}}(F^{-1}F)^*\eta={\rm{div}}\eta.$$
	Multiplying $(F^{-1})^*$ on both sides of the above equation, we obtain
	$${\rm{div}}(F^{-1})^*\eta=(F^{-1})^*{\rm{div}}\eta,$$
	
	$(b) \Rightarrow (a):$ Let $\eta=F^*\xi$ in (b), then we have
		$$(F^{-1})^*{\rm{div}}F^*\xi={\rm{div}} (F^{-1})^*F^*\xi={\rm{div}}\xi.$$
		Therefore, $${\rm{div}}F^*\xi=F^*{\rm{div}}\xi.$$
\end{proof} 
\end{Lemma}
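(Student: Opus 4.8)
The plan is to prove the two implications $(a)\Rightarrow(b)$ and $(b)\Rightarrow(a)$ separately, in each case applying the hypothesis to a cleverly transformed test field and then cancelling the invertible operator by composing with its inverse. The only algebraic facts I would need are that for a bounded invertible $F$ one has $(F^{-1})^*=(F^*)^{-1}$, and that $(AB)^*=B^*A^*$; together these give $F^*(F^{-1})^*=(F^{-1}F)^*=I$ and $(F^{-1})^*F^*=(FF^{-1})^*=I$, so any composition of $F^*$ with $(F^{-1})^*$ collapses to the identity. I would also keep in mind that the two "${\rm div}$" operators involved act on different objects — the scalar one on $C_0^1(\Omega,R^2)$-fields and the componentwise one on $C_0^1(\Omega,R^{2\times m})$-fields — while $F^*$ touches only the feature index, so ${\rm div}$ and $F^*$ act on "orthogonal" directions.

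For $(a)\Rightarrow(b)$ I would fix an arbitrary $\eta\in C_0^1(\Omega,R^2)$ and substitute $\xi=(F^{-1})^*\eta$ into $(a)$, which yields $F^*{\rm div}((F^{-1})^*\eta)={\rm div}(F^*(F^{-1})^*\eta)={\rm div}\,\eta$; applying $(F^*)^{-1}=(F^{-1})^*$ to both sides then gives ${\rm div}\,(F^{-1})^*\eta=(F^{-1})^*{\rm div}\,\eta$, which is $(b)$. The reverse implication is the mirror image: for arbitrary $\xi\in C_0^1(\Omega,R^{2\times m})$ put $\eta=F^*\xi$, apply $(b)$ to get $(F^{-1})^*{\rm div}(F^*\xi)={\rm div}((F^{-1})^*F^*\xi)={\rm div}\,\xi$, and then apply $F^*$ to both sides to recover $F^*{\rm div}\,\xi={\rm div}\,F^*\xi$. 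Once the substitution is admissible, each direction is a one-line manipulation with no estimates.

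The step I expect to be the real obstacle is precisely that admissibility: the transformed fields $\xi=(F^{-1})^*\eta$ and $\eta=F^*\xi$ must again lie in $C_0^1(\Omega,R^{2\times m})$ and $C_0^1(\Omega,R^2)$ respectively, i.e.\ $F^*$ and $(F^{-1})^*$ must preserve $C^1$-regularity and compact support. This does not follow from "bounded and invertible" alone, so I would either fold it into the hypotheses or, better, appeal to the concrete realization used later in the paper, where $F$ and its adjoint are built from wavelet filter banks / convolutions that commute with differentiation and send smooth compactly supported vector fields to smooth compactly supported vector fields. Granting that, the chain of equalities above is purely formal, and the whole proof reduces to bookkeeping about which ${\rm div}$ and which adjoint appears at each stage so that the identity operators $F^*(F^{-1})^*$ and $(F^{-1})^*F^*$ are applied in the correct order.
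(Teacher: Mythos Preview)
Your proposal is correct and follows essentially the same route as the paper: substitute $\xi=(F^{-1})^*\eta$ into (a) and cancel with $(F^{-1})^*$, then substitute $\eta=F^*\xi$ into (b) and cancel with $F^*$. Your added remark about the admissibility of these substitutions (that $(F^{-1})^*\eta$ and $F^*\xi$ need to remain in the stated $C_0^1$ classes) is a legitimate regularity point that the paper's proof simply passes over in silence.
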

\begin{Remark}
	By $1=\left\|F^{-1}F\right\|\le\left\|F^{-1}\right\|\left\|F\right\|$, we only have $\left\|F^{-1}\right\|\ge\left\|F\right\|^{-1}$. If  $\left\|F^{-1}\right\|=\left\|F\right\|^{-1}$, then combining Lemma 2 and Lemma 3, we can obtain ${|Ff}|_{VTV}=\left\|F\right\||f|_{TV}.$
\end{Remark}
\begin{Theorem}
	If the conditions of Remark 1 are satisfied, $\lambda_{\min}=\mathop{\min}\limits_{i=1,2,...m}\lambda_i>0$ and  $A\otimes 1=\int_\Omega A(x)dx\ne 0$, then Problem (\ref{2}) admits a solution.
\end{Theorem}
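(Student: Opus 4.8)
\noindent\emph{Proof sketch.}\quad The plan is to apply the direct method of the calculus of variations. Since $F$ is linear, $F0=0$, so $E(0)=\tfrac12\|f\|_{L^2(\Omega,R)}^2<\infty$; as $E\ge 0$, the value $m^*:=\inf_{u\in BV(\Omega)}E(u)$ is finite. Fix a minimizing sequence $\{u_n\}\subset BV(\Omega)$ with $E(u_n)\le C_0:=m^*+1$. The heart of the matter is the a priori bound $\sup_n\|u_n\|_{BV(\Omega)}<\infty$. For the seminorm part, \eqref{1} and the positive $1$-homogeneity of $|\cdot|_{TV}$ give $|\vec\lambda Fu_n|_{VTV}=\sum_{i=1}^{m}\lambda_i|F_iu_n|_{TV}\ge\lambda_{\min}|Fu_n|_{VTV}$, and then Lemma 3, applied in the left-inverse form of Remark 1 (with $F_{L}^{-1}$ in place of $F^{-1}$), yields
\[
|u_n|_{TV}\le\|F_{L}^{-1}\|\,|Fu_n|_{VTV}\le\frac{\|F_{L}^{-1}\|}{\lambda_{\min}}\,|\vec\lambda Fu_n|_{VTV}\le\frac{\|F_{L}^{-1}\|}{\lambda_{\min}}\,C_0 ,
\]
which is exactly where $\lambda_{\min}>0$ enters.

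To bound $\|u_n\|_{L^1(\Omega,R)}$ (in fact the $L^2$-norm), I would write $u_n=c_n+v_n$ with $c_n=\tfrac{1}{|\Omega|}\int_\Omega u_n\,dx$ and $\int_\Omega v_n\,dx=0$. The Poincar\'e--Wirtinger inequality on the bounded domain $\Omega\subset R^2$, in the form $\|v_n\|_{L^2(\Omega,R)}\le C_\Omega|v_n|_{TV}=C_\Omega|u_n|_{TV}$ (valid in dimension two), shows that $\{\|v_n\|_{L^2(\Omega,R)}\}$ is bounded by the previous step. From $E(u_n)\le C_0$ we also have $\|Au_n\|_{L^2(\Omega,R)}\le\|f\|_{L^2(\Omega,R)}+\sqrt{2C_0}$, so by linearity of $A$ and of the integral,
\[
|c_n|\,|A\otimes 1|=\Bigl|\int_\Omega Au_n\,dx-\int_\Omega Av_n\,dx\Bigr|\le|\Omega|^{1/2}\bigl(\|Au_n\|_{L^2(\Omega,R)}+\|A\|\,\|v_n\|_{L^2(\Omega,R)}\bigr) ,
\]
and since $A\otimes 1\ne 0$ this gives $\sup_n|c_n|<\infty$. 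Hence $\sup_n\|u_n\|_{L^2(\Omega,R)}<\infty$, and with the seminorm bound, $\sup_n\|u_n\|_{BV(\Omega)}<\infty$.

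The rest is the standard argument. By the compact embedding of $BV(\Omega)$ into $L^1(\Omega)$, a subsequence (not relabeled) satisfies $u_n\to u^*$ in $L^1(\Omega,R)$ for some $u^*\in L^1(\Omega,R)$; since $\{u_n\}$ is also bounded in the reflexive space $L^2(\Omega,R)$ and the strong $L^1$ limit determines the weak $L^2$ cluster points, in fact $u_n\rightharpoonup u^*$ weakly in $L^2(\Omega,R)$. As $F$ and $A$ are bounded and linear, $F_iu_n\rightharpoonup F_iu^*$ and $Au_n\rightharpoonup Au^*$ weakly in $L^2(\Omega,R)$. Each $|\cdot|_{TV}$ is convex and lower semicontinuous on $L^2(\Omega,R)$, hence weakly lower semicontinuous, and $v\mapsto\tfrac12\|v-f\|_{L^2(\Omega,R)}^2$ is weakly lower semicontinuous too; therefore
\[
E(u^*)\le\sum_{i=1}^{m}\lambda_i\,\liminf_n|F_iu_n|_{TV}+\liminf_n\tfrac12\|Au_n-f\|_{L^2(\Omega,R)}^2\le\liminf_n E(u_n)=m^* .
\]
Finally $|u^*|_{TV}\le\liminf_n|u_n|_{TV}<\infty$ together with $u^*\in L^1(\Omega,R)$ gives $u^*\in BV(\Omega)$, so $E(u^*)=m^*$ and $u^*$ is a solution of \eqref{2}.

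I expect the a priori bound to be the only genuine obstacle, and it is precisely the two hypotheses that make it work. The condition $\lambda_{\min}>0$ prevents the $VTV$ term from leaving some feature---and, through the left inverse $F_{L}^{-1}$, the whole gradient of $u$---unpenalized, while $A\otimes 1\ne 0$ is what allows the fidelity term to control the mean $c_n$, so the minimizing sequence cannot escape along constant functions. Everything after the $BV$-bound is the textbook compactness-plus-lower-semicontinuity scheme; the only point requiring care is the choice of topologies: strong $L^1$ convergence to produce a candidate limit, upgraded to weak $L^2$ convergence so that the bounded operators $A$ and $F$ pass to the limit and convexity delivers the needed lower semicontinuity.
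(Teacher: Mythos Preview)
Your proof is correct and follows essentially the same approach as the paper's: bound $|u_n|_{TV}$ via Lemma~3 (in the left-inverse form of Remark~1) and the inequality $\lambda_{\min}|Fu_n|_{VTV}\le|\vec\lambda Fu_n|_{VTV}$, bound $\|u_n\|_{L^2}$ using the fidelity term and $A\otimes 1\ne 0$, then extract a $BV$-compact subsequence and invoke weak lower semicontinuity. The only cosmetic difference is that the paper delegates the $L^2$ bound to references \cite{Ambrosio}\cite{Vese} and dismisses the final step as ``a standard process,'' whereas you spell out the Poincar\'e--Wirtinger decomposition $u_n=c_n+v_n$ and the lower-semicontinuity argument explicitly.
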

\begin{proof}
	Let $u_n$ be a minimizing sequence of Problem (\ref{2}), then there exists a constant $M>0$ such that $E(u_n)\le M,$ i.e.
$${\left| {\vec \lambda Fu} \right|_{VTV}} + \frac{1}{2}{\left\|Au - f\right\|_{L^2(\Omega,R)}^2}\le M.$$
Therefore, 
\begin{equation}\label{3}
{\left| {\vec \lambda Fu} \right|_{VTV}}\le M
\end{equation}
\begin{equation}\label{4}
{\left\|Au - f\right\|_{L^2(\Omega,R)}^2}\le 2M
\end{equation}
From equation (\ref{3}), by Lemma 3, we have 
\begin{equation}\label{5}
\begin{aligned}
|u|_{TV}&\le\left\|F^{-1}\right\||Fu|_{VTV}\\
&\le\frac{1}{\lambda_{\min}}\left\|F^{-1}\right\||\vec \lambda Fu|_{VTV}\\
&\le \frac{M}{\lambda_{\min}}\left\|F^{-1}\right\|
\end{aligned}
\end{equation}
As proved in in \cite{Ambrosio}\cite{Vese}, from equation (\ref{4}), we can obtain
\begin{equation}\label{6}
\left\|u\right\|_{L^2(\Omega,R)}\le M
\end{equation}

Combining equations (\ref{5}) and (\ref{6}), we have that the sequence $\{u_n\}$ is bounded in $BV(\Omega)$.  Thus, there exists $u^*\in BV(\Omega)$ such that ${u_n}$  converges to ${u^*}$  weakly in ${L^2}(\Omega )$  and strongly in  ${L^1}(\Omega )$. Since the operator $F$  is linear and bounded, we have that $F{u_n}$ also converges to $F{u^*}$  weakly in ${L^2}(\Omega )$  and strongly in  ${L^1}(\Omega )$. Then a standard process can show that ${u^*}$  is a minimizer of  $E(u)$.
\end{proof}
\begin{Theorem}
	If there exists a $F_{k}$ such that $F_k$ is invertible and $F_k{\rm{div}} \eta={\rm{div}} F_k\eta$ for any $\eta\in C_0^1(\Omega,R^{2})$, $\lambda_{\min}=\mathop{\min}\limits_{i=1,2,...m}\lambda_i> 0$ and  $A\otimes 1=\int_\Omega A(x)dx\ne 0$, then Problem (\ref{2}) admits a solution.
\end{Theorem}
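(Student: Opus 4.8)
The plan is to reduce Theorem 2 to Theorem 1 by manufacturing, out of the single invertible component $F_k$, an operator $F_L^{-1}:L^2(\Omega,R^m)\to L^2(\Omega,R)$ that meets the two requirements of Remark 1. The natural candidate is the ``extract the $k$-th slot and undo $F_k$'' map $F_L^{-1}g:=F_k^{-1}g_k$ for $g=(g_1,\dots,g_m)$. It is linear, bounded with $\|F_L^{-1}\|\le\|F_k^{-1}\|$, and $F_L^{-1}Fu=F_k^{-1}F_ku=u$, so $F_L^{-1}F=I$; all of this is routine. The only real work is to verify the commutation $(F_L^{-1})^*\mathrm{div}\,\eta=\mathrm{div}\,(F_L^{-1})^*\eta$ for every $\eta\in C_0^1(\Omega,R^2)$.

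First I would identify $(F_L^{-1})^*$. From $\langle F_L^{-1}g,h\rangle_{L^2(\Omega,R)}=\langle F_k^{-1}g_k,h\rangle=\langle g_k,(F_k^{-1})^*h\rangle=\langle g,e_k(F_k^{-1})^*h\rangle_{L^2(\Omega,R^m)}$, where $e_k$ is the $k$-th standard basis vector of $R^m$, one reads off $(F_L^{-1})^*h=e_k\,(F_k^{-1})^*h$. Plugging this in, both sides of the desired identity have vanishing $i$-th component for $i\ne k$, and their $k$-th components agree exactly when $(F_k^{-1})^*\mathrm{div}\,\eta=\mathrm{div}\,(F_k^{-1})^*\eta$. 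Since $(F_k^{-1})^*=(F_k^*)^{-1}$, everything comes down to showing that $(F_k^*)^{-1}$ commutes with $\mathrm{div}$, given only the hypothesis that $F_k$ does.

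This is the step I expect to be the crux, and I would route it through partial derivatives. Specializing $\eta=(h,0)$ and $\eta=(0,h)$ in $F_k\,\mathrm{div}\,\eta=\mathrm{div}\,F_k\eta$ gives $F_k\partial_jh=\partial_jF_kh$ for $h\in C_0^1(\Omega,R)$ and $j=1,2$, so $F_k$ commutes with each $\partial_j$. Taking adjoints of this identity — for smooth $g$, $\langle F_k^*\partial_jh,g\rangle=\langle\partial_jh,F_kg\rangle=-\langle h,\partial_jF_kg\rangle=-\langle h,F_k\partial_jg\rangle=-\langle F_k^*h,\partial_jg\rangle=\langle\partial_jF_k^*h,g\rangle$, with no boundary term since the fields are compactly supported — shows $F_k^*$ commutes with each $\partial_j$, hence so does $(F_k^*)^{-1}$ (the map $T\mapsto T^{-1}$ preserves commutation with a fixed operator, exactly as in Lemma 4). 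Recombining, $(F_k^{-1})^*\mathrm{div}\,\eta=(F_k^{-1})^*(\partial_1\eta_1+\partial_2\eta_2)=\partial_1(F_k^{-1})^*\eta_1+\partial_2(F_k^{-1})^*\eta_2=\mathrm{div}\,(F_k^{-1})^*\eta$, which is what was needed. With the conditions of Remark 1 now in force, and $\lambda_{\min}>0$ and $A\otimes 1\ne0$ being precisely the remaining hypotheses, Theorem 1 applies verbatim and produces a minimizer (using also the standing assumption that $F$ is linear and bounded, for the weak-convergence step).

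The main obstacle is the regularity bookkeeping concealed in the adjoint step: for $F_k^*\partial_jh=\partial_jF_k^*h$ and its inverse analogue to be genuine $L^2$ identities, $F_k^*h$ and $(F_k^*)^{-1}h$ must be weakly differentiable when $h\in C_0^1$, and one must check that passing to adjoints creates no domain mismatch — the same looseness the paper already tolerates in the proof of Lemma 3, where $\mathrm{div}\,F^*\xi$ appears. This is automatic for the convolution/wavelet-filter feature operators used later, where $F_k$ and $F_k^*$ are Fourier multipliers and commute with differentiation, but it merits a word. An equivalent route that avoids the $m$-dimensional construction is to prove the single a priori bound $|u|_{TV}\le\|F_k^{-1}\|\,|F_ku|_{TV}\le\frac{\|F_k^{-1}\|}{\lambda_{\min}}\,|\vec\lambda Fu|_{VTV}$ directly — using that $|\cdot|_{VTV}$ dominates $\lambda_k$ times the TV of the $k$-th component (restrict the test fields in $P$ to the $k$-th slot) together with the scalar case of Lemma 3 for $F_k$, which still relies on the adjoint-commutation established above — and then reuse the boundedness-in-$BV$, weak-compactness and lower-semicontinuity argument of Theorem 1 unchanged.
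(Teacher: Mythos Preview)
Your proposal is correct, and the alternative route you sketch in the final paragraph is precisely the paper's argument: from $|\vec\lambda Fu|_{VTV}=\sum_i\lambda_i|F_iu|_{TV}$ (equation~(\ref{1})) extract $|F_ku|_{TV}\le M/\lambda_k$, then bound $|u|_{TV}\le\|F_k^{-1}\|\,|F_ku|_{TV}$ ``like the proof in Lemma~2'' and finish as in Theorem~1. Your main route---constructing the left inverse $F_L^{-1}g=F_k^{-1}g_k$ so as to invoke Remark~1 and thereby Theorem~1---is an equivalent repackaging; both reduce to the same crux, namely upgrading the hypothesis $F_k\,\mathrm{div}=\mathrm{div}\,F_k$ to $(F_k^{-1})^*\,\mathrm{div}=\mathrm{div}\,(F_k^{-1})^*$. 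You spell this adjoint-then-invert step out explicitly (componentwise commutation with $\partial_j$, then the Lemma~4 mechanism), whereas the paper simply asserts the TV bound ``like the proof in Lemma~2'' without addressing why the hypothesis on $F_k$ suffices for an estimate whose proof uses $(F_k^{-1})^*$; on this point your write-up is tighter than the original. The regularity caveat you flag about weak differentiability of $F_k^*h$ is a genuine loose end in the paper too---harmless for the convolution operators of Section~2.1, but worth a word in full generality.
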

\begin{proof}
	Let $u_n$ be a minimizing sequence of Problem (\ref{2}), then there exists a constant $M>0$ such that $E(u_n)\le M,$ i.e.
	$${\left| {\vec \lambda Fu} \right|_{VTV}} + \frac{1}{2}{\left\|Au - f\right\|_{L^2(\Omega,R)}^2}\le M.$$
	Therefore, 
	\begin{equation*}
	{\left| {\vec \lambda Fu} \right|_{VTV}}=\sum_{i}^{m}|\lambda_iF_iu|_{TV}\le M
	\end{equation*}
	and so
		\begin{equation*}
	|F_ku|_{TV}\le \frac{M}{\lambda_k}
	\end{equation*}
	
	Since $F_k$ is invertible and $F_k{\rm{div}} \eta={\rm{div}} F_k\eta$ for any $\eta\in C_0^1(\Omega,R^{2})$, like the proof in Lemma 2, we can obtain
	$$|u|_{TV}\le\left\|F_k^{-1}\right\||F_ku|_{TV}\le\frac{M}{\lambda_k}\left\|F_k^{-1}\right\|$$
	
	The left part of the proof is the same as that in Theorem 1 and so is omitted.	
\end{proof}
\begin{Remark}
To assure the existence of the solution, we can intentionally design a $F_k$ such that $F_k$ satisfies the conditions in Theorem 2. The simplest $F_k$ satisfying the conditions is $F_k=I$.
\end{Remark}

In the following, we give the definition of the feature operator $F$. In  signal processing, a filter of convolution is usually used to extract some feature of a signal. In this paper, we also use the convolution to define $F_i$.

Let $\{K_i(x)\}$ be a family of functions with a compact support $E=[-r,r]\times [-r,r]$ such that $K_i(x)=0$ for $x\in R^2-E$ and $K_i(x)\in L^{\infty}(R^2,R)$. For any $f\in  L^{2}(\Omega,R)$, we define $F_if$ as
\begin{equation}\label{7}
F_if(x)=f\otimes K_i=\int_\Omega f(y)K_i(x-y)dy.
\end{equation}
Since 
\begin{align*}
|\int_\Omega f(y)K_i(x-y)dy|^2&\le\int_\Omega f^2(y)dy\int_\Omega K_i^2(x-y)dy\\
&\le\left\|K_i\right\|^2_{L^{\infty}(R^2,R)}\left\|f\right\|^2_{L^{2}(\Omega,R)}
\end{align*}
we have $F_if\in L^{2}(\Omega,R)$.

By the Fubini's theorem,  we have
\begin{equation}\label{8}
F_i^*f(x)=f\otimes \hat K_i=\int_\Omega f(y)K_i(y-x)dy,
\end{equation}
where $\hat K_i(x)=K_i(-x)$.

For any $\eta=(\eta_1,\eta_2)\in C_0^1(\Omega,R^{2})$, we have
\begin{align*}
F_i^*{{\rm{div}}} \eta&=\int_\Omega {\rm{div}}\eta(y)K_i(y-x)dy\\
&=-\int_\Omega \eta(y)\nabla_y (K_i(y-x))dy\\
&=\int_\Omega \eta(y)\nabla_x (K_i(y-x))dy\\
&=\int_\Omega \eta_1(y)\frac{{\partial {K_i}(y - x)}}{{\partial {x_1}}}+ \eta_2(y)\frac{{\partial {K_i}(y - x)}}{{\partial {x_2}}}dy\\
&={\rm{div}}(\int_\Omega \eta_1(y)K_i(y-x)dx,\int_\Omega \eta_2(y)K_i(y-x)dx)\\
&={\rm{div}}(F_i^*\eta_1,F_i^*\eta_2)\\
&={\rm{div}}F_i^*\eta
\end{align*}
where we assume $x=(x_1,x_2)$.
Combining the above equation with Lemma 1, we thus have 
$$F^*{\rm{div}}\eta={\rm{div}}F^*\eta.$$
\subsection{Algorithm}
In this section, we utilize the split Bregman algorithm \cite{Goldstein2009The} to solve problem (\ref{2}). Before describing the algorithm, we need to give the discrete version of Problem (\ref{2}).
 
Let $f$ be a  matrix, such as an image of size $s\times t$, $K_i$ a matrix convolution filter of size $(2r+1)\times (2r+1)$. Then the discrete convolution of $f$ and $K_i$ is defined by
\begin{align*}
F_if(k,l)=(f\otimes K_i)(k,l)
&=\sum_{p=k-r,q=l-r}^{p=k+r,q=l+r}f(k,l)K_i(k-p,k-q)\\
&=\sum_{p,q=-r}^{r}f(k-p,l-q)K_i(p,q)
\end{align*}
where we assume the center of $K_i$ is the  origin of coordinates and extend $f$ circularly. 
By equation (\ref{8}), we have 
\begin{align*}
F_i^*f(k,l)=(f\otimes \hat K_i)(k,l)
&=\sum_{p=k-r,q=l-r}^{p=k+r,q=l+r}f(k,l)K_i(k-p,l-q)\\
&=\sum_{p,q=-r}^{r}f(k-p,l-q)K_i(-p,-q)
\end{align*}

Let $g=(g_1,g_2,...,g_r)$, $g_i$ be a matrix of size $s\times t$. Define the p-norm of $g$ as
\begin{align*}
\left\|g\right\|_1&=\sum_{i=1}^r\left\|g_i\right\|_1\\
\left\|g\right\|_2&=\sqrt{\sum_{i=1}^r\left\|g_i\right\|_2^2}
\end{align*}
Then by equation (\ref{1}),  the discrete version of Problem (\ref{2}) is 

\begin{equation}\label{9}
\begin{aligned}
\mathop{\arg\min}\limits_{u}\left\{ 
{E(u)=\sum_{i=1}^m\lambda_i\left\|\nabla  F_iu\right\|_1+\frac{1}{2}{\left\|Au - f\right\|_2^2}}
   \right\}
\end{aligned}
\end{equation}

 Let  $d = \nabla (Fu)$, i.e. $d_i=\nabla (F_iu)$, ${d^0} = {b^0} = 0$, then Problem (\ref{9}) is equivalent to the following iterations:

for j=0,1,2,…
\begin{align}
&{u^{j + 1}} = \arg \min \frac{1}{2}\left\| {Au - f} \right\|_2^2 + \sum_{i=1}^m\frac{\gamma_i}{2}\left\| {\nabla (F_iu) - {d^j_i} + {b^j_i}} \right\|_2^2\label{10}\\
&{d^{j + 1}} =\arg \min  \sum_{i=1}^m(\lambda_i{\left\| d_i \right\|_1} + \frac{\gamma_i }{2}\left\| {d_i - \nabla ( F_i{u^{j + 1}}) - {b_i^j}} \right\|_2^2)\label{11}\\
&{b_i^{j + 1}} = {b_i^j} + (\nabla ( F_i{u^{j + 1}}) - {d_i^{j + 1}})\label{12}
\end{align}

The KKT condition for problem (\ref{10}) is 
\[\begin{array}{l}
{A^*}(Au - f) + \sum_{i=1}^m\gamma_i{F_i^*}{\nabla ^*} (\nabla F_iu - {d_i^j} + {b_i^j})
= 0
\end{array}\] Therefore, the solution for Problem (\ref{10}) is 
\begin{equation}\label{13}
{u^{j + 1}} = FFT^{ - 1}(\frac{{FFT(\sum_{i=1}^m\gamma_i{F_i^*}{\nabla ^*} ({d_i^j} - {b_i^j}) + {A^*}f)}}{{FFT({A^*}A +  \sum_{i=1}^m\gamma_i{F_i^*}{\nabla ^*} \nabla F_i)}})
\end{equation}

The solution for problem (\ref{11}) can be explicitly solved:
\begin{equation}\label{14}
{d_i^{j + 1}} = TH(\nabla ( F_i{u^{j + 1}}) + {b_i^j},\frac{\lambda_i}{\gamma_i} )
\end{equation}
where  $TH(x,T) = {\mathop{\rm sgn}} (x)\max (\left| x \right| - T,0)$.

At last, the algorithm is summarized in Algorithm \ref{alg:buildtree}.
\begin{algorithm}
	\caption{}
	\label{alg:buildtree}
\begin{algorithmic}
	\STATE{Set the initial values  ${d^0} = {b^0} = 0$,  $k = 0$, the maximal iteration $M > 0$  and the tolerant error  $tol > 0$.}
	\WHILE{$j < M$ and $\frac{{\left\| {{u^{j + 1}} - {u^j}} \right\|}}{{\left\| {{u^k}} \right\|}} > tol$}
	\STATE{Update  ${u^{j + 1}}$,  ${d^{j + 1}}$ and ${b^{j+ 1}}$  by equations (\ref{13}), (\ref{14}) and (\ref{12}), respectively.}
    \ENDWHILE
    \RETURN $u^{j+1}$
\end{algorithmic}
\end{algorithm}
\begin{Remark}
	To implement Algorithm 1, if we don't consider the existence of solution, we only need $F_i^*$ but don't need $F^{-1}$ or $F_i^{-1}$.
\end{Remark}
The proof of the convergence of the split Bregman algorithm was given in \cite{doi:10.1137/090753504}. The  convergence of our algorithm can be proved accordingly and so is not listed here.

\section{Experimental results}
In this section, we conduct some numerical experiments on image denoising and image deblurring using Algorithm \ref{alg:buildtree}. To implement Algorithm 1, we need to construct  $K_i$ explicitly. In our experiments, the piecewise linear B-spline wavelet frame is used for constructing $K_i$.
The filter banks  of the B-spline wavelet frame are
$$[{h_1}( - 1),{h_1}(0),{h_1}(1)] = \frac{1}{4}[1,2,1]$$  $$[{h_2}( - 1),{h_2}(0),{h_2}(1)] = \frac{{\sqrt 2 }}{4}[ 1,0,-1]$$ $$[{h_3}( - 1),{h_3}(0),{h_3}(1)] = \frac{1}{4}[ - 1,2, - 1].$$
Then the $K_i$ used in our experiment are
$K_1=h_1^\dagger h_1$ and 
\begin{equation}\label{A2}
K_{3(i-1)+j}=h_i^\dagger h_j, \;\;\;i,j=1,2,3.
\end{equation}
Clearly, the feature image generated by $K_1$ have more even region than those by the other $K_i$. So in the experiments, we set a larger $\lambda_1$ and $\gamma_1$ to impose a more smooth effect on it.

By the unitary extension principle (UEP) in \cite{Amos},  we have that 
\begin{equation}\label{A3}
F^*F=I.
\end{equation}
 Thus, $F_L^{-1}=F^*$.
By Lemma 1 and (\ref{A3}), we also have  
\begin{equation}\label{15}
\sum_{i=1}^mF_i^*F_i=1
\end{equation}

If we set the parameter vector $\gamma$  as $\gamma_1>\gamma_2=\gamma_3=...=\gamma_m$, then by equation (\ref{15}), we have
\begin{align*}
\sum_{i=1}^m\gamma_i{F_i^*}{\nabla ^*} \nabla F_i&=(\gamma_1-\gamma_2)F_1^*F_1\nabla^*\nabla+\gamma_2{\nabla ^*} \nabla\sum_{i=1}^m{F_i^*} F_i\\
&=(\gamma_1-\gamma_2)F_1^*F_1\nabla^*\nabla+\gamma_2{\nabla ^*} \nabla,
\end{align*}
where we can interchange these operators since they are all convolutions. Thus equation (\ref{13}) in Algorithm 1 can be reduced to
\begin{equation}\label{16}
{u^{j + 1}} = FF{T^{ - 1}}(\frac{{FFT(\sum_{i=1}^m\lambda_i{F_i^*}{\nabla ^*} ({d_i^j} - {b_i^j}) + {A^*}f)}}{{FFT({A^*}A +  (\gamma_1-\gamma_2)F_1^*F_1\nabla^*\nabla+\gamma_2{\nabla ^*} \nabla)}})
\end{equation}

\begin{figure}[htbp]	
	\centering
	\includegraphics[width=0.3\textwidth]{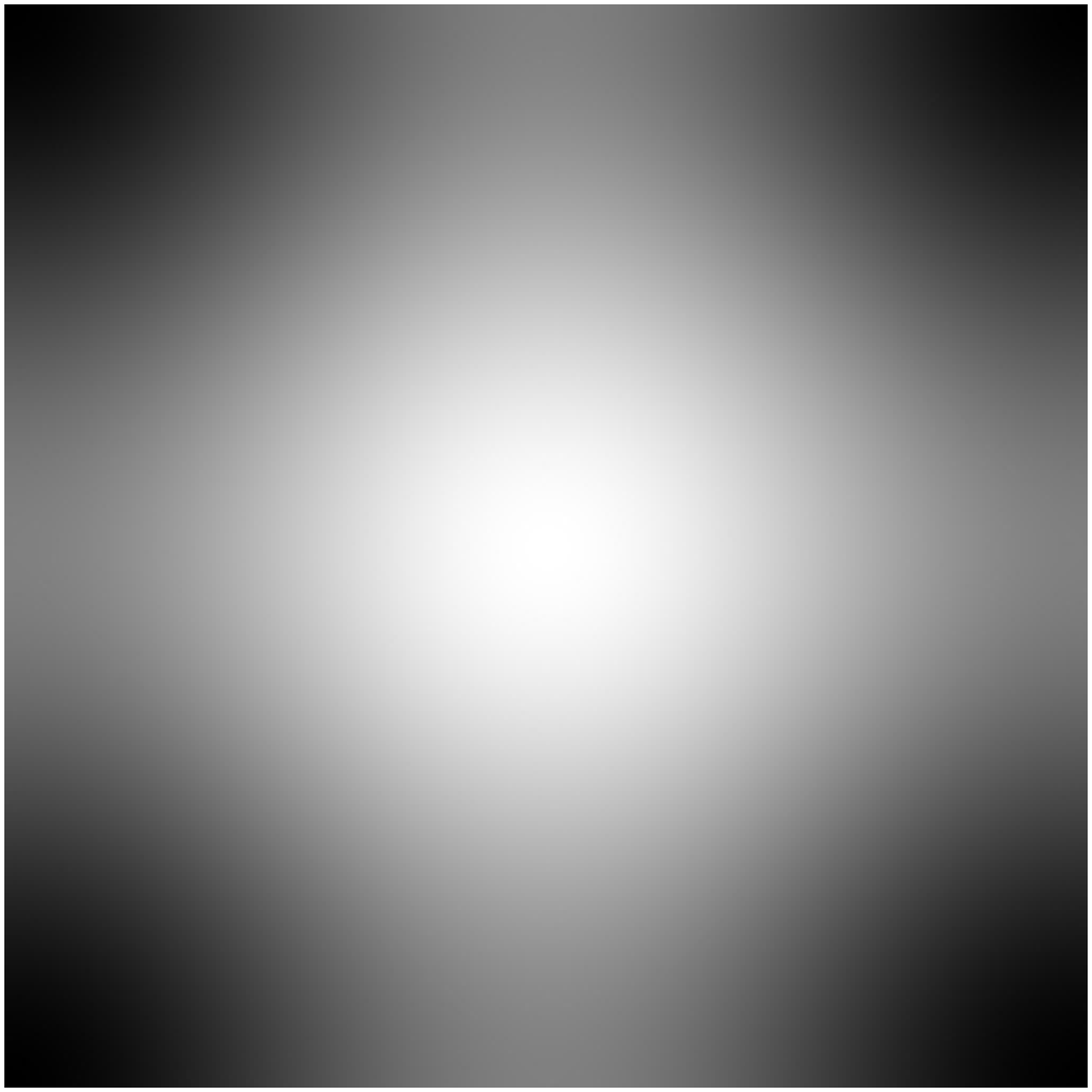}
	\includegraphics[width=0.3\textwidth]{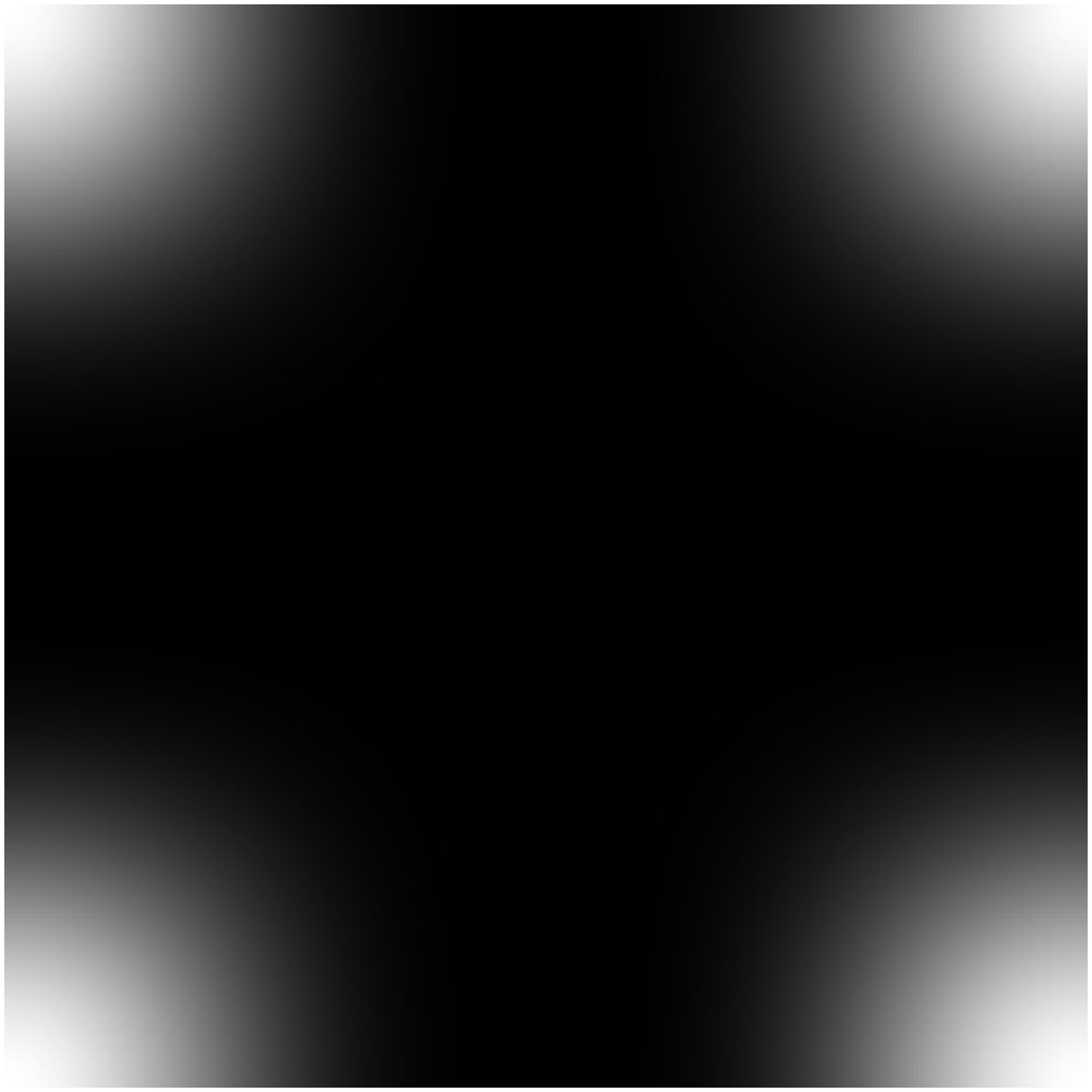}
	\caption{From left to right: FFTs of ${\nabla ^*}\nabla$ and $F_1^*F_1$.}
	\label{F:FFT}
\end{figure}

In Figure \ref{F:FFT}, we show the Fast Fourier transform (FFT) of ${\nabla ^*}\nabla$ and
$F_1^*F_1$, from which we can see that  the effect of the FFT of $F_1^*F_1$ is opposite to that of ${\nabla ^*}\nabla$. Since the effect of  ${\nabla ^*}\nabla$ is smoothing the image, the term $F_1^*F_1$ may slow down the convergence speed and cause artifacts in the denoised images. Thus in the numerical simulations, we neglect the term $(\gamma_1-\gamma_2)F_1^*F_1\nabla^*\nabla$  for computational efficiency. Equation (\ref{16}) is thus changed into
\begin{equation}\label{17}
{u^{j + 1}} = FF{T^{ - 1}}(\frac{{FFT(\gamma\sum_{i=1}^m\lambda_i{F_i^*}{\nabla ^*} ({d_i^j} - {b_i^j}) + {A^*}f)}}{{FFT({A^*}A + {\gamma _1}{\nabla ^*}\nabla )}})
\end{equation}
Equation (\ref{17}) can be seen as an analogy of the original split Bregman algorithm who solves $u^{j+1}$ by
$${u^{j + 1}} = FF{T^{ - 1}}(\frac{{FFT(\lambda {\nabla ^*} ({d^j} - {b^j}) + {A^*}f)}}{{FFT({A^*}A + {\gamma _1}{\nabla ^*}\nabla )}})$$
By experiments, we find that using equation (\ref{17}) can indeed get better results than using equation (\ref{13}). To demonstrate it, we give  both results of using equations (\ref{13}) and  (\ref{17})  in the compared experiments.

In Section \ref{se3.1}, we compare the experimental results of our model with the related methods (such as split Bregman algorithm (SB) \cite{Goldstein2009The}, dual tree complex wavelet transform (DTCWT) \cite{Selesnick}, local contextual hidden markov model (LHMM) \cite{Fan} and the  ${l_o}$  minimization in wavelet frame based model ( ${l_o}$-WF) \cite{Bin}. In Section \ref{se3.2}, we perform the comparison on the simulation of image debluring with our model,   the  ${l_o}$-WF model and the EDWF\cite{Jae} model. For measuring the image quality quantitatively, we use the index peak signal to noise ratio (PSNR) defined by
$$PSNR: = 10{\log _{10}}(\frac{{255*255*N}}{{\left\| {ref - \tilde u} \right\|_2^2}})$$
where $ref$  and $\tilde u$  are the true image and recovered image, respectively.
 
\subsection{Image denoising}
\label{se3.1}
In this subsection, we compare the performances of our model with  SB, DTCWT, LHMM and  ${l_o}$-WF on image denoising. Six images (all of size 256*256) shown in Figure \ref{F:orig} are  tested. The noisy images are generated by the MATLAB command `imnoise' with `type=Guassian', $m = 0$  and variance $v = 0.01$. 

\begin{figure}[htbp]	
	\centering
	\includegraphics[width=0.3\textwidth]{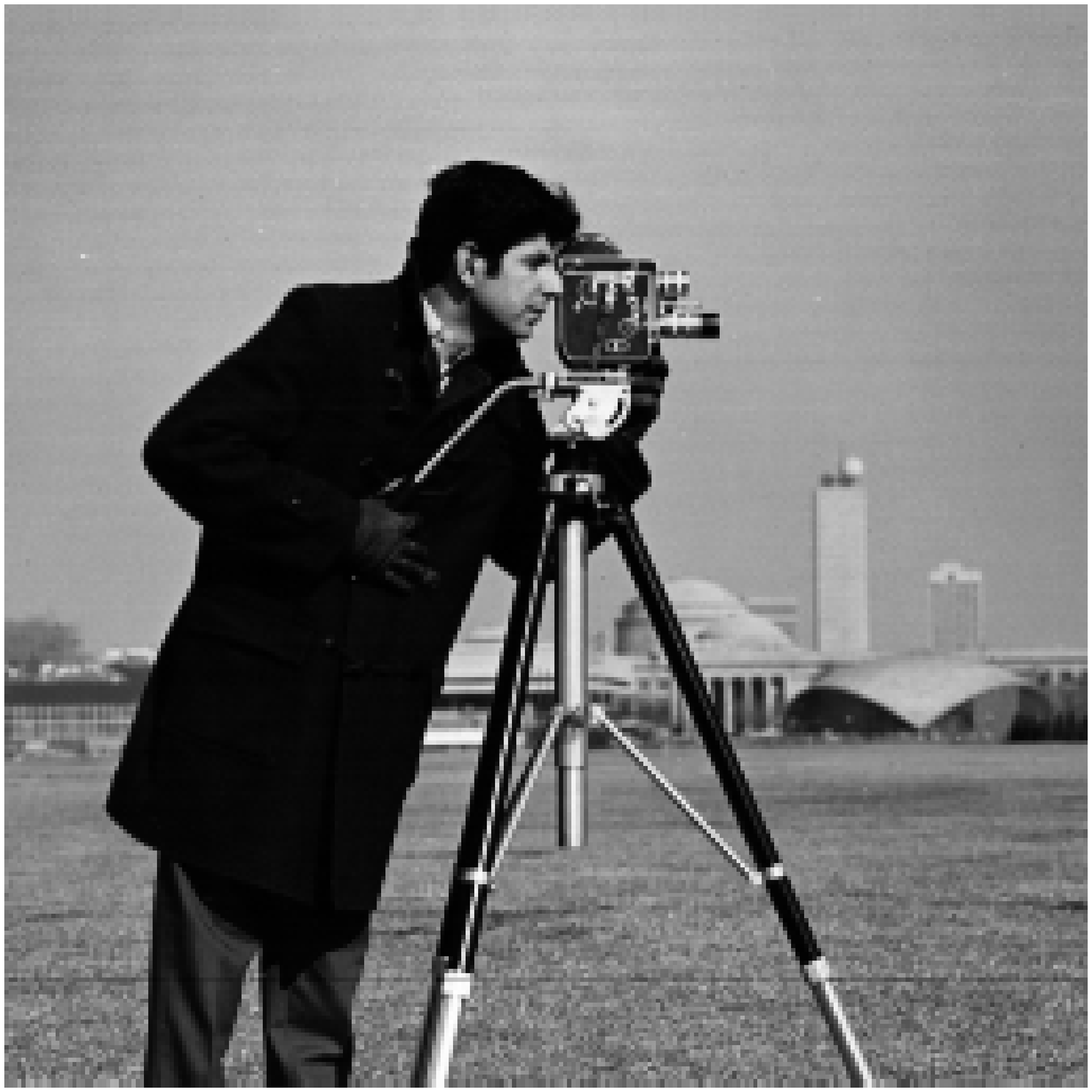}
	\includegraphics[width=0.3\textwidth]{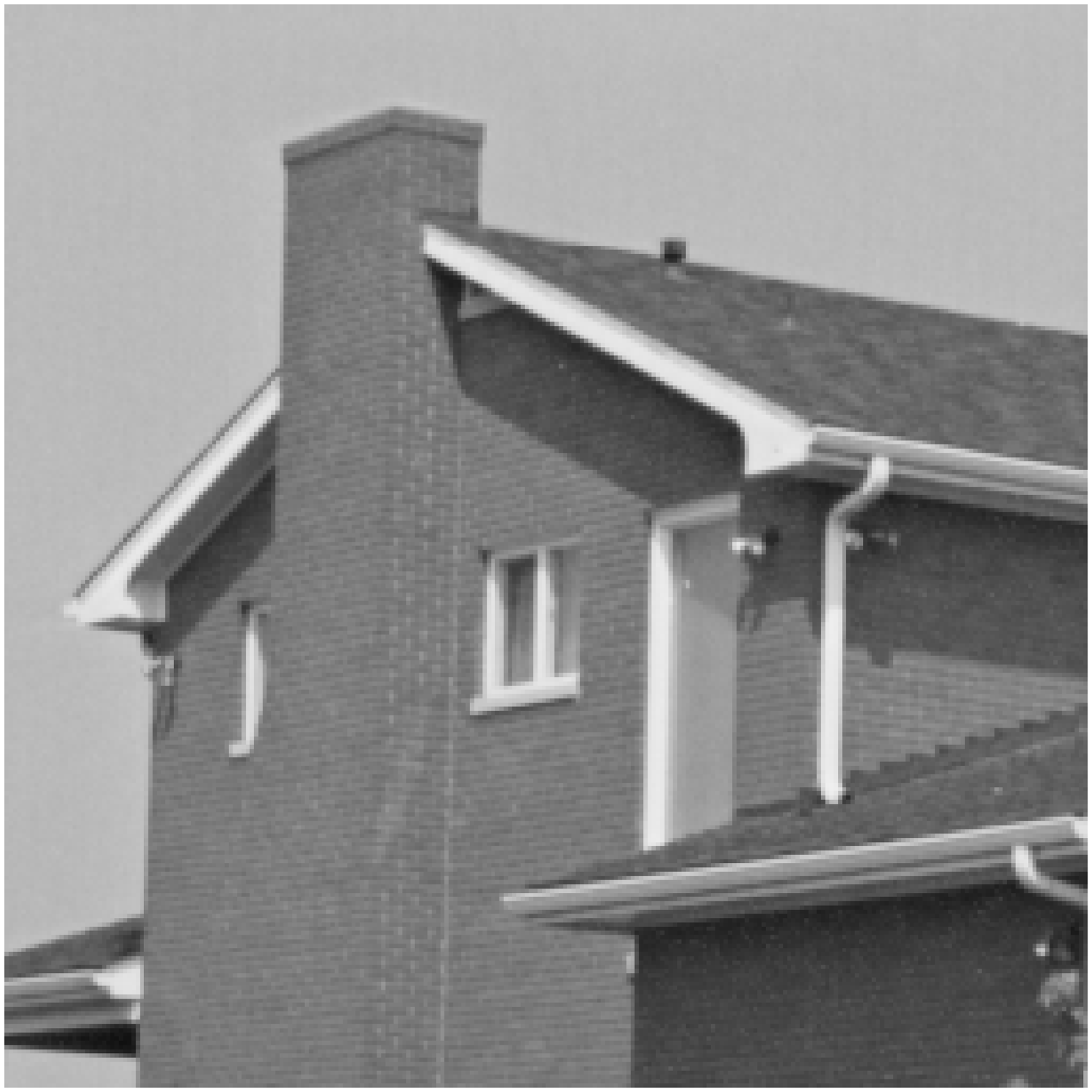}
	\includegraphics[width=0.3\textwidth]{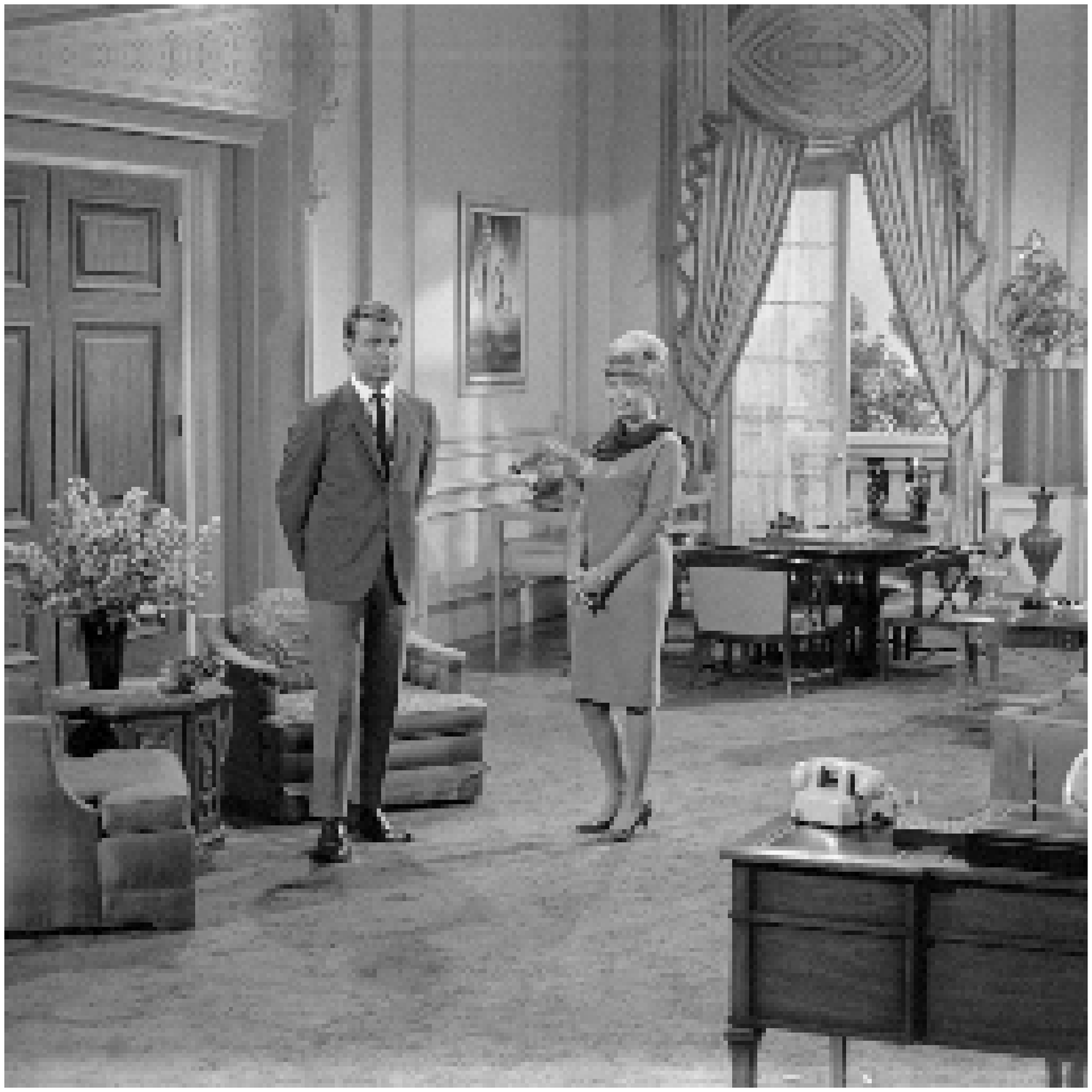}\\
	\includegraphics[width=0.3\textwidth]{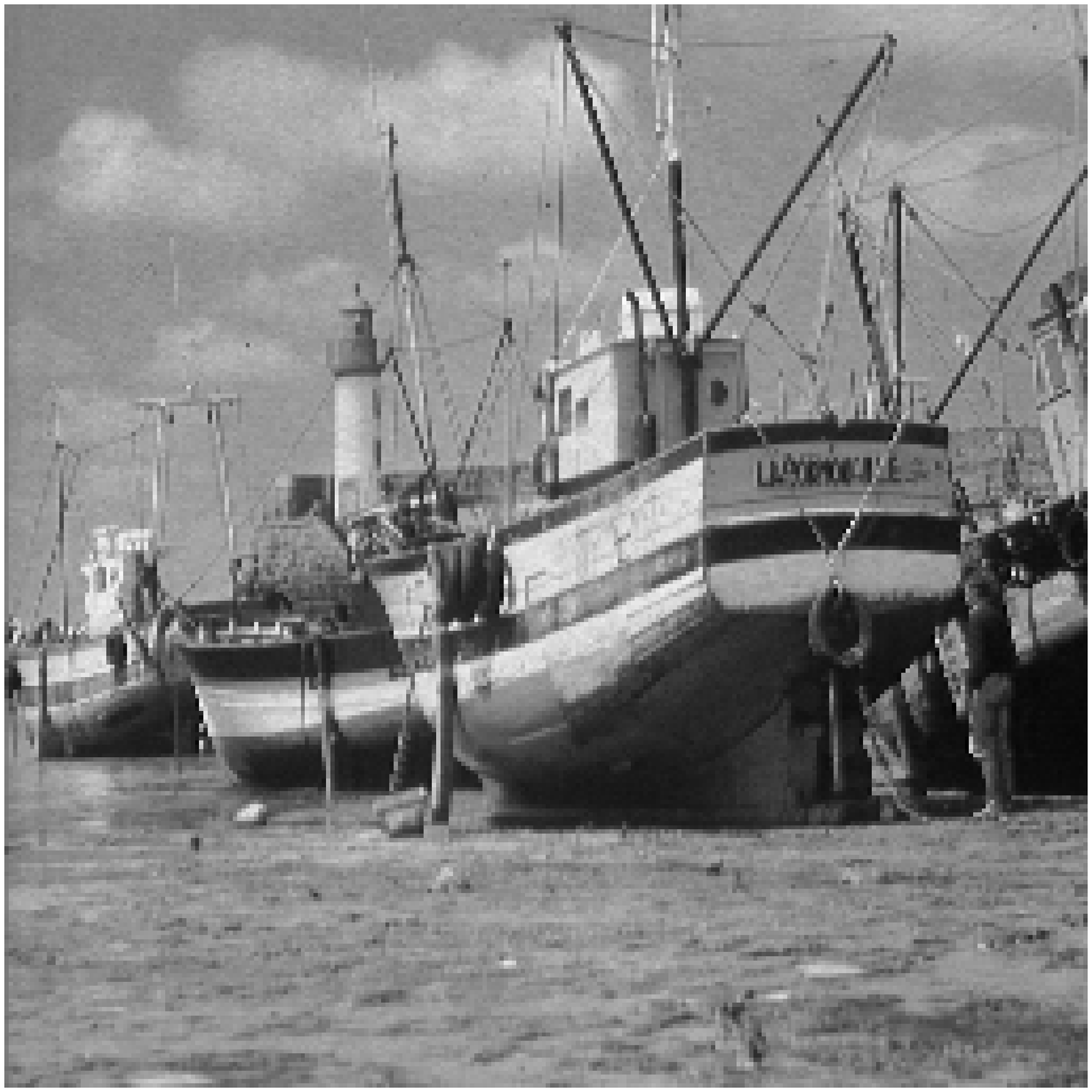}
	\includegraphics[width=0.3\textwidth]{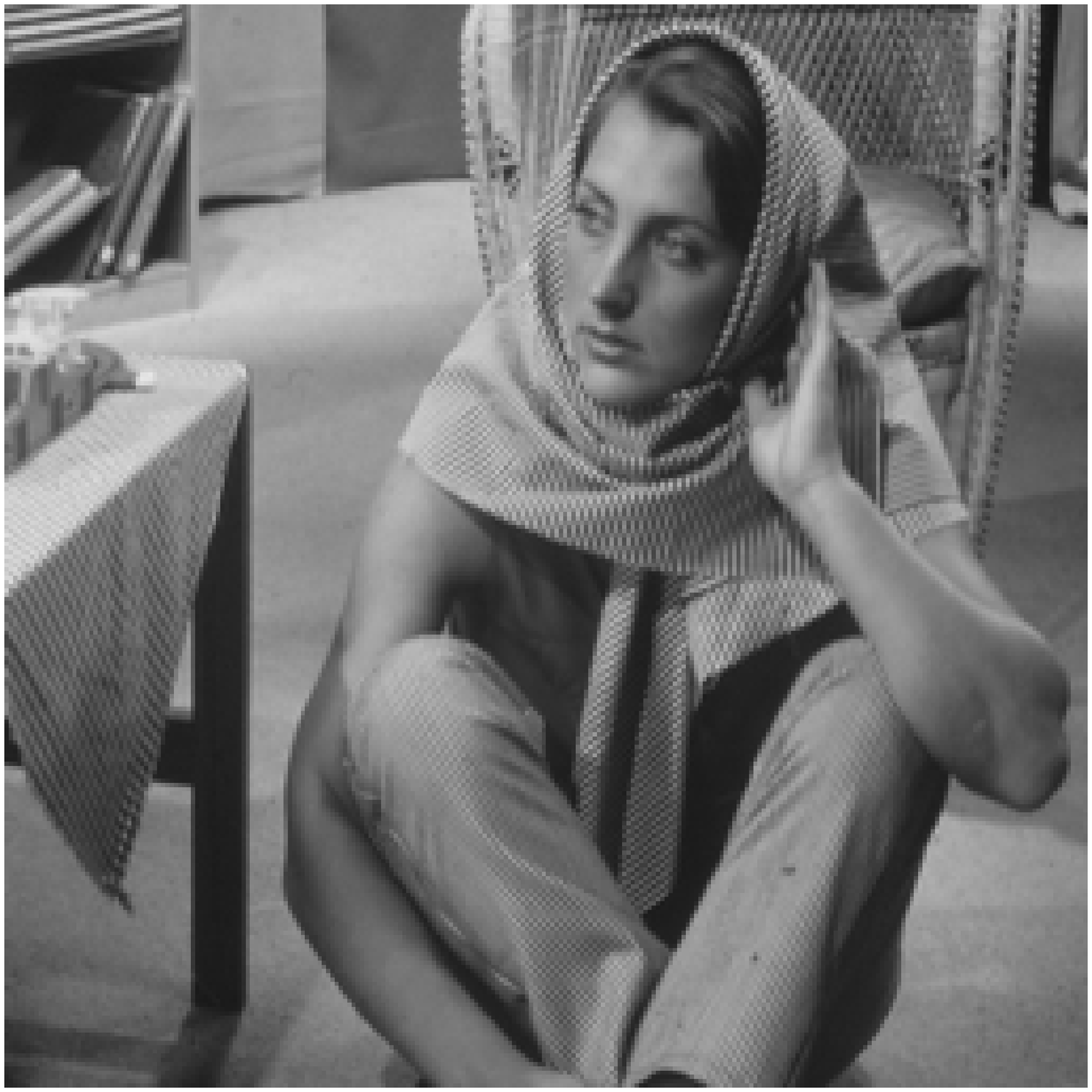}
	\includegraphics[width=0.3\textwidth]{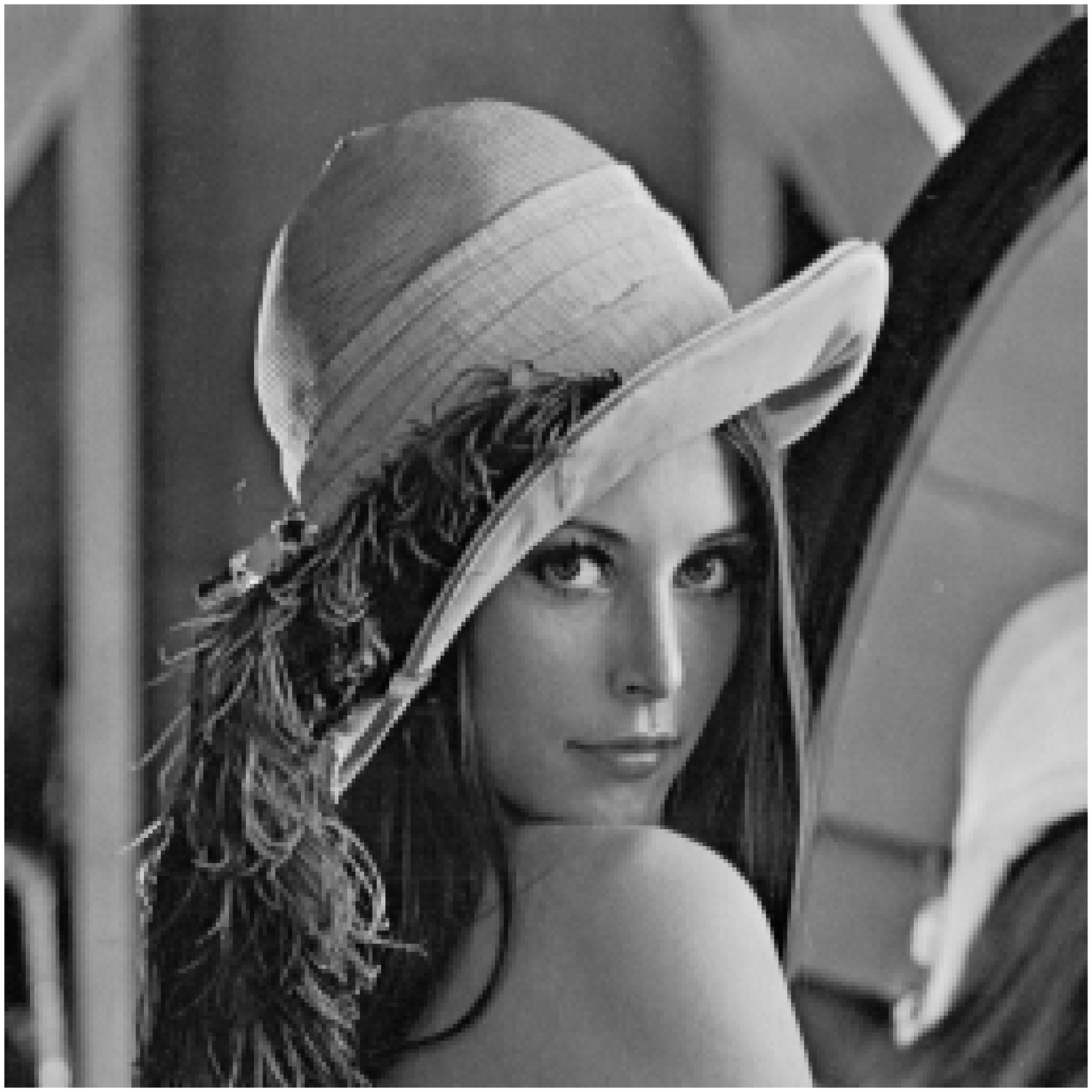}
	\caption{The  test images.}
	\label{F:orig}
\end{figure}

In the implementation of image denoising, the parameters of our algorithm using equation (\ref{13}) are set as:  ${\lambda _1} =\lambda _2==...=\lambda _9 = 0.2$,  ${\gamma _1} = 8$,  ${\gamma _2}=\gamma _3=...=\gamma _9 = 4$,  $tol = 1e - 4$; the parameters of our algorithm using equation (\ref{17}) are set as:  ${\lambda _1} = 2$,  ${\lambda _2}=\lambda _3=...=\lambda _9 = 1.5$,  ${\gamma _1} = 12$,  ${\gamma _2}=\gamma _3=...=\gamma _9 = 4.5$,  $tol = 5e - 4$. For  SB algorithm, the parameters are set as:  $\lambda  = 19$,  $gamma  = 1$,  $tol = 5e - 4$. For  DTCWT, the decomposition level  $L = 4$, threshold  $T = 20$. For ${l_o}$-WF, the piecewise linear B-spline wavelet frame is also used and the decomposition level  $L = 1$,  $\lambda  = 350$,  $\mu  = 1$,  $\gamma  = 0.2$,  $tol = 5e - 4$. The LHMM model has no free parameters. For fair comparison, all the paramters are uniformly set for  the six tested images and we try our best to tune them to get the highest average PSNR.

In Figure \ref{F；analysis}, we show the feature (coefficient) images of a noisy image and its denoised version by our method using equation (\ref{17}). From Figure \ref{F；analysis}, we can see that the noises in every feature images are all removed.

\begin{figure}[htbp]
	\centering
	\includegraphics[width=0.19\textwidth]{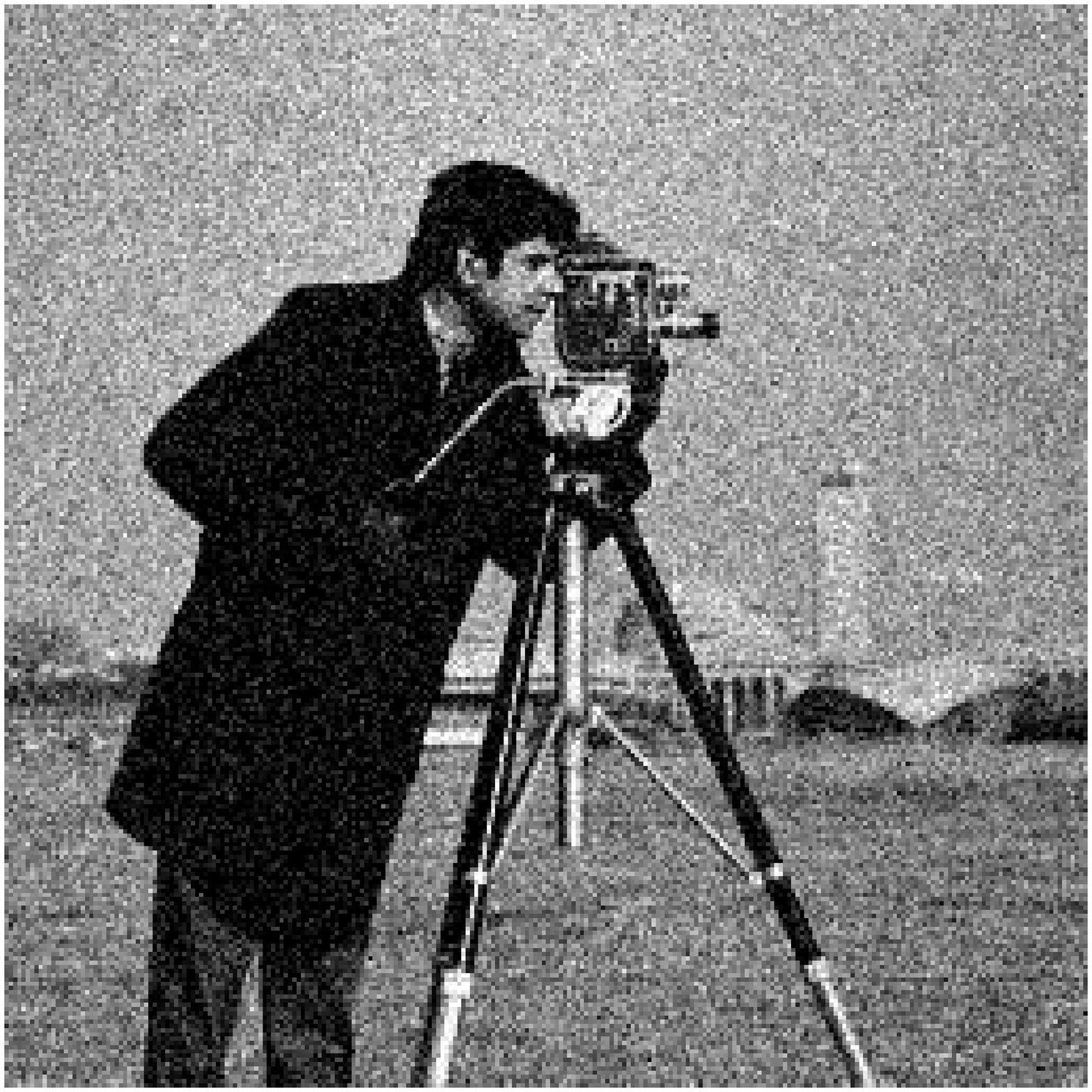}
	\includegraphics[width=0.19\textwidth]{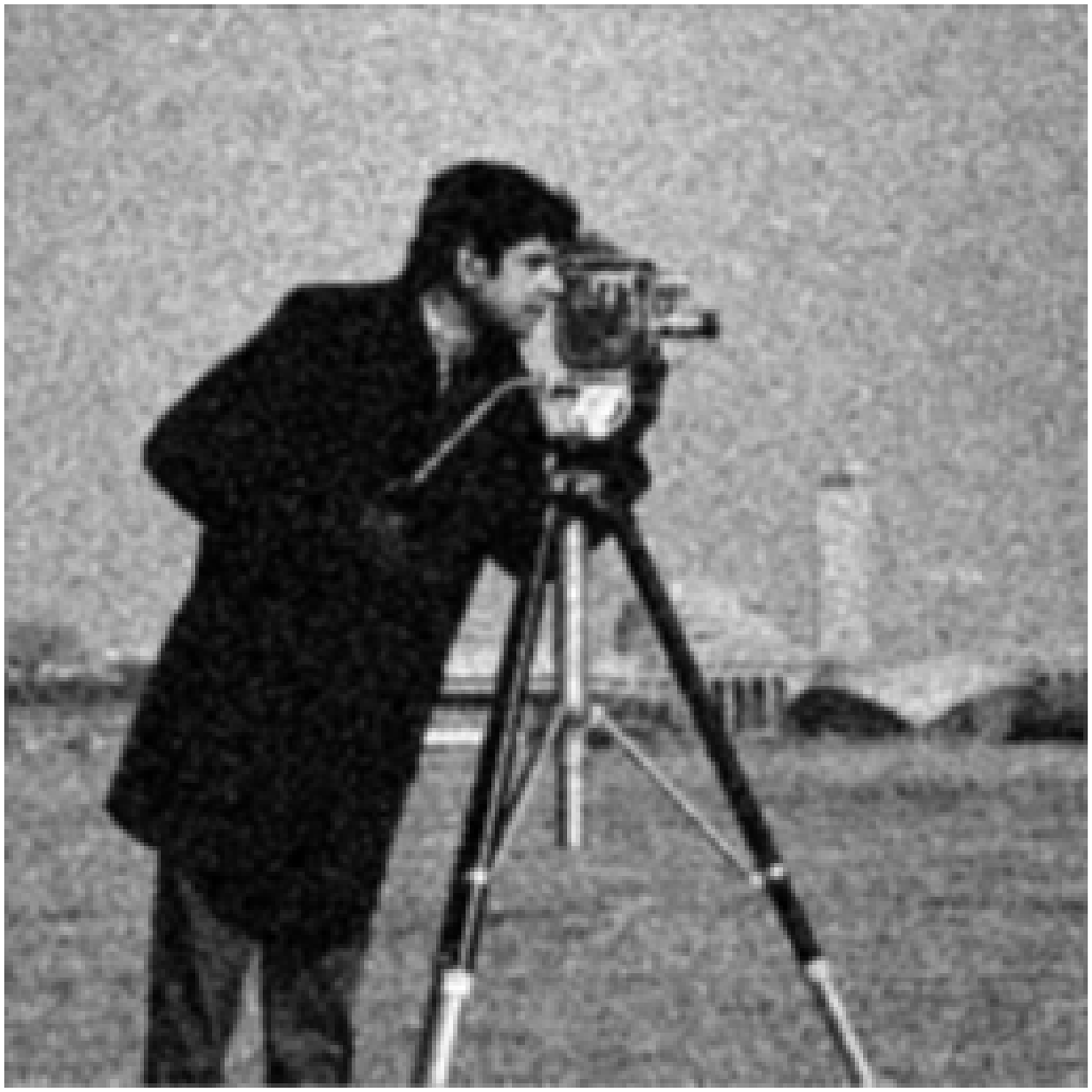}
	\includegraphics[width=0.19\textwidth]{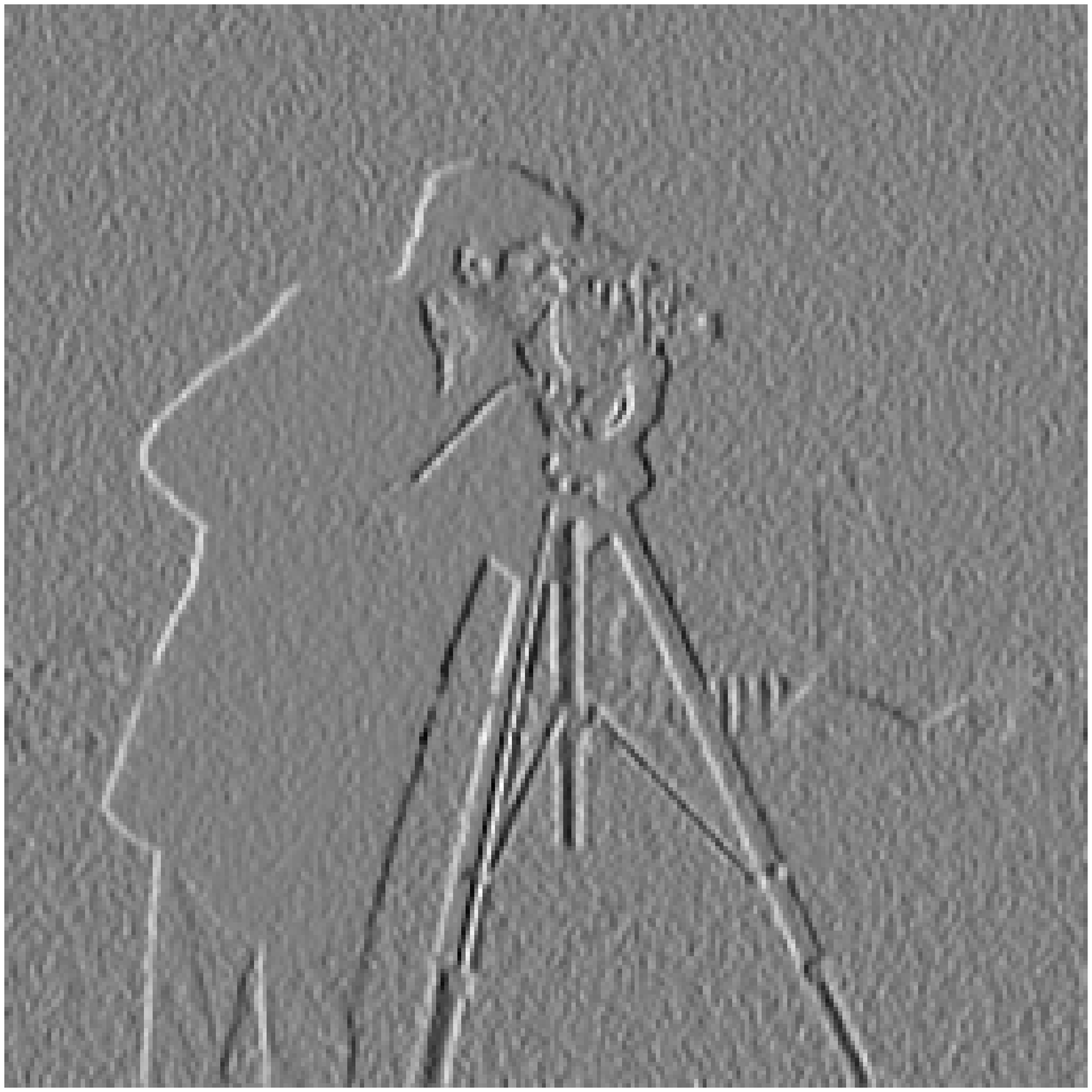}
	\includegraphics[width=0.19\textwidth]{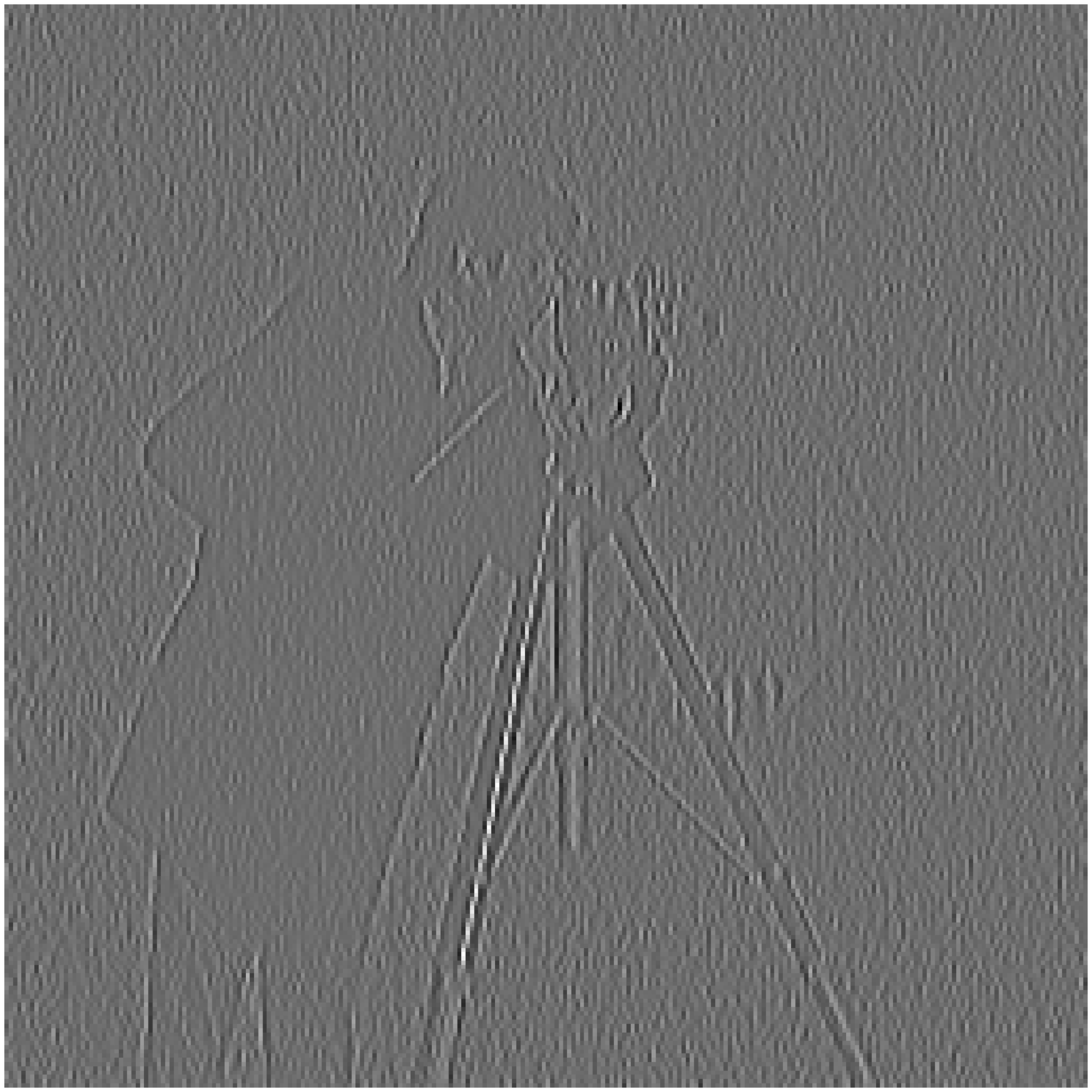}
	\includegraphics[width=0.19\textwidth]{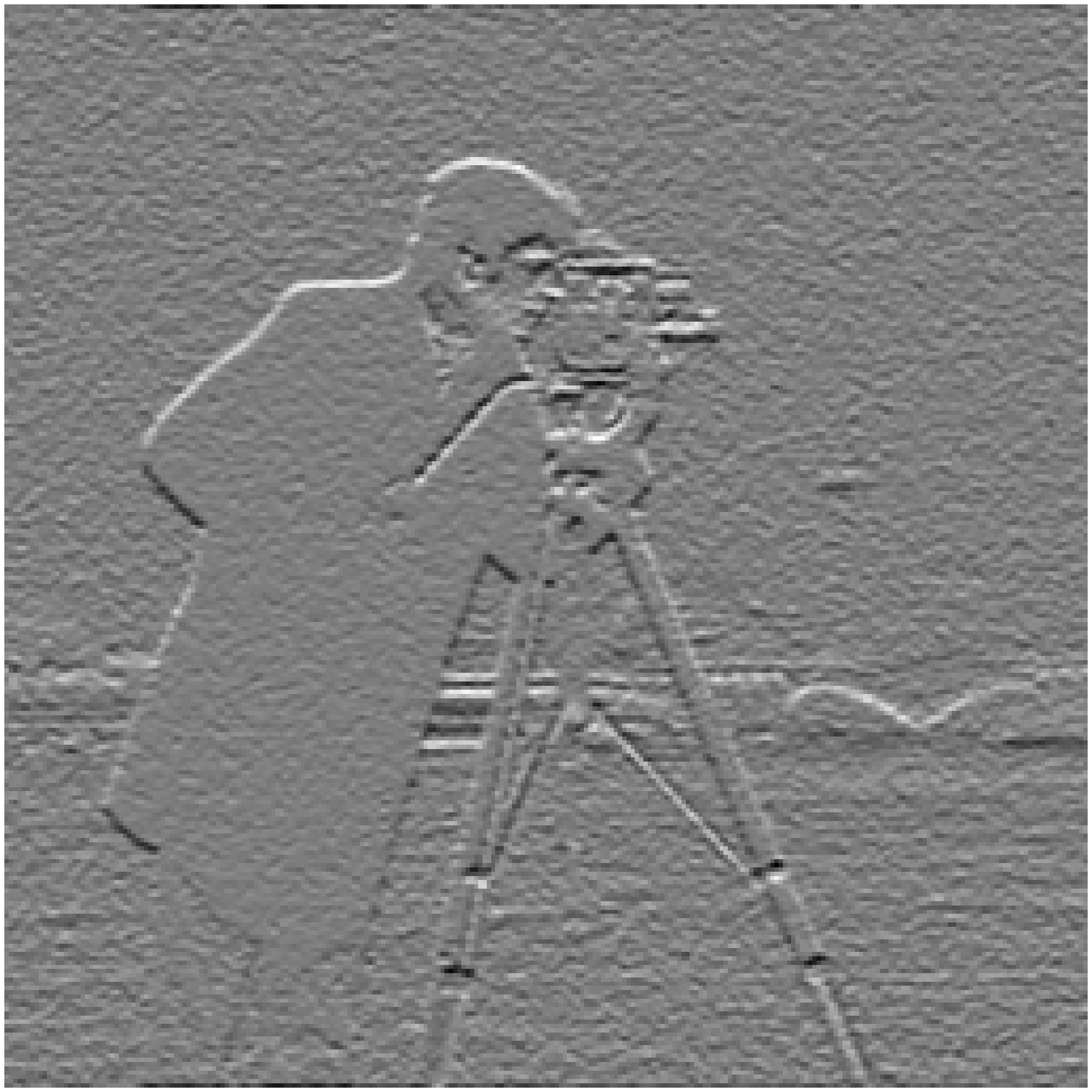}\\
	\includegraphics[width=0.19\textwidth]{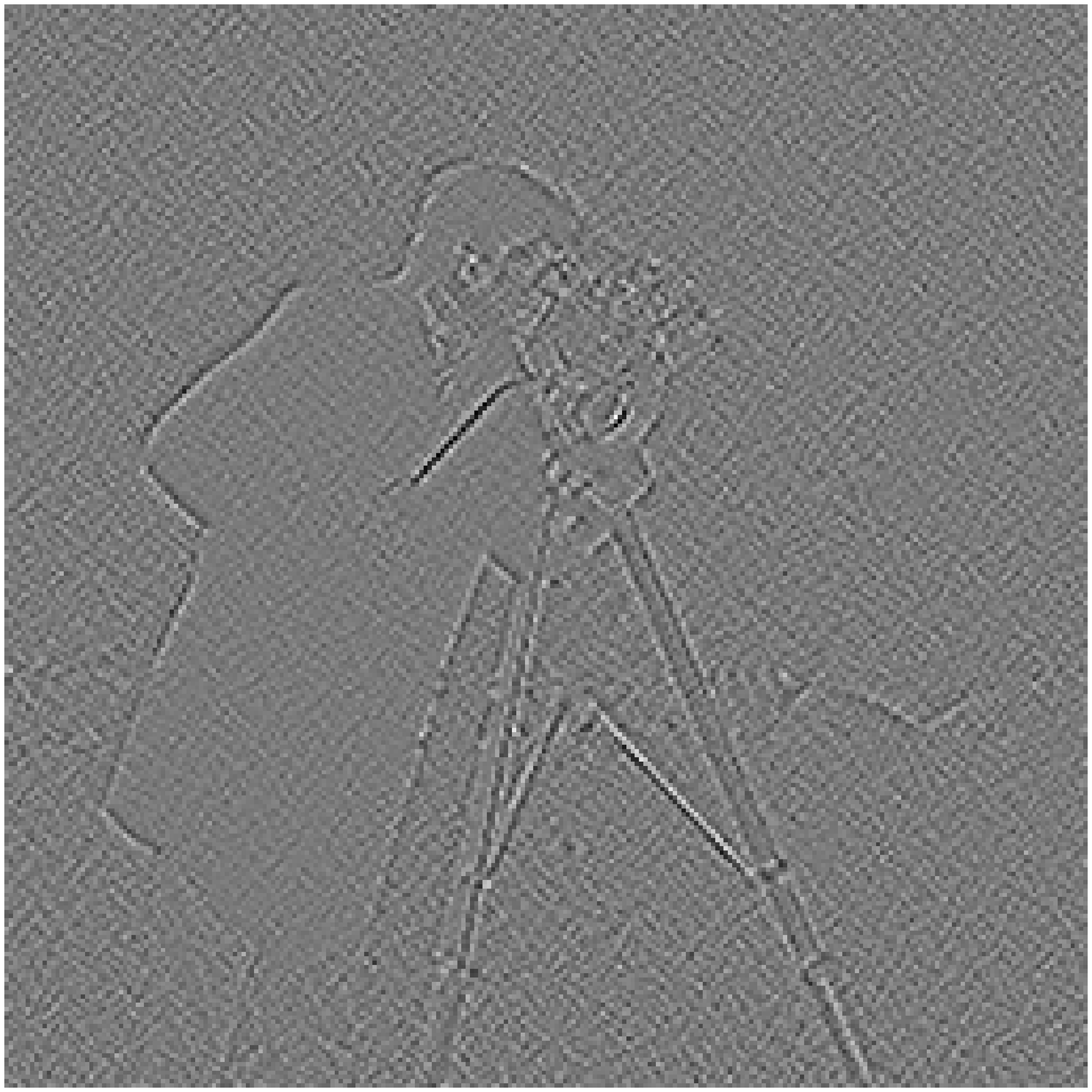}
	\includegraphics[width=0.19\textwidth]{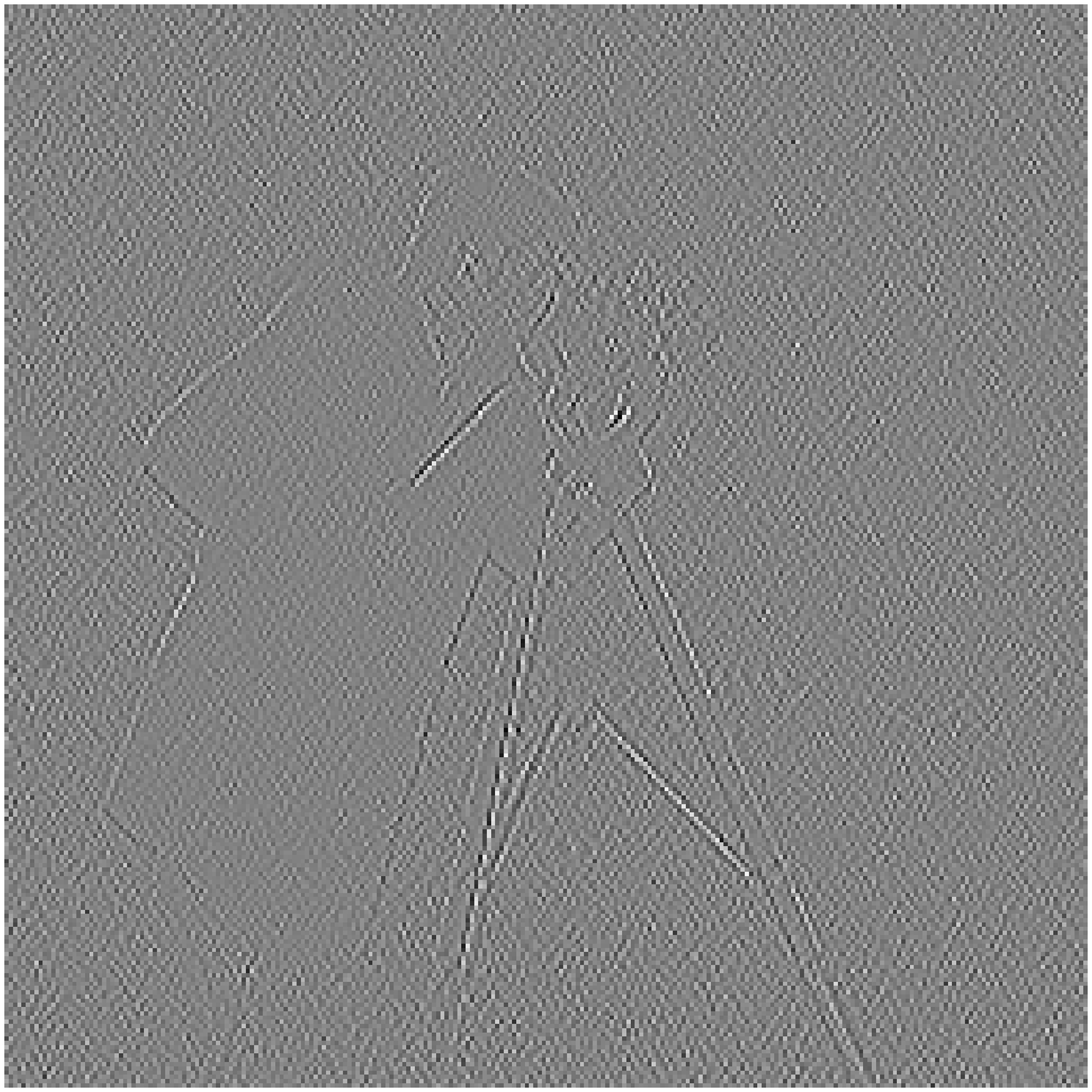}
	\includegraphics[width=0.19\textwidth]{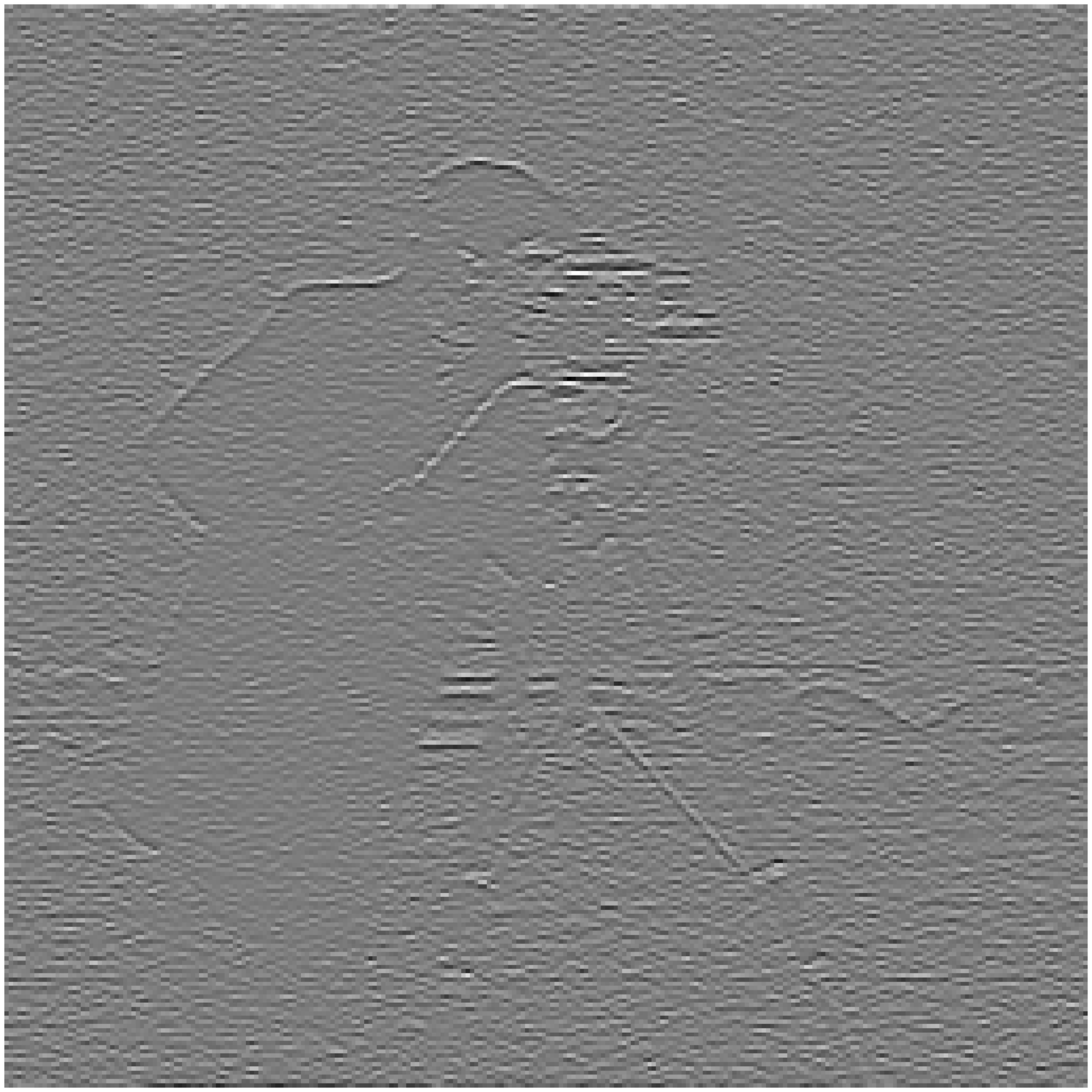}
	\includegraphics[width=0.19\textwidth]{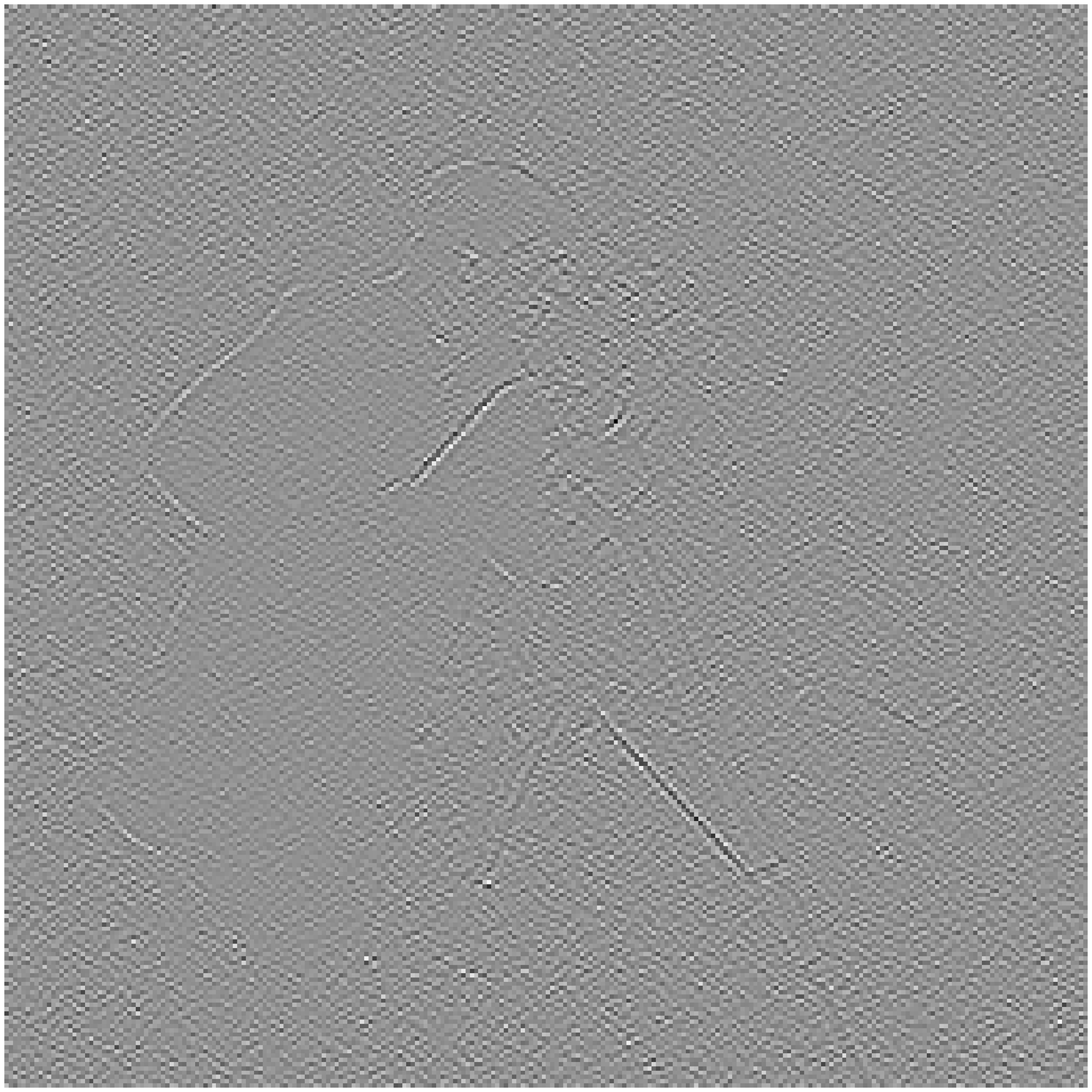}
	\includegraphics[width=0.19\textwidth]{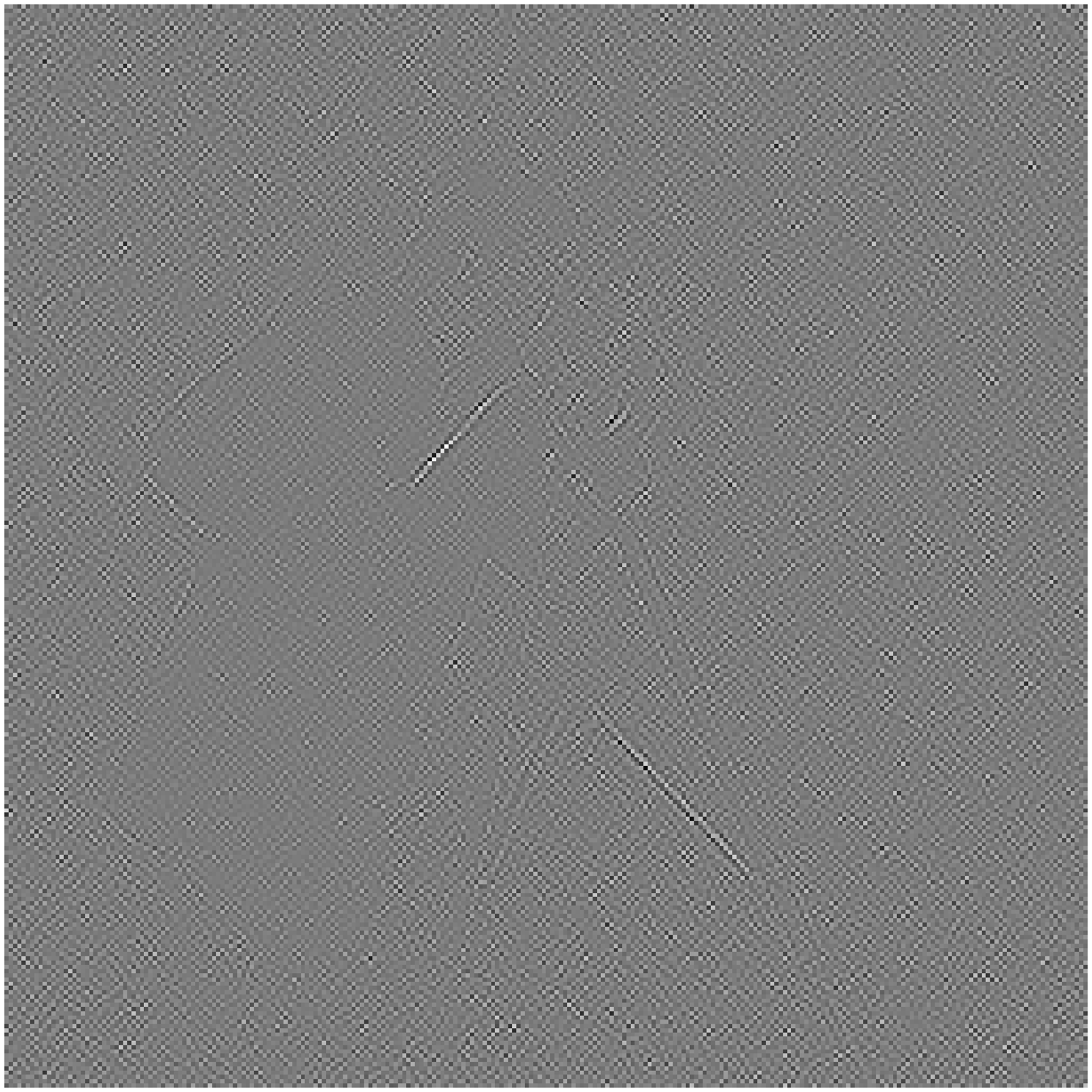}\\
	\includegraphics[width=0.19\textwidth]{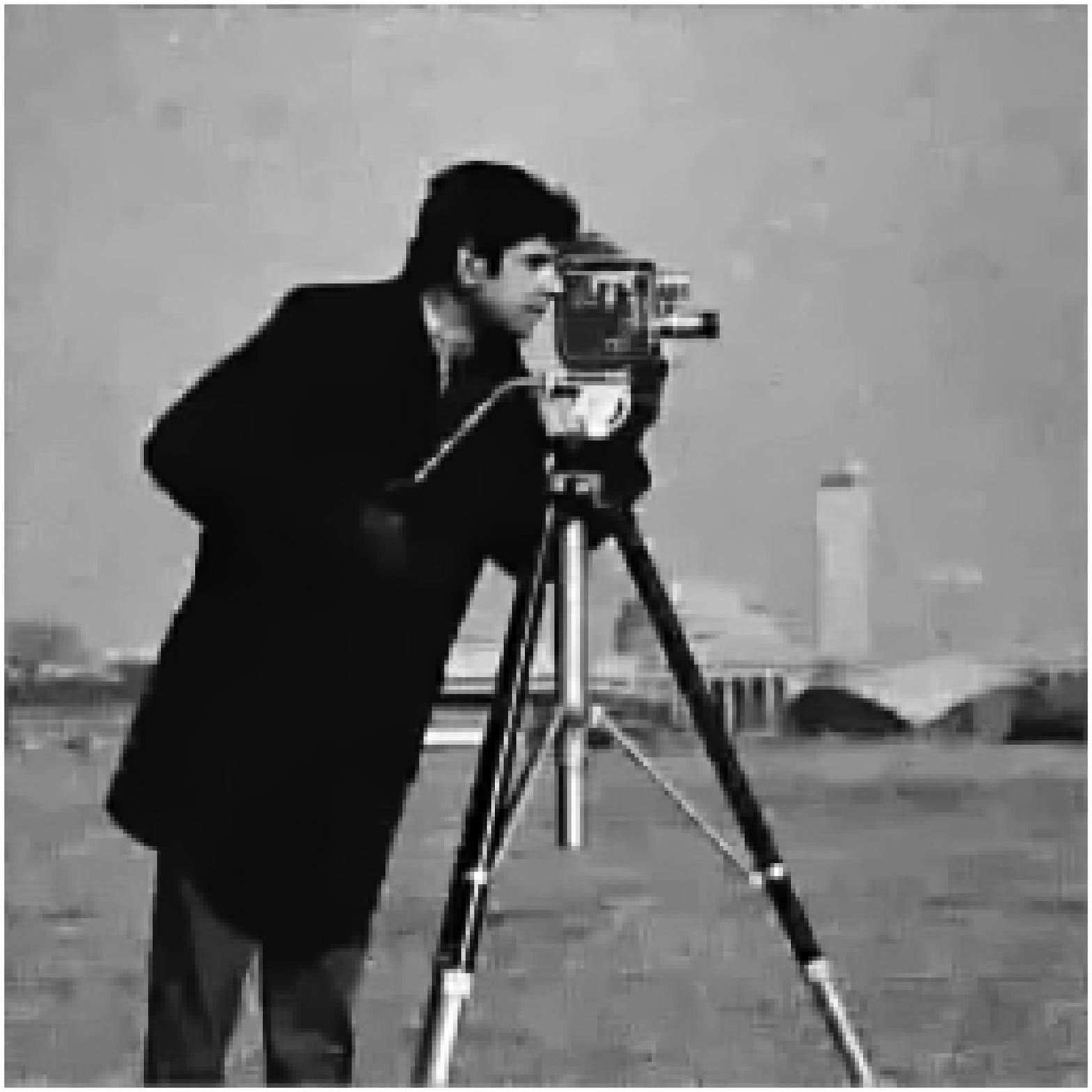}
	\includegraphics[width=0.19\textwidth]{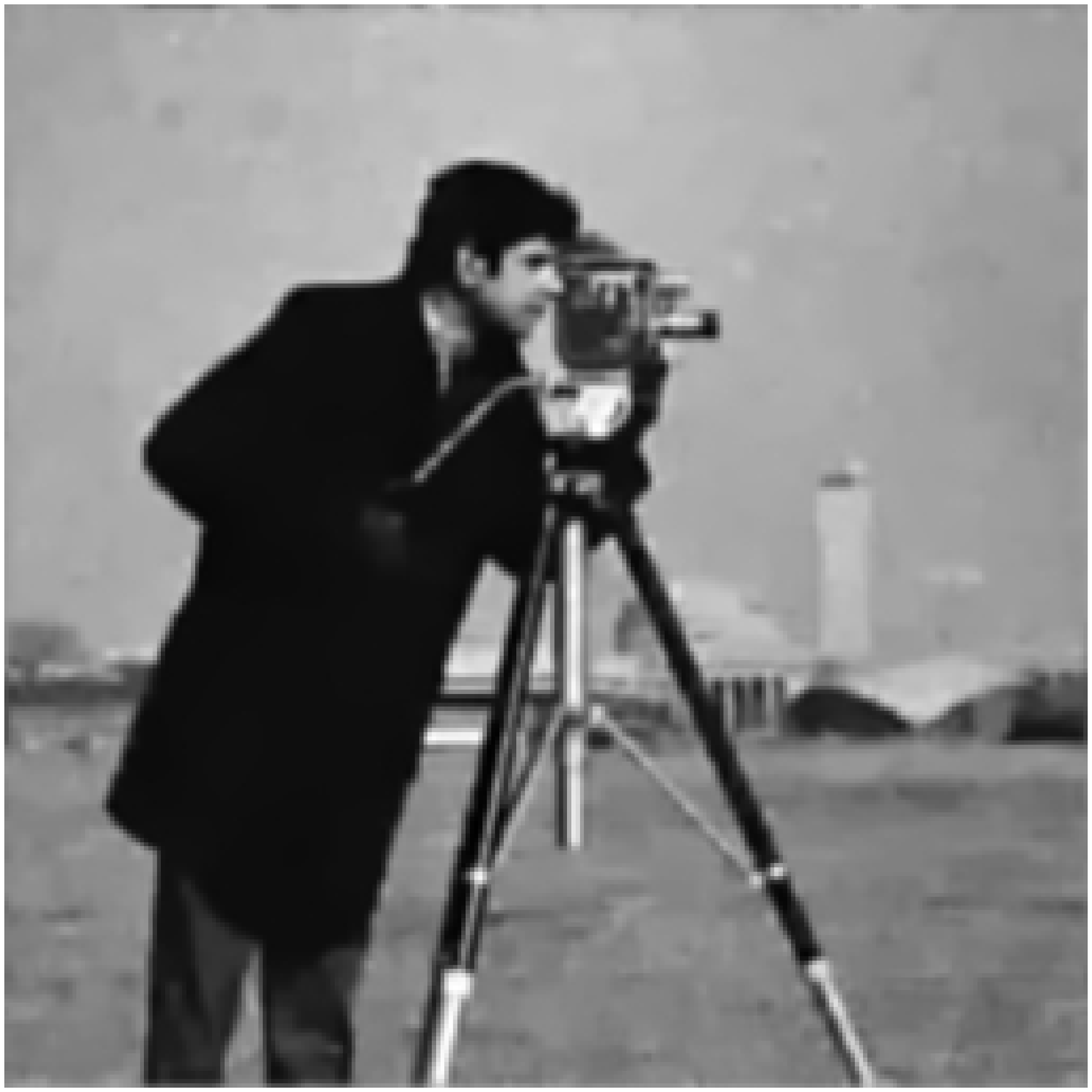}
	\includegraphics[width=0.19\textwidth]{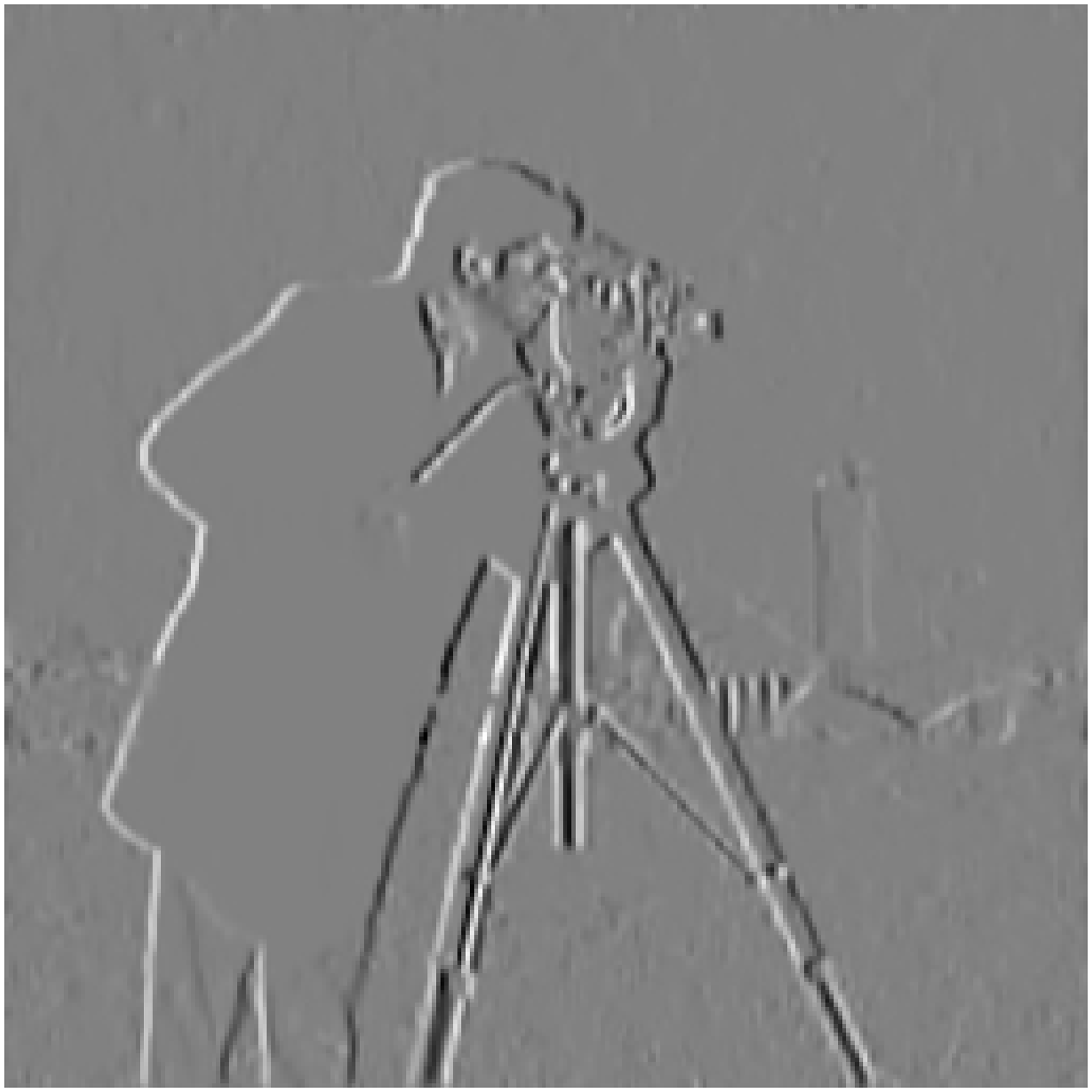}
	\includegraphics[width=0.19\textwidth]{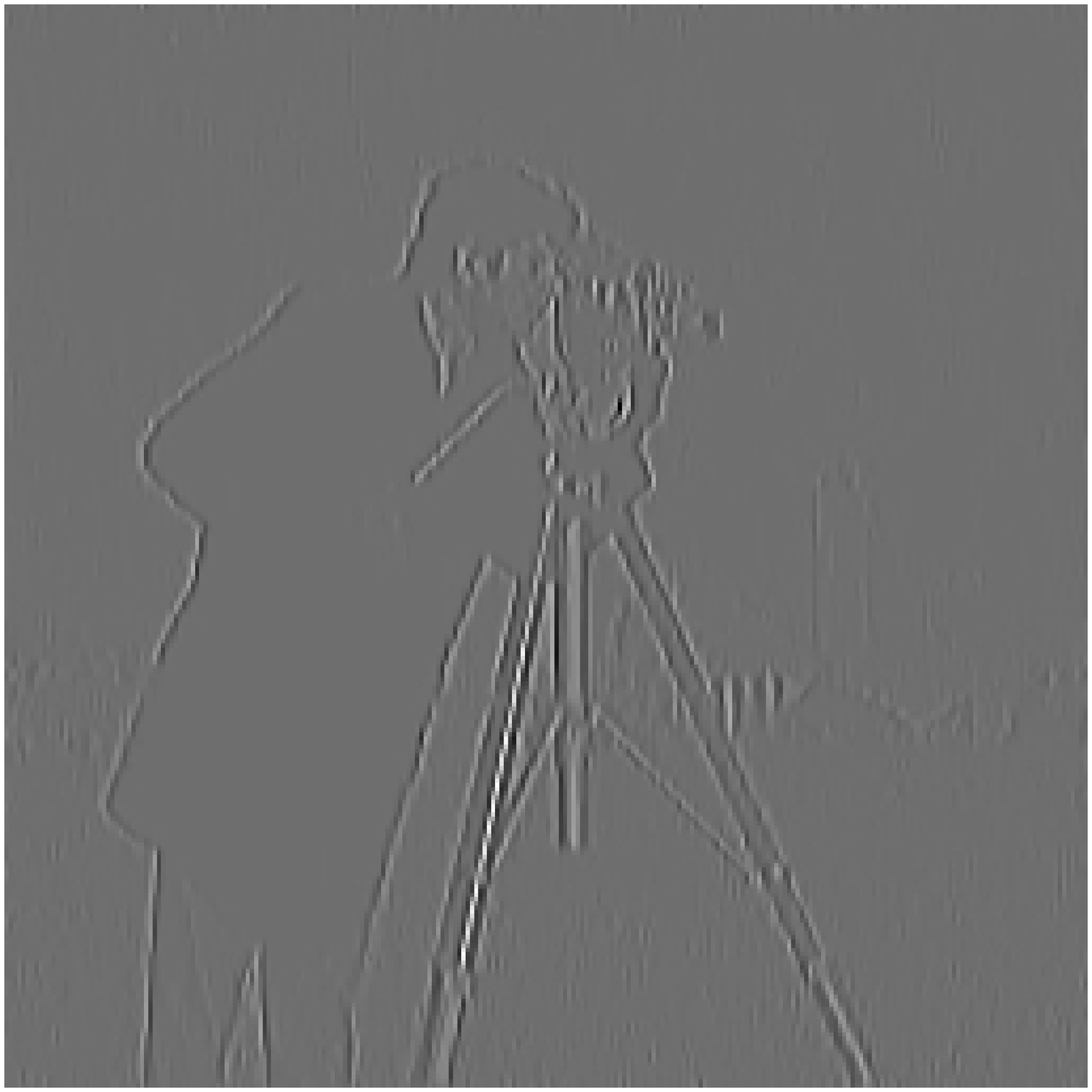}
	\includegraphics[width=0.19\textwidth]{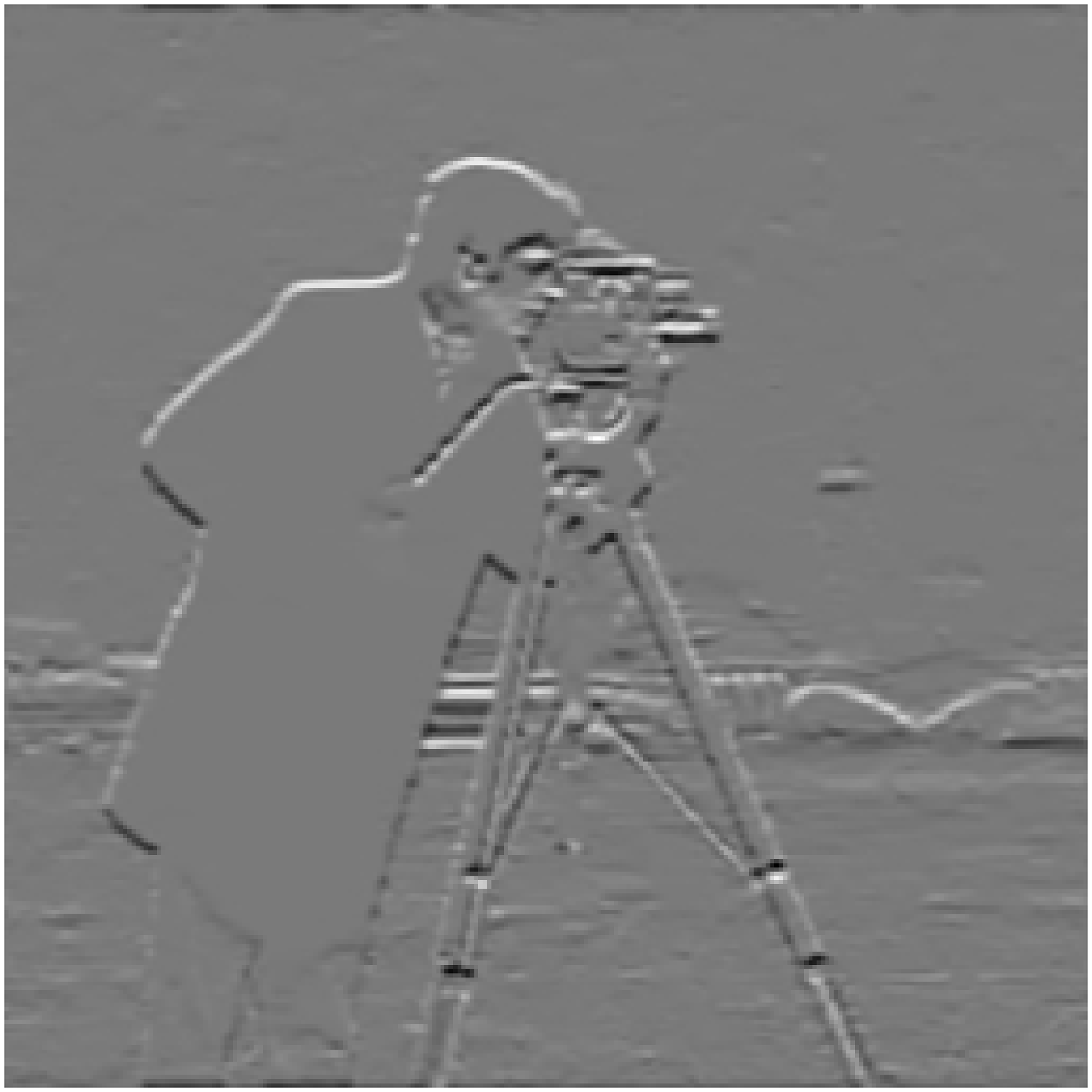}\\
	\includegraphics[width=0.19\textwidth]{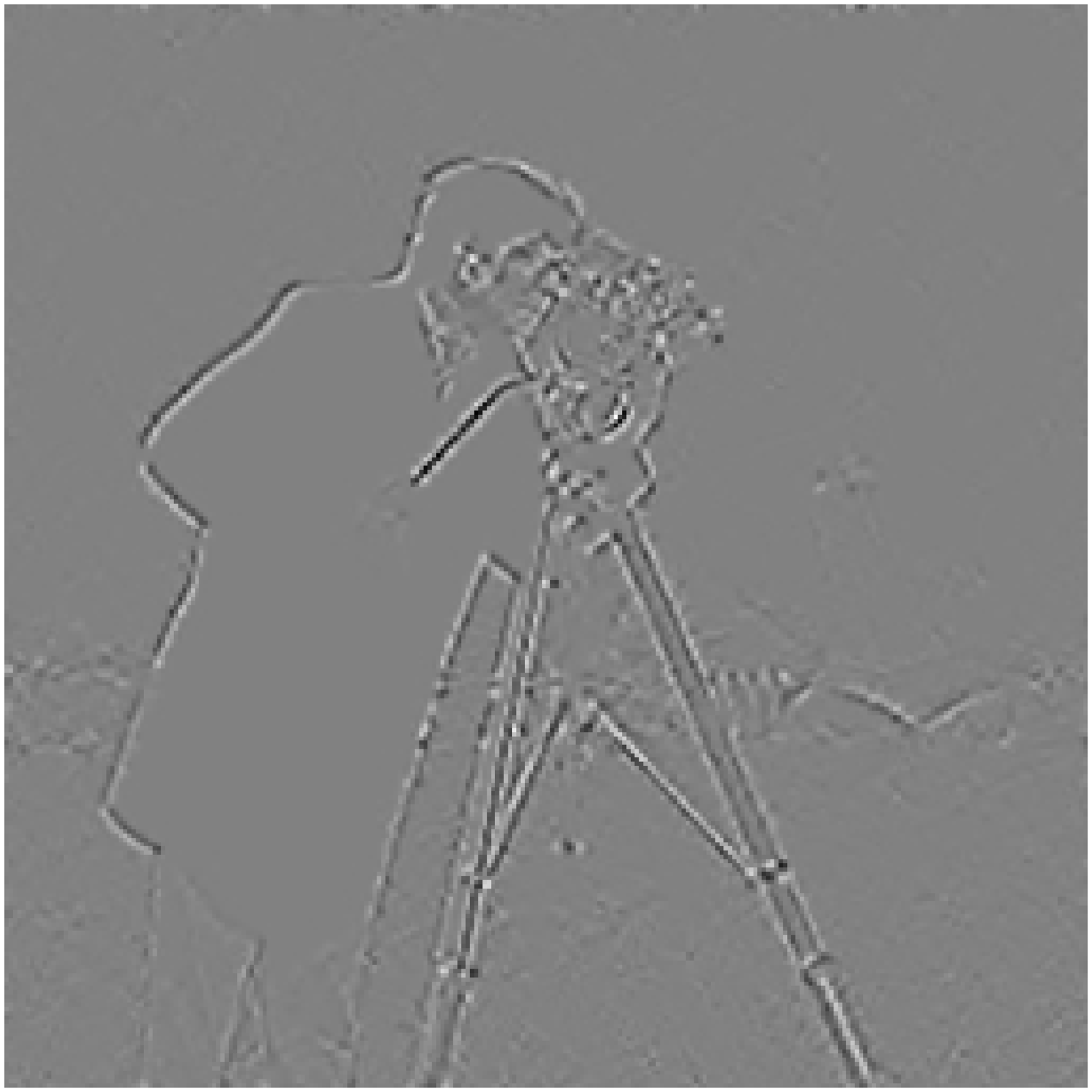}
	\includegraphics[width=0.19\textwidth]{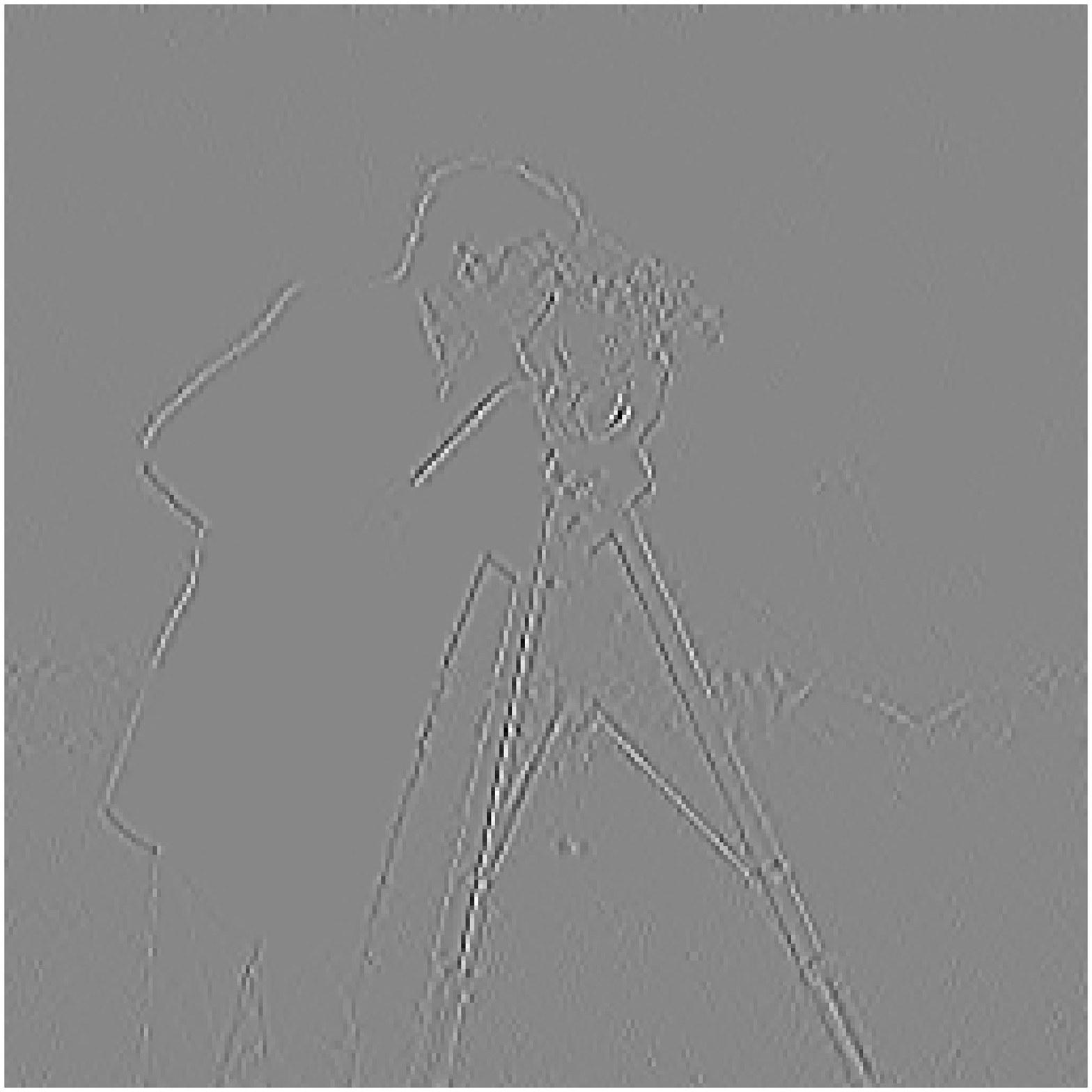}
	\includegraphics[width=0.19\textwidth]{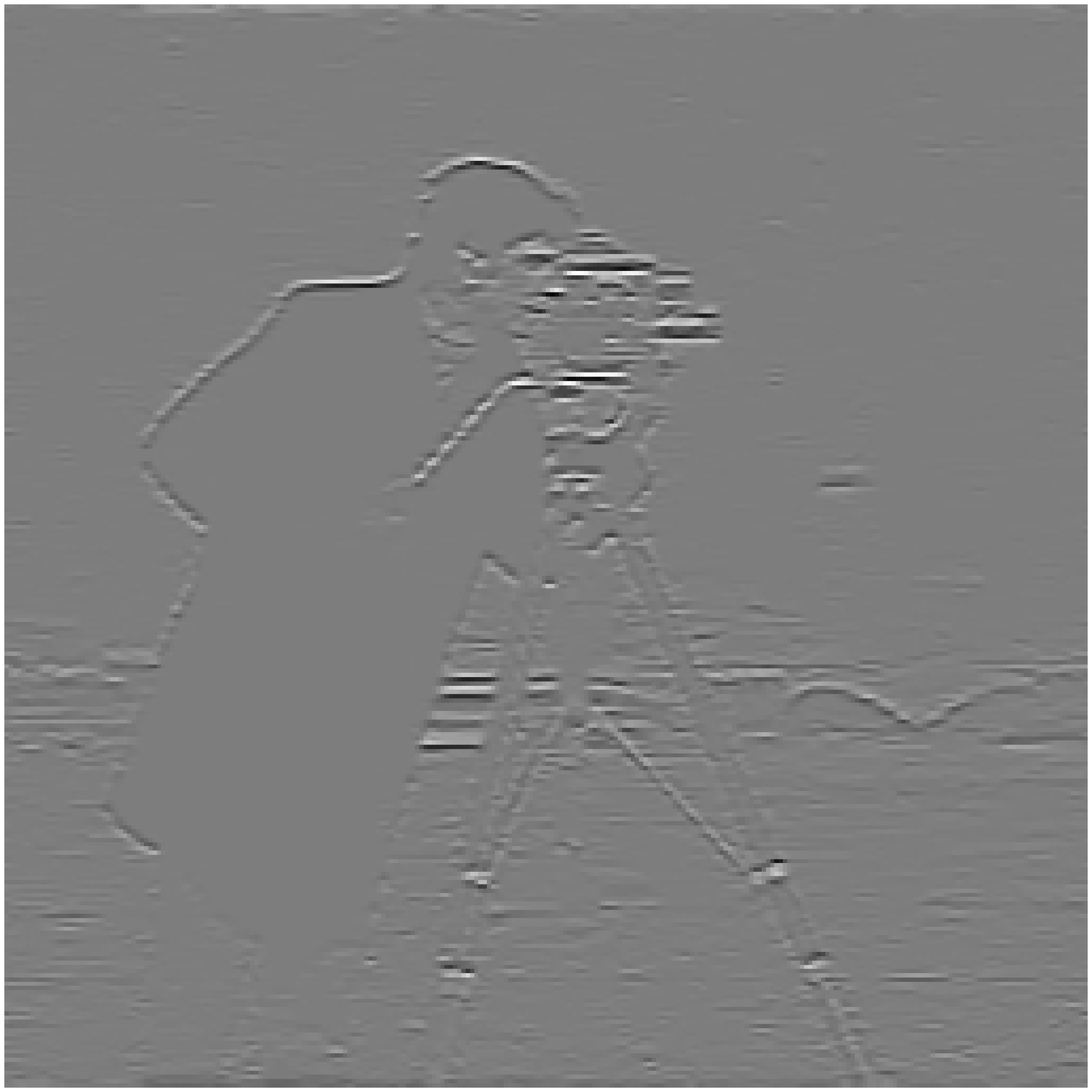}
	\includegraphics[width=0.19\textwidth]{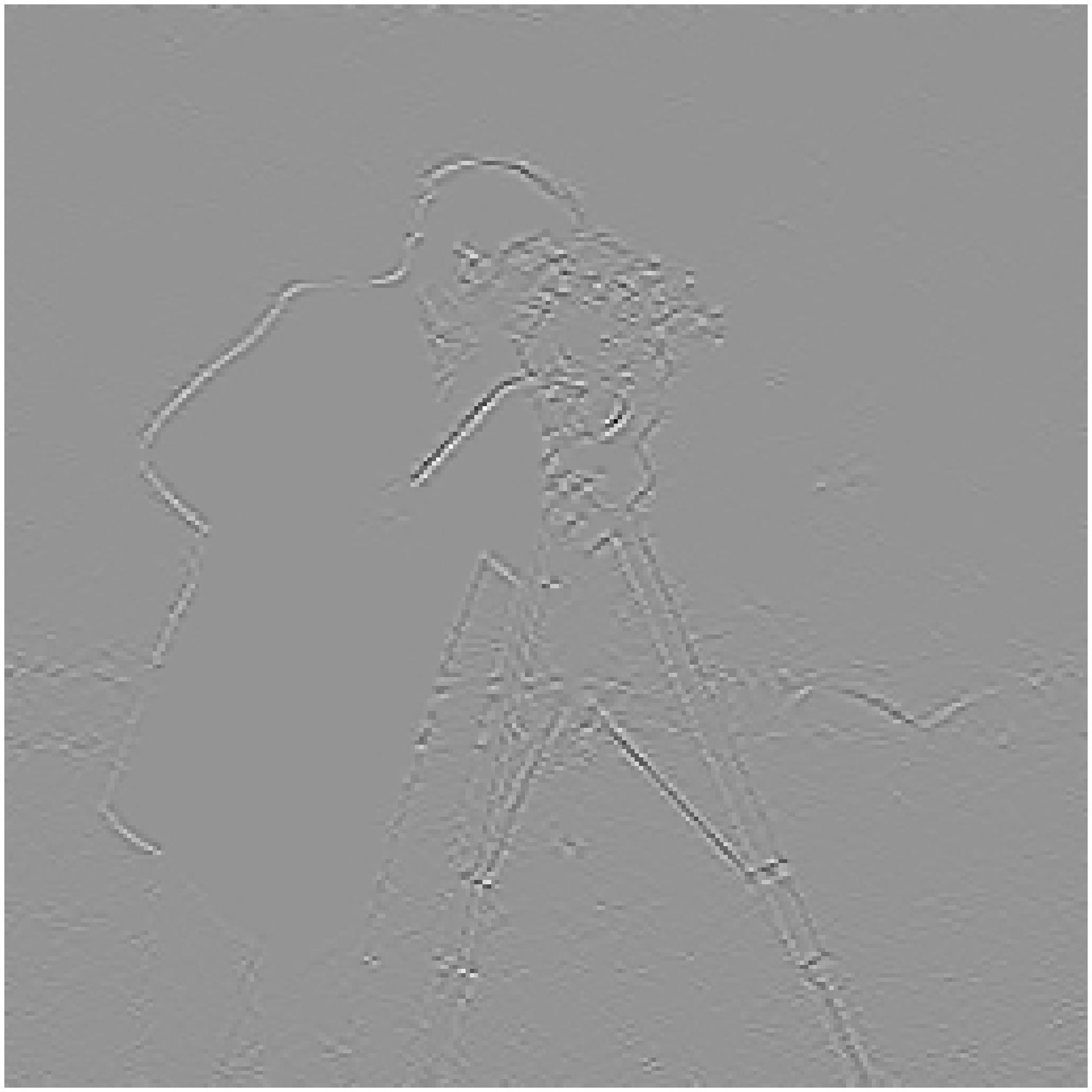}
	\includegraphics[width=0.19\textwidth]{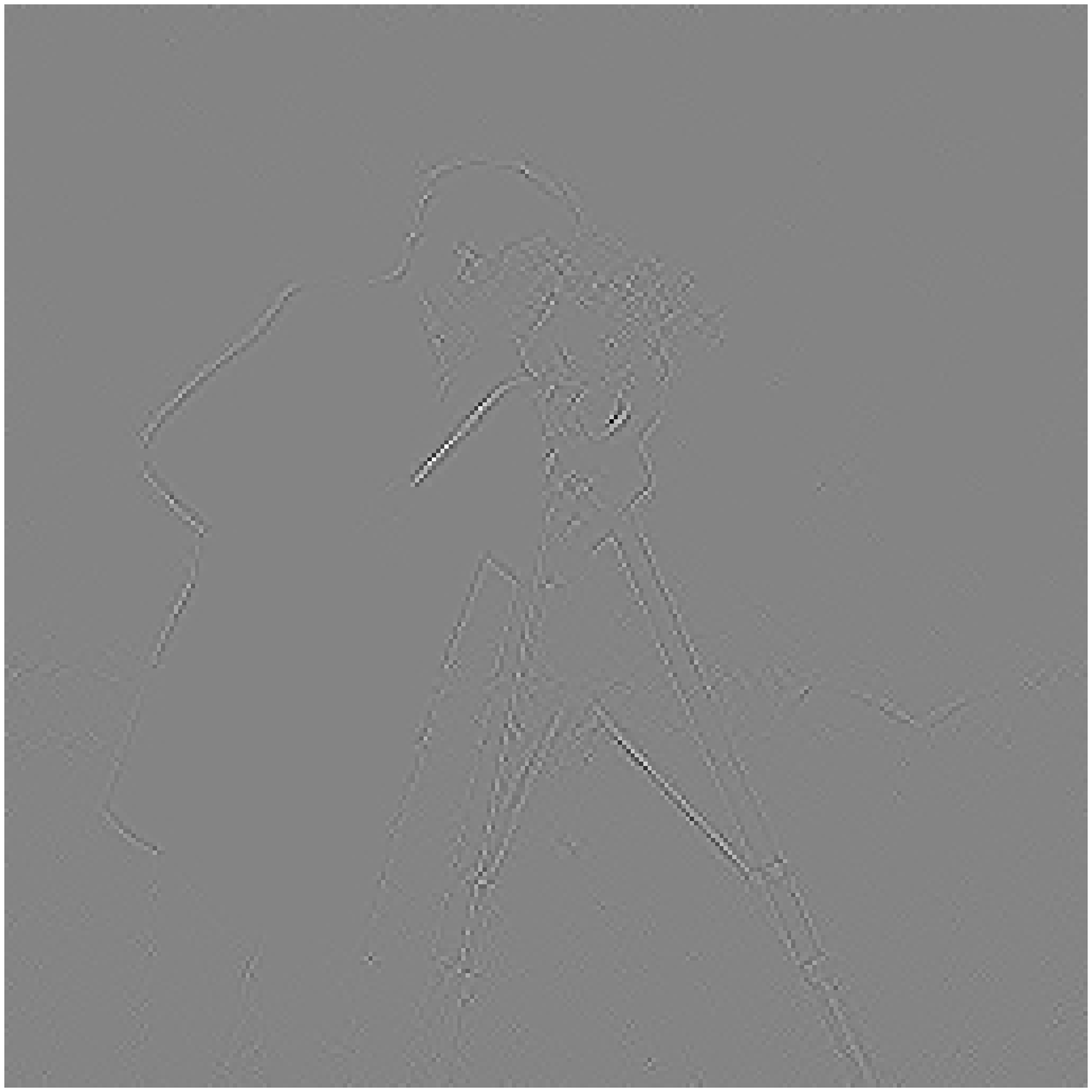}
	\caption{The  feature images of the noisy and denoised.}
	\label{F；analysis}
\end{figure}

In Figure \ref{F；D1} - Figure \ref{F；D6}, we show the visual comparisons of the results from the four compared methods. From the results of SB, we can see that the recovered images  have a lot of artifacts in the homogeneous regions. The reason is that the regularization parameter $\lambda$ is chosen optimal for PSNR value. In order to preserve the  sharp edges in images, a relatively small $\lambda$ should be chosen. However, it  leads to lack of smoothness in  homogeneous regions. This visual effect is especially clear in the `cameraman' image (see the sky and grass). From the results of DTCWT and LCHMM, we can observe that the restored images introduce too much artifacts. one of the reasons is that they both use the decimated wavelet transform. From the last three results, we can see that the denoised images by  ${l_o}$-WF and ours have similar visual effects, but our method   has a slightly higher PSNR and less runtime averagely which can be seen in Table \ref{T1}.

Table \ref{T1} lists the PSNR and  total run-time of the compared five methods for image denoising. Since the LCHMM is coded by C++ and the others by matlab, we don't list the runtime of LCHMM.  From Table \ref{T1}, we can see that our model outperforms the other image restoration methods in terms of PSNR, averagely. In addition, from Table \ref{T1}, we can see that the DTCWT is the fastest (It spends less than 1 second for the six test images). However, the quality of the DTCWT is the worst. 

To show the convergence of the our algorithm using equation (\ref{17}), we plot the relative error $\frac{||u^{n+1}-u^n||}{||u^n||}$ versus the iterations of the six test images in Figure \ref{F；error}. From Figure \ref{F；error}, we can observe  the relative error is monotonically decreasing with the iterations, which numerically proves that the  algorithm is convergent.

\begin{figure}[htbp]
	\centering
	\includegraphics[width=0.32\textwidth]{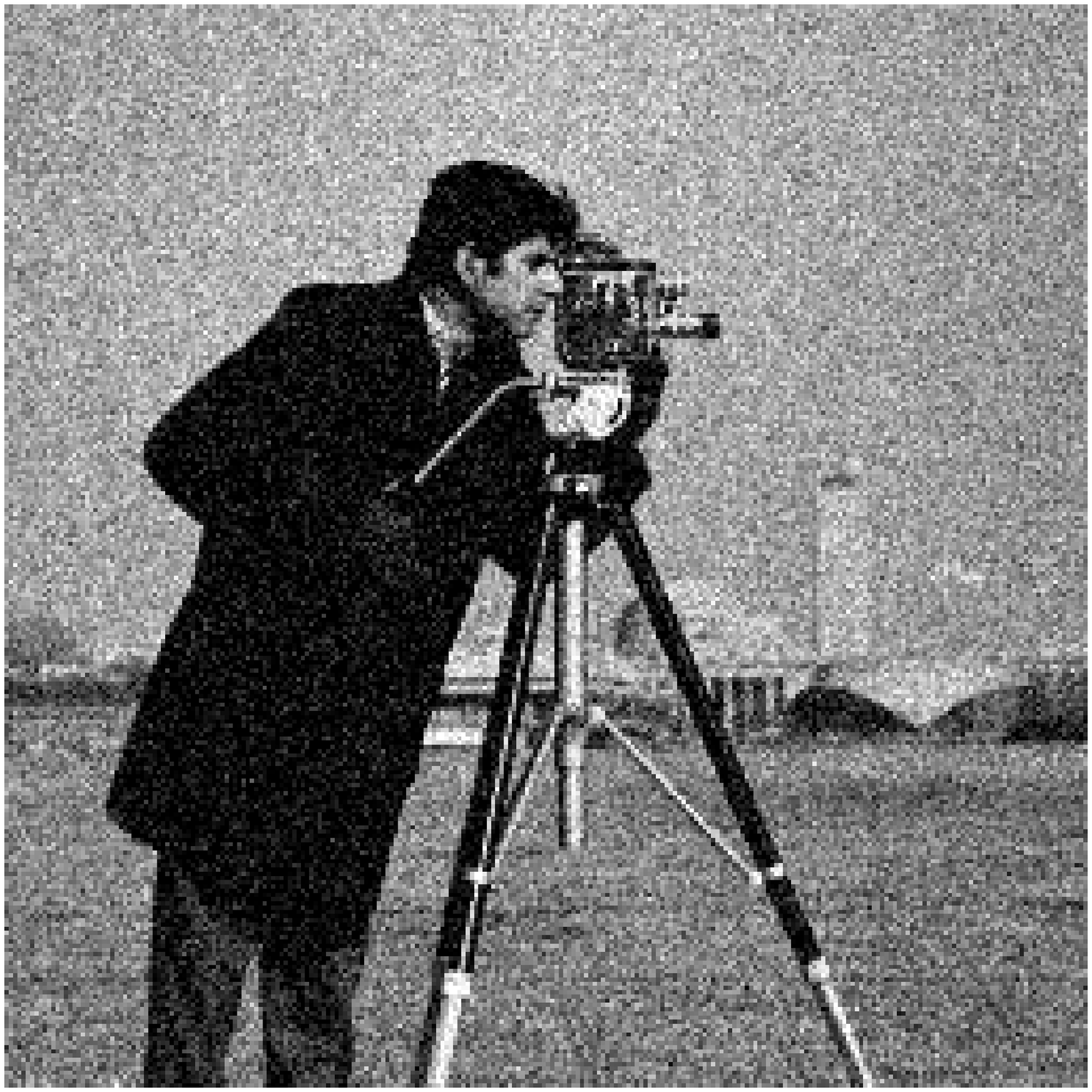}\\
	\includegraphics[width=0.32\textwidth]{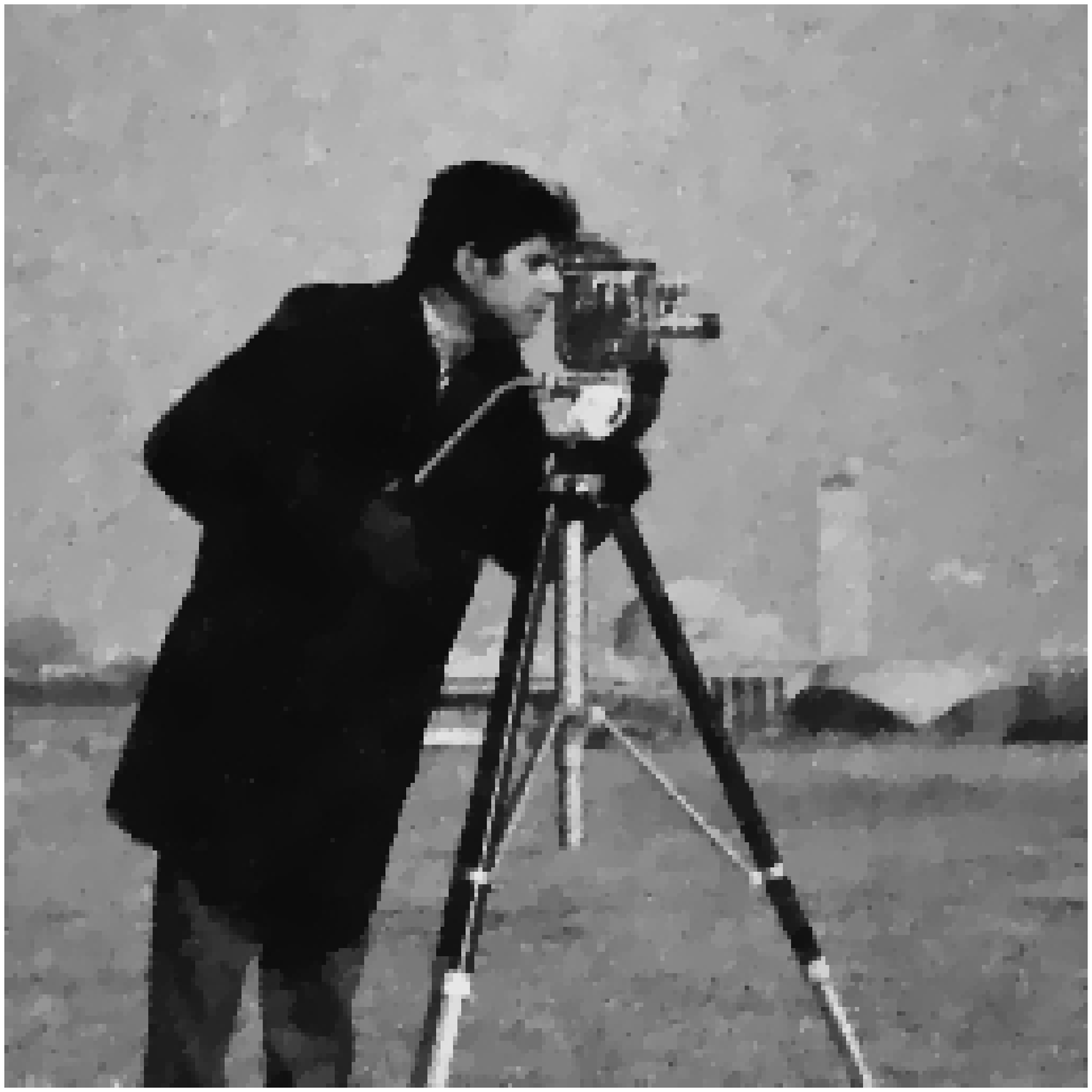}
	\includegraphics[width=0.32\textwidth]{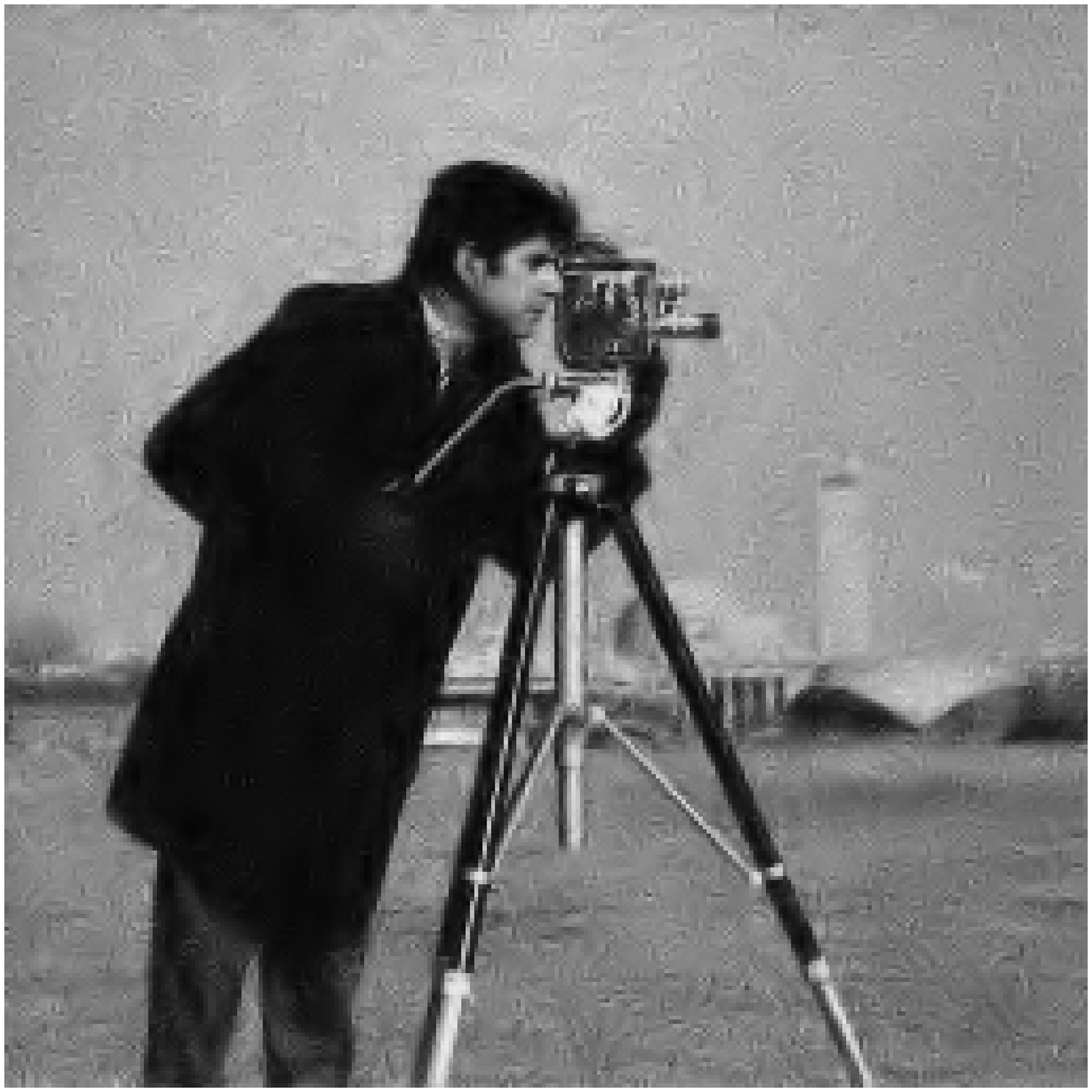}
	\includegraphics[width=0.32\textwidth]{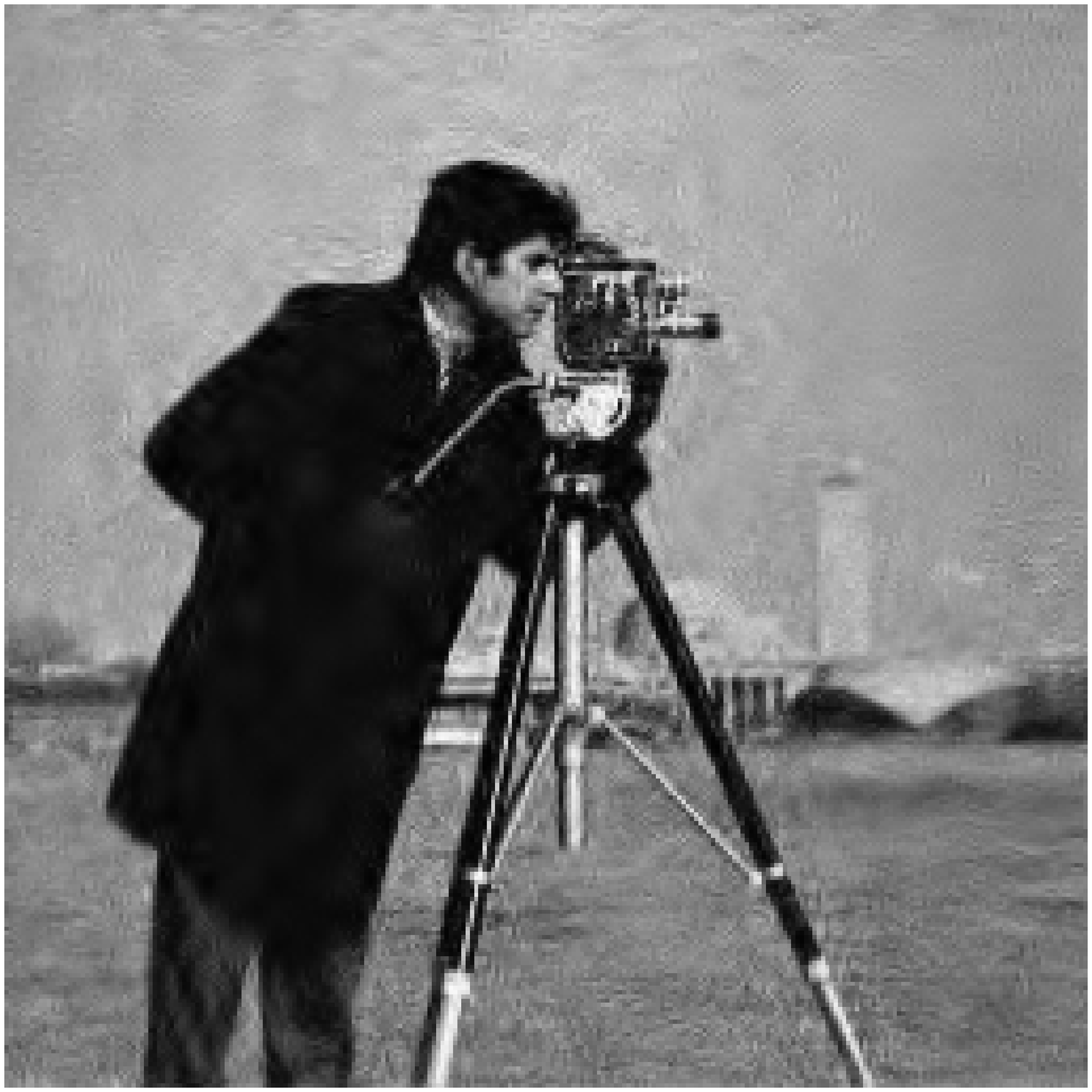}\\
	\includegraphics[width=0.32\textwidth]{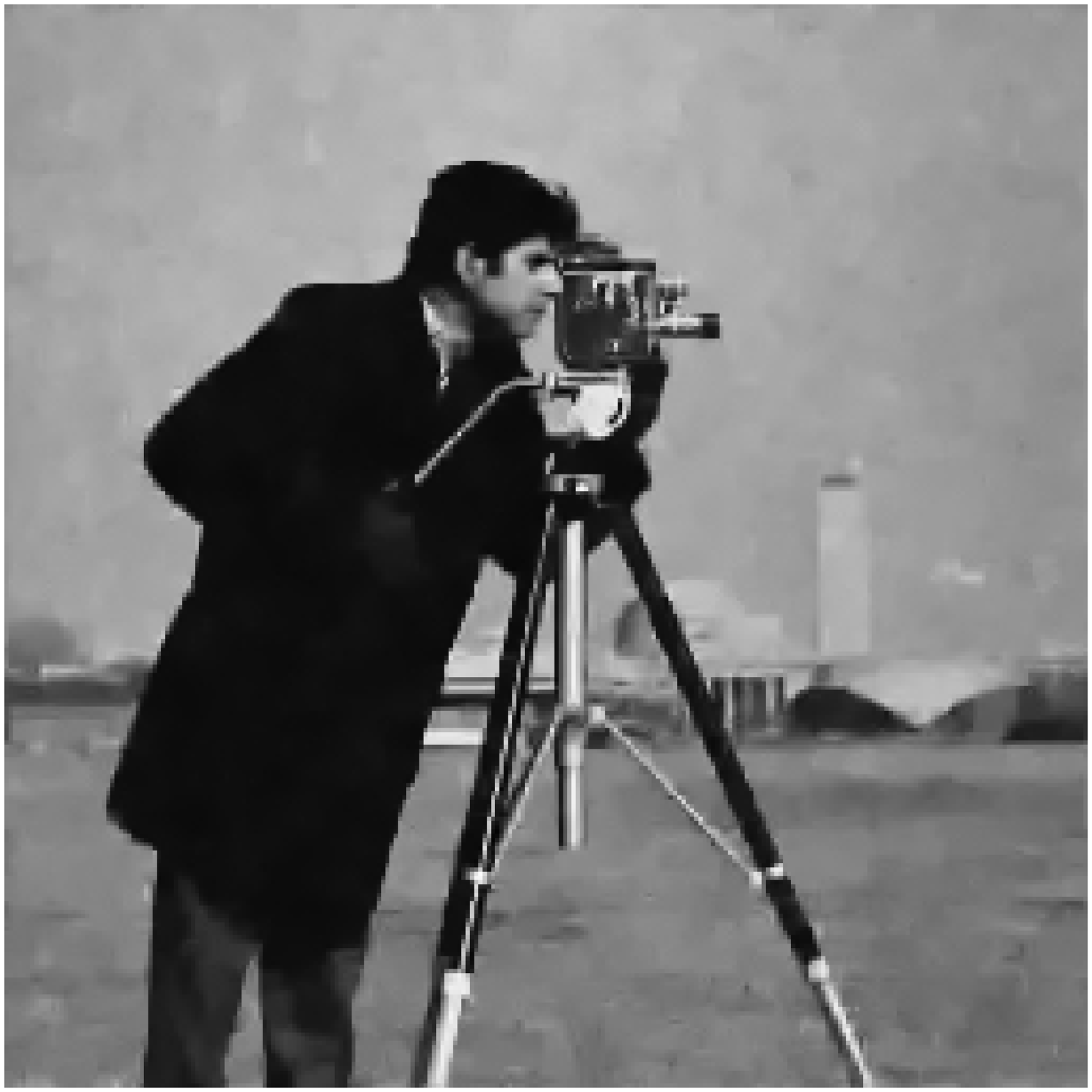}
	\includegraphics[width=0.32\textwidth]{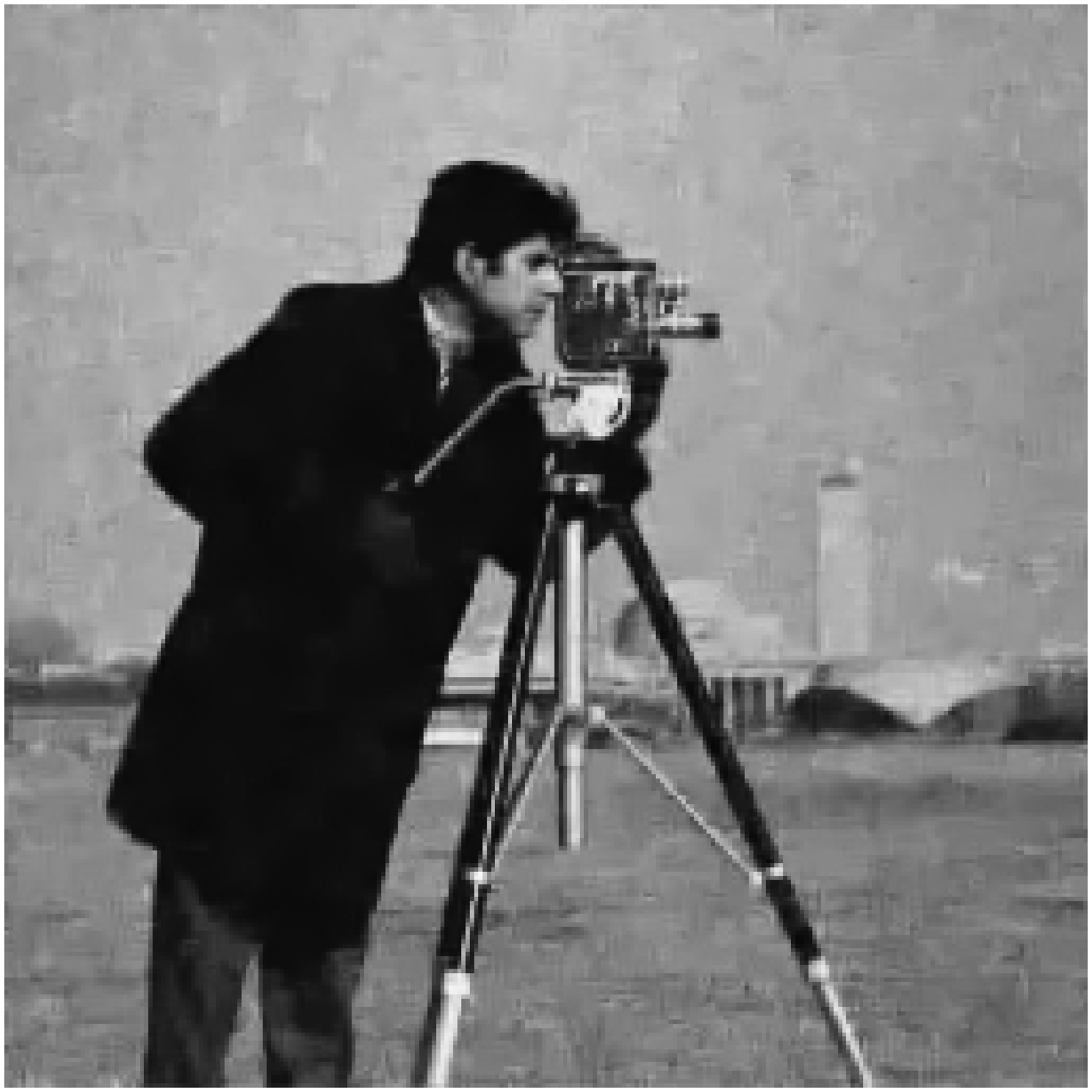}
	\includegraphics[width=0.32\textwidth]{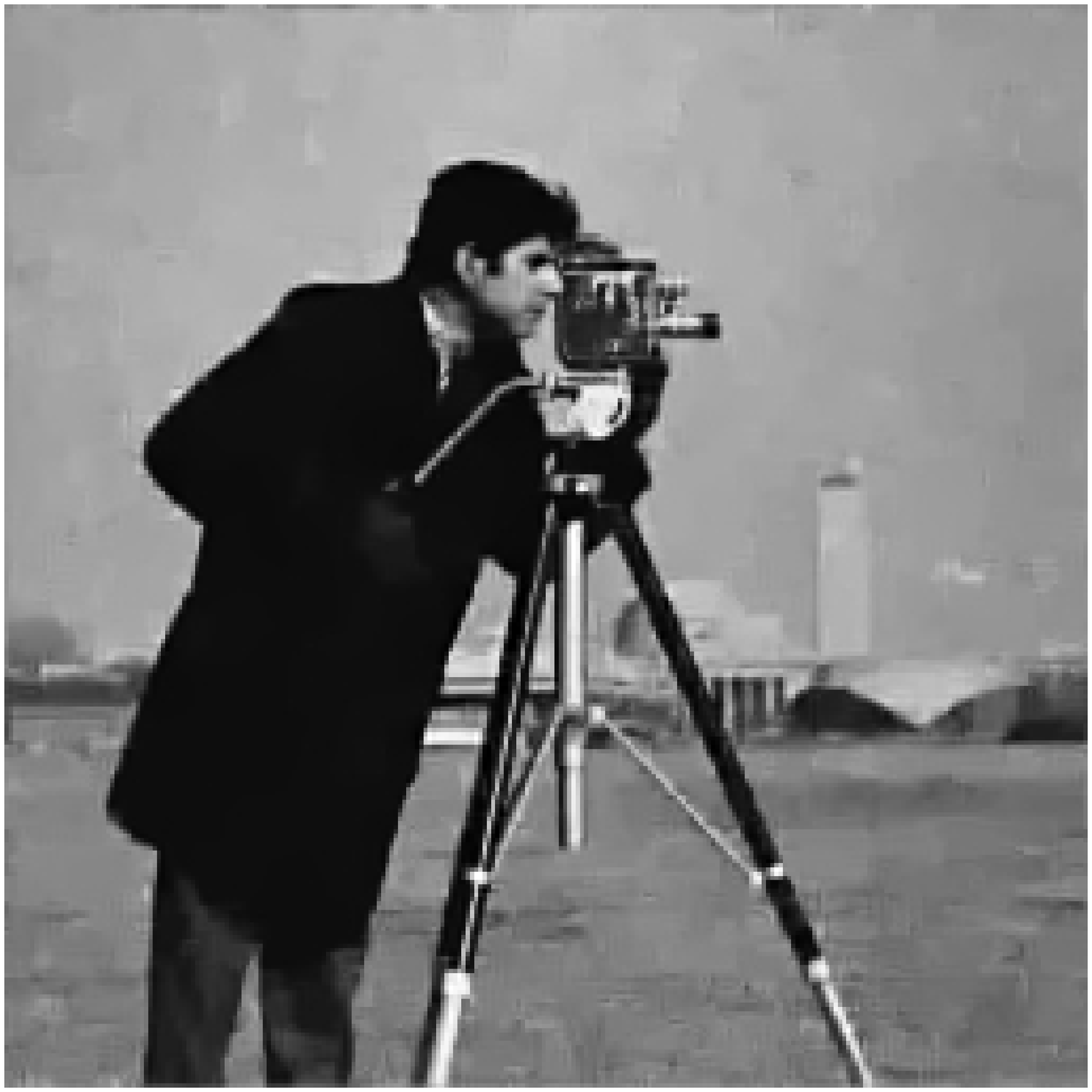}
		\caption{Comparison for image denoising. From left to right and up to down are: noisy images and  denoised by  SB, DTCWT, LCHMM, ${l_o}$-WF and ours using equations (\ref{13}) and (\ref{17}), respectively.}
	\label{F；D1}
\end{figure}

\begin{figure}[htbp]
	\centering	
	\includegraphics[width=0.32\textwidth]{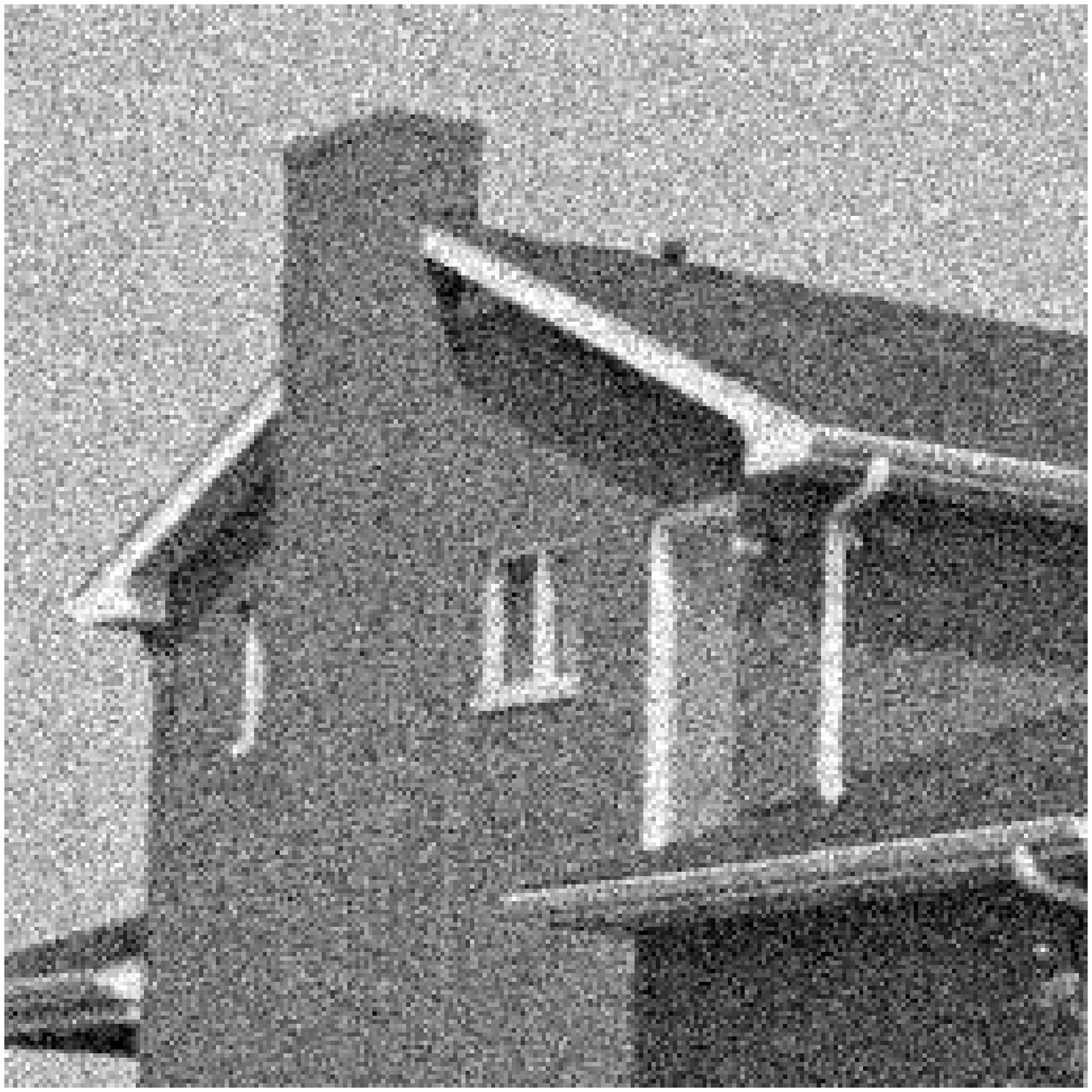}\\
	\includegraphics[width=0.32\textwidth]{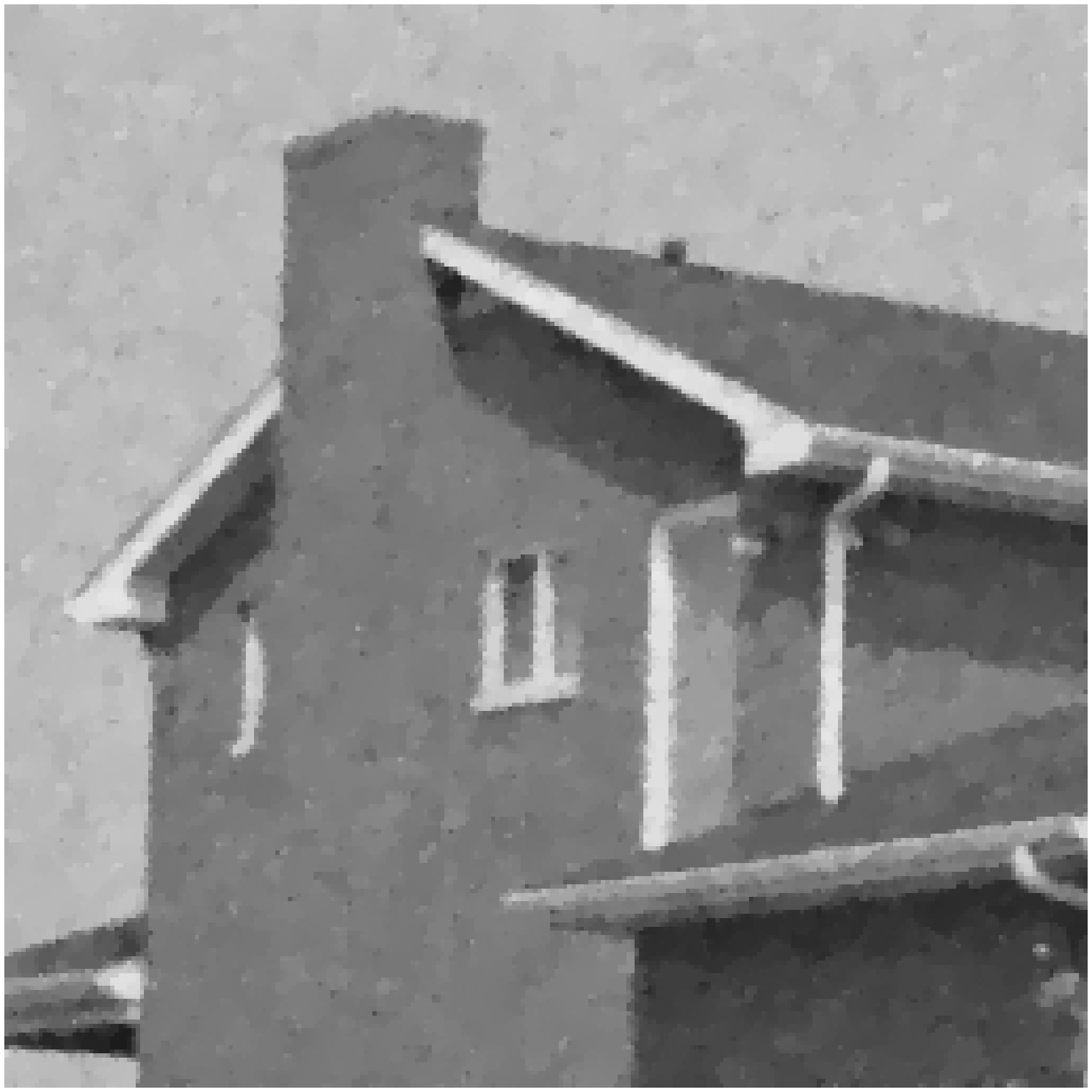}
	\includegraphics[width=0.32\textwidth]{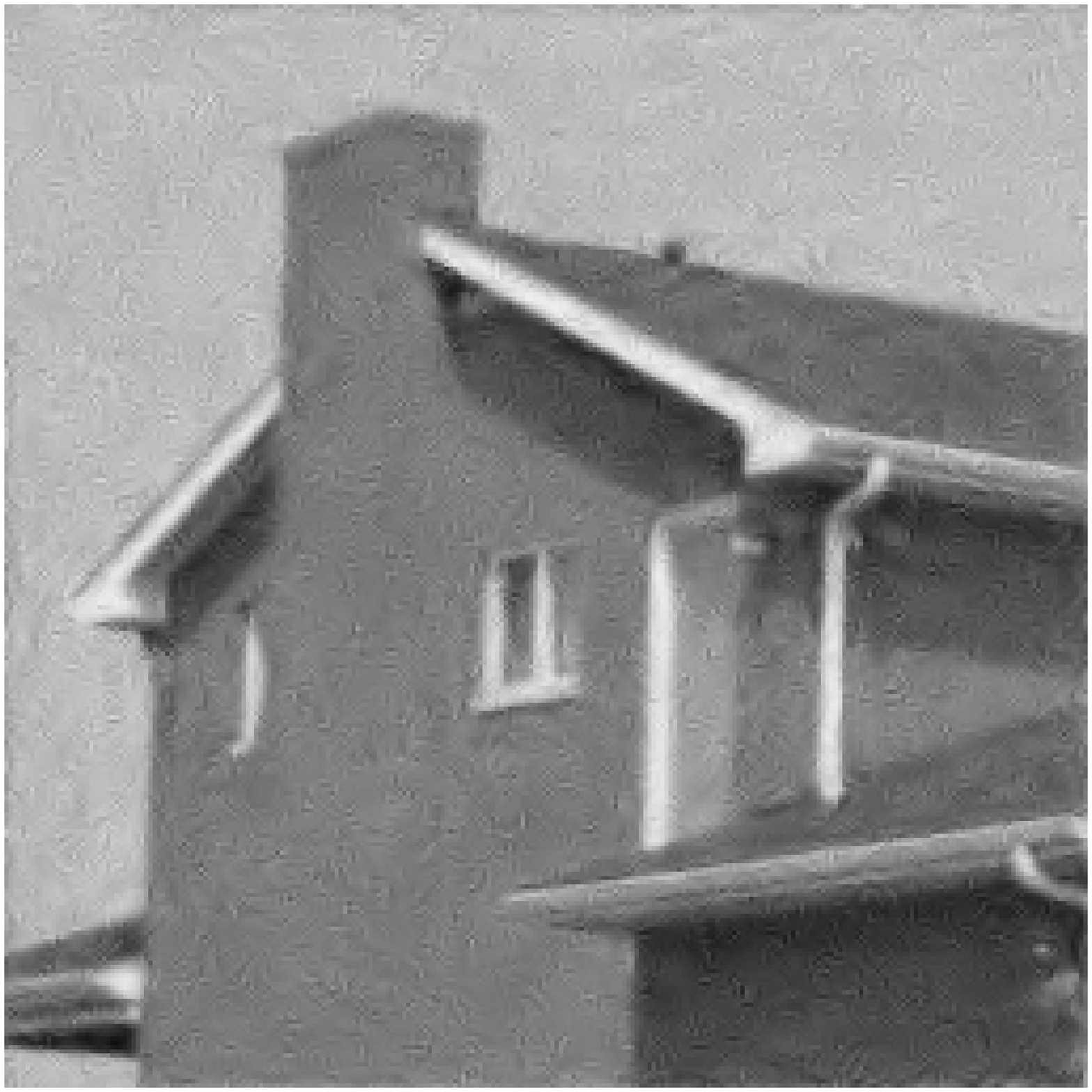}
	\includegraphics[width=0.32\textwidth]{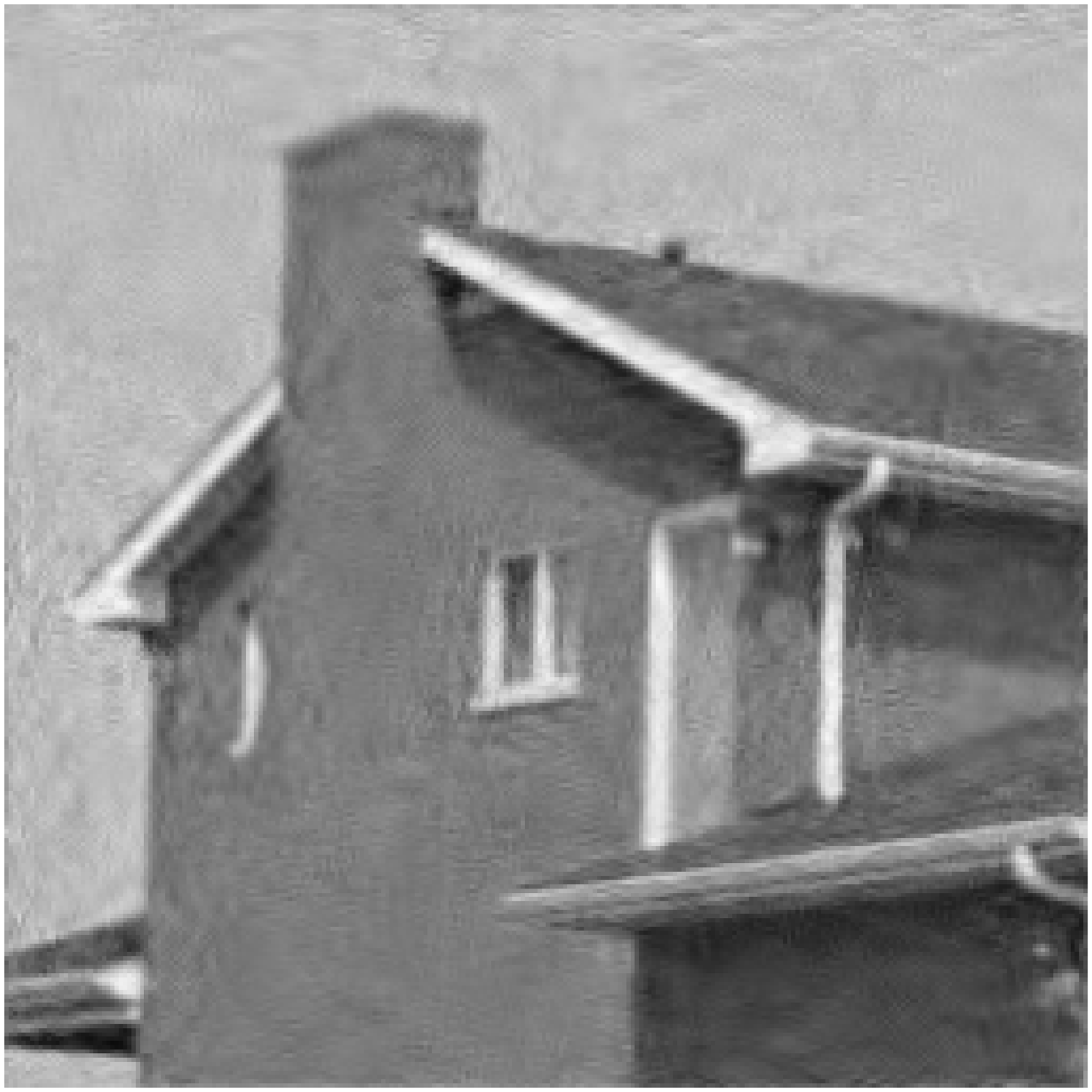}\\
	\includegraphics[width=0.32\textwidth]{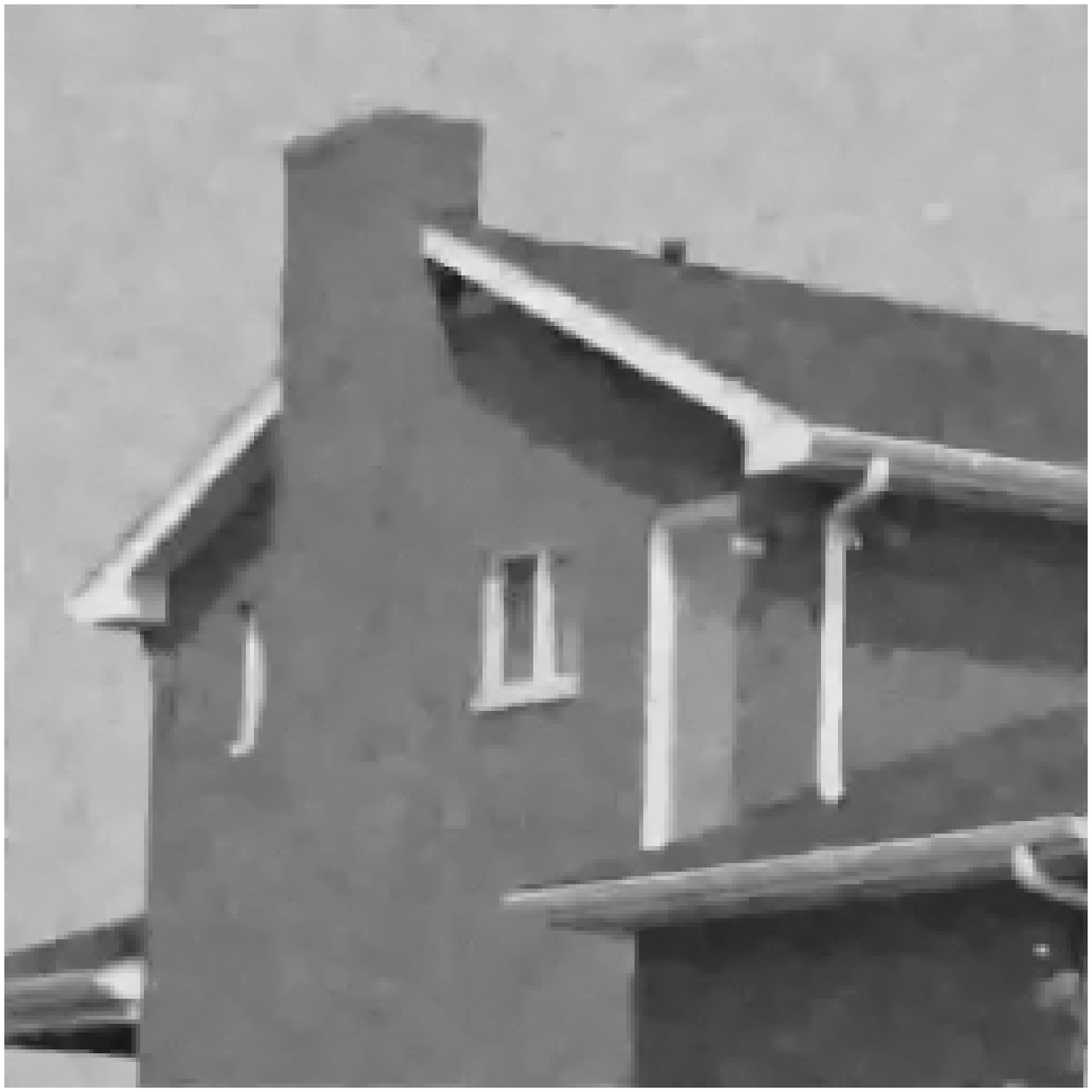}
	\includegraphics[width=0.32\textwidth]{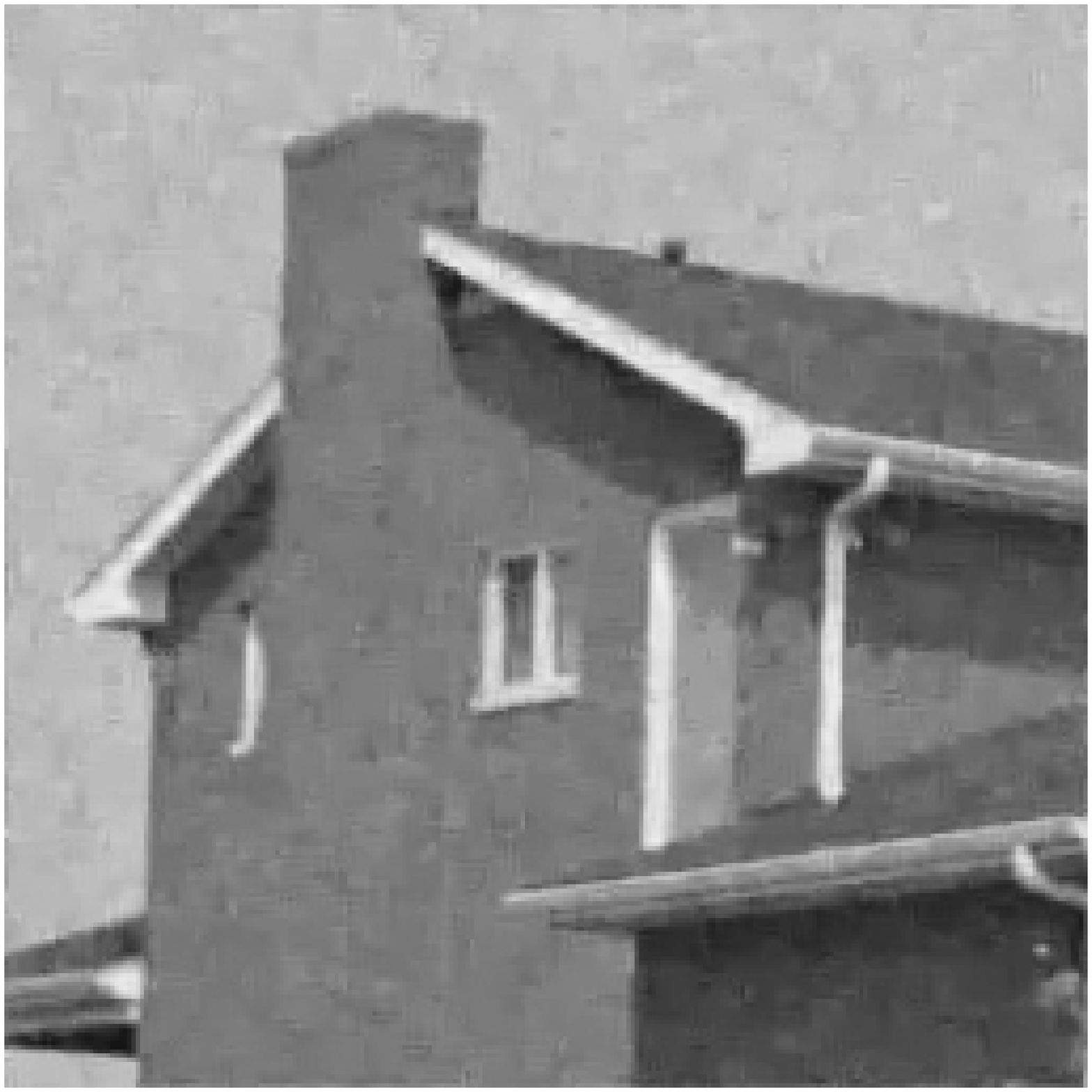}
	\includegraphics[width=0.32\textwidth]{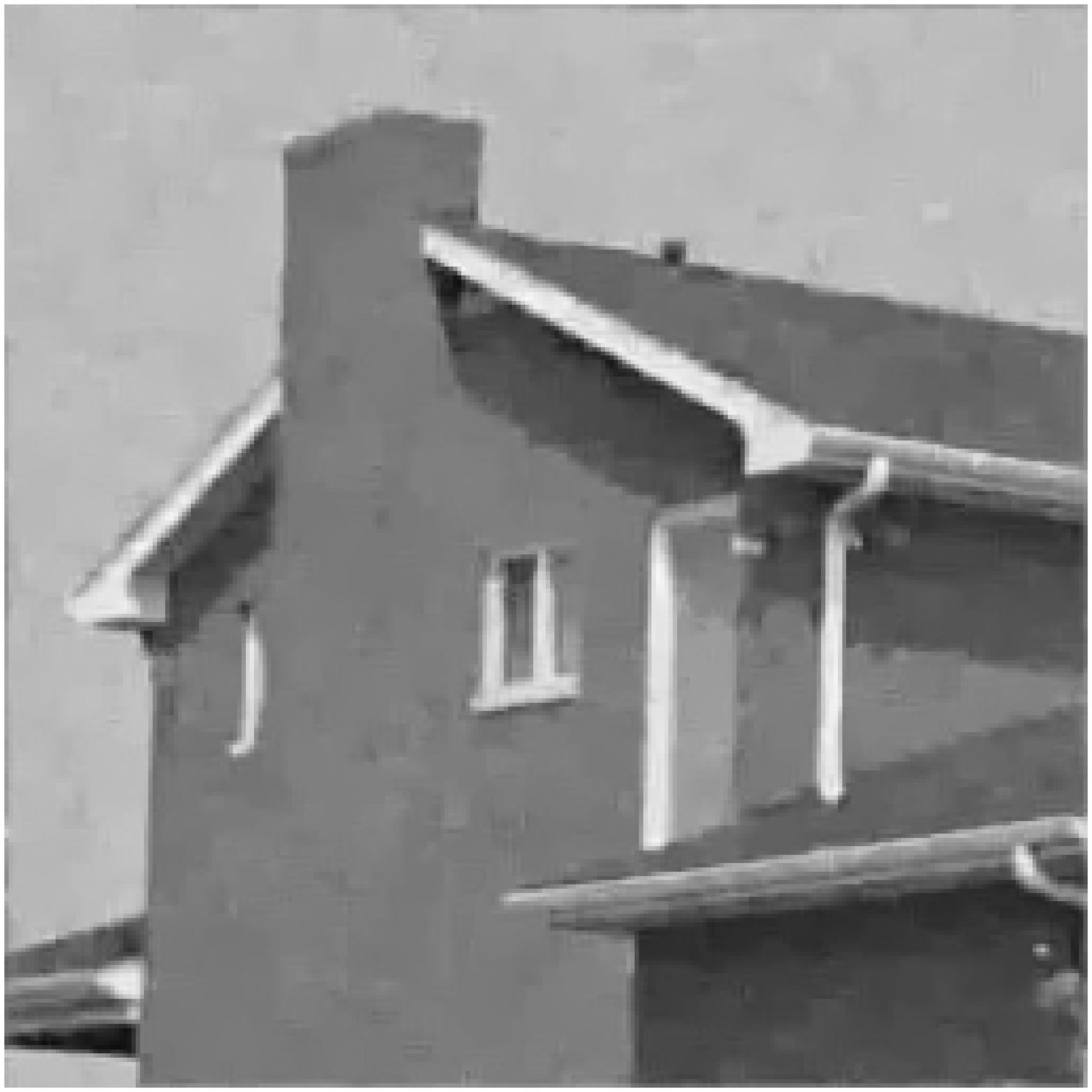}\\
	\caption{Comparison for image denoising. From left to right and up to down are: noisy images and  denoised by  SB, DTCWT, LCHMM, ${l_o}$-WF and ours using equations (\ref{13}) and (\ref{17}), respectively.}
	\label{F；D2}
\end{figure}

\begin{figure}[htbp]
	\centering
	\includegraphics[width=0.32\textwidth]{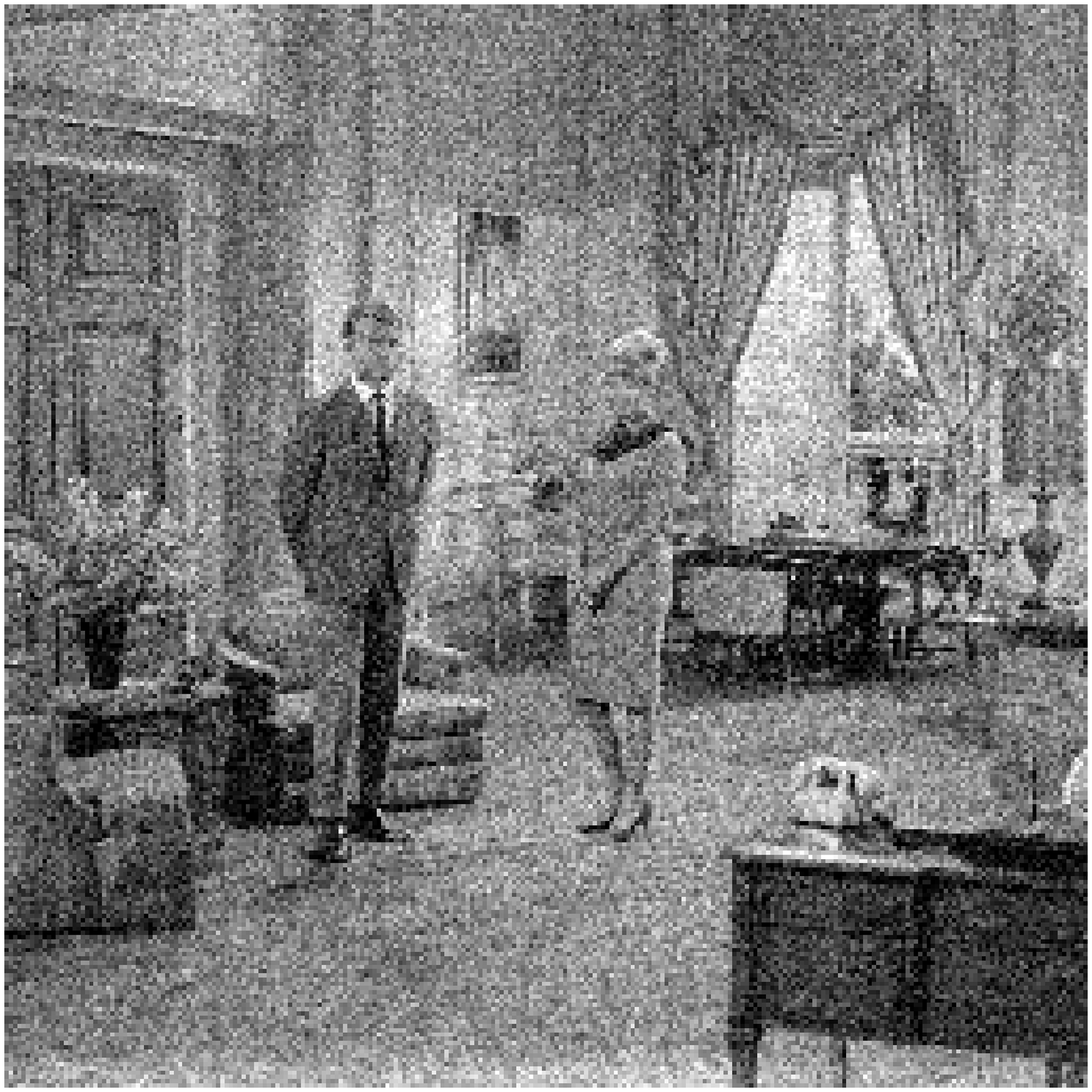}\\
	\includegraphics[width=0.32\textwidth]{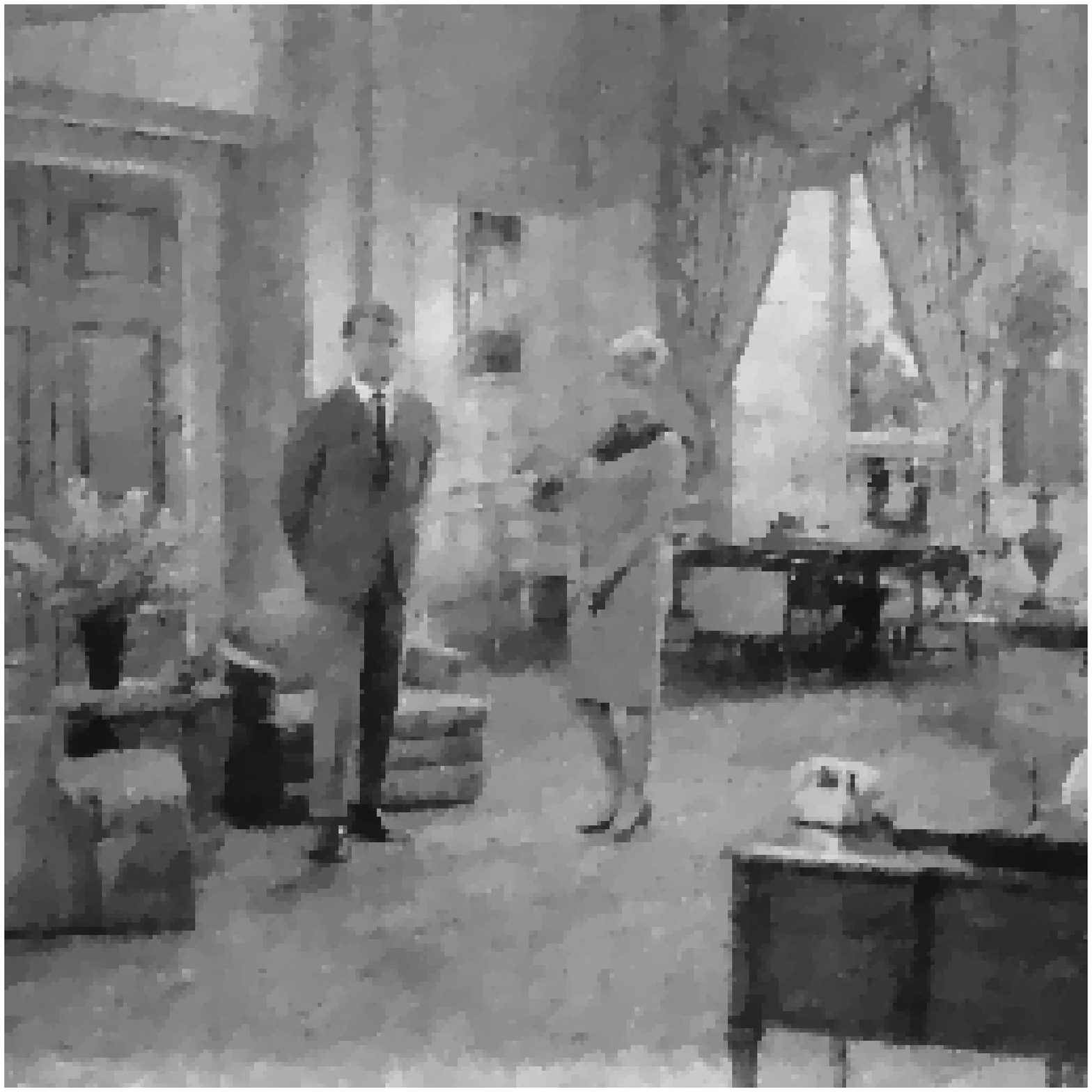}
	\includegraphics[width=0.32\textwidth]{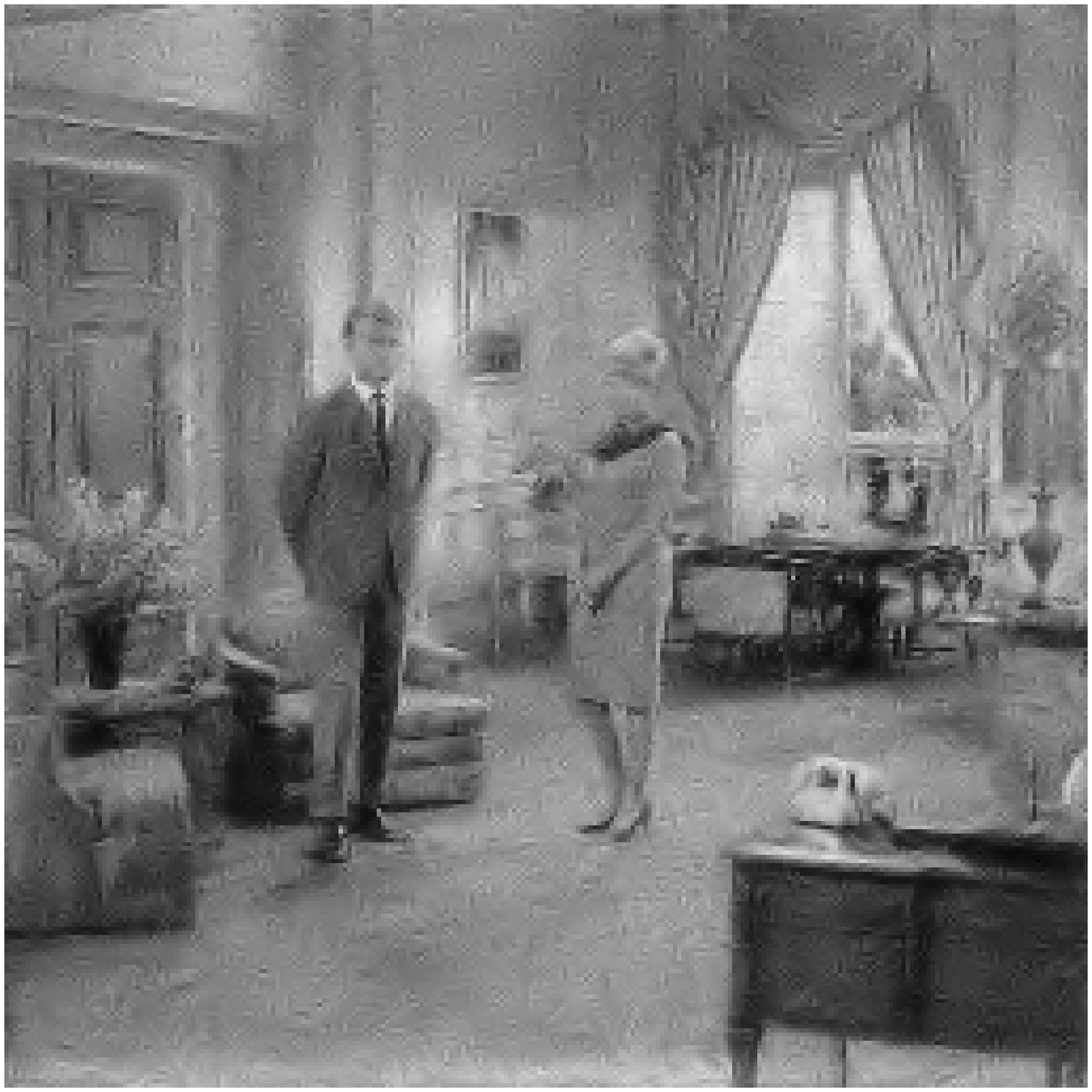}
	\includegraphics[width=0.32\textwidth]{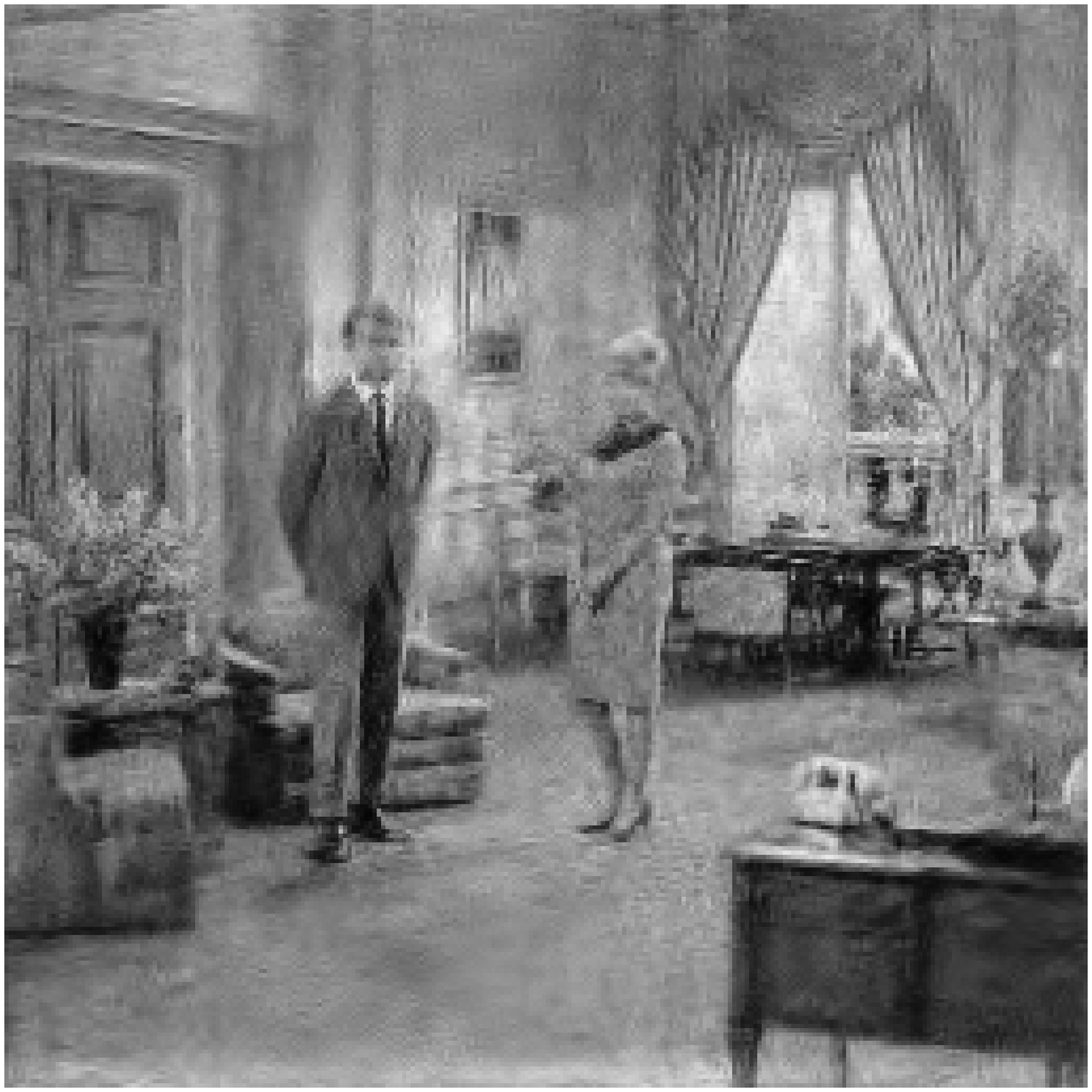}\\
	\includegraphics[width=0.32\textwidth]{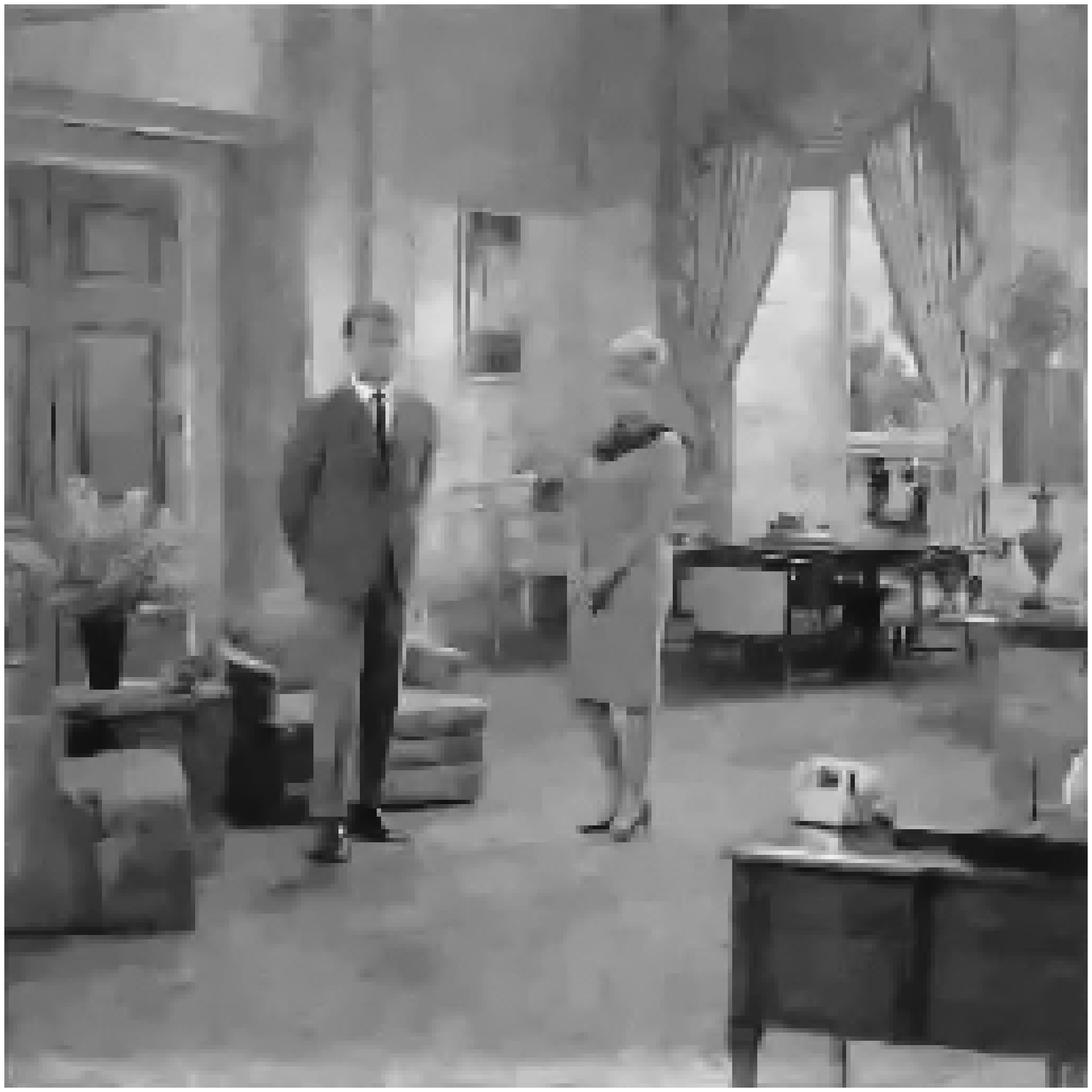}
	\includegraphics[width=0.32\textwidth]{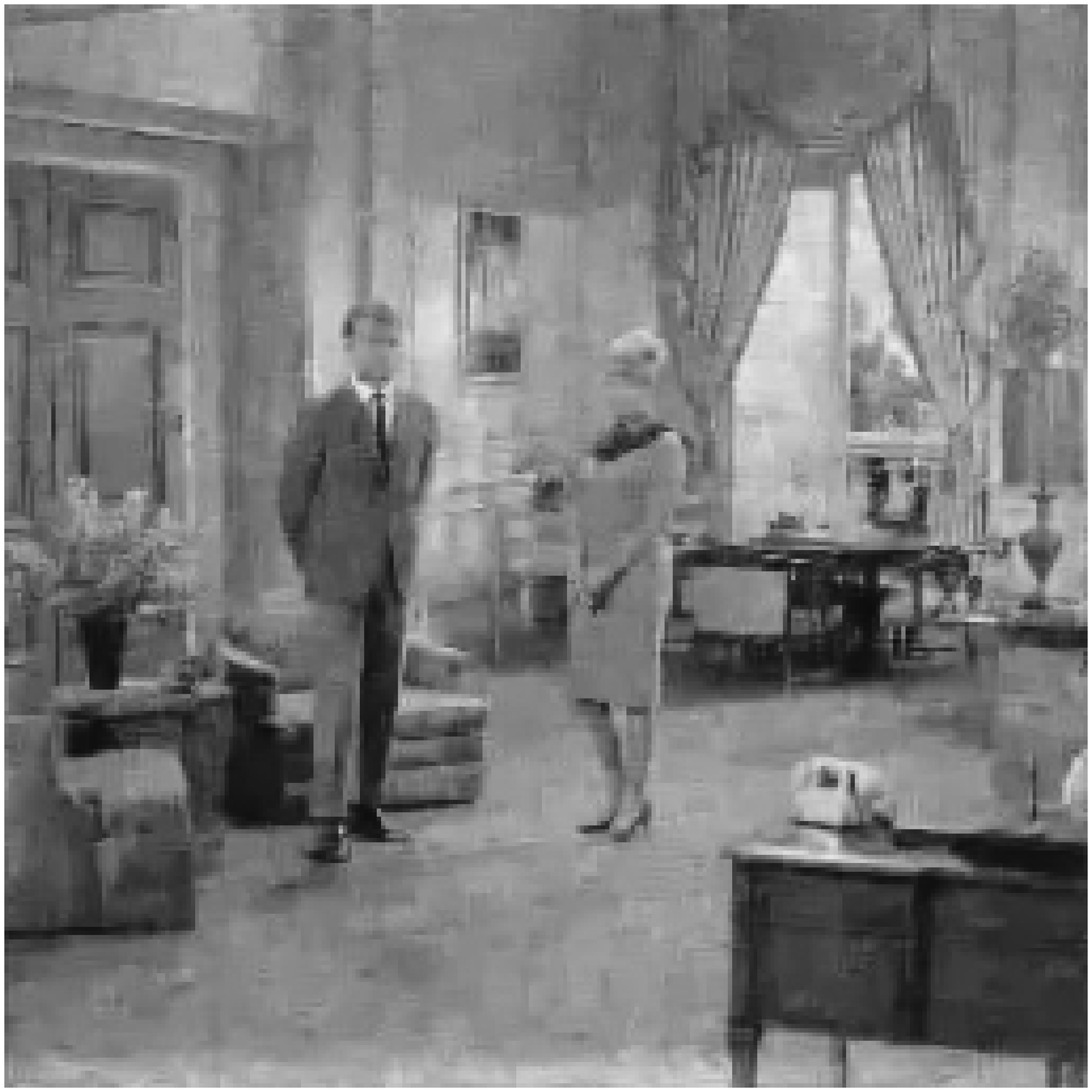}
	\includegraphics[width=0.32\textwidth]{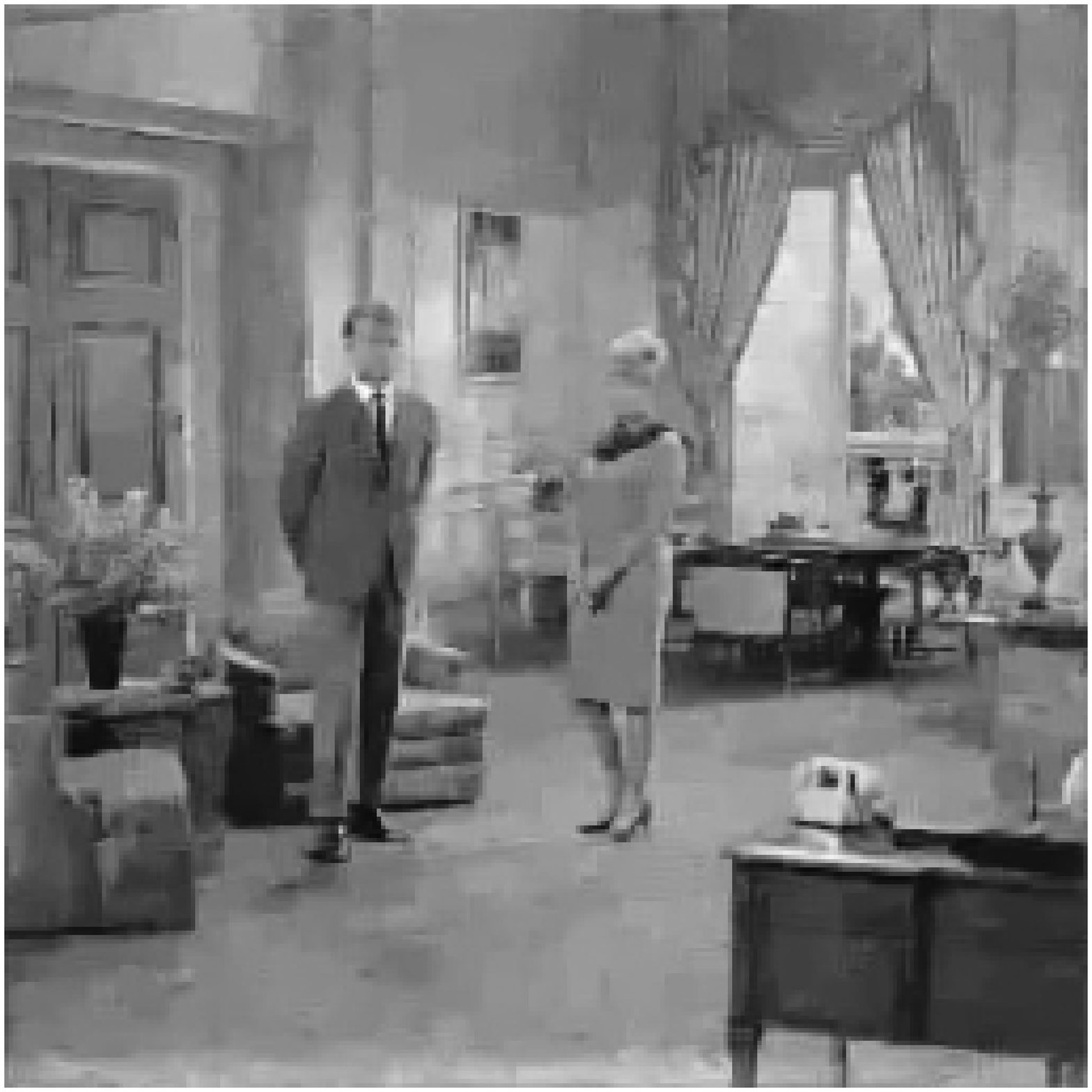}\\
		\caption{Comparison for image denoising. From left to right and up to down are: noisy images and  denoised by  SB, DTCWT, LCHMM, ${l_o}$-WF and ours using equations (\ref{13}) and (\ref{17}), respectively.}
	\label{F；D3}
\end{figure}

\begin{figure}[htbp]
	\centering	
	\includegraphics[width=0.32\textwidth]{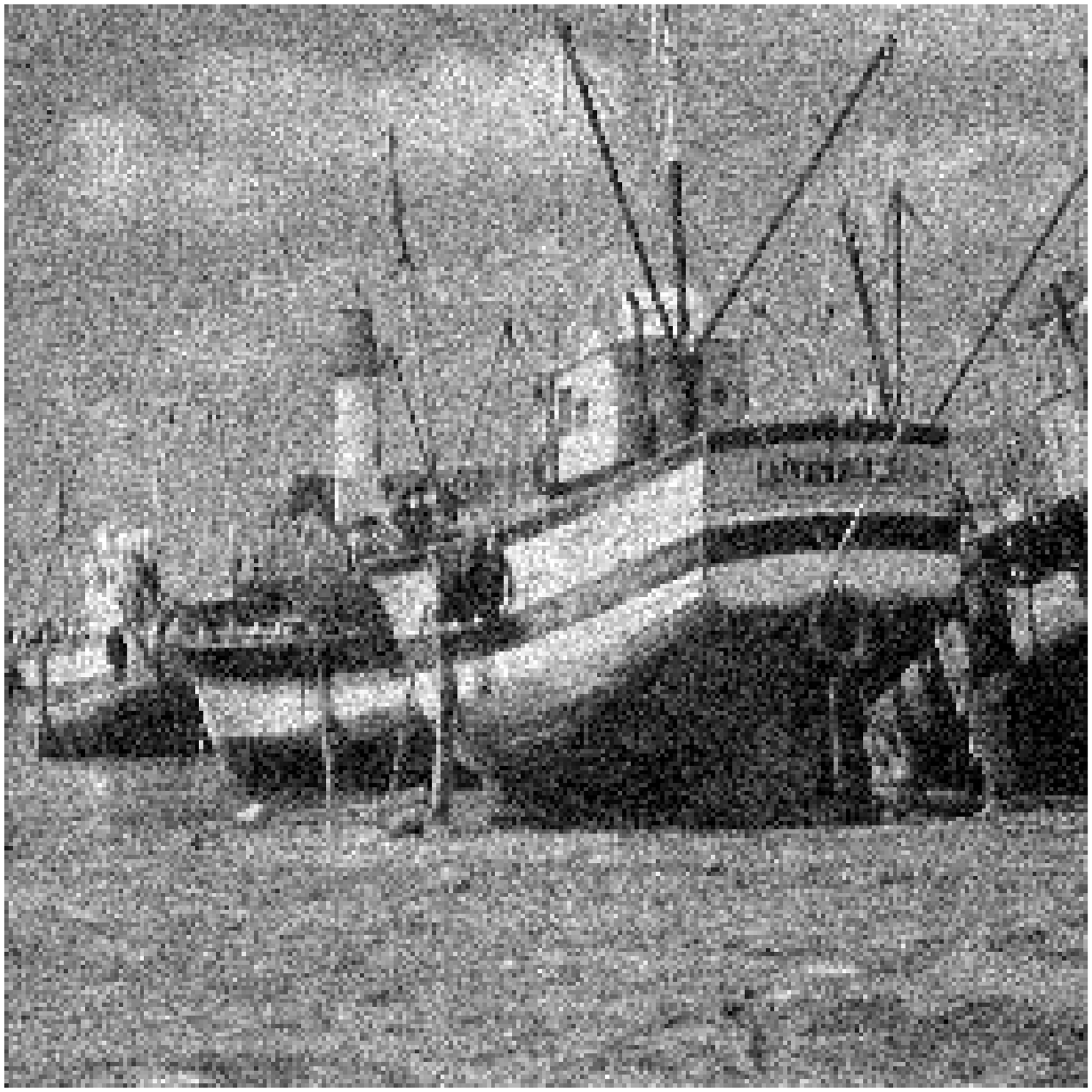}\\
	\includegraphics[width=0.32\textwidth]{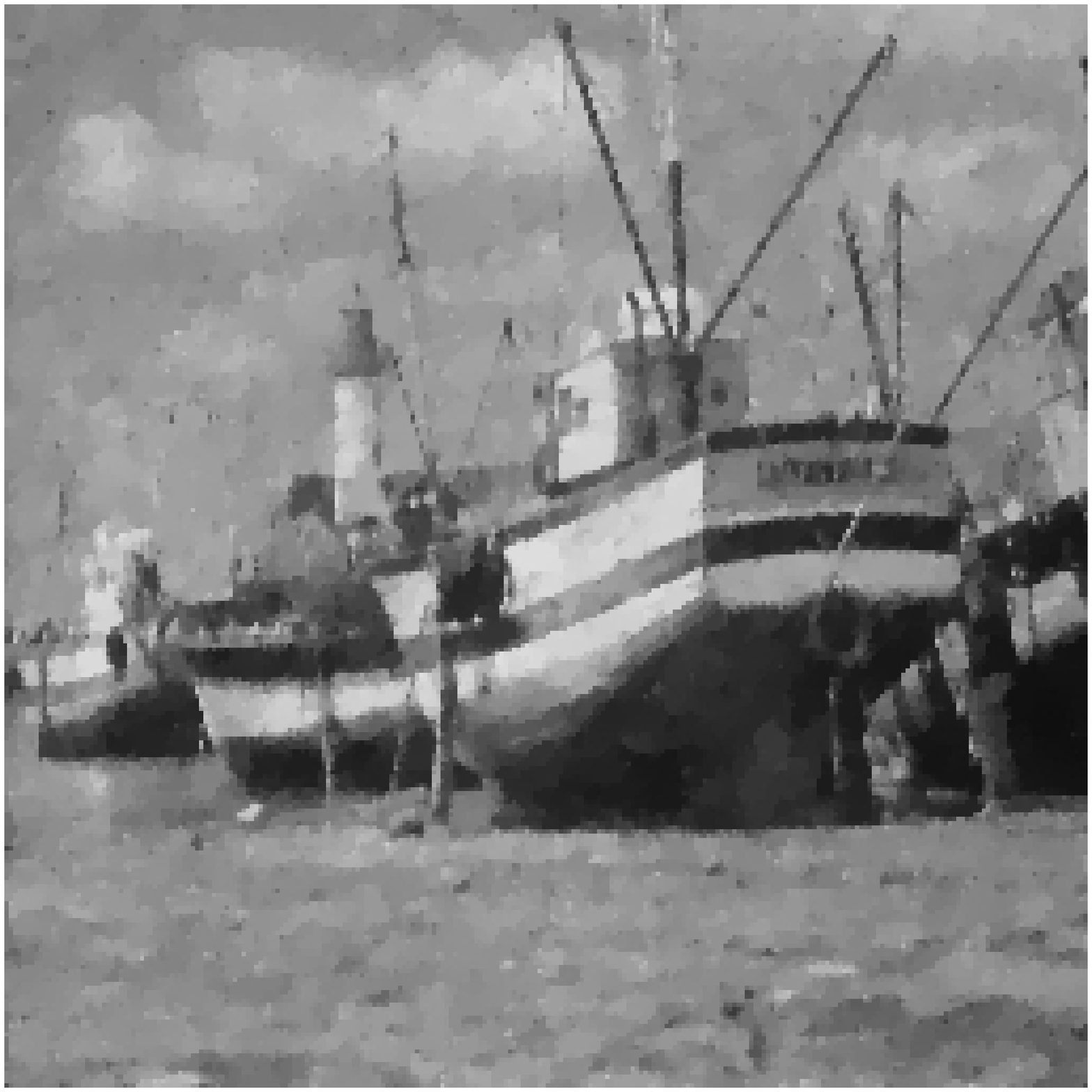}
	\includegraphics[width=0.32\textwidth]{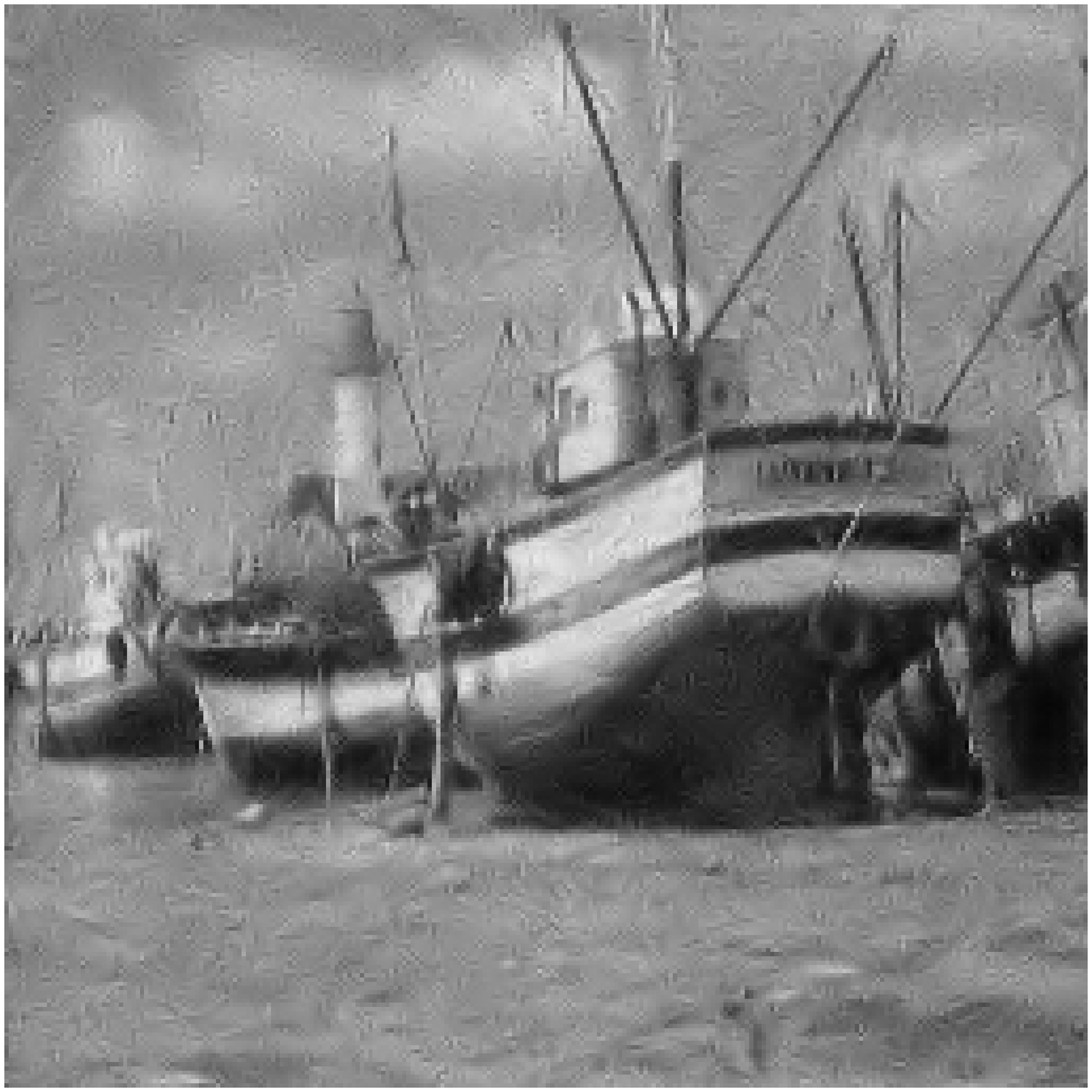}
	\includegraphics[width=0.32\textwidth]{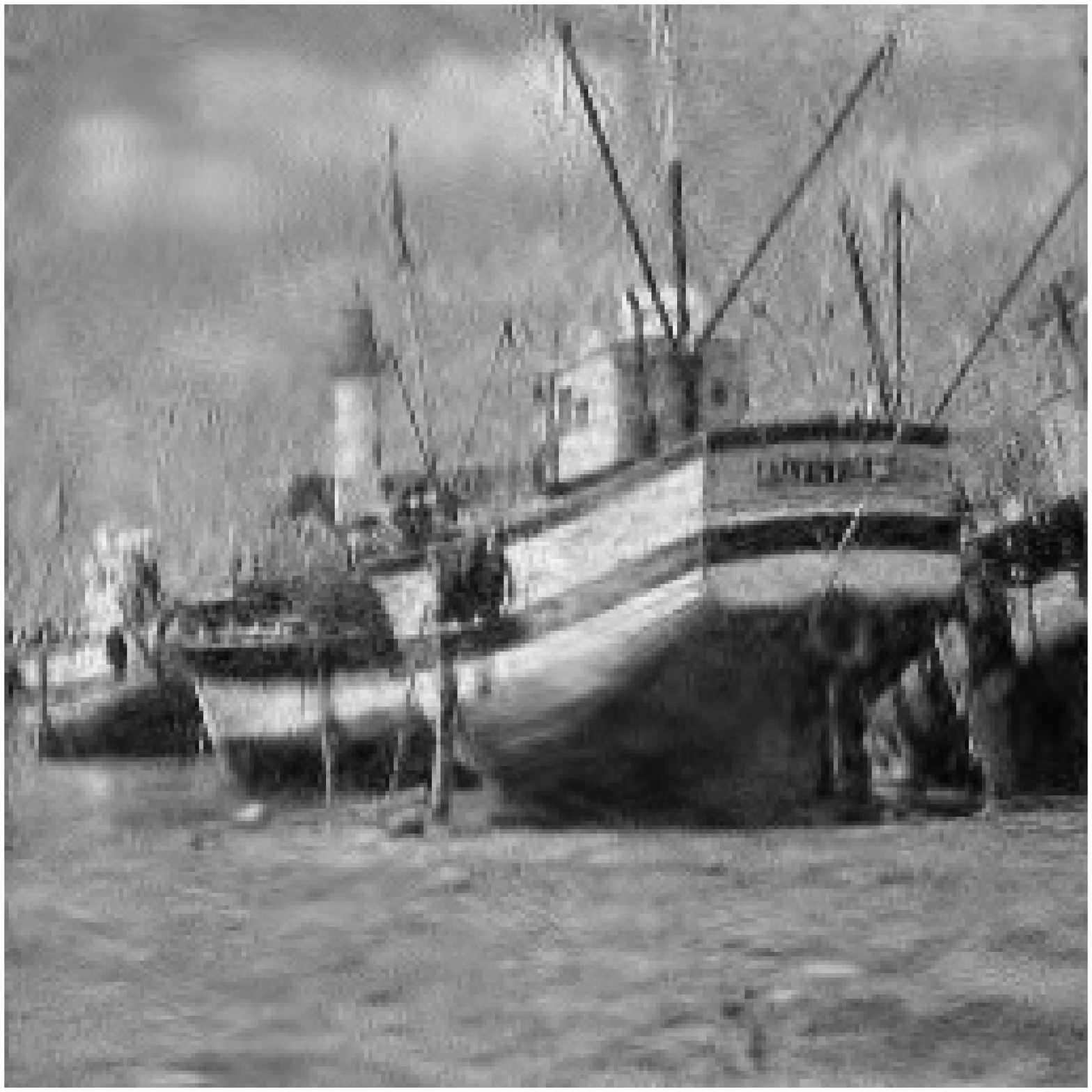}\\
	\includegraphics[width=0.32\textwidth]{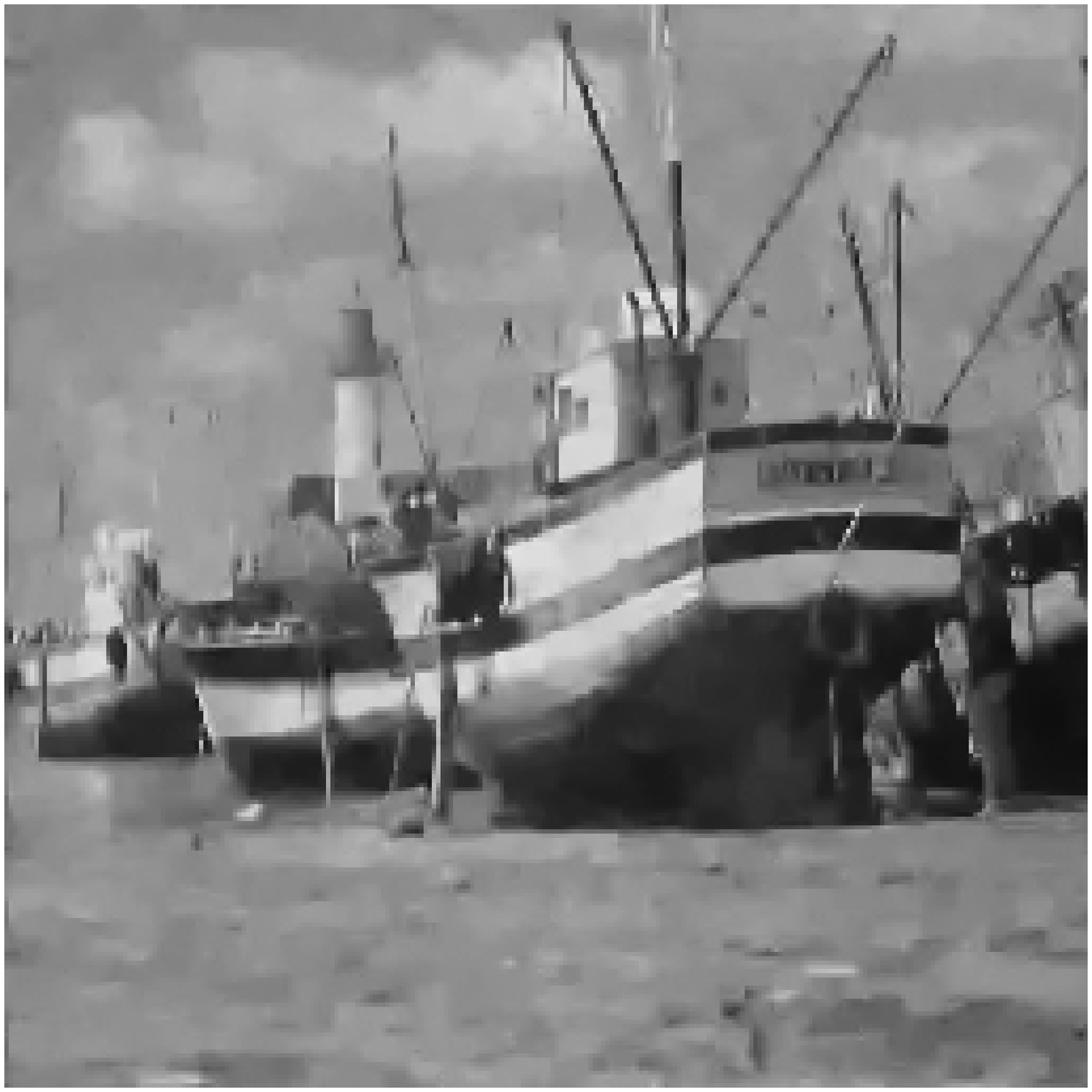}
	\includegraphics[width=0.32\textwidth]{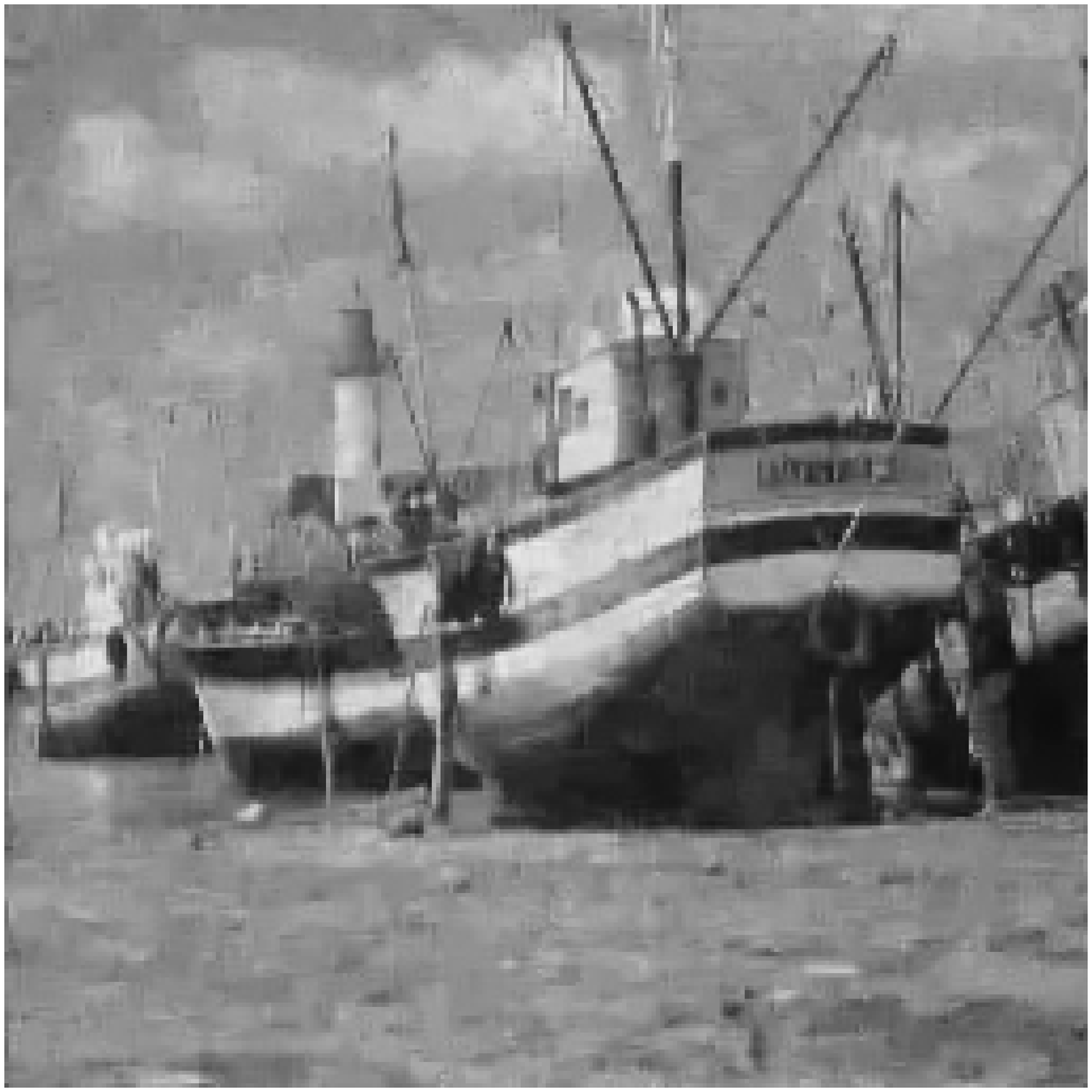}
	\includegraphics[width=0.32\textwidth]{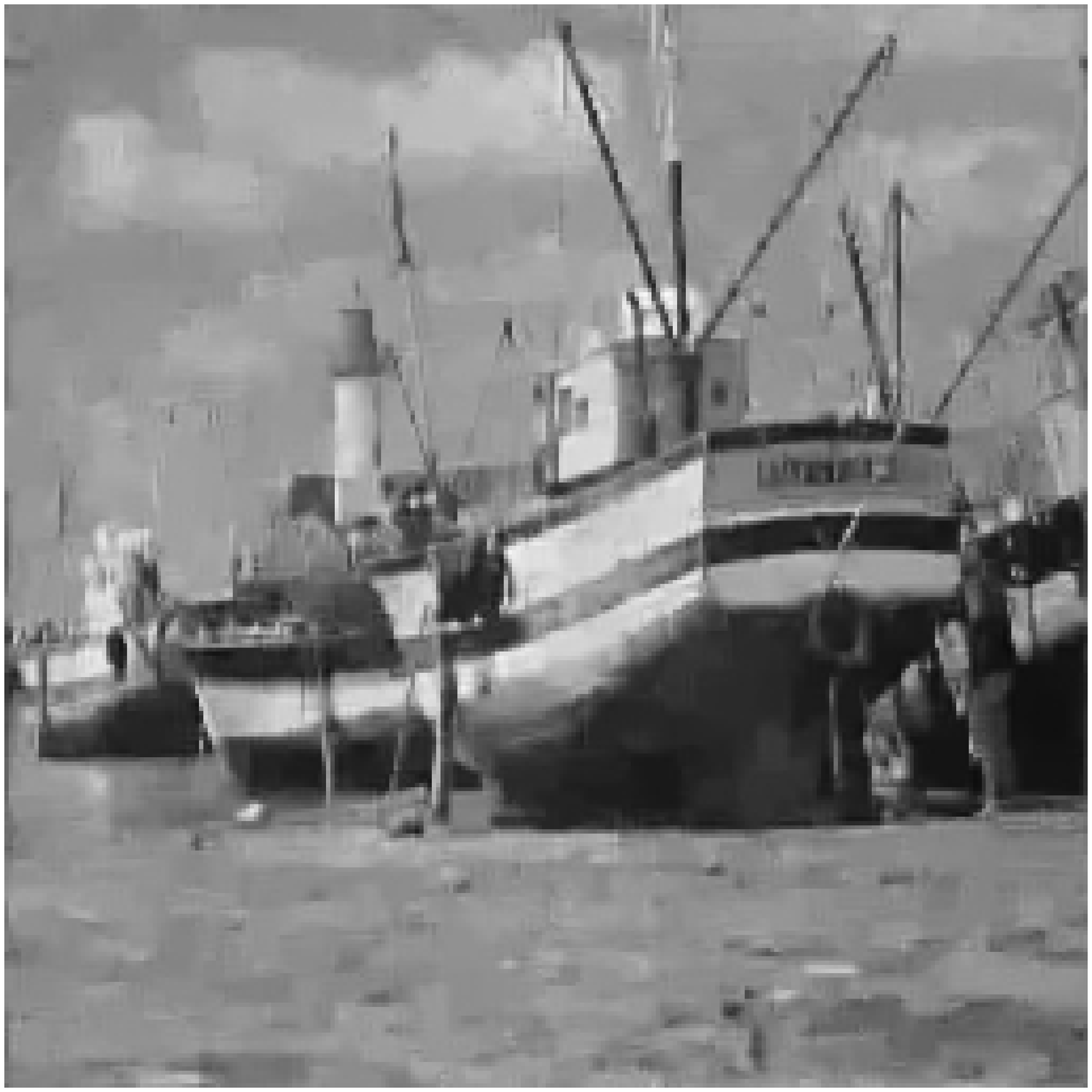}\\
	\caption{Comparison for image denoising. From left to right and up to down are: noisy images and  denoised by  SB, DTCWT, LCHMM, ${l_o}$-WF and ours using equations (\ref{13}) and (\ref{17}), respectively.}
	\label{F；D4}
\end{figure}

\begin{figure}[htbp]
	\centering
	\includegraphics[width=0.32\textwidth]{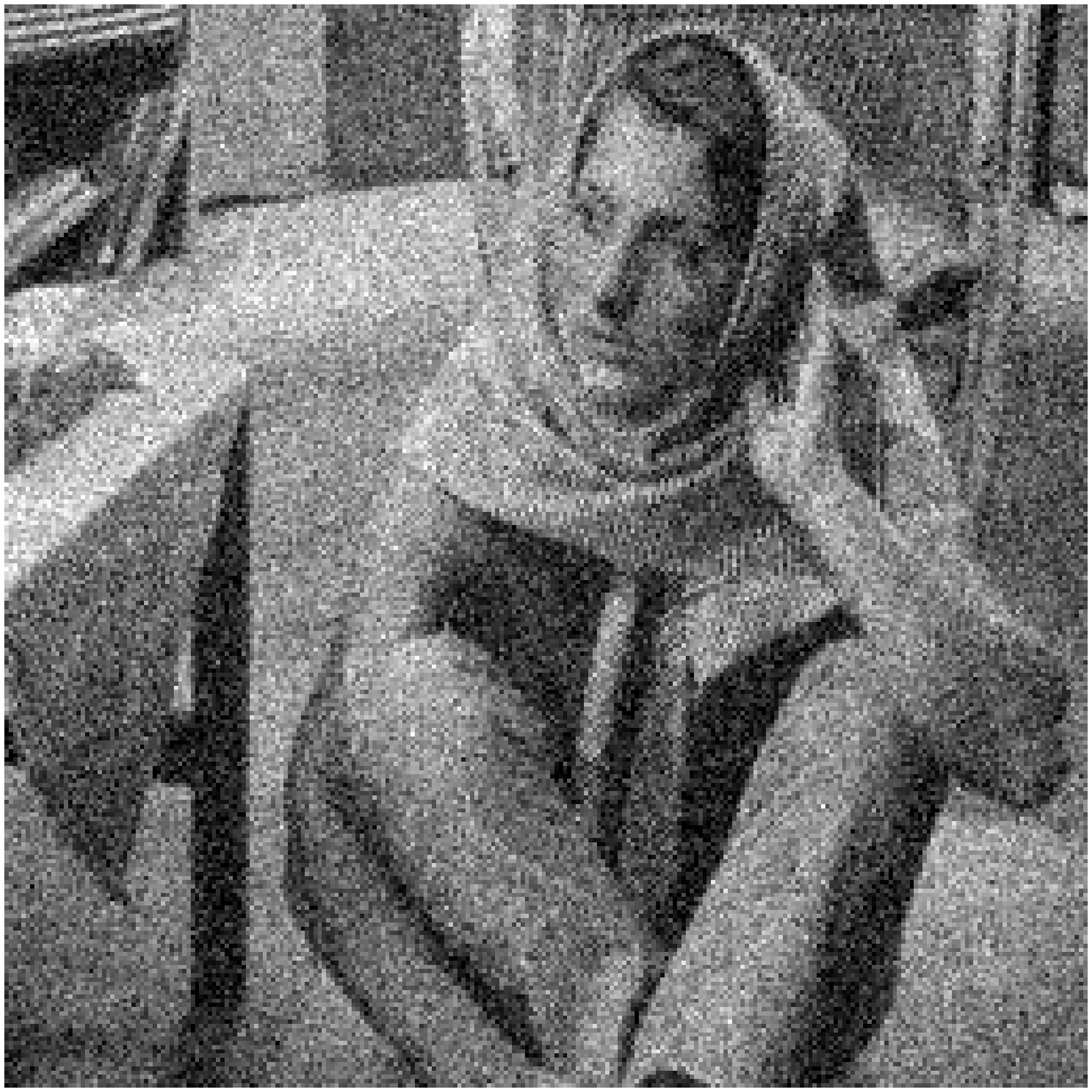}\\
	\includegraphics[width=0.32\textwidth]{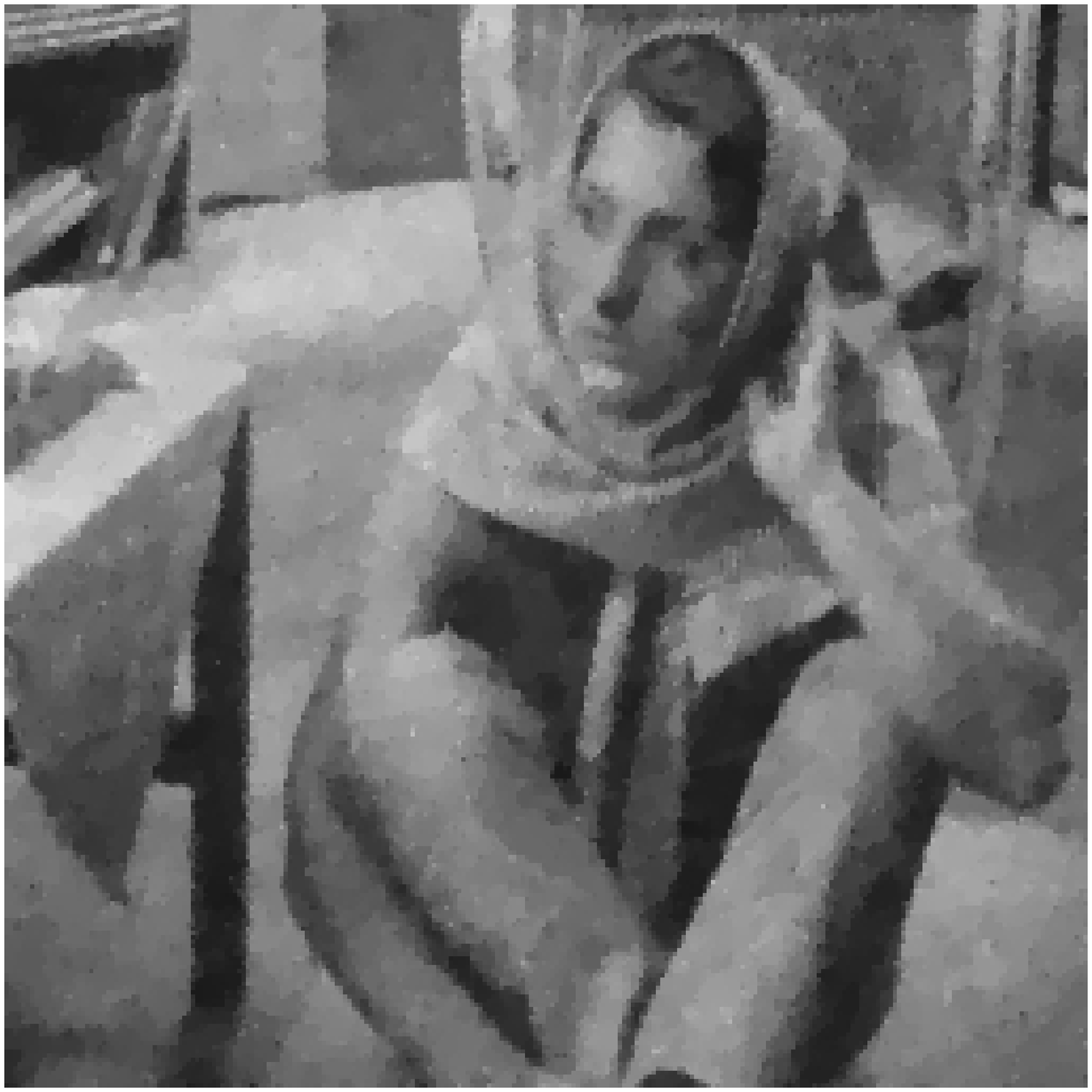}
	\includegraphics[width=0.32\textwidth]{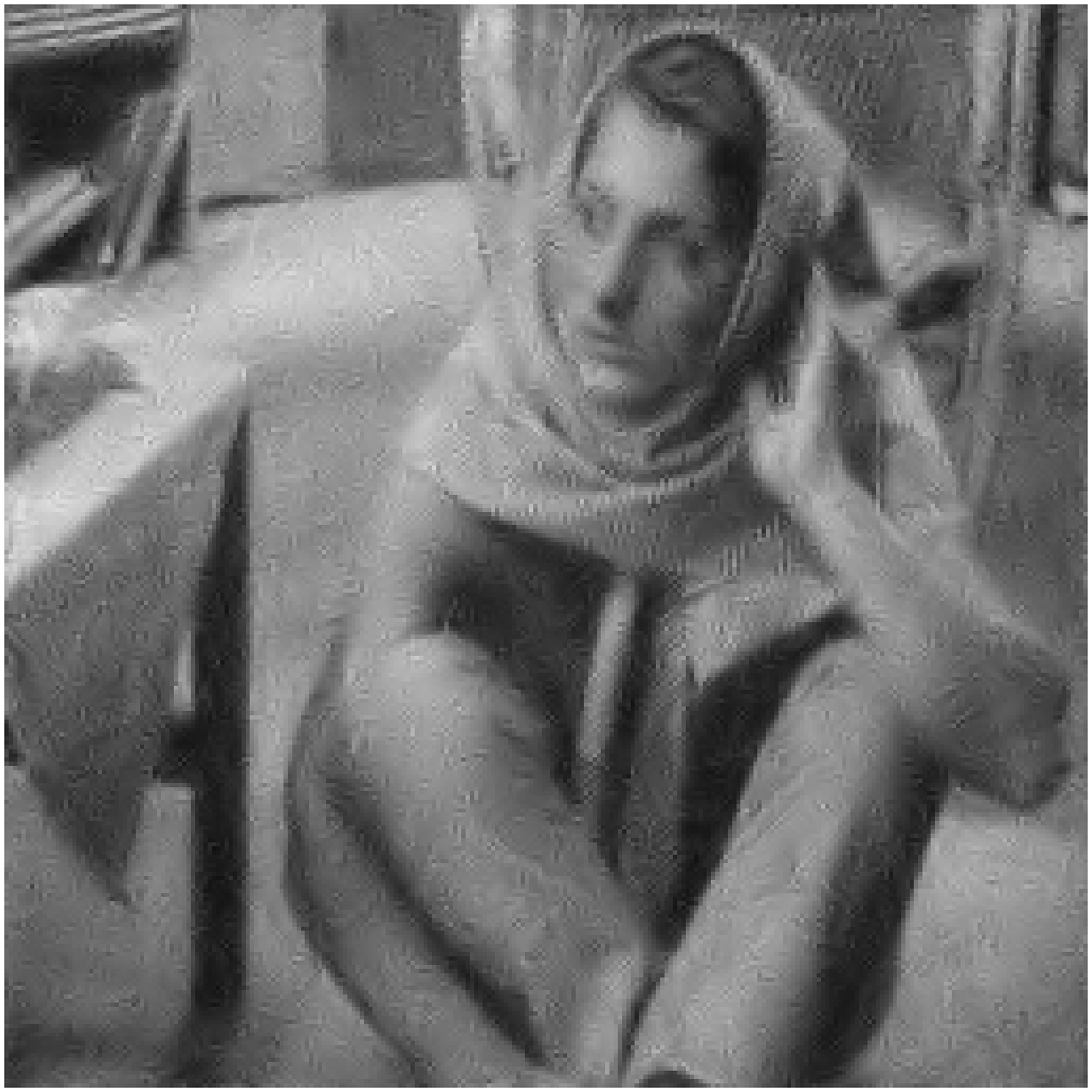}
	\includegraphics[width=0.32\textwidth]{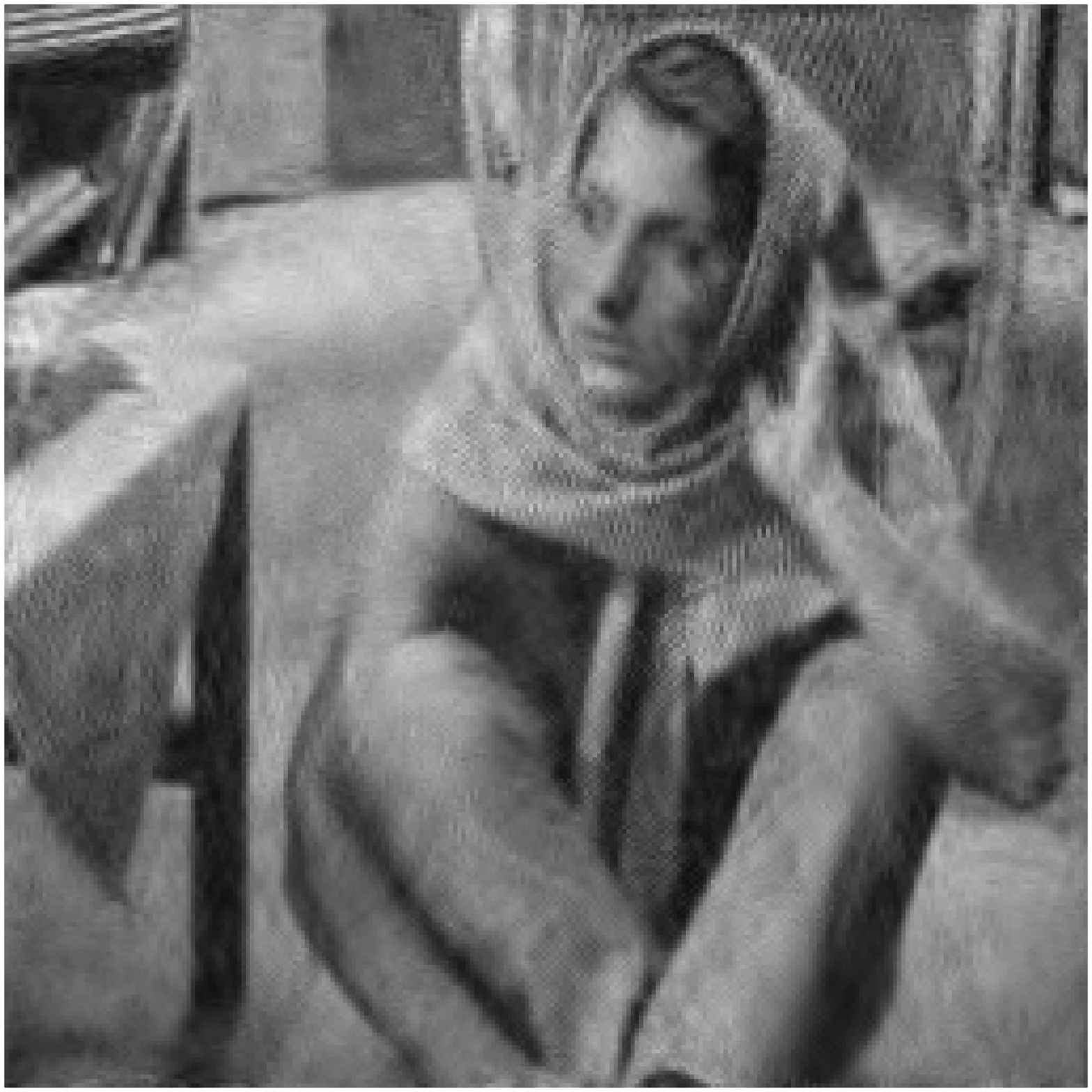}\\
	\includegraphics[width=0.32\textwidth]{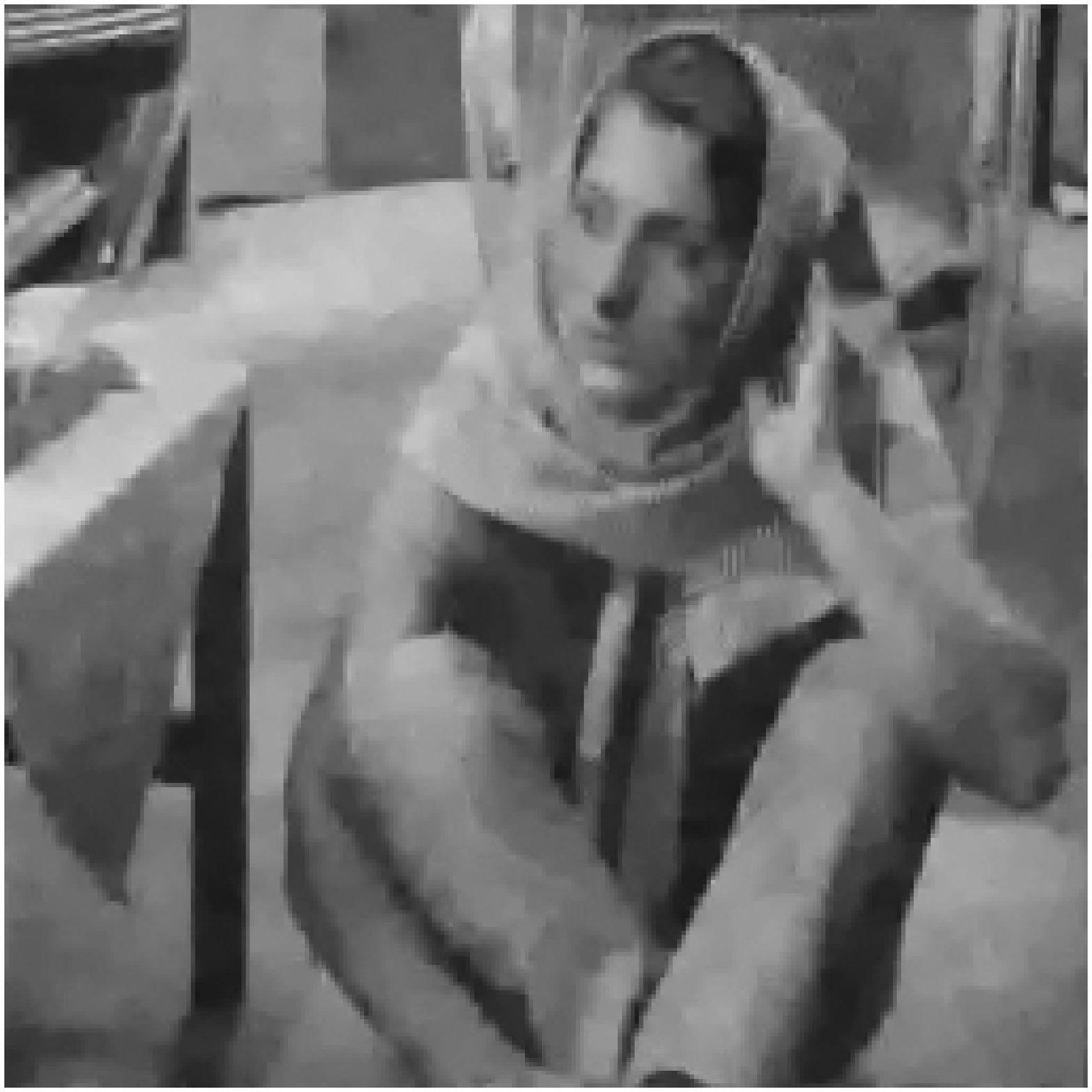}
	\includegraphics[width=0.32\textwidth]{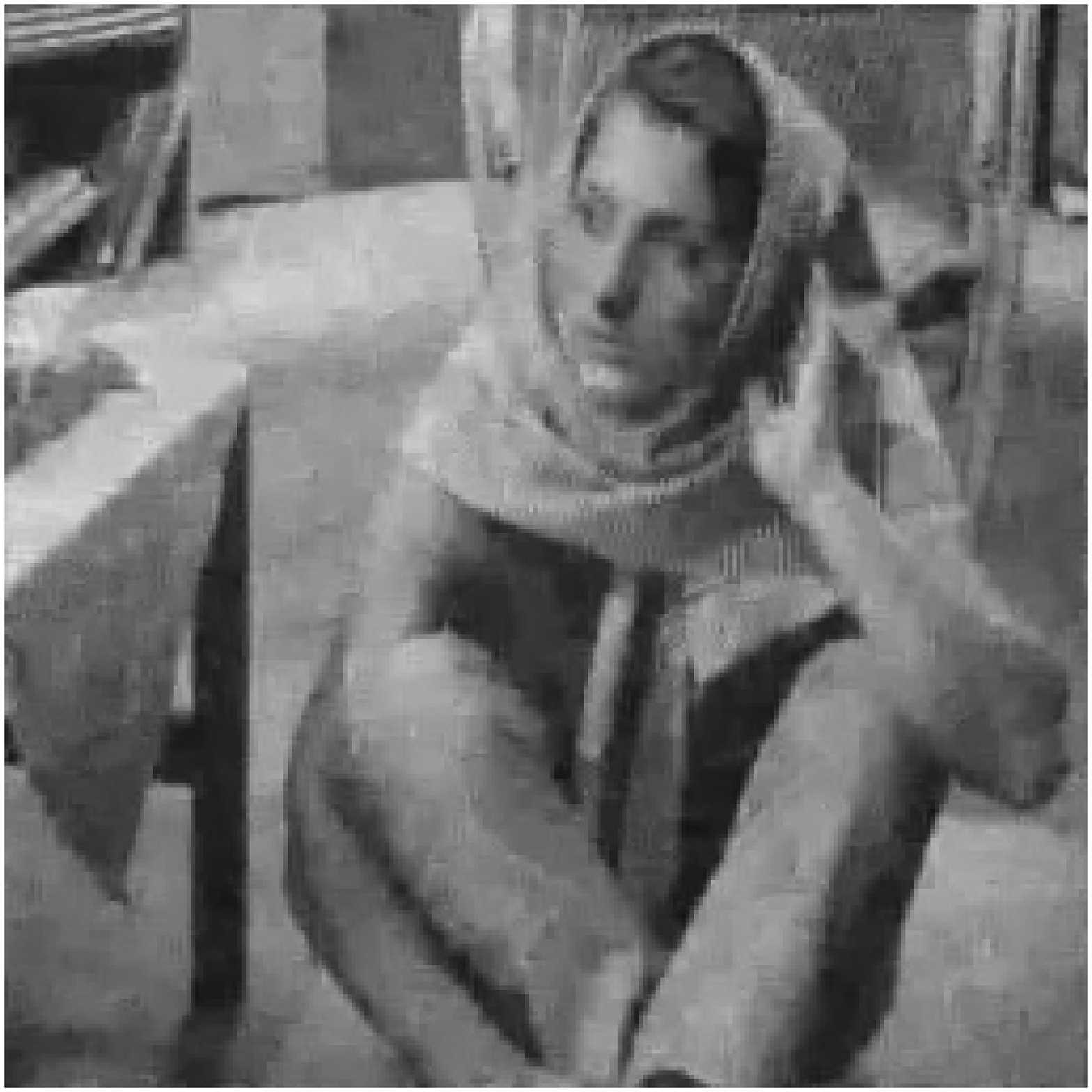}
	\includegraphics[width=0.32\textwidth]{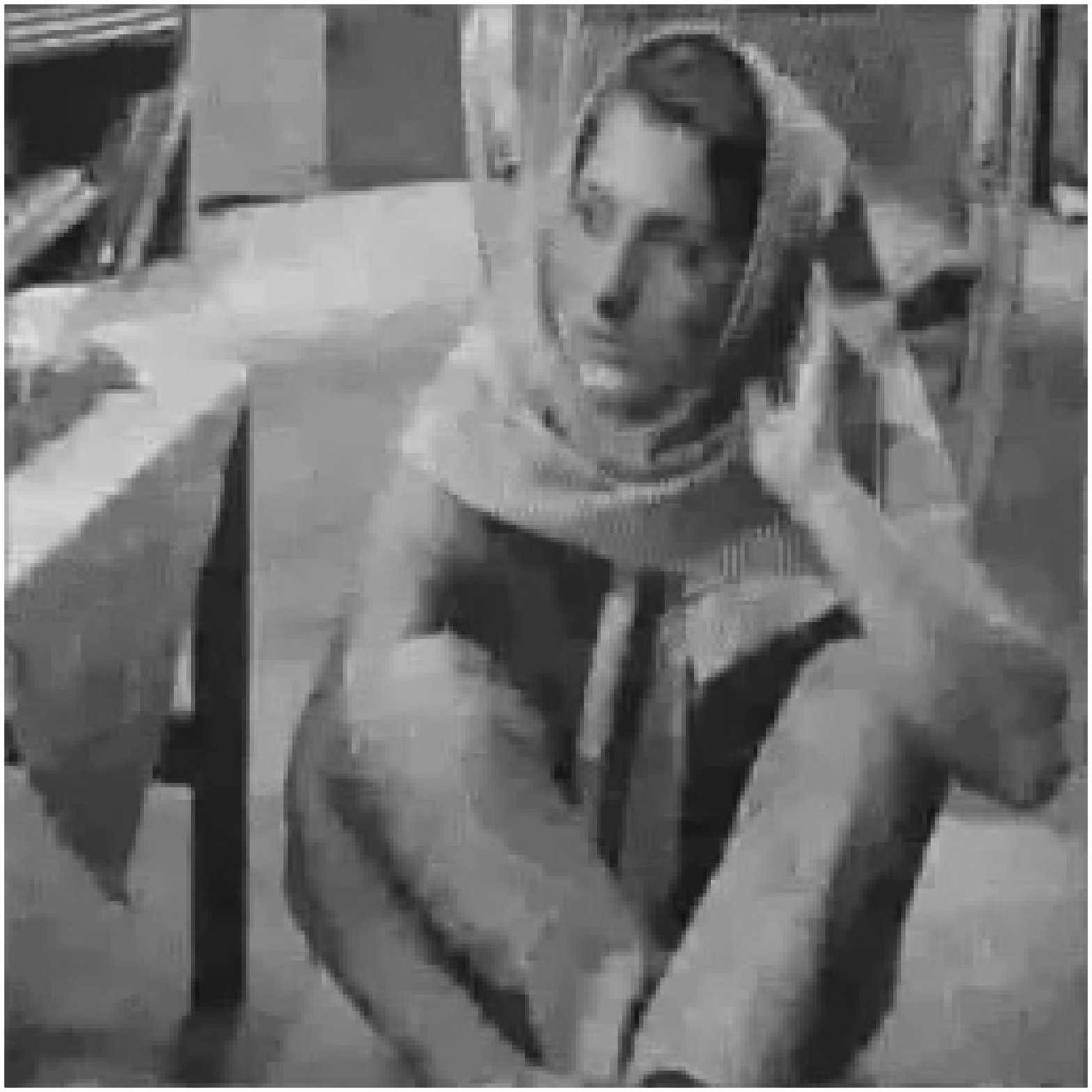}\\
		\caption{Comparison for image denoising. From left to right and up to down are: noisy images and  denoised by  SB, DTCWT, LCHMM, ${l_o}$-WF and ours using equations (\ref{13}) and (\ref{17}), respectively.}
	\label{F；D5}
\end{figure}
	
\begin{figure}[htbp]
	\centering	
	\includegraphics[width=0.32\textwidth]{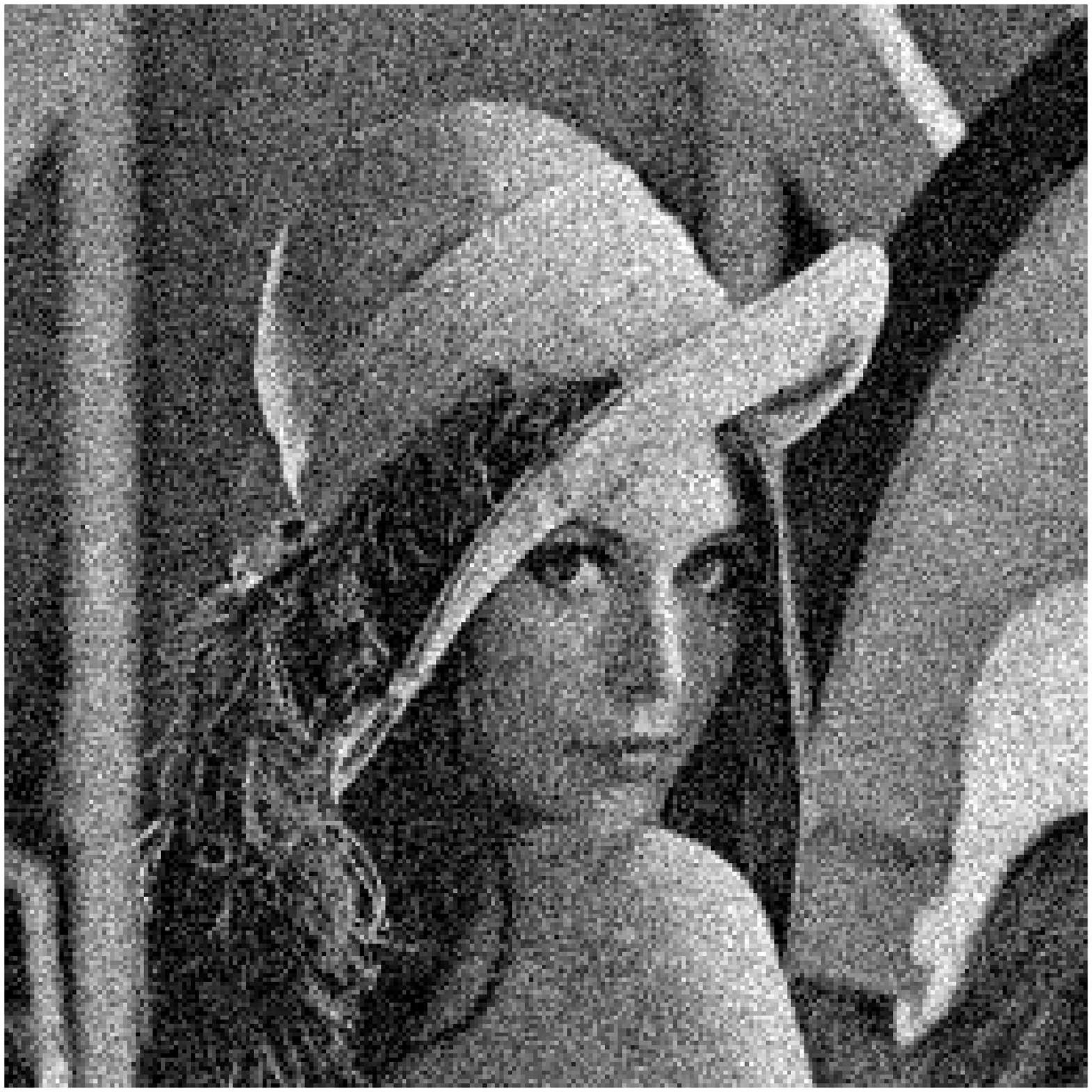}\\
	\includegraphics[width=0.32\textwidth]{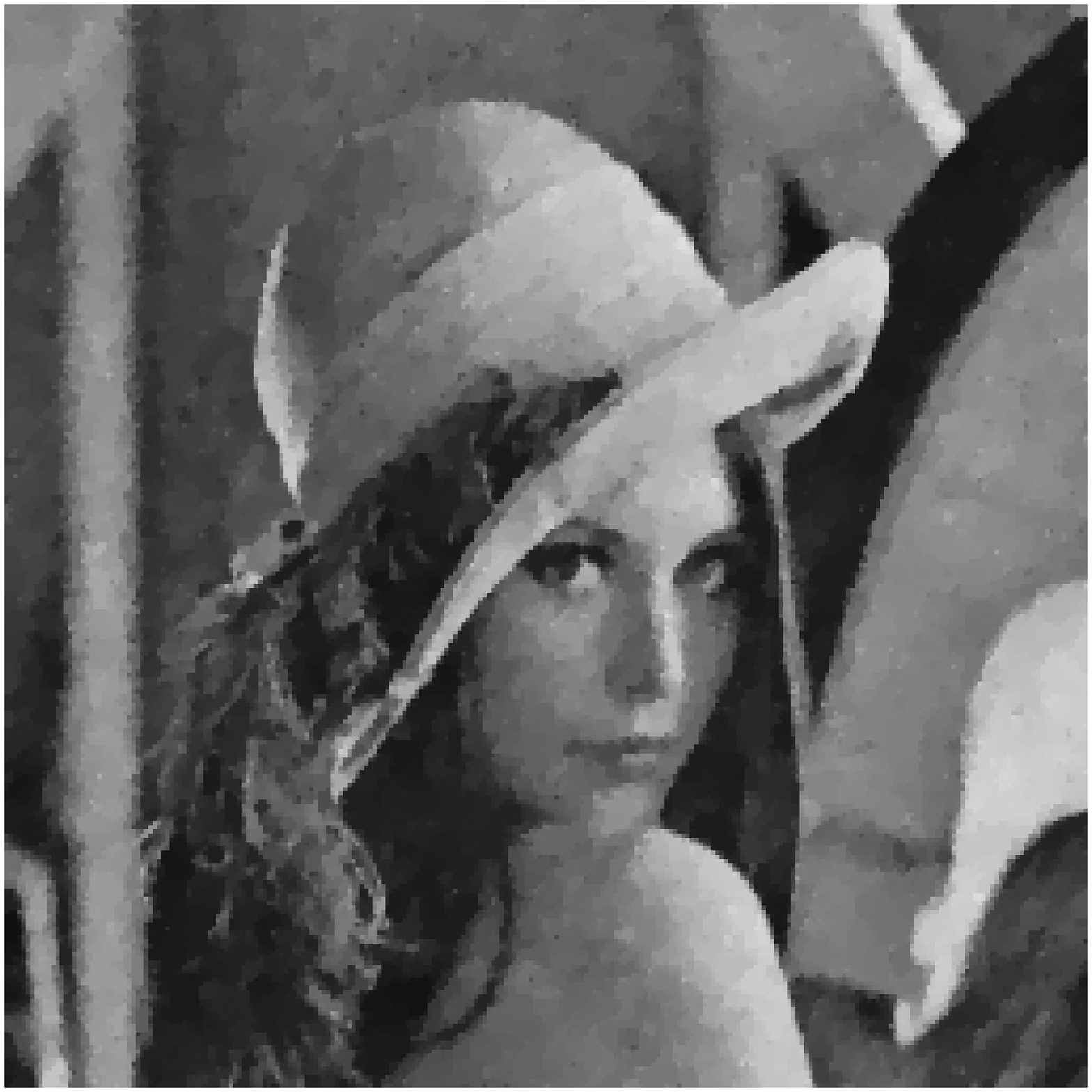}
	\includegraphics[width=0.32\textwidth]{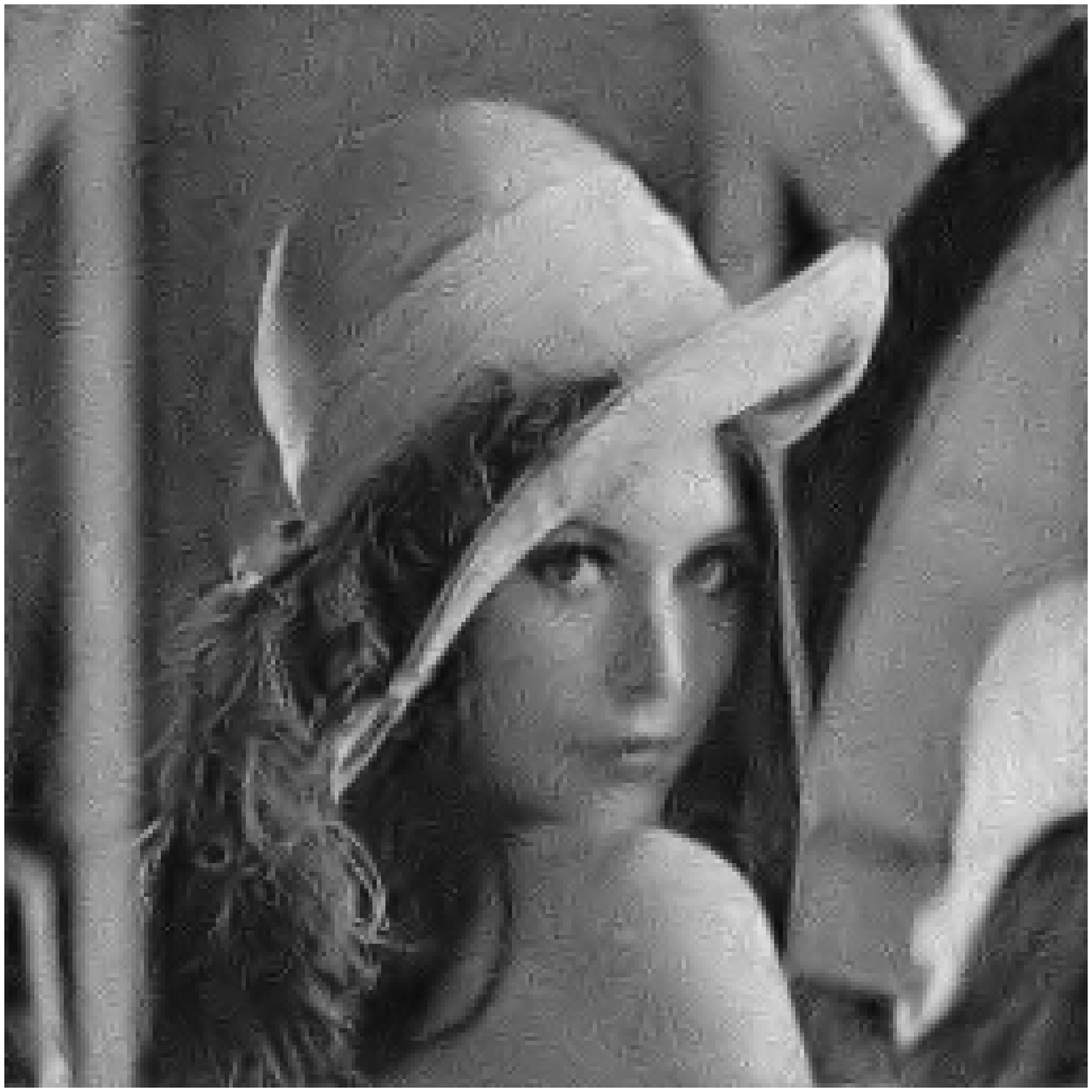}
	\includegraphics[width=0.32\textwidth]{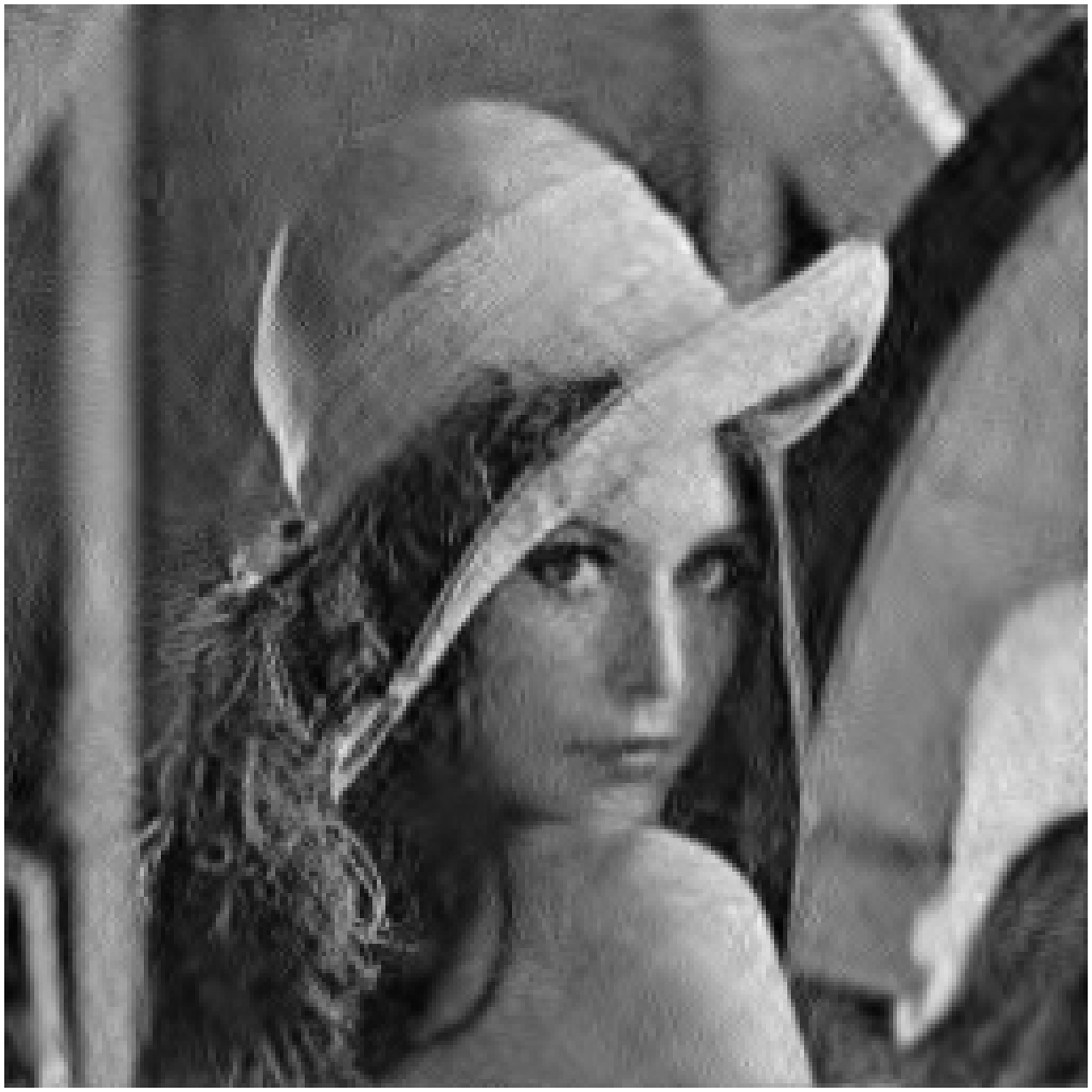}\\
	\includegraphics[width=0.32\textwidth]{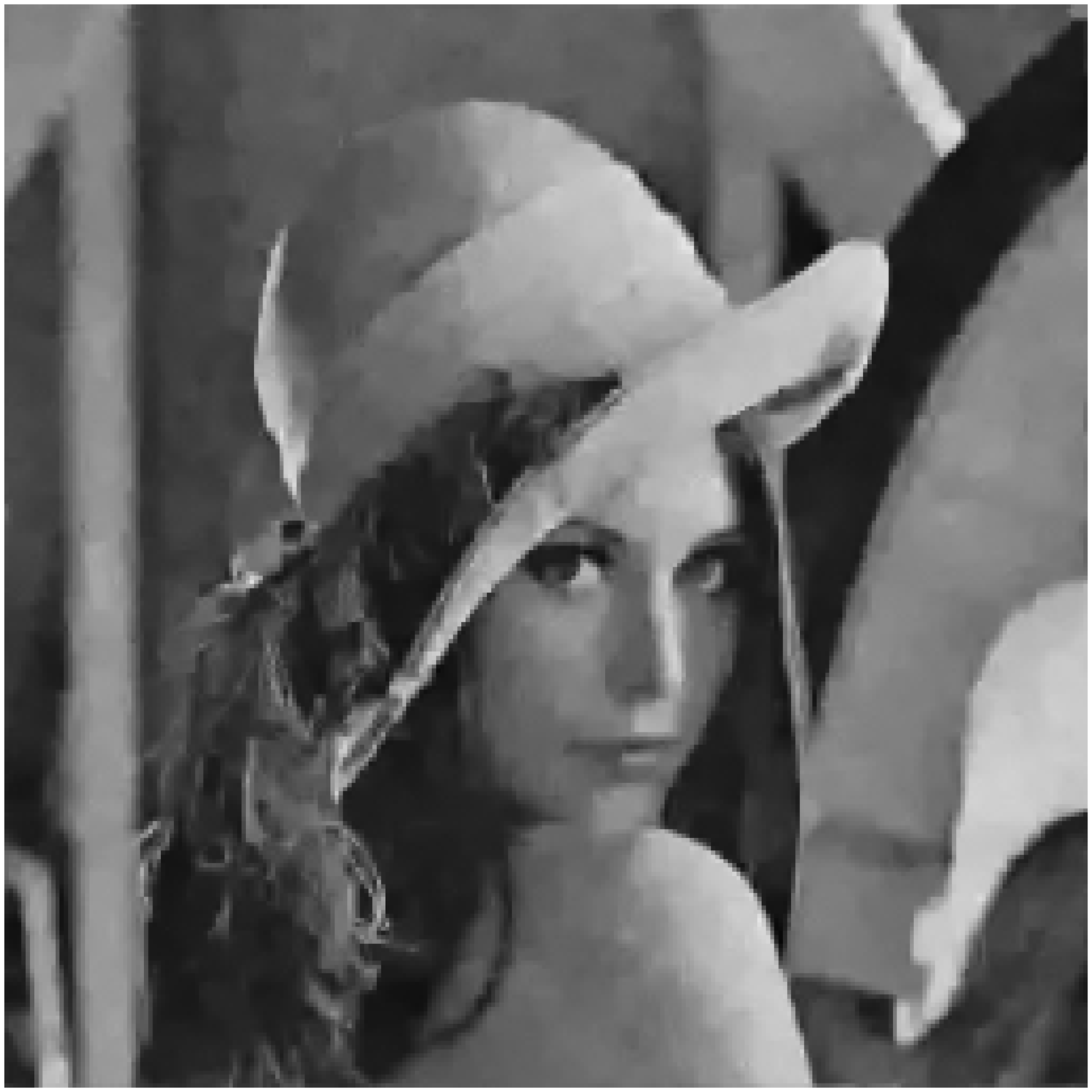}
	\includegraphics[width=0.32\textwidth]{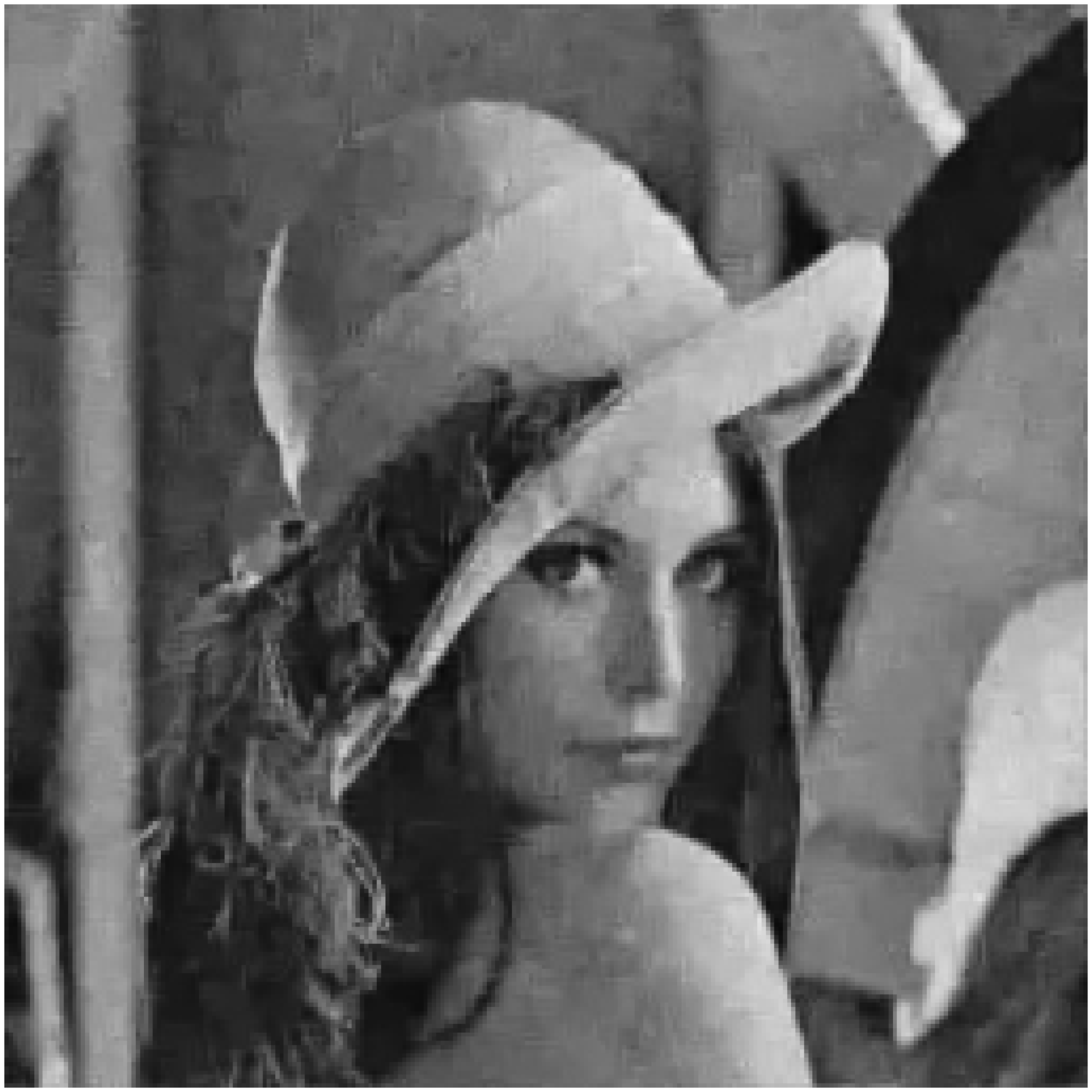}
	\includegraphics[width=0.32\textwidth]{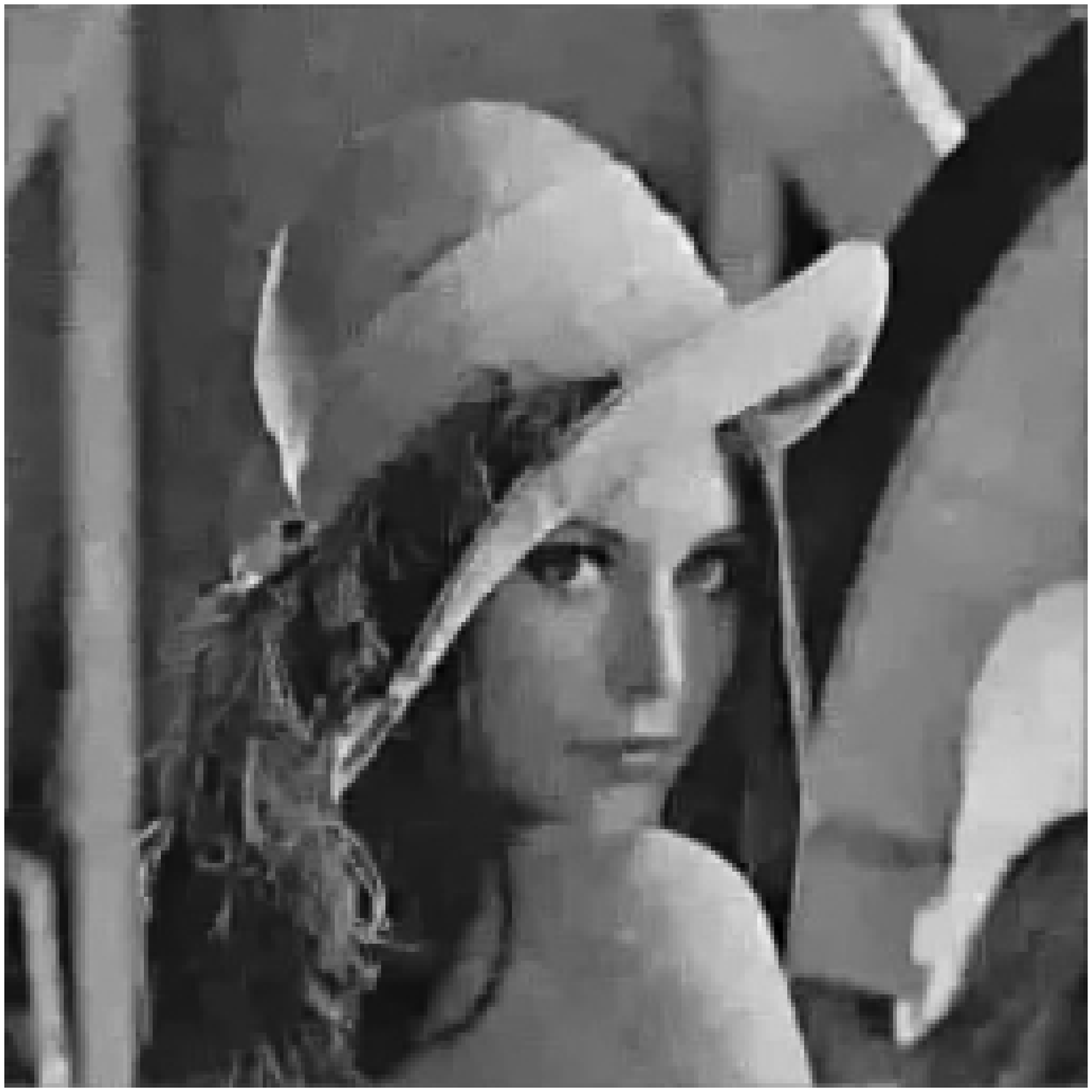}
	\caption{Comparison for image denoising. From left to right and up to down are: noisy images and  denoised by  SB, DTCWT, LCHMM, ${l_o}$-WF and ours using equations (\ref{13}) and (\ref{17}), respectively.}
	\label{F；D6}
\end{figure}

\begin{figure}[htbp]
	\centering
	\includegraphics[width=0.9\textwidth]{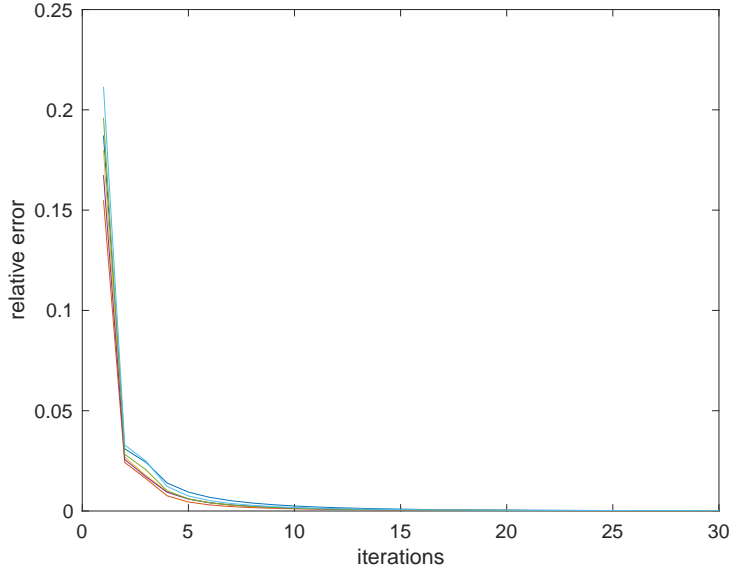}
	\caption{Relative error $\frac{||u^{n+1}-u^n||}{||u^n||}$ versus iterations of six test images.}
	\label{F；error}
\end{figure}

\begin{table*}[htbp]
	\centering
	\caption{PSNR  of the compared four methods for image denoising}
	\centering
	\begin{tabular}{c|c|c|c|c|c|c|c}
		\hline
		Image &	original&	SB&	DTCWT&LCHMM	& ${l_o}$-WF& \tabincell{c}{ours\\ using (\ref{13})}&  \tabincell{c}{ours\\ using (\ref{17})}\\
		\hline
		1&	20.45&	27.05&	26.21&27.14&	27.63&27.35&	\textbf{27.81}\\
		2&	20.04&	30.20&	28.45&29.78&	\textbf{31.17}&30.63&	30.99\\
		3&	20.07&	25.64&	25.28&26.00&	25.88&26.38&	\textbf{26.36}\\
		4&	20.15&	26.16&	25.78&26.47&	26.40&26.66&	\textbf{26.69}\\
		5&	20.06&	27.64&	27.09&27.98&	27.98&28.12& \textbf{28.18}\\
		6&	20.17&	27.89&	27.17&28.08&	28.20&28.23&	\textbf{28.33}\\
		average&	20.17&	27.43&26.66&	27.57&27.88&27.89 &	\textbf{28.06}\\
		Time(s)&	-&	3.09&	\textbf{0.49}&-&	12.40&10.00&	4.02\\
		\hline
	\end{tabular}%
\label{T1}
\end{table*}%

\subsection{Image debluring}
\label{se3.2}
In this subsection, we apply the proposed model to image deblurring and compare the results with those of  ${l_o}$-WF and EDWF. We also test the same six images for image deblurring. The blurred images are generated by convolution with the blur kernel $A$  and added by a Gaussian noise  $\eta $, where $A$  and $\eta $  are generated by the MATLAB command `$A = fspecial({\rm{'motion', 9, 0}})$ ' and `$\eta  = 5*randn(size(u))$ ', respectively. 

In the implementation of image deblurring, the parameters of our model using equation (\ref{13}) are set as:  ${\lambda _1} =0.006$, $\lambda _2==...=\lambda _9 = 0.004$,  ${\gamma _1} = 0.4$,  ${\gamma _2}=\gamma _3=...=\gamma _9 = 0.1$,  $tol = 5e - 4$; the parameters of our algorithm using equation (\ref{17}) are set as:  ${\lambda _1} = 0.004$,  ${\lambda _2}=\lambda _3=...=\lambda _9 = 0.002$,  ${\gamma _1} = 0.1$,  ${\gamma _2}=\gamma _3=...=\gamma _9 = 0.4$,  $tol = 5e - 4$. For  ${l_o}$-WF, the piecewise linear B-spline wavelet frame is also used and the decomposition level   $L = 1$,  $\lambda  = 30$,  $\mu  = 0.01$,  $\gamma  = 0.003$,  $tol = 5e - 4$. For  EDWF, $\lambda=1, \gamma=1.5, \rho=0.2, v=0.15, \mu_1=1, \mu_2=1$.

In Figure \ref{F；DCONv1}, the visual results of the three models are presented and Table \ref{T2}, summarizes the PSNR and  total run-time of the three methods. From Table \ref{T2}, we can see that the PSNR of our model using equation (\ref{17}) is averagely the largest. 

\begin{figure}[htbp]
	\centering
	\includegraphics[width=0.19\textwidth]{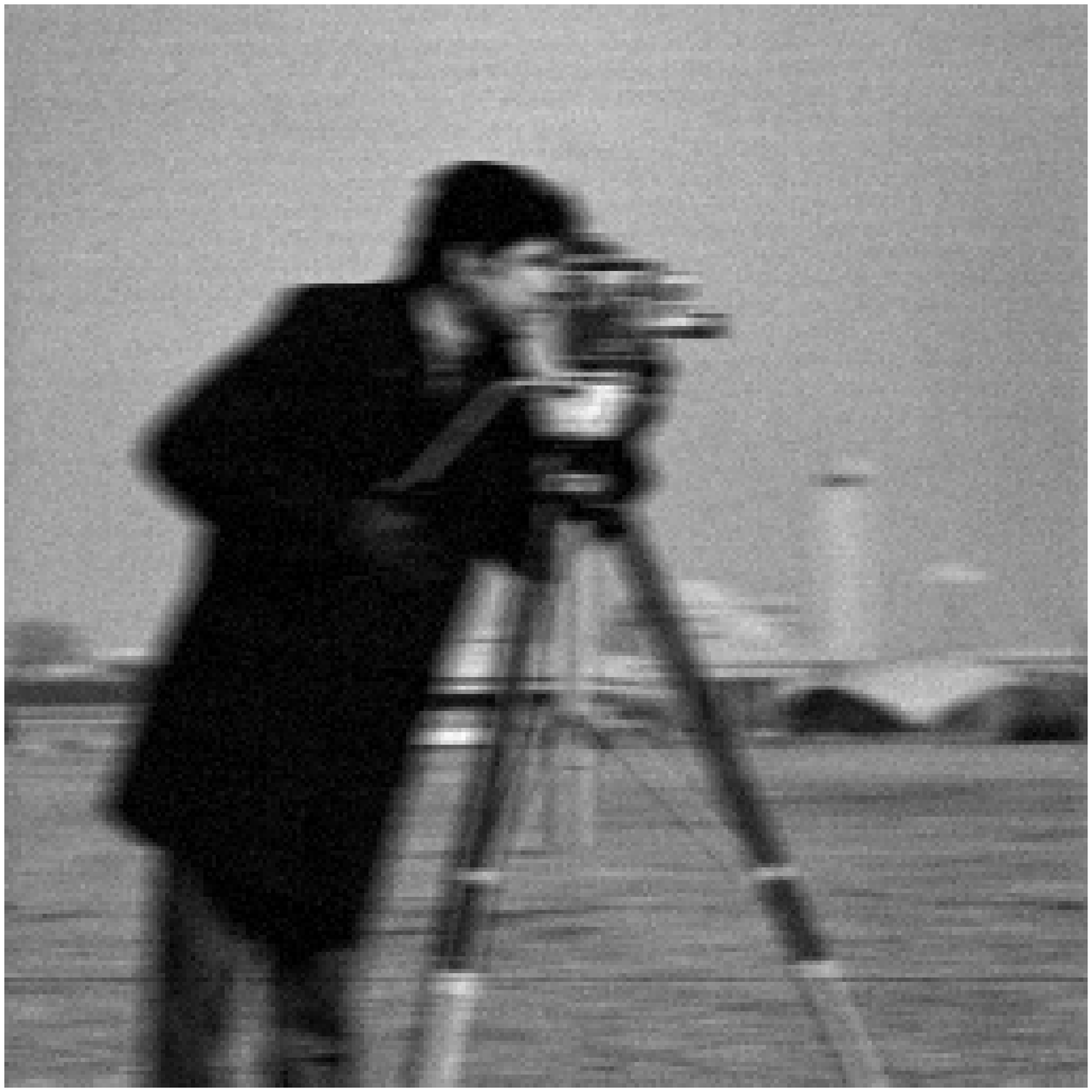}
	\includegraphics[width=0.19\textwidth]{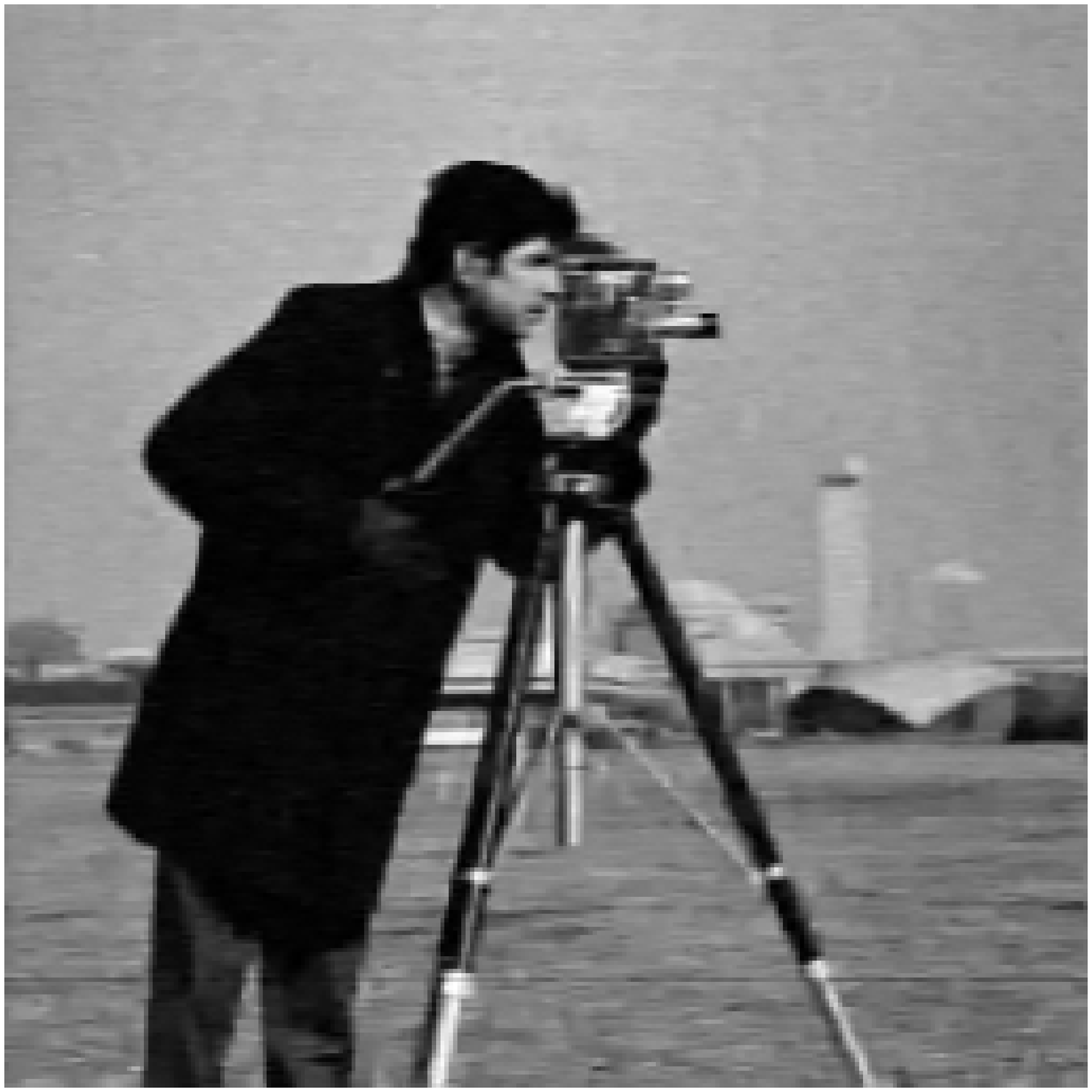}
	\includegraphics[width=0.19\textwidth]{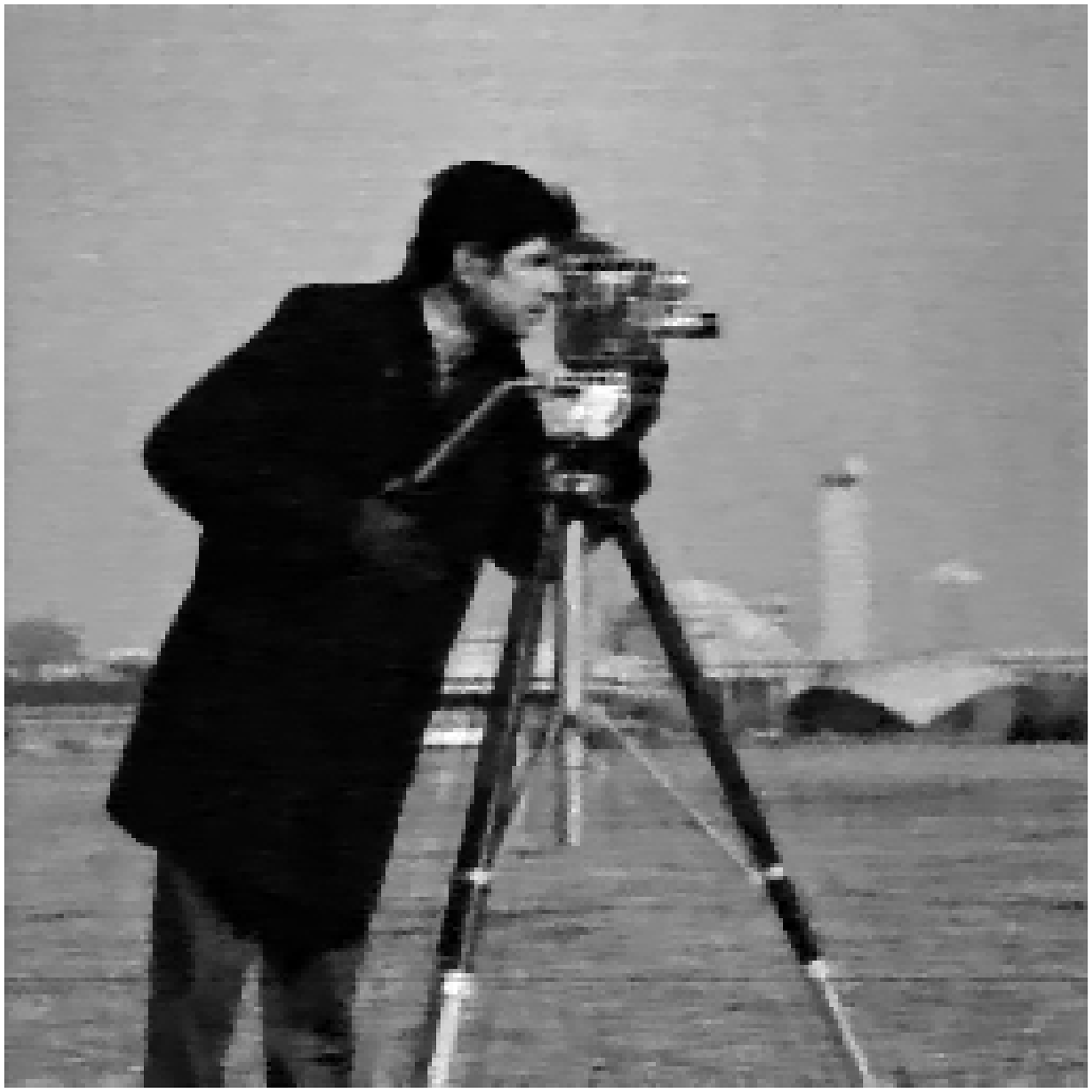}
	\includegraphics[width=0.19\textwidth]{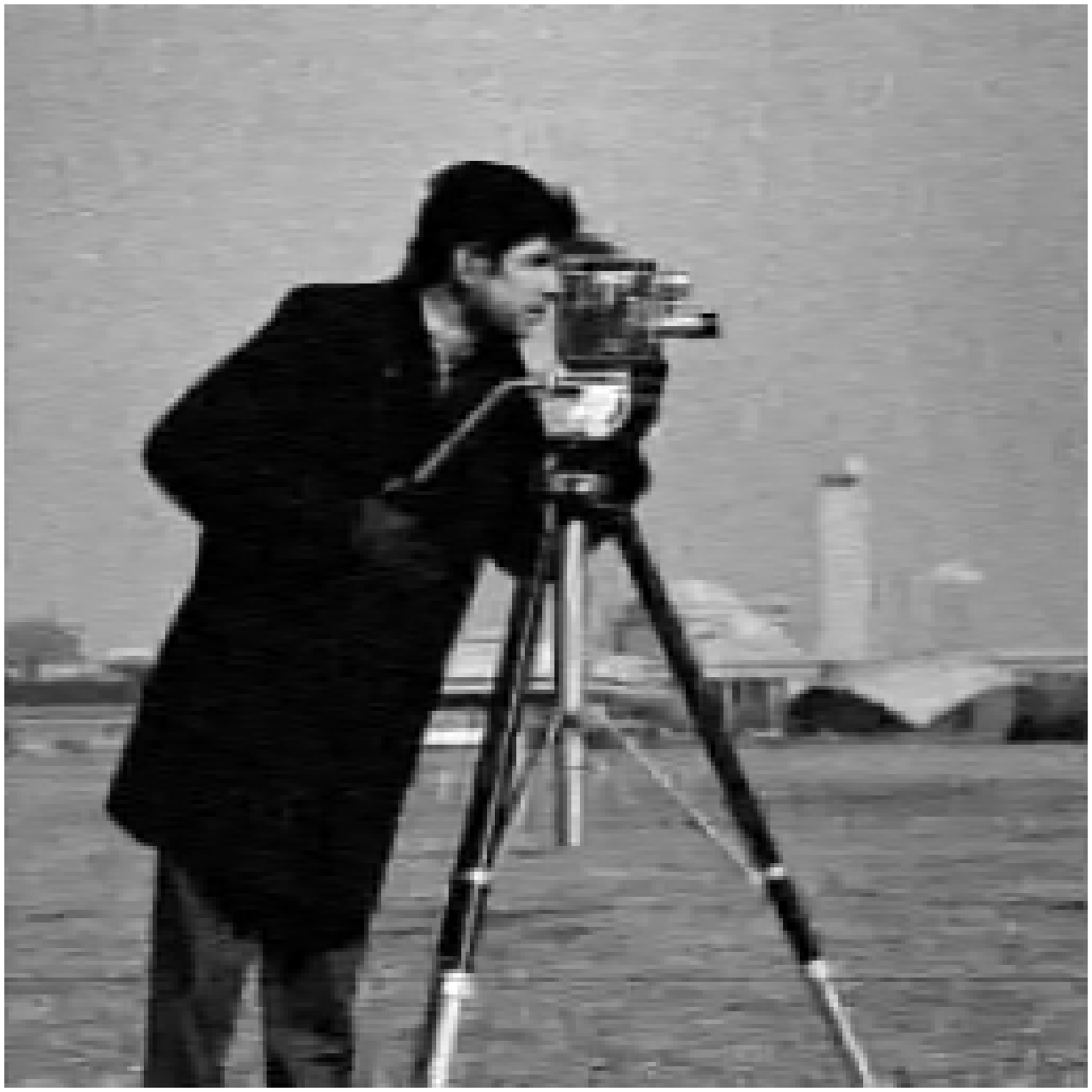}
	\includegraphics[width=0.19\textwidth]{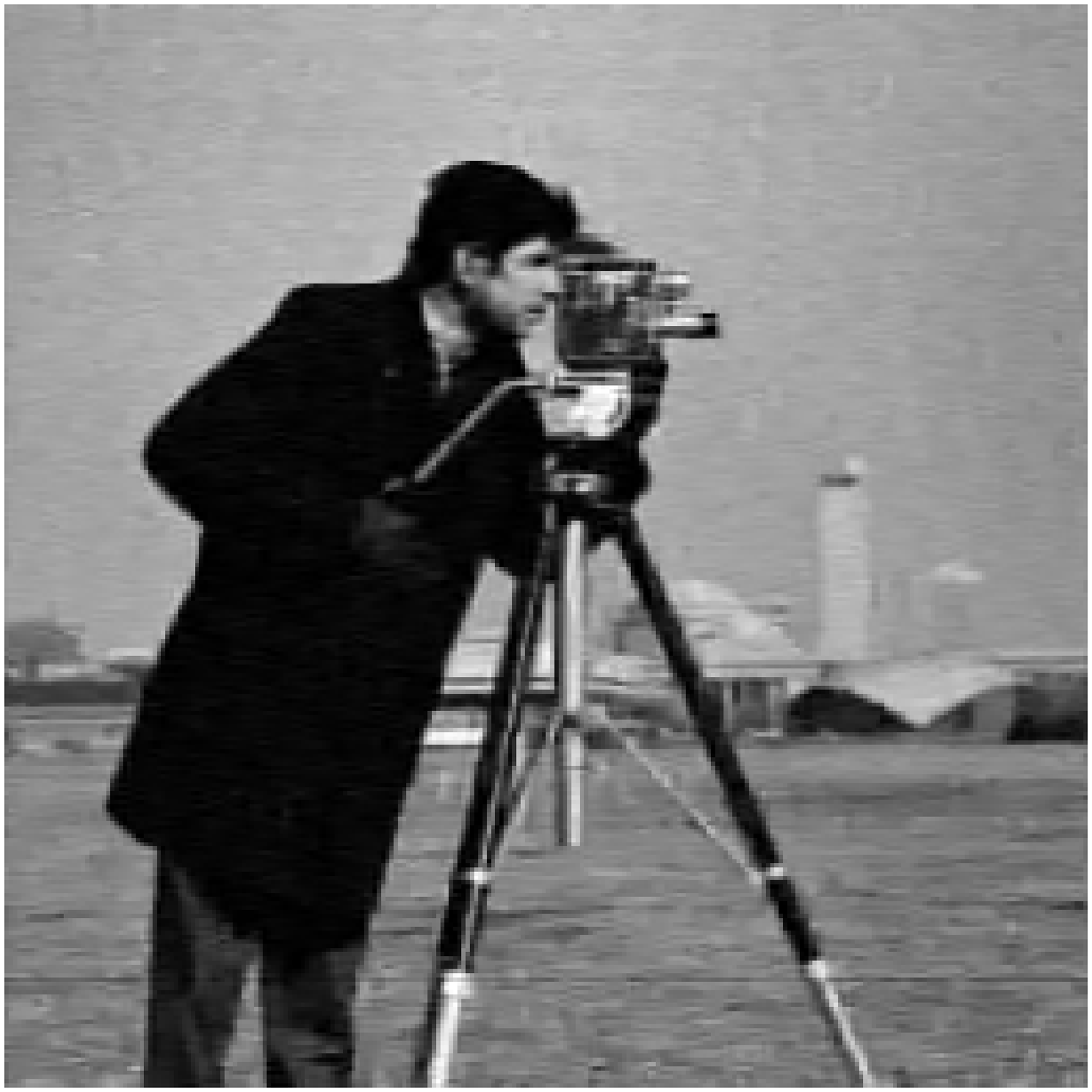}\\
    \includegraphics[width=0.19\textwidth]{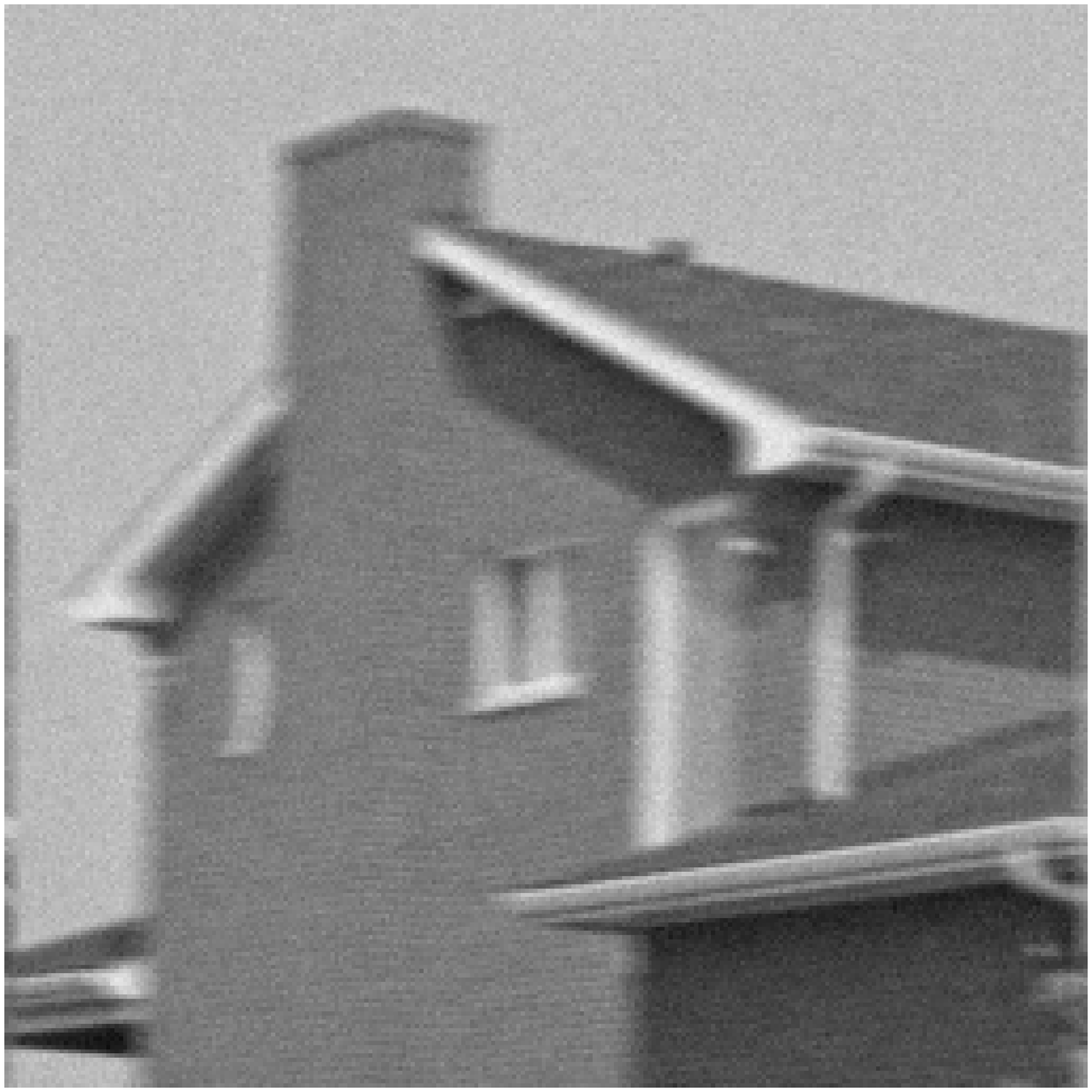}
    \includegraphics[width=0.19\textwidth]{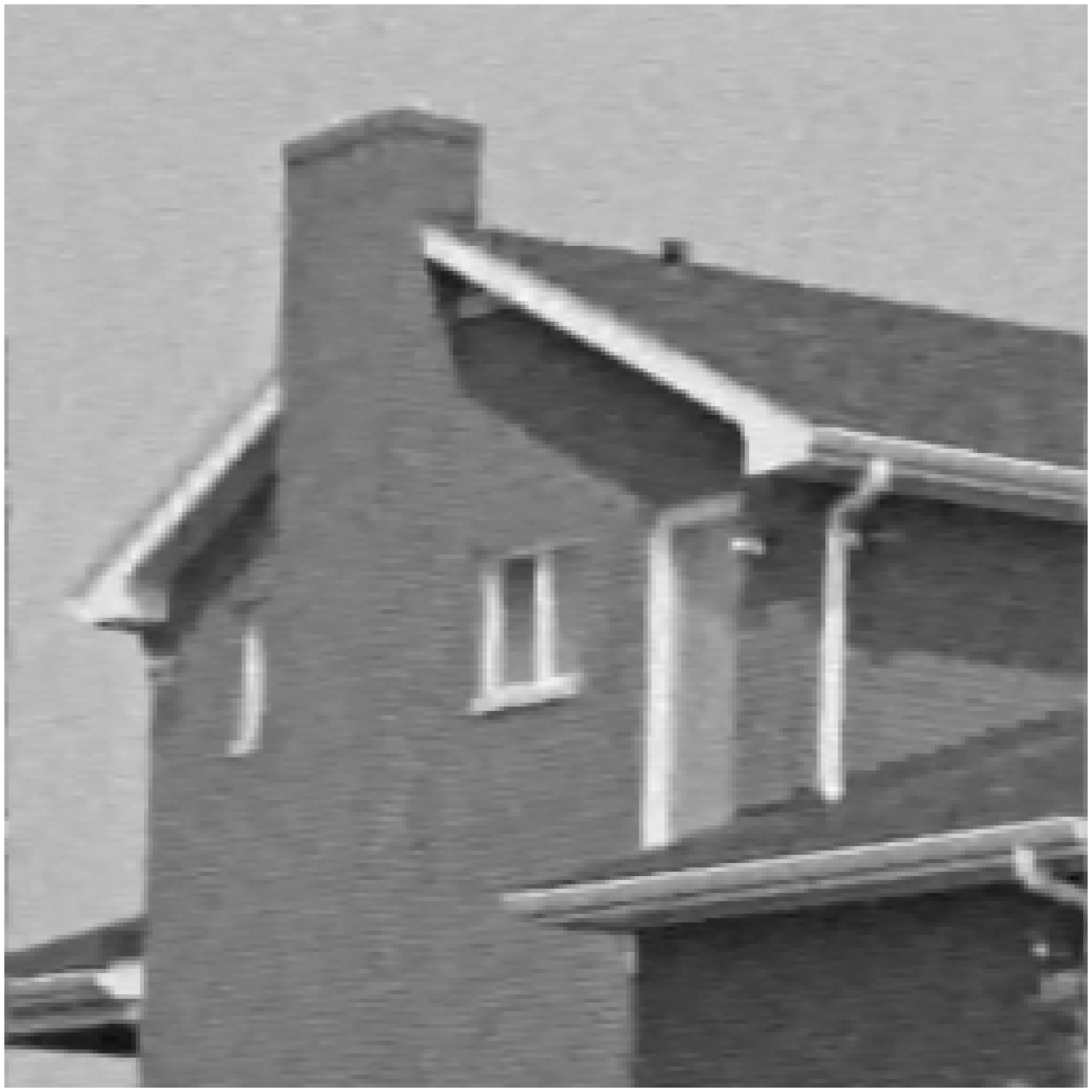}
    \includegraphics[width=0.19\textwidth]{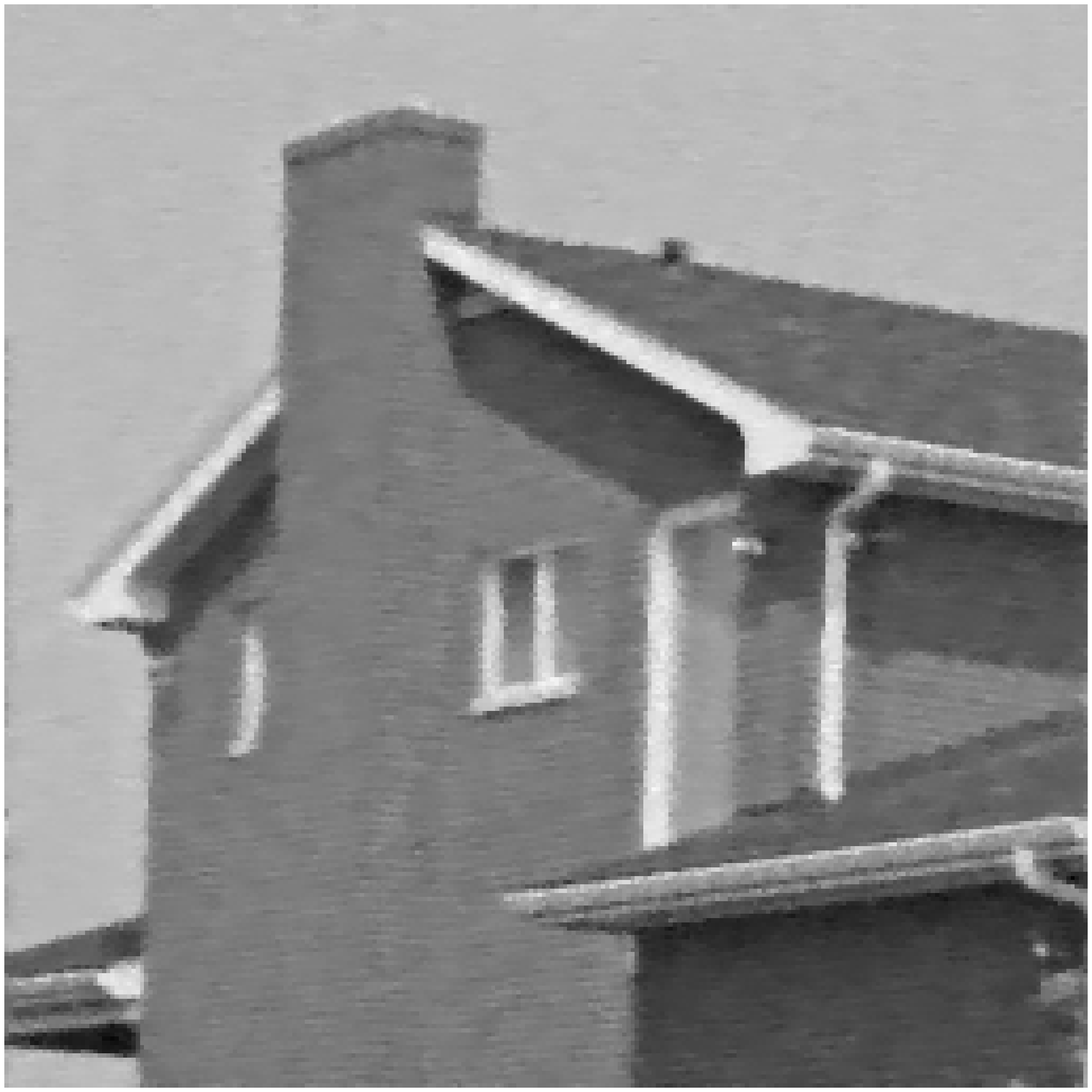}
    \includegraphics[width=0.19\textwidth]{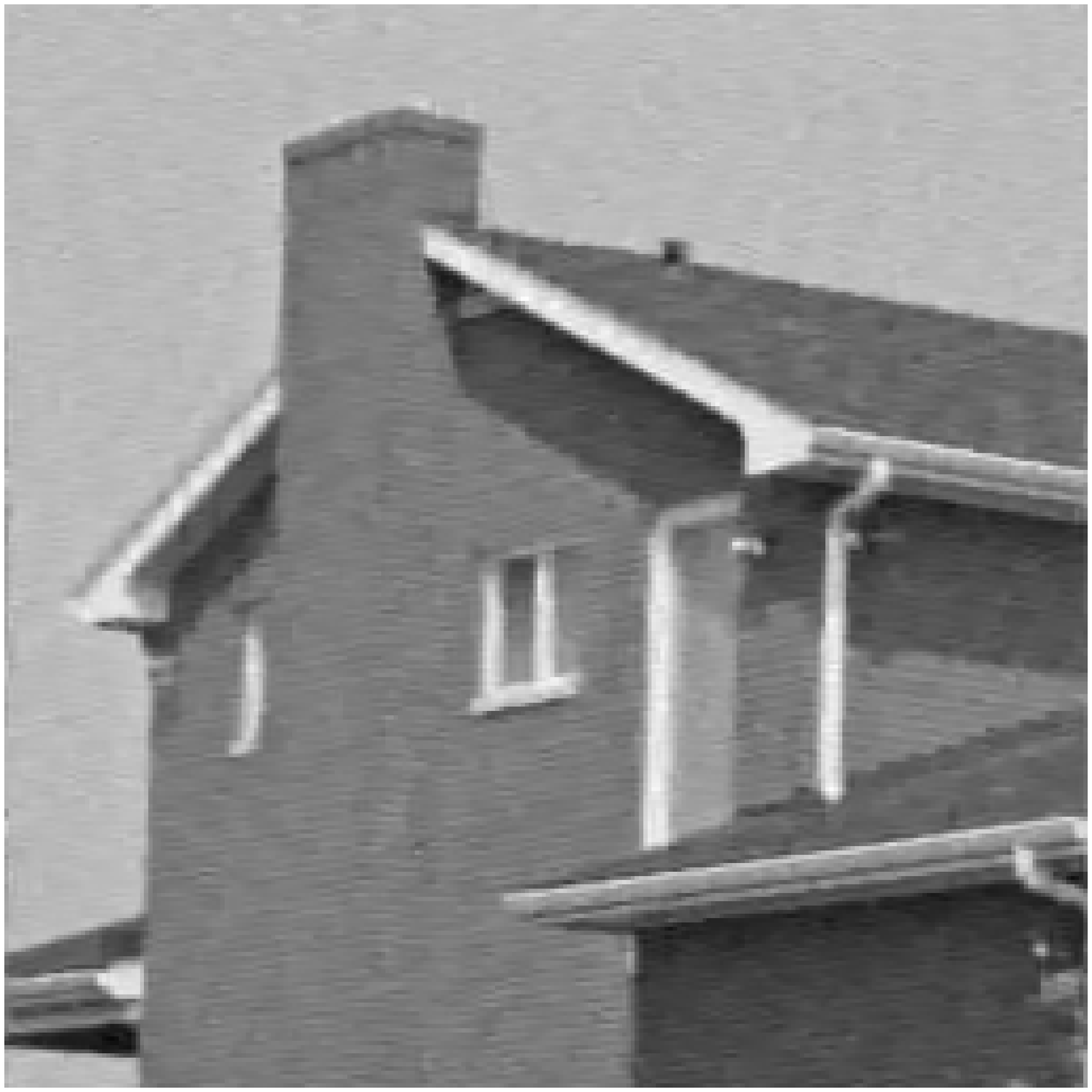}
    \includegraphics[width=0.19\textwidth]{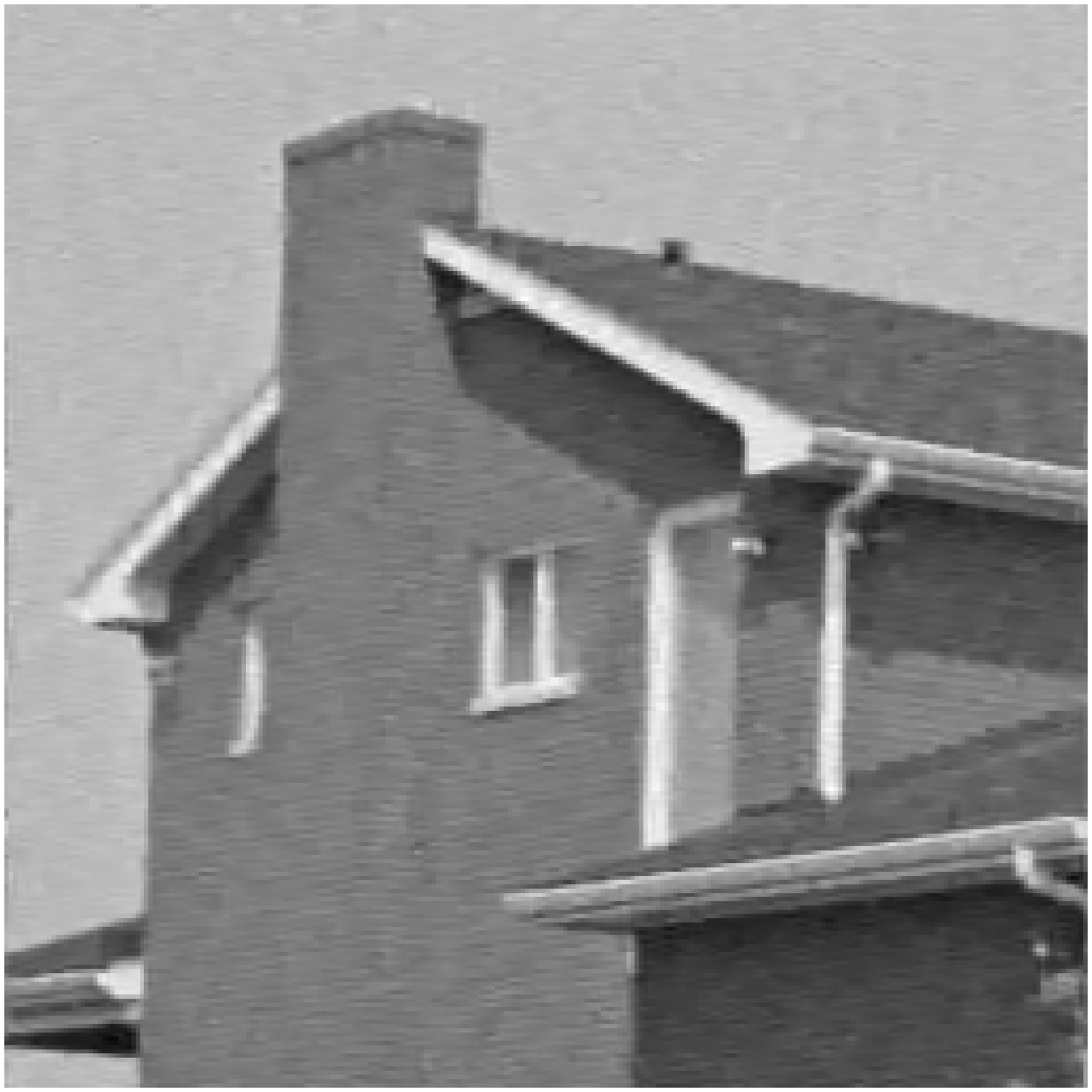}\\
    \includegraphics[width=0.19\textwidth]{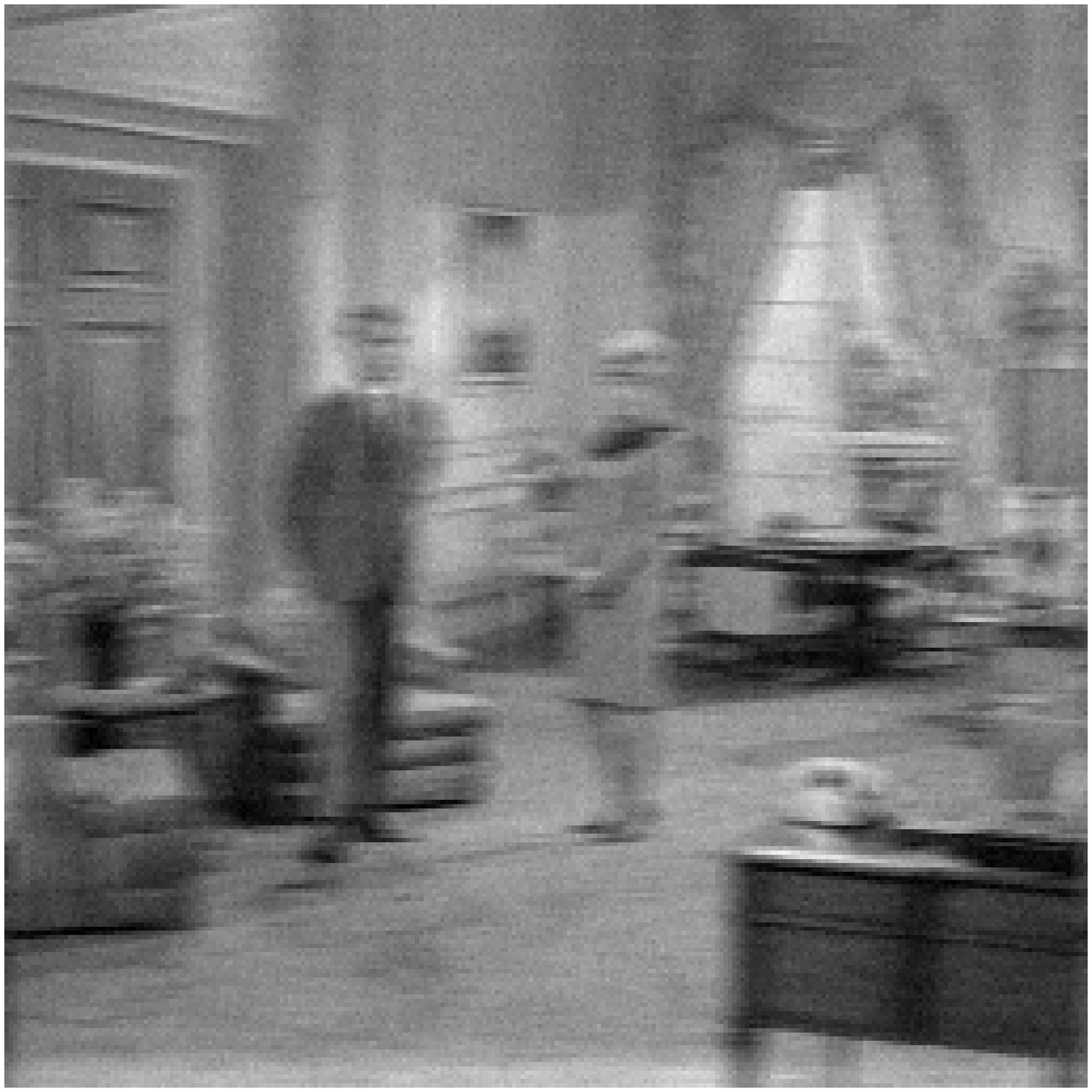}
    \includegraphics[width=0.19\textwidth]{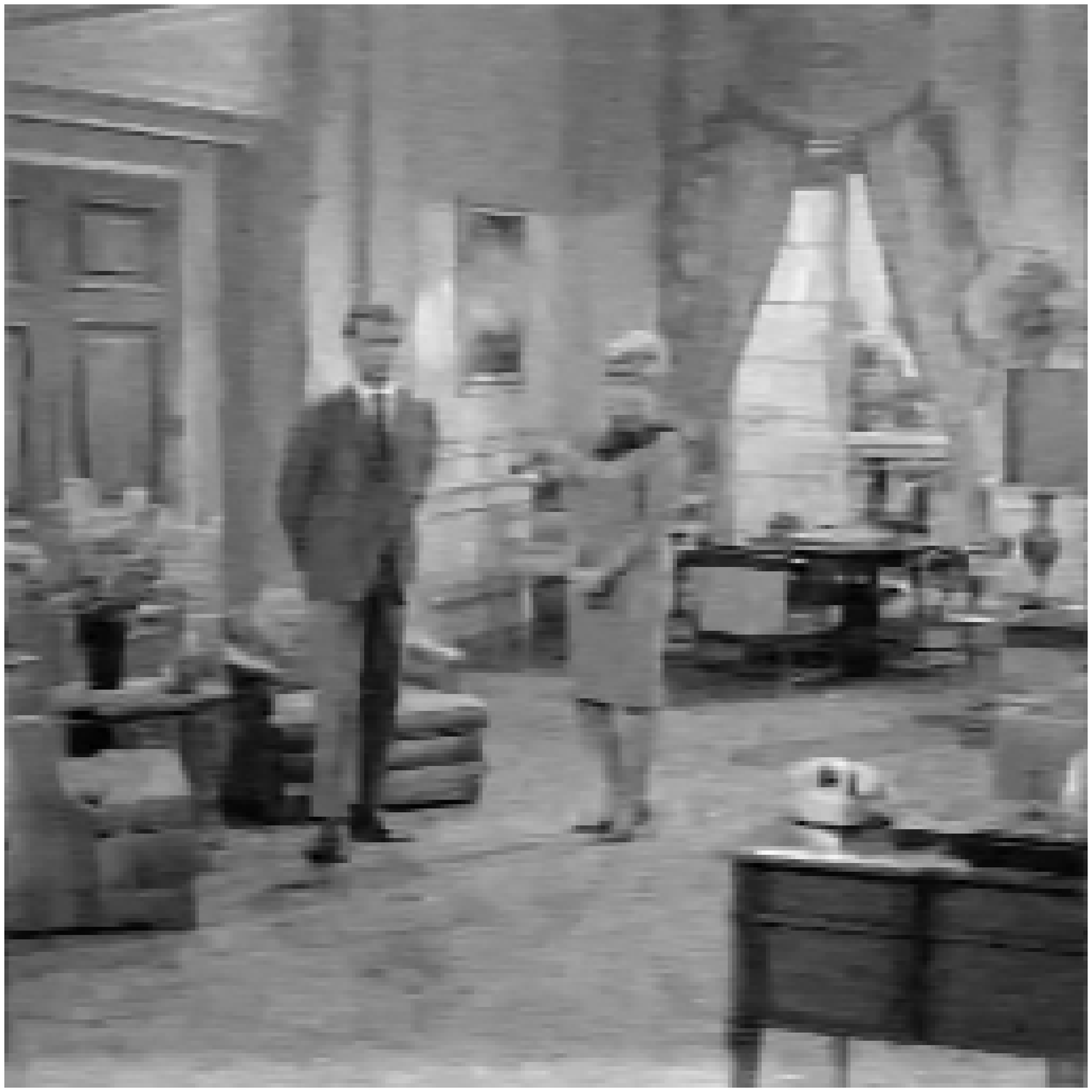}
    \includegraphics[width=0.19\textwidth]{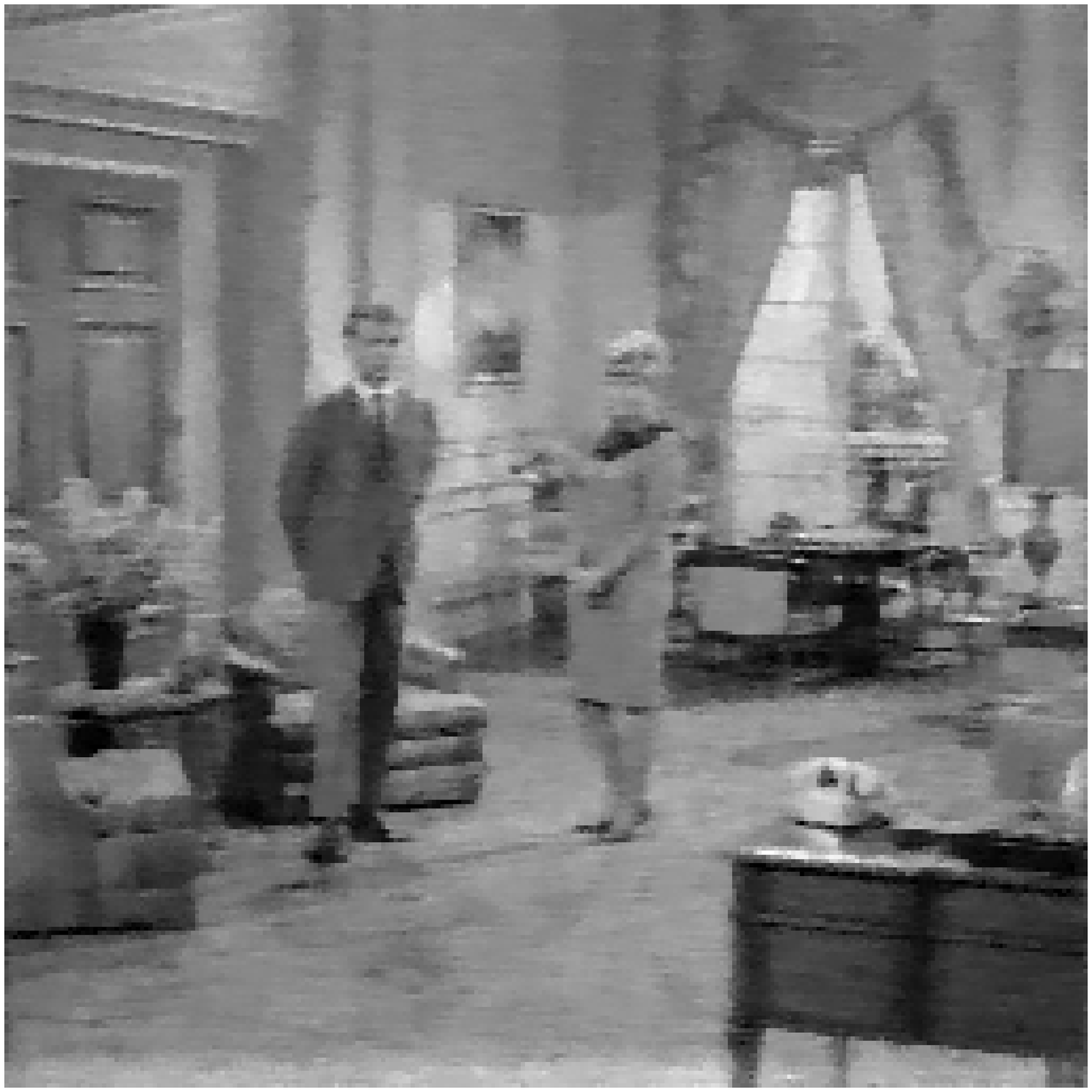}
    \includegraphics[width=0.19\textwidth]{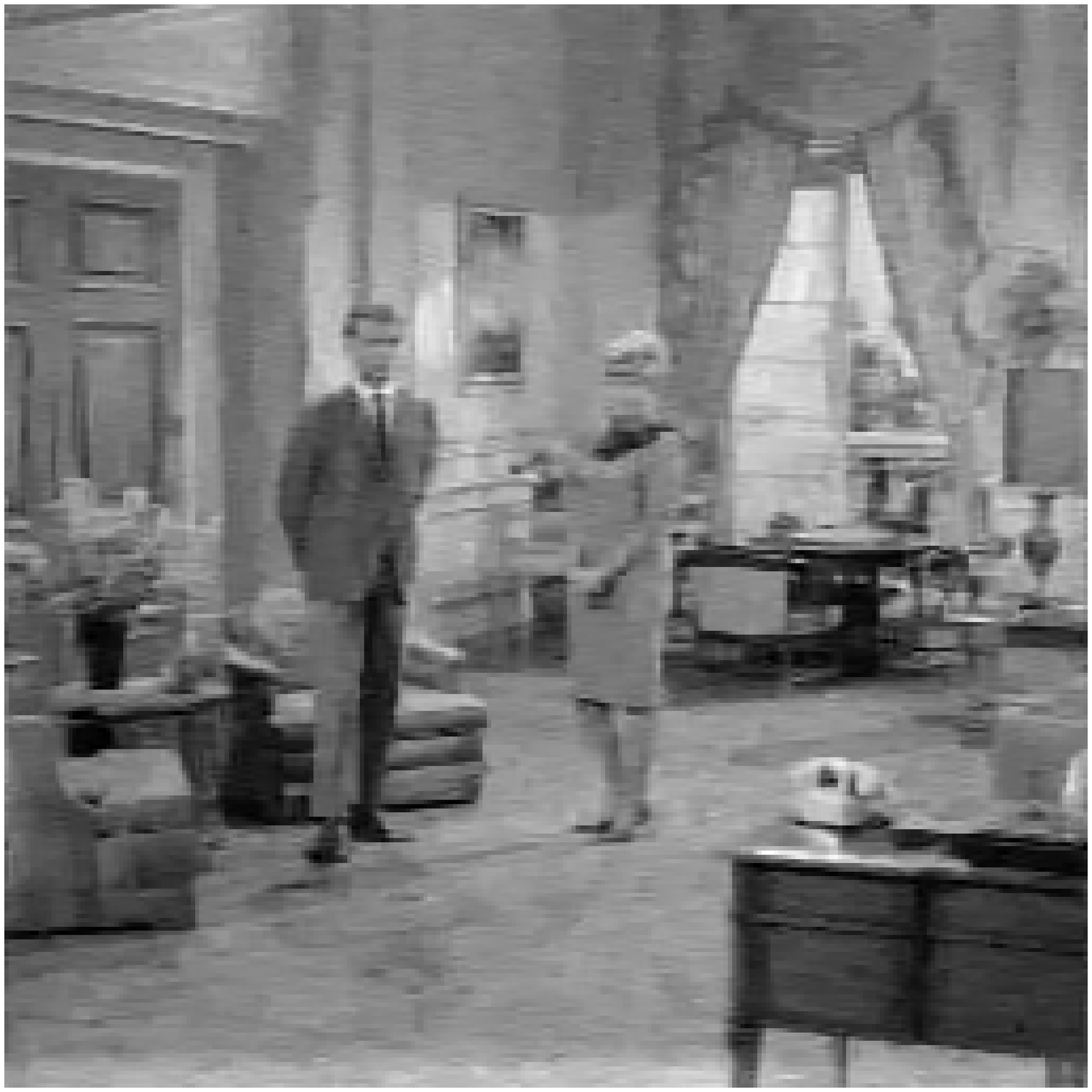}
    \includegraphics[width=0.19\textwidth]{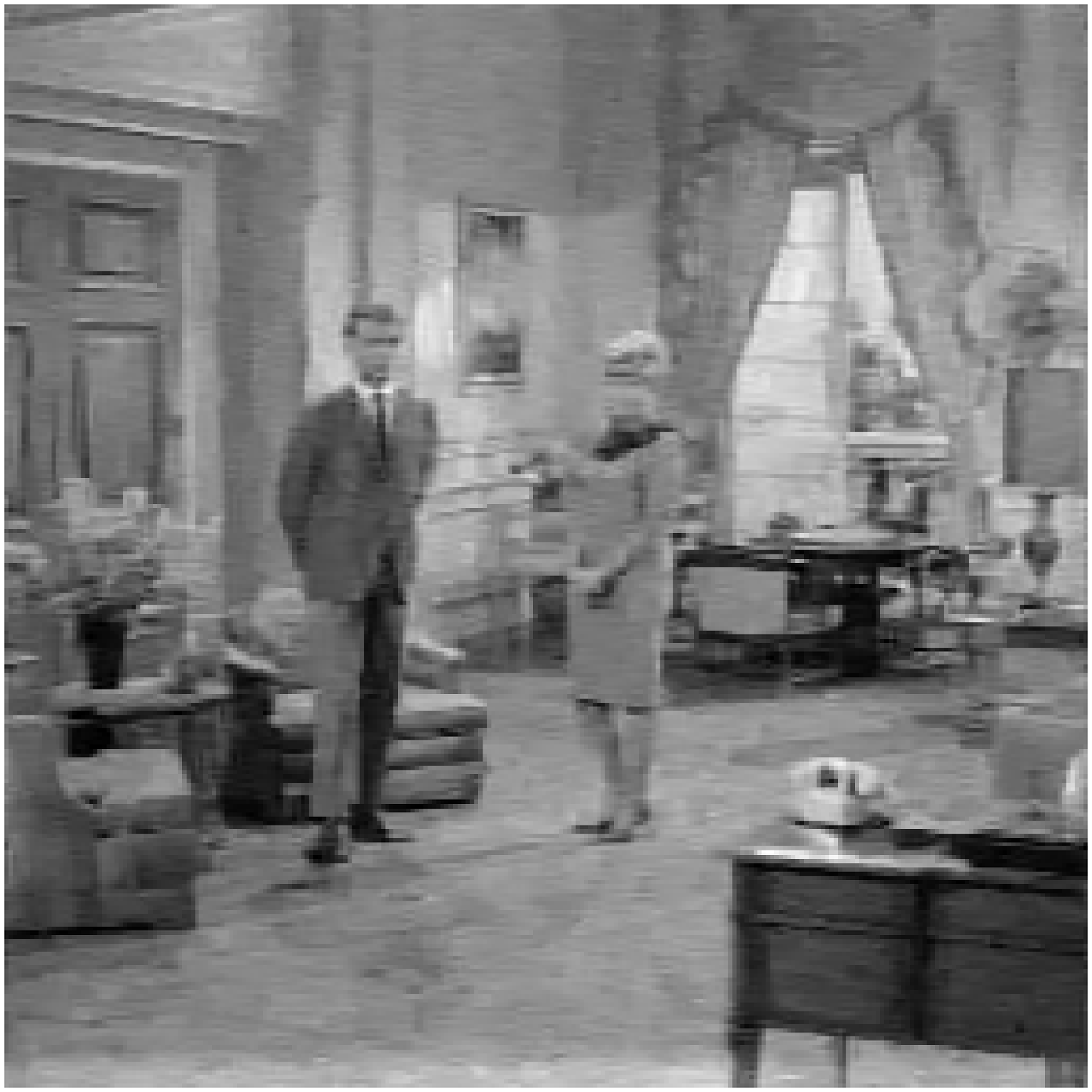}\\
    \includegraphics[width=0.19\textwidth]{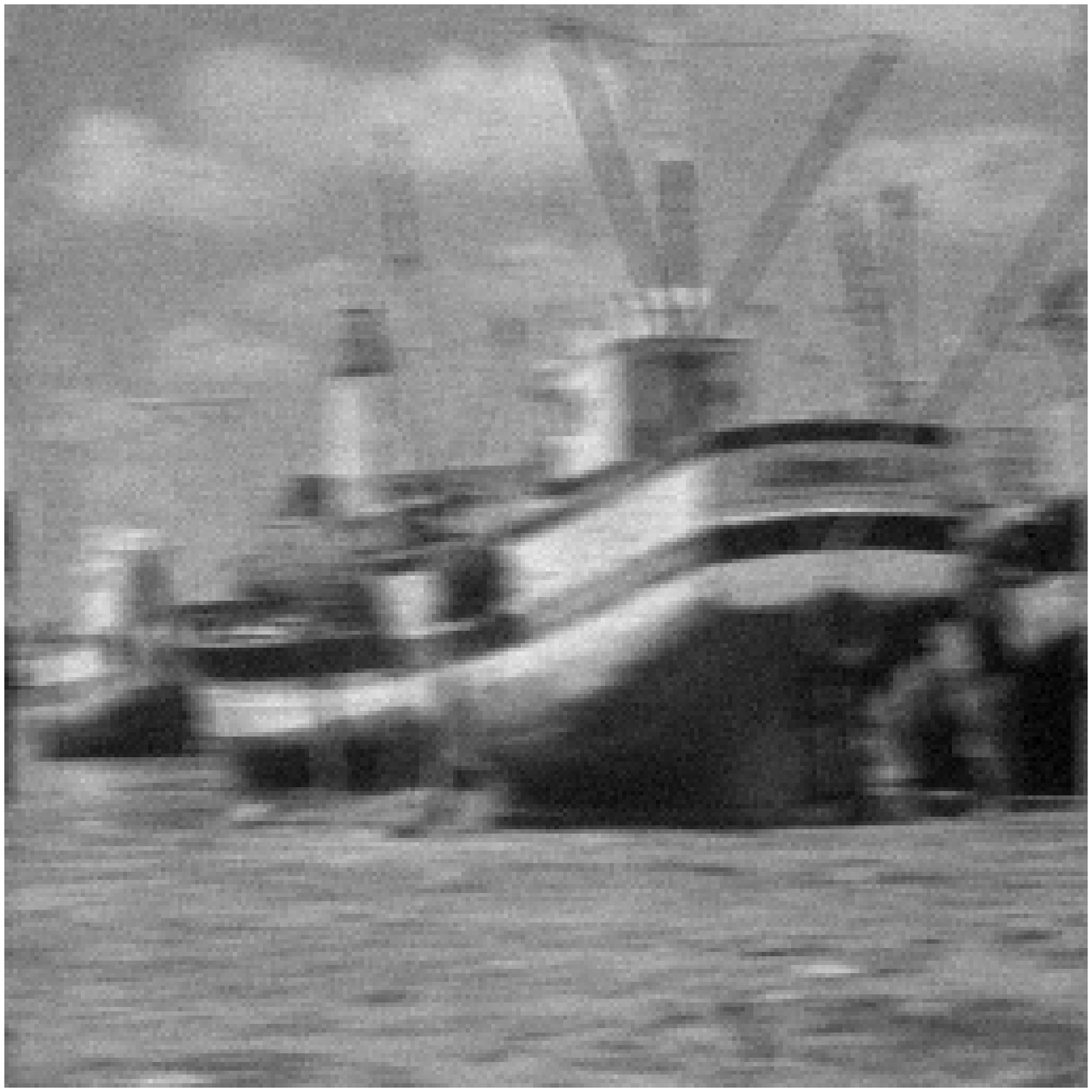}
    \includegraphics[width=0.19\textwidth]{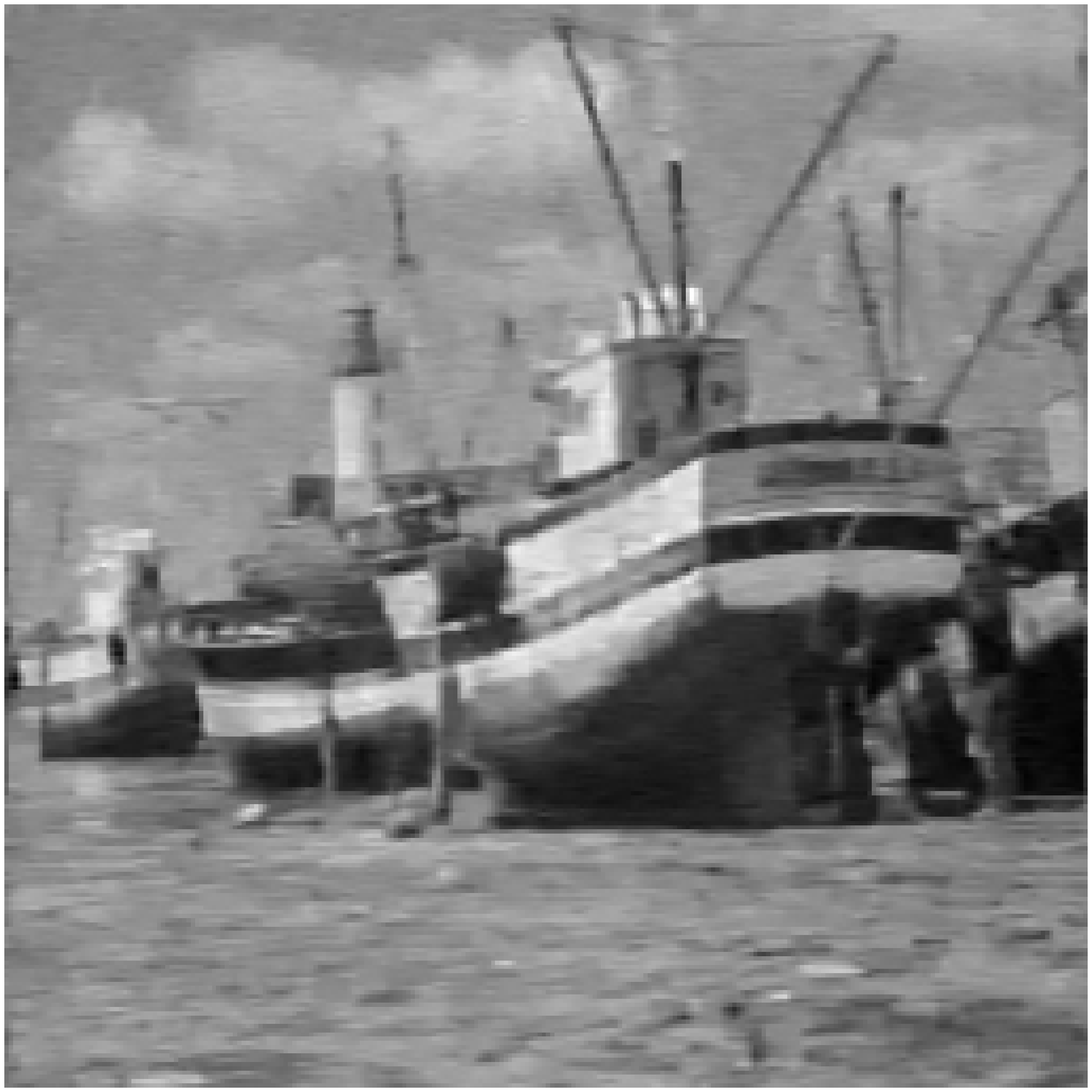}
    \includegraphics[width=0.19\textwidth]{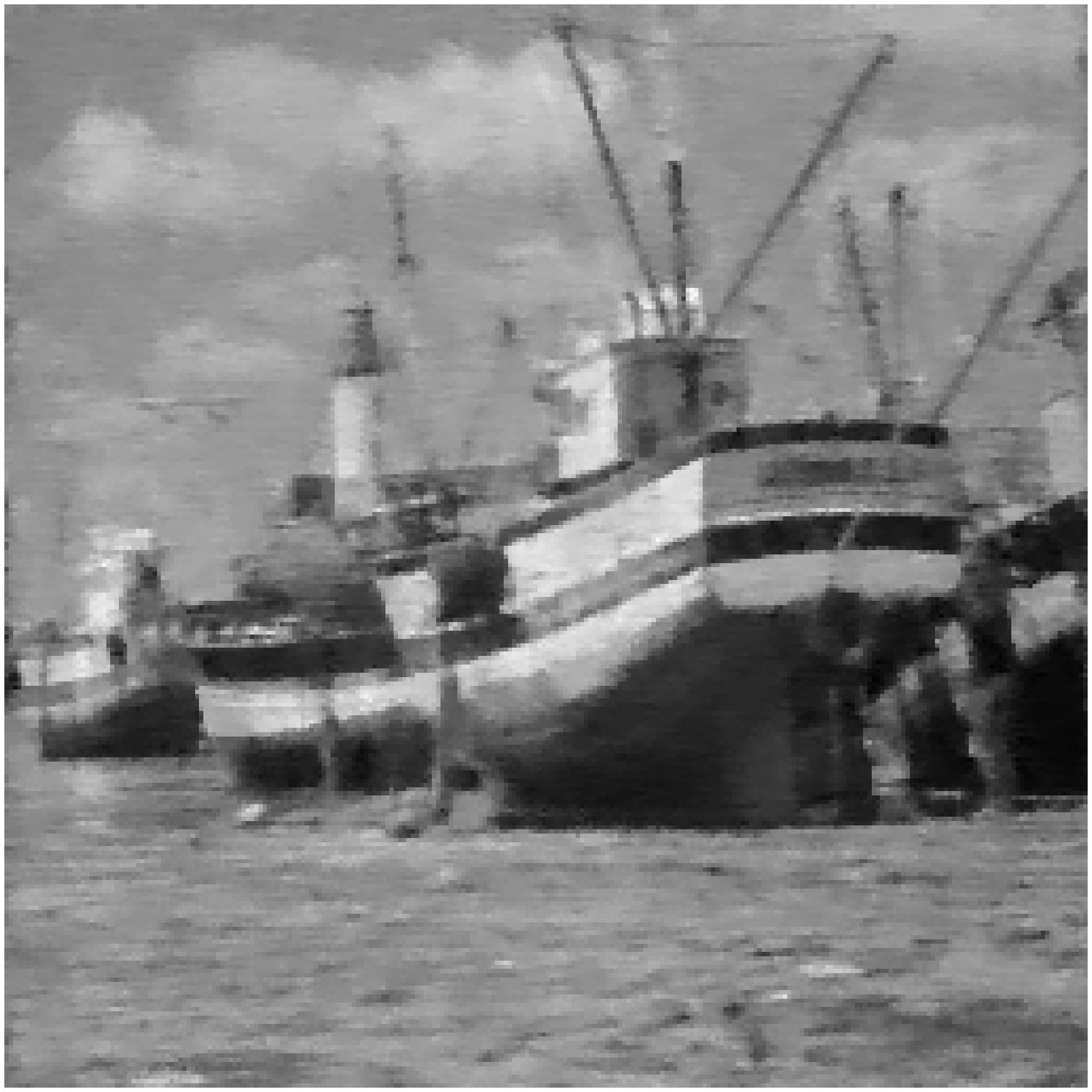}
    \includegraphics[width=0.19\textwidth]{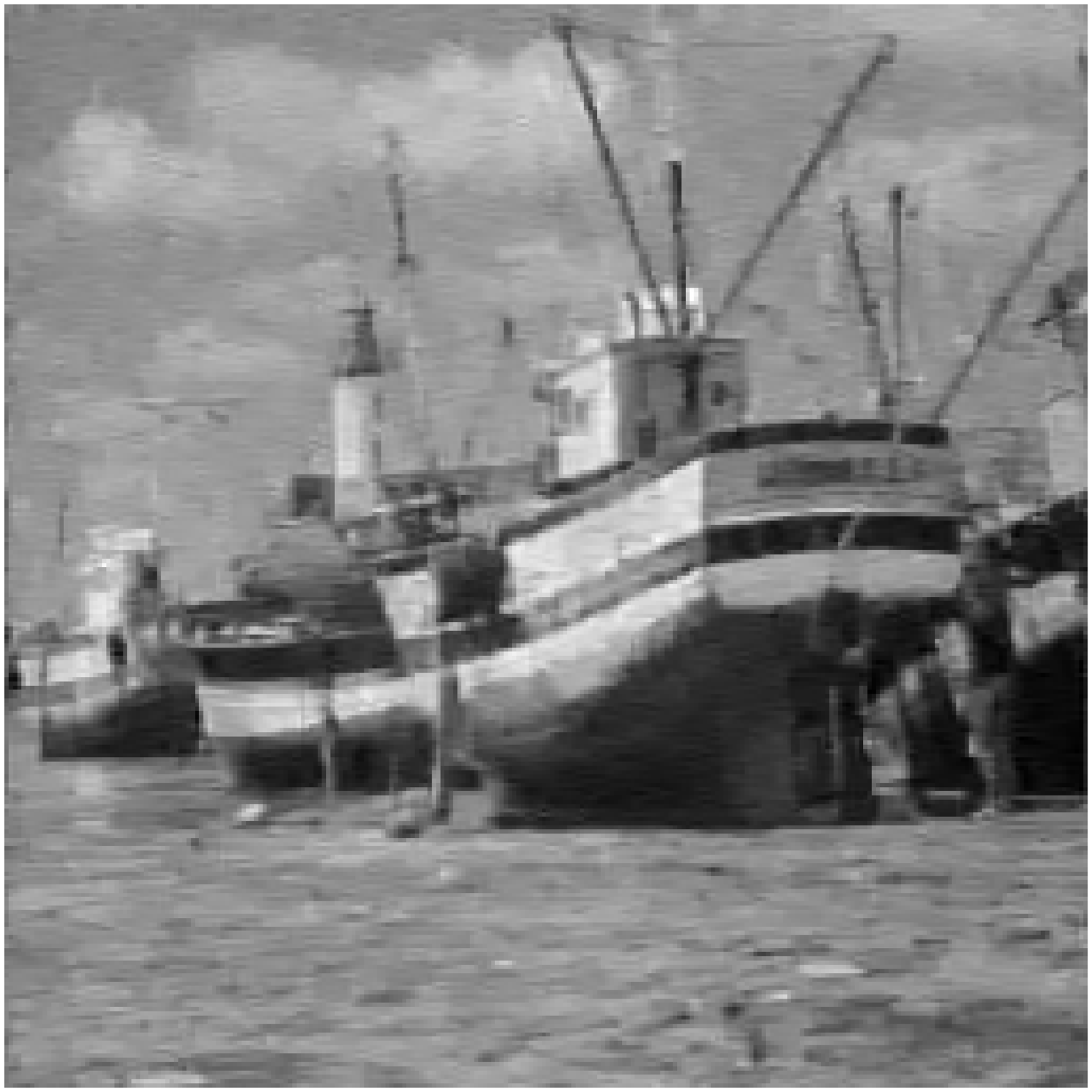}
    \includegraphics[width=0.19\textwidth]{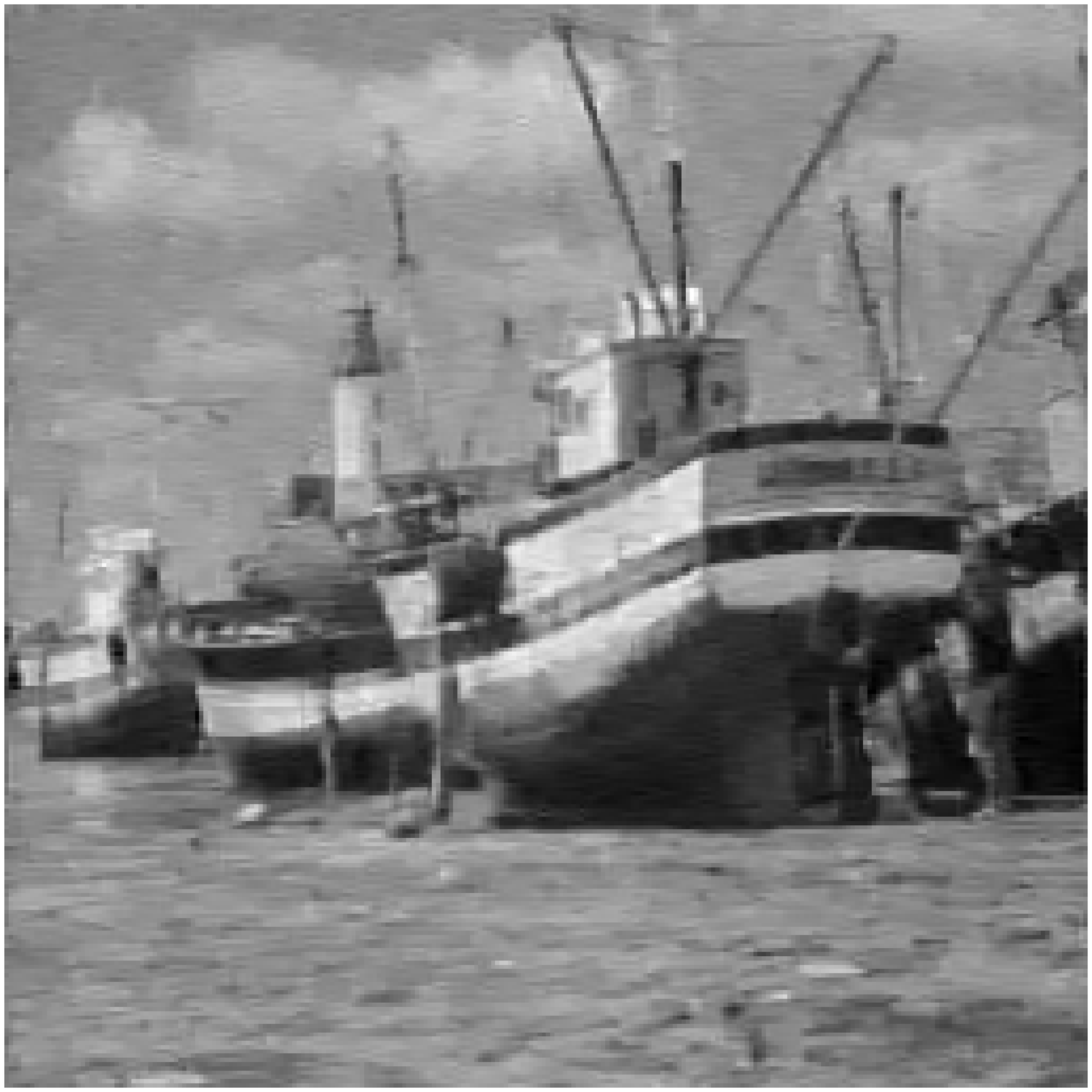}\\
    \includegraphics[width=0.19\textwidth]{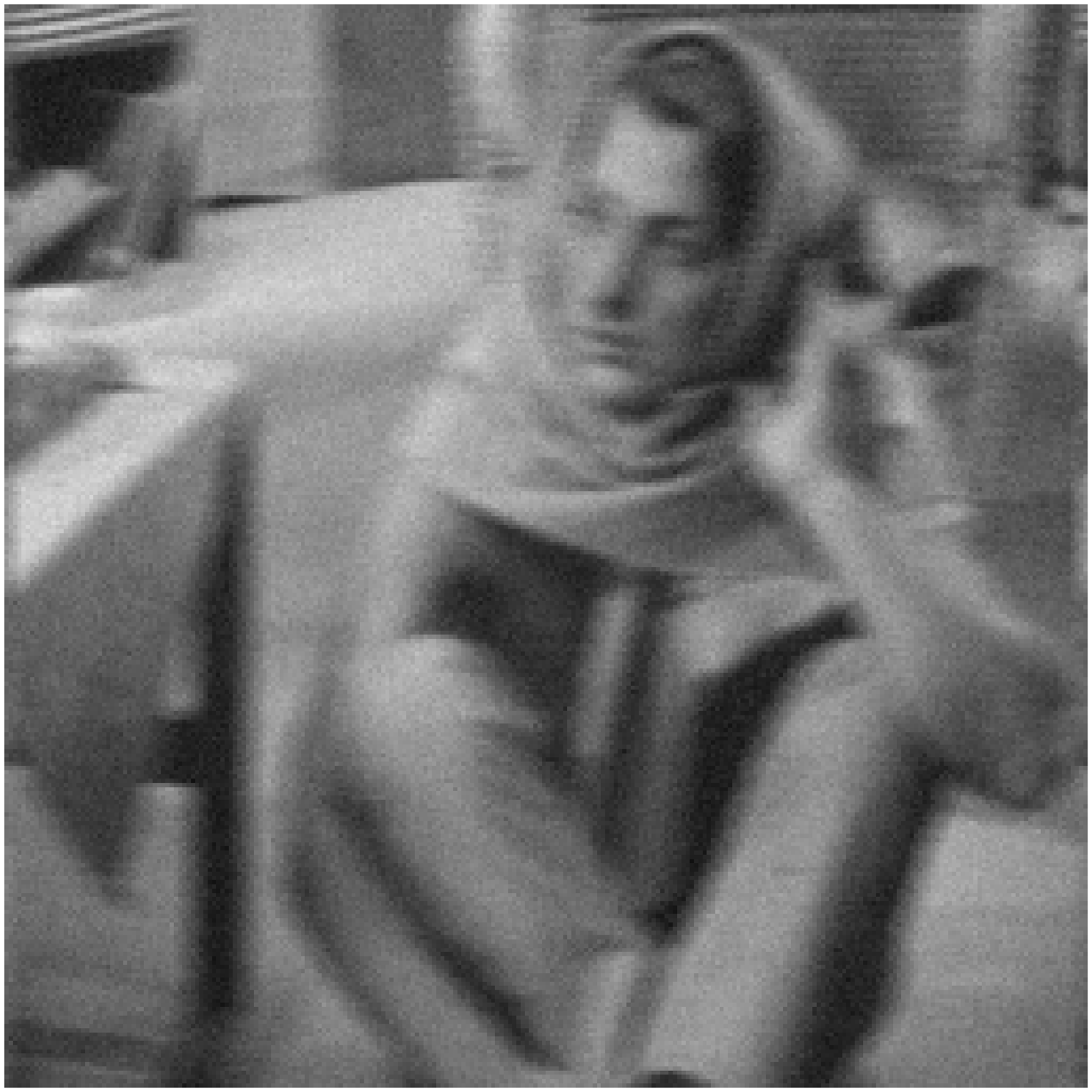}
    \includegraphics[width=0.19\textwidth]{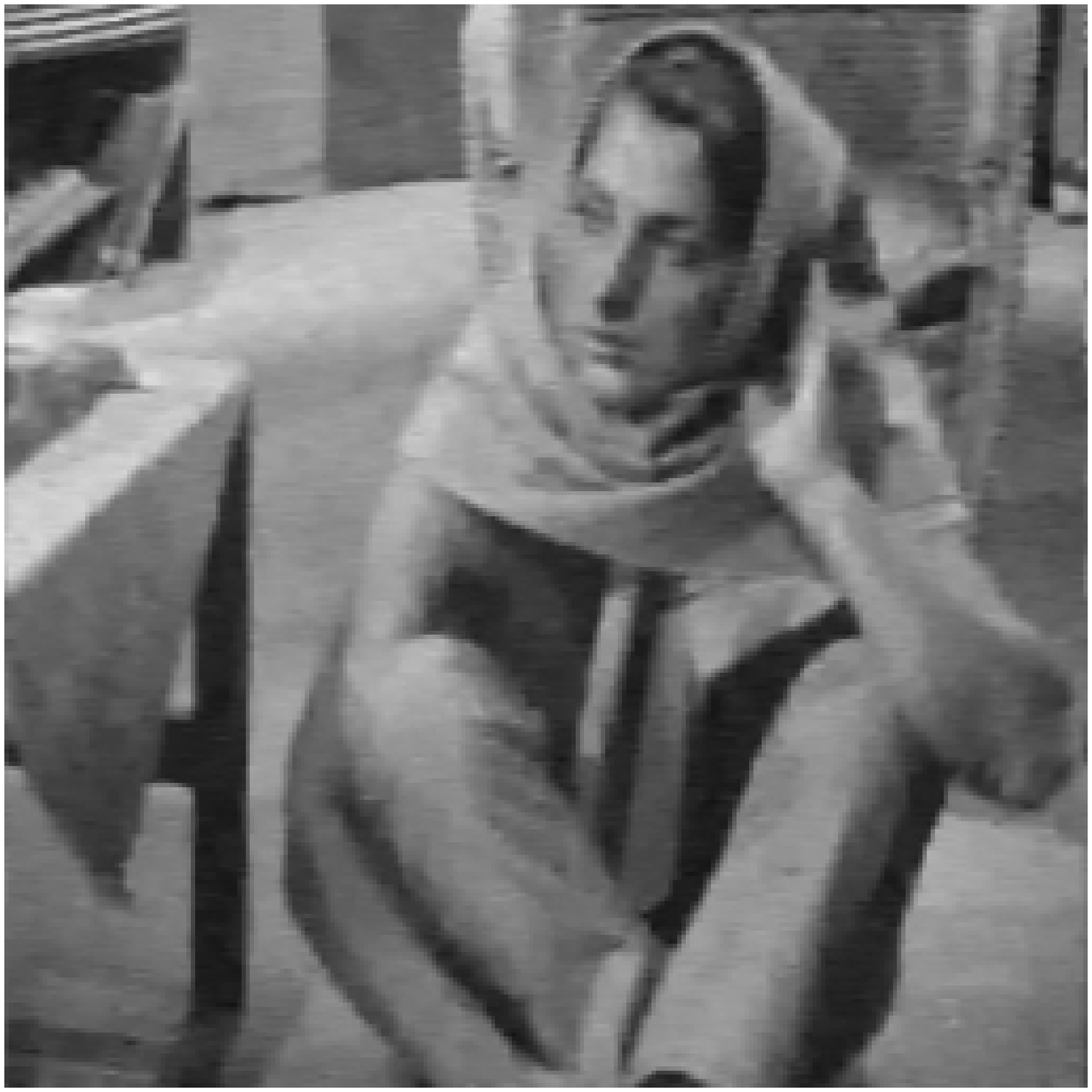}
    \includegraphics[width=0.19\textwidth]{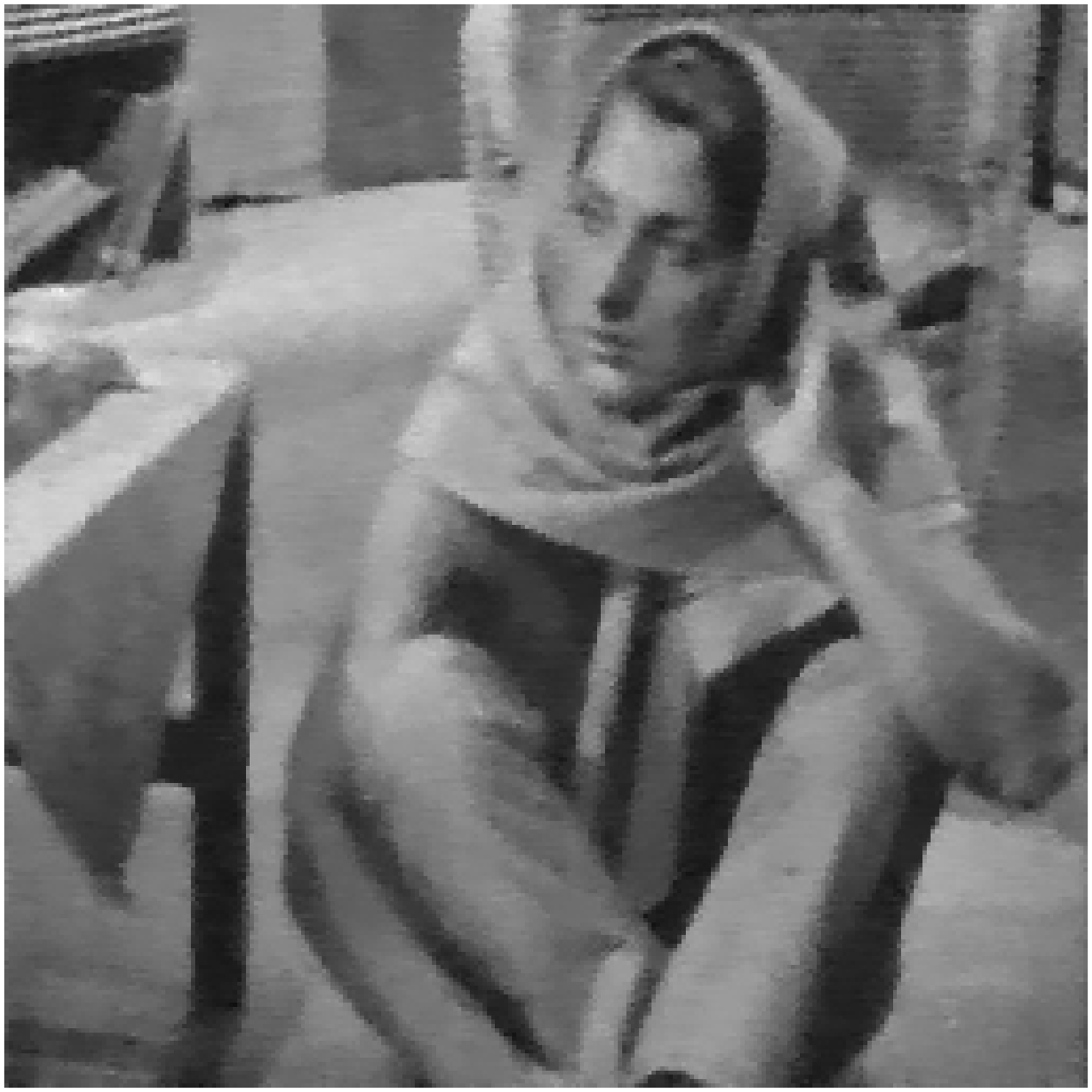}
    \includegraphics[width=0.19\textwidth]{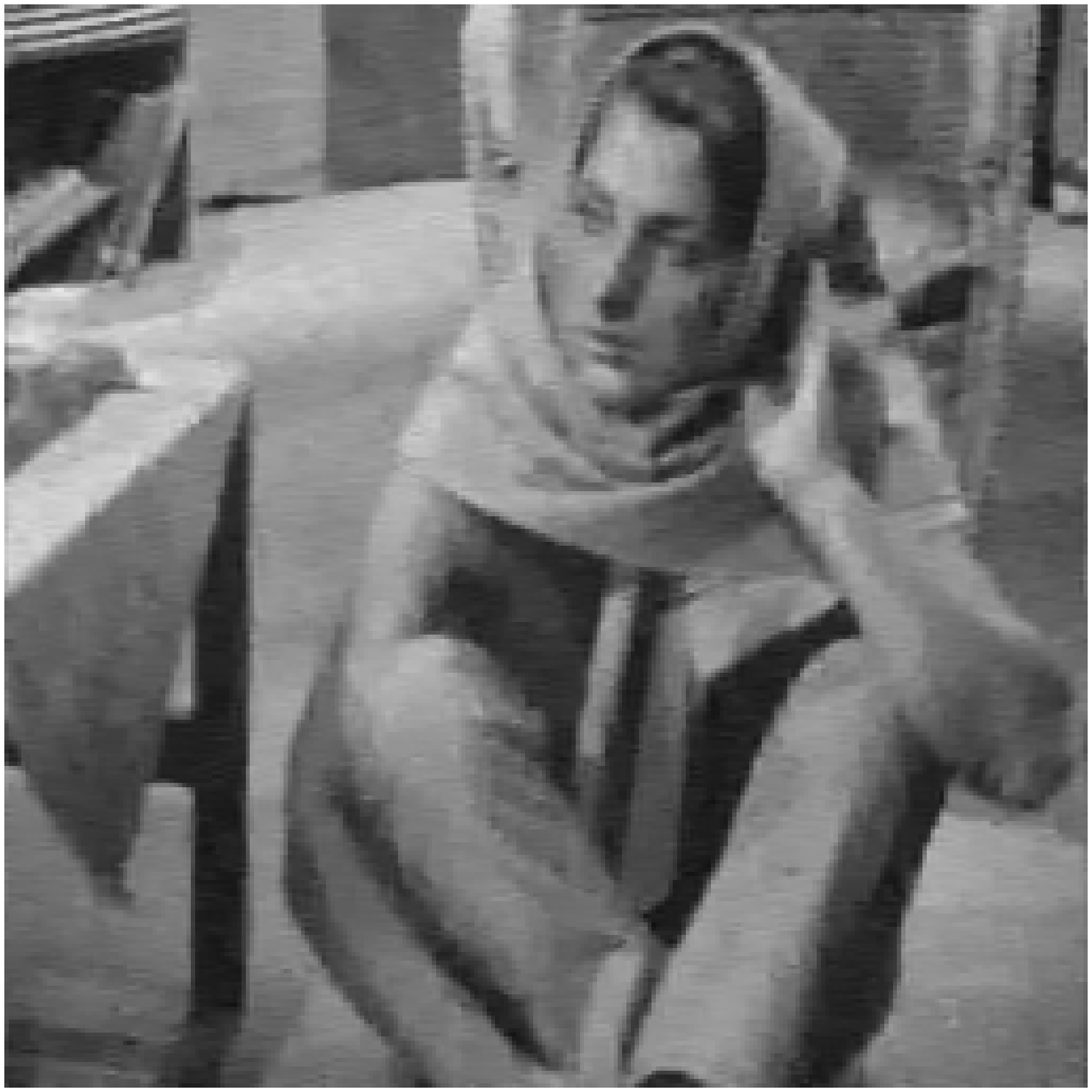}
    \includegraphics[width=0.19\textwidth]{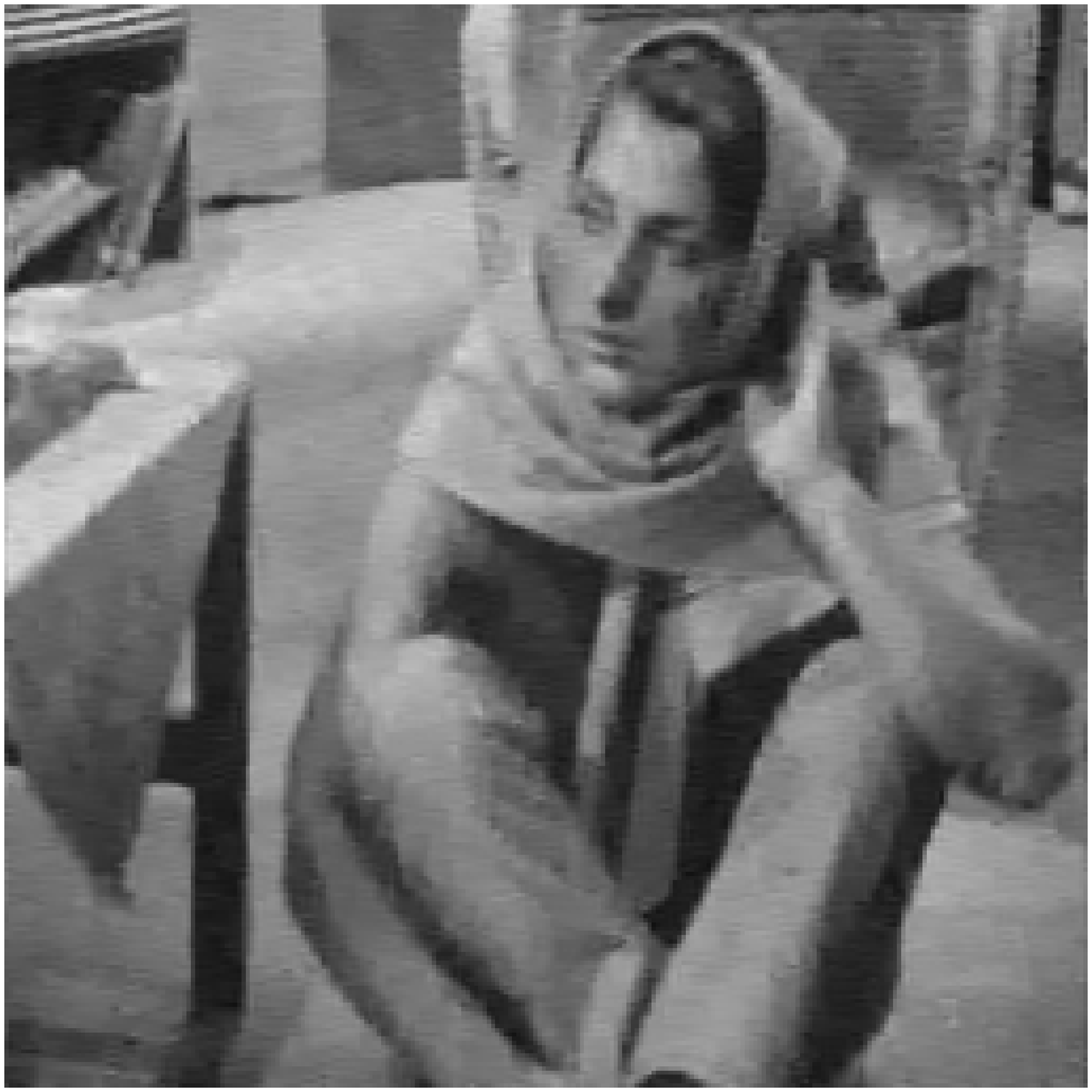}\\
    \includegraphics[width=0.19\textwidth]{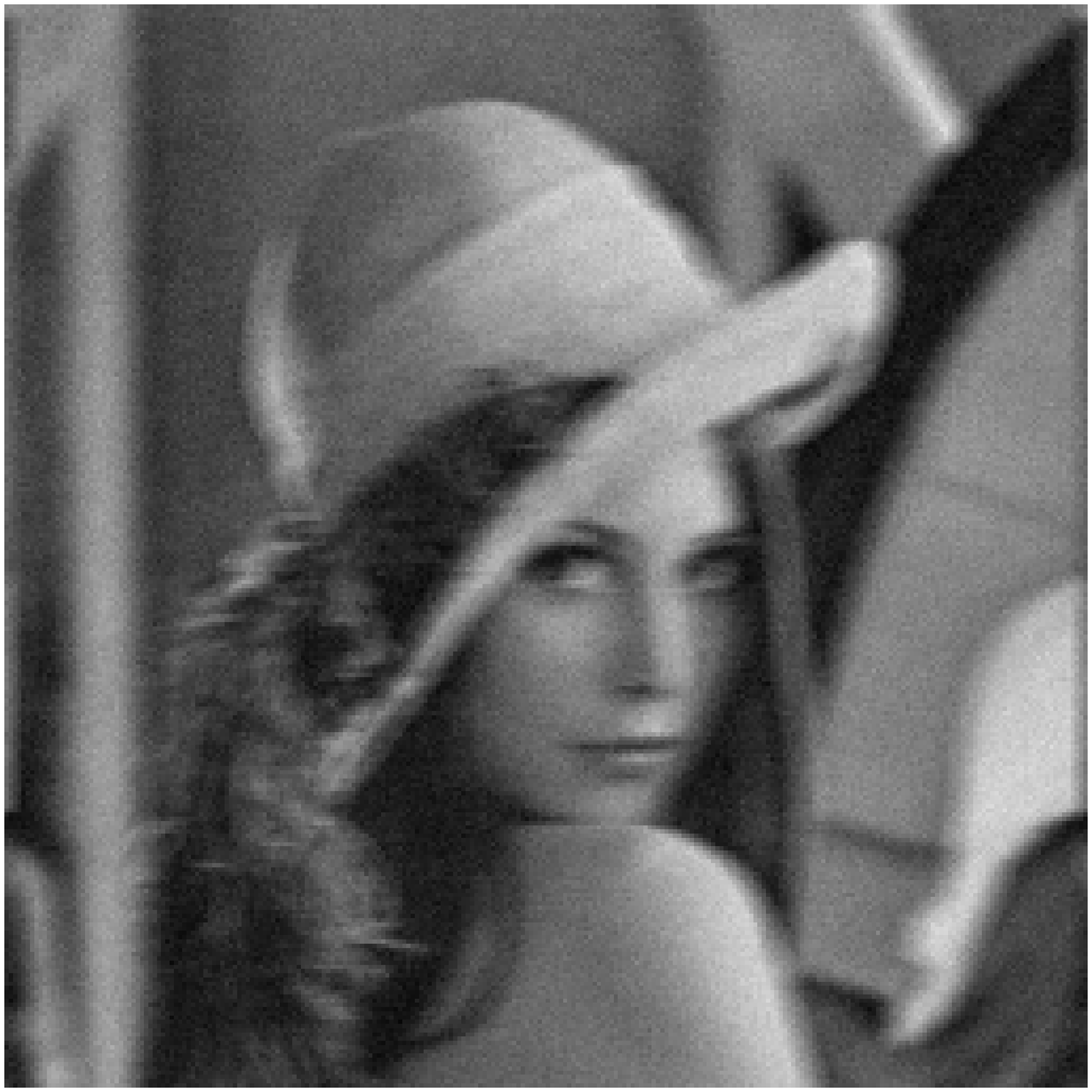}
    \includegraphics[width=0.19\textwidth]{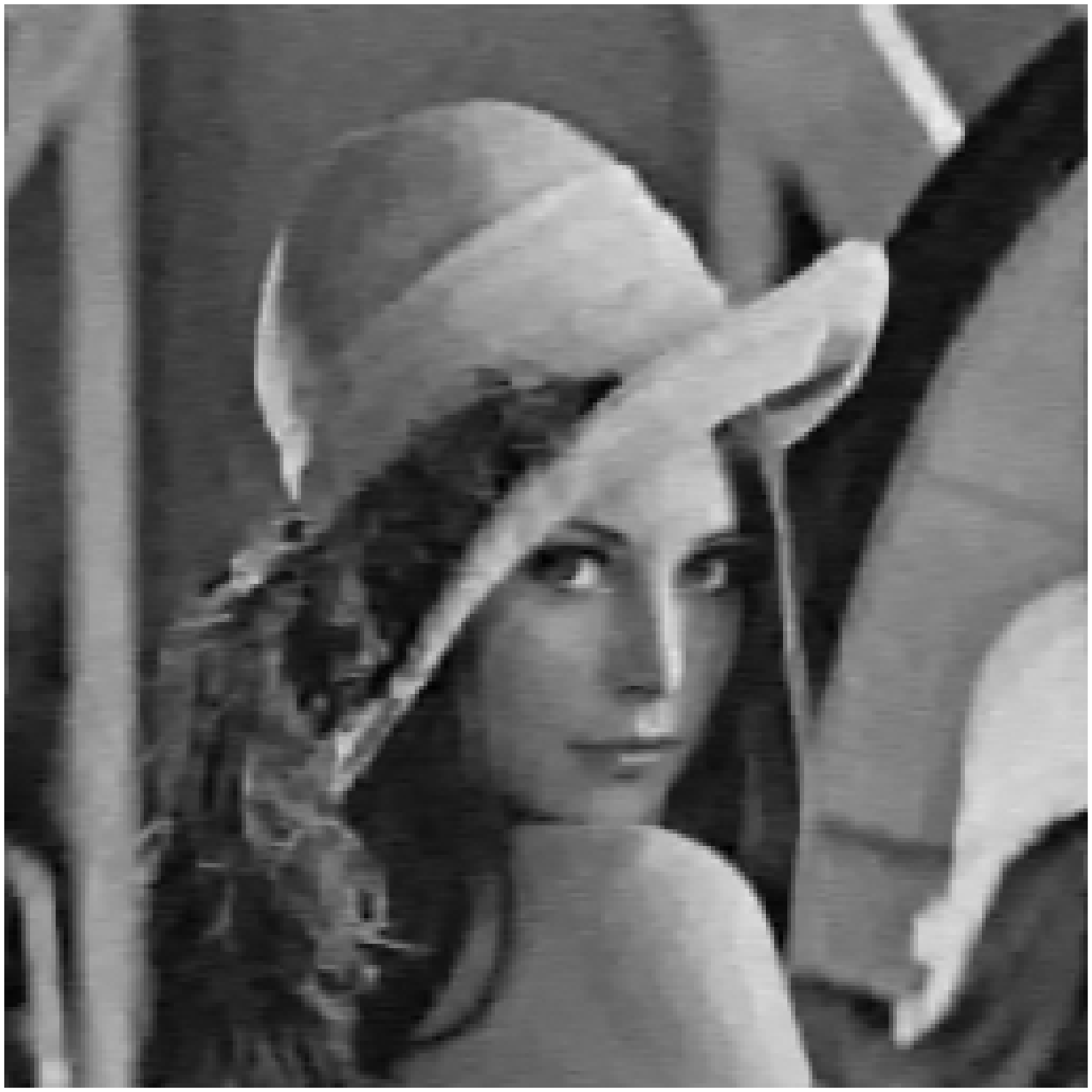}
    \includegraphics[width=0.19\textwidth]{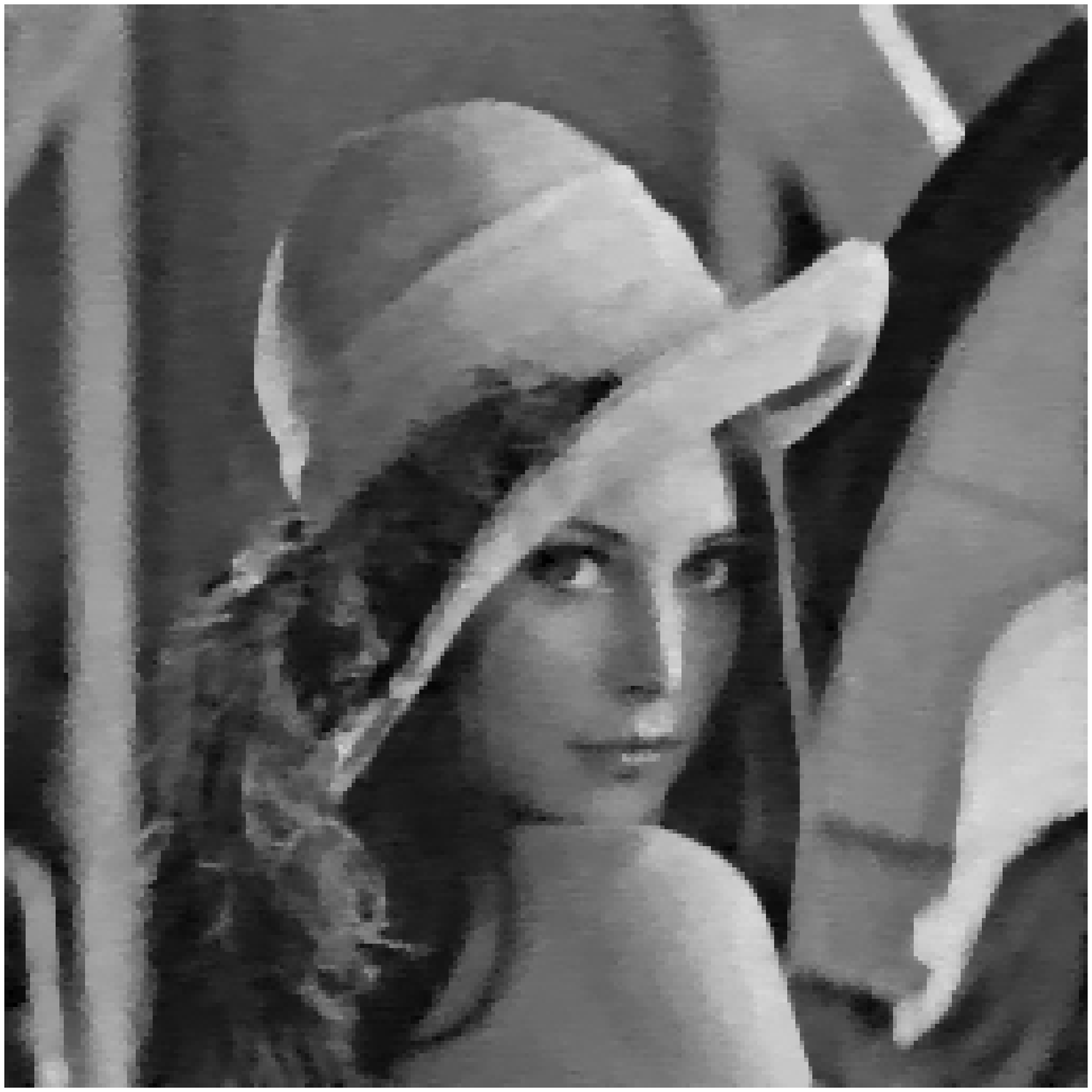}
    \includegraphics[width=0.19\textwidth]{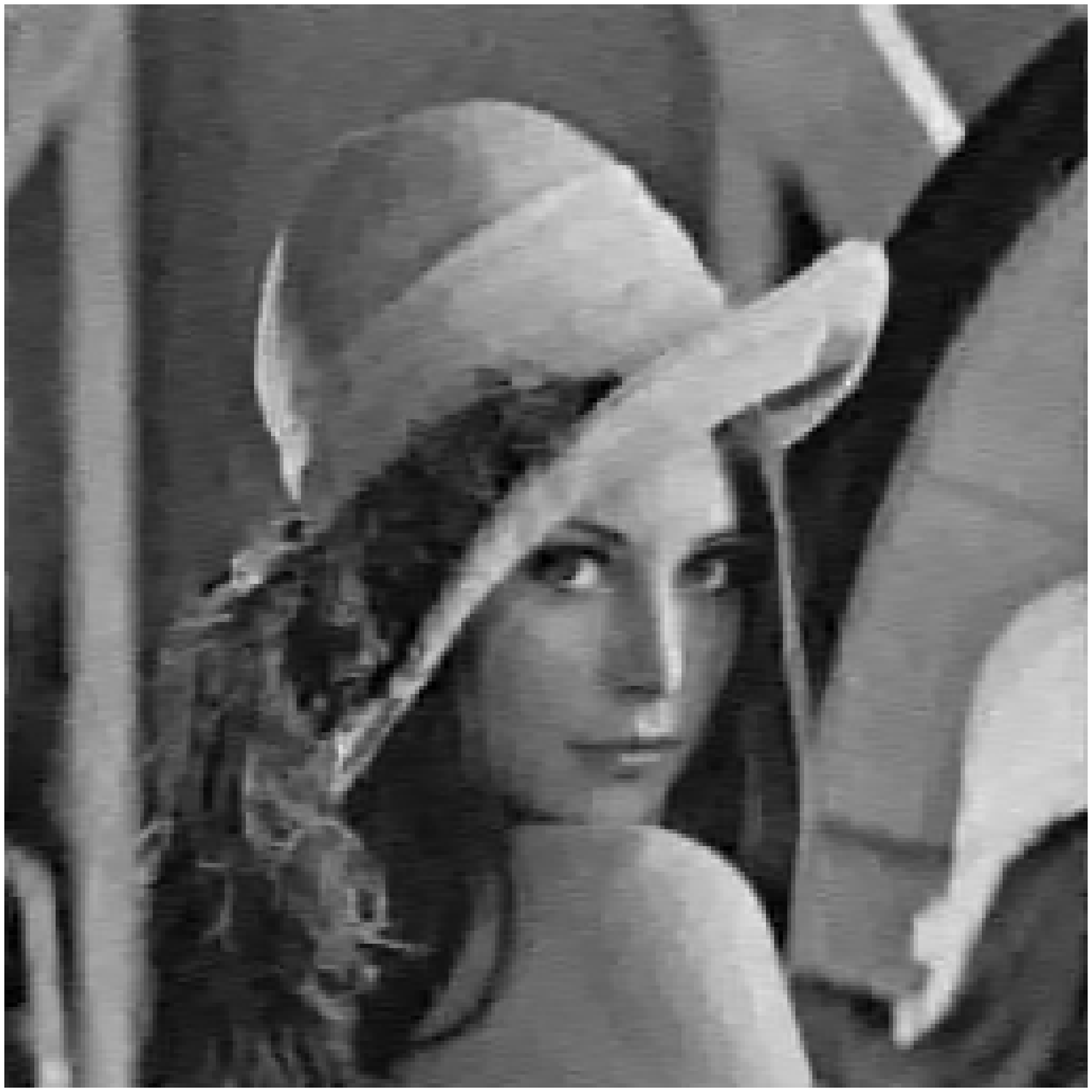}
    \includegraphics[width=0.19\textwidth]{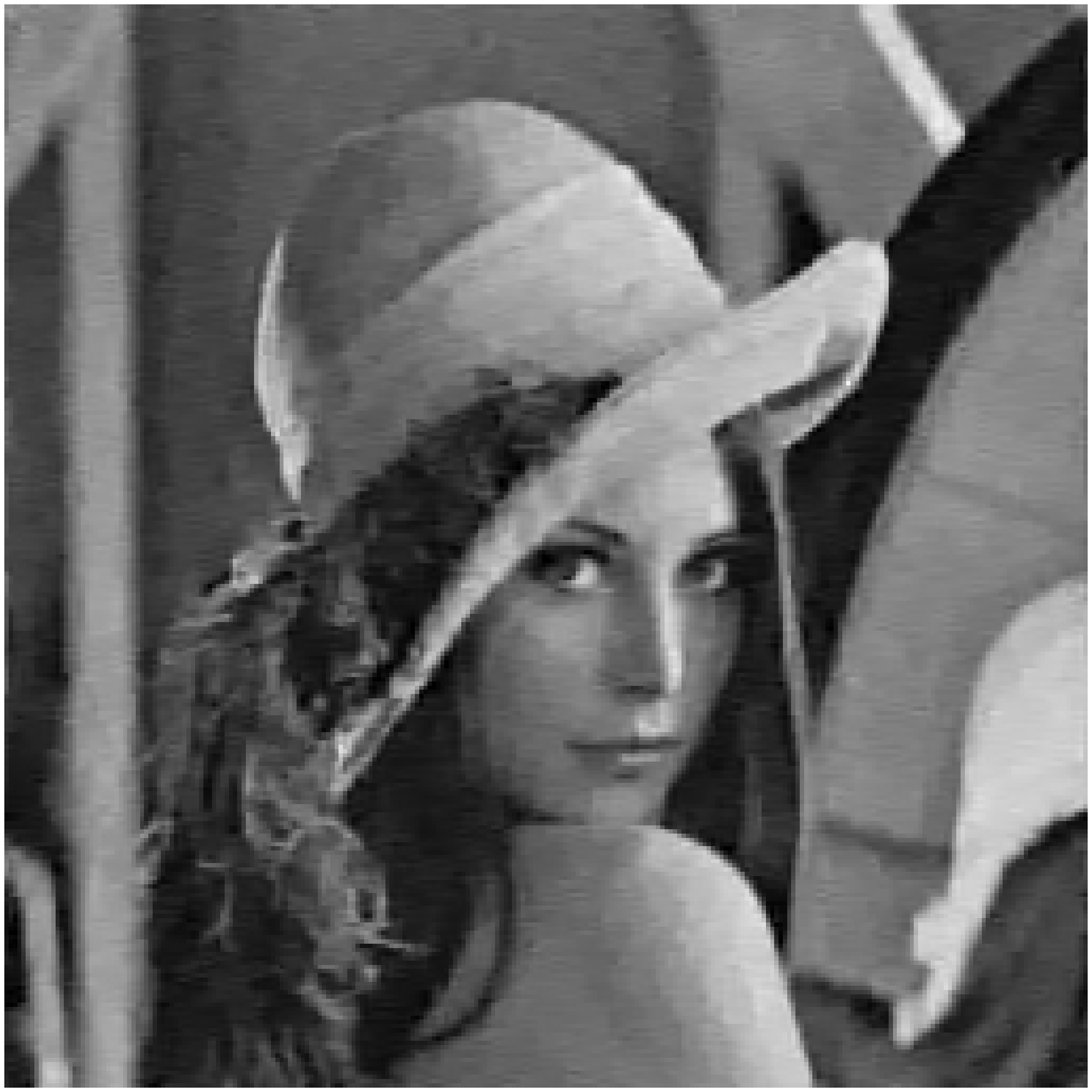}
	\caption{Comparison for image deblurring.From left to right are: blurry images and  deblurred by  ${l_o}$-WF, EDWF and ours using equations (\ref{13}) and (\ref{17}), respectively.}
	\label{F；DCONv1}
\end{figure}

\begin{table*}[htbp]
	\centering
	\caption{PSNR  of the compared four methods for image deblurring}
	\centering
	\begin{tabular}{c|c|c|c|c|c}
		\hline
		Image &	original& ${l_o}$-WF&   EDWF& \tabincell{c}{ours\\ using (\ref{13})}&  \tabincell{c}{ours\\ using (\ref{17})}\\
		\hline
		1&	21.43&	26.25&	26.16&  26.72   	&\textbf{26.85}\\
		2&	26.50&	32.55&	31.77&	32.56    &\textbf{32.61}\\
		3&	22.75&	25.60&	25.44&	25.94   &\textbf{25.98}\\
		4&	22.57&	25.39&	25.31&  25.78	&\textbf{25.82}\\ 
		5&	24.28&	28.07&	27.70&  28.14	&\textbf{28.17}\\
		6&	22.63&	27.73&	27.56&  27.94	&\textbf{28.02}\\
		average&	23.36&	27.60&	27.32&	27.84&\textbf{27.91}\\
		Total Time(s)&	-&	26.67&	141.34&\textbf{7.67}&	7.76\\
		\hline
	\end{tabular}%
\label{T2}
\end{table*}%

\section{Conclusions} In this paper, we proposed a vector total variation (VTV) of feature image model for image restoration.  The  VTV imposes different smoothing powers on different features and thus can simultaneously preserve edges and remove noises. Next, the existence of  solution for the model was proved and the split Bregman algorithm was used to solve the model. At last, we used the wavelet filter banks to explicitly define the feature operator and 
presented some experimental results to show its  advantage over the related methods  in both quality and efficiency.


\bibliographystyle{unsrt}
\bibliography{1}
\end{document}